\documentclass{article}
\usepackage[T1]{fontenc}
\usepackage{hyperref}
\usepackage[breakable]{tcolorbox}
\tcbuselibrary{listings,breakable}
\usepackage{CJKutf8}
\usepackage{booktabs}
\usepackage{tabularx}
\usepackage{array}
\usepackage{hyperref}
\usepackage{algorithm}
\usepackage{algpseudocode}
\usepackage{subcaption}

\usepackage[accepted]{icml2026}
\makeatletter
\renewcommand{\ICML@appearing}{}
\def\@copyrightspace{}
\makeatother

\usepackage{amsmath}
\usepackage{amssymb}
\usepackage{mathtools}
\usepackage{amsthm}
\usepackage{enumitem}
\usepackage{subcaption}
\usepackage{multirow} 
\usepackage{booktabs} 
\usepackage{array}    
\usepackage[table]{xcolor}
\usepackage{colortbl}
\usepackage{pifont}
\usepackage{bbding}

\usepackage[capitalize,noabbrev]{cleveref}

\theoremstyle{plain}
\newtheorem{theorem}{Theorem}[section]

\newtheorem{lemma}[theorem]{Lemma}

\theoremstyle{definition}

\newtheorem{assumption}[theorem]{Assumption}
\theoremstyle{remark}

\usepackage[textsize=tiny]{todonotes}
\usepackage{multirow,booktabs,array}
\icmltitlerunning{The Tell-Tale Norm: $\ell_2$ for LLM Reasoning}

\begin{document}

\twocolumn[
  \icmltitle{The Tell-Tale Norm: $\ell_2$ Magnitude as a \\Signal for Reasoning Dynamics in Large Language Models}



  \icmlsetsymbol{equal}{*}
  \icmlsetsymbol{corr}{\mbox{\Envelope}} 

  \begin{icmlauthorlist}
    \icmlauthor{Jinyang Zhang}{a,b,c,equal}
    \icmlauthor{Hongxin Ding}{a,b,c,equal}
    \icmlauthor{Yue Fang}{a,b,c,equal}
    \icmlauthor{Weibin Liao}{a,b,c,equal}\\
    \icmlauthor{Muyang Ye}{d}
    \icmlauthor{Junfeng Zhao}{a,c,corr}
    \icmlauthor{Yasha Wang}{b,c,f,corr}
  \end{icmlauthorlist}

  \icmlaffiliation{a}{School of Computer Science, Peking University}
  \icmlaffiliation{b}{National Engineering Research Center of Software Engineering, Peking University}
  \icmlaffiliation{c}{Key Laboratory of High Confidence Software Technologies, Ministry of Education}
  \icmlaffiliation{d}{Zhejiang University, Hangzhou, China}
  \icmlaffiliation{f}{Peking University Information Technology Institute (Tianjin Binhai)}

  \icmlcorrespondingauthor{Jinyang Zhang}{jinyangzhang25@stu.pku.edu.cn}
  \icmlcorrespondingauthor{Junfeng Zhao}{zhaojf@pku.edu.cn}
  \icmlcorrespondingauthor{Yasha Wang}{wangyasha@pku.edu.cn}

  \icmlkeywords{Machine Learning, ICML}

  \vskip 0.3in
]



\printAffiliationsAndNotice{\icmlEqualContribution}

\begin{abstract}
Recent work has sought to understand Large Language Models (LLMs) reasoning, yet a principled, model-intrinsic signal that captures its \emph{layer-wise reasoning dynamics} remains underexplored. We bridge this gap by demonstrating that \textbf{the $\ell_2$ norm of hidden states serves as an endogenous signal of the model's reasoning intensity.} Using Sparse Autoencoders (SAEs) as a diagnostic probe, we observe that LLMs' internal reasoning is marked by a sharp increase in reasoning feature activations concentrated in late layers. Motivated by this pattern, we establish a formal link between reasoning intensity and the model's latent geometry and theoretically prove that the $\ell_2$ norm of hidden states bounds the activation strength of SAE reasoning features. Empirical correlation analysis and causal interventions further prove $\ell_2$ norm as a faithful indicator, where heightened norms consistently correspond to critical reasoning steps. We then introduce three test-time scaling techniques guided by $\ell_2$ norms: (i) Adaptive Layer-wise Reasoning Recursion, (ii) Endogenous Reasoning State Steering, and (iii) $\ell_2$-guided Response Selection, which requires no additional training or data and is compatible with advanced inference engines. Experiments across model architectures and benchmarks show that $\ell_2$-norm-based techniques significantly improve reasoning performance, offering a principled yet simple lens to perceive and control LLM latent reasoning dynamics. 
Our codes are available at {\url{https://github.com/zjy1298/The-Tell-Tale-Norm}}.
\end{abstract}

\section{Introduction}

\begin{figure}[t]
    \centering
    \includegraphics[width=1.0\linewidth]{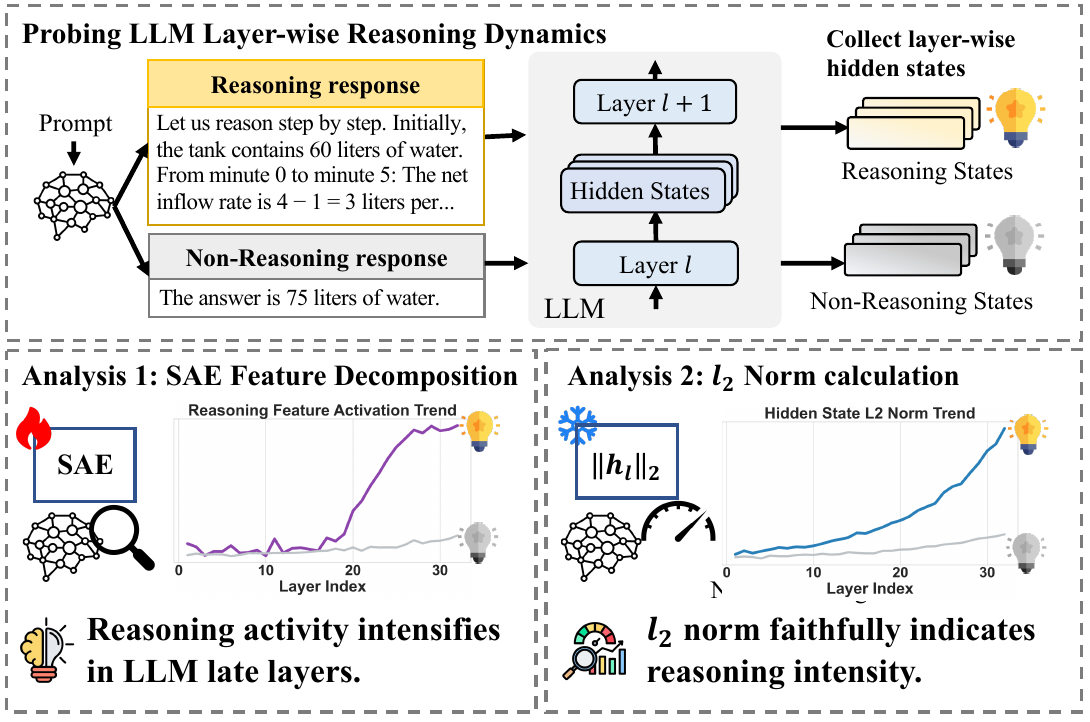} 
    \caption{Analysis on LLM layer-wise reasoning dynamics shows that reasoning activity intensifies in the later layers, which is reflected by SAE feature activations and hidden state $\ell_2$ norms.}
    \label{fig:llm_reasoning_dynamics}
    \vspace{-0.1cm}
\end{figure}

Reasoning abilities in recent Large Language Models (LLMs), such as DeepSeek-R1~\cite{guo2025deepseek}, GPT-4 o1~\cite{achiam2023gpt4} and Qwen-3~\cite{yang2025qwen3} have propelled model performance to unprecedented heights across complex reasoning tasks. These models exihibit explicit \textit{reasoning behaviors} at the output level, generating extended Chain-of-Thought (CoT) sequences~\cite{wei2022chain-of-thought, feng2023towardsrevealingCOT,liao2025learnat}. Despite these successes, the internal dynamics and mechanisms underlying LLM reasoning remain largely opaque. In particular, how reasoning emerges and evolves within the LLM's hidden space across layers and during generation has received little systematic investigation. Understanding these properties is crucial for both interpreting model behavior and designing interventions to enhance reasoning performance.

Prior work has explored LLM reasoning mechanisms from two perspectives. One line of research focuses on the output space, identifying “reasoning tokens” through heuristics~\cite{galichin2025havecovered,tang2025unlocking,zhang2025adept} or metrics like entropy~\cite{wang2025beyond80/20} and mutual-information with correctness~\cite{qian2025demystifying}. While informative, these methods are inherently retrospective: they can only identify and intervene reasoning after decoding, failing to reveal the intermediate computations or dynamics. Another line leverages mechanistic interpretability tools, such as Sparse Autoencoders (SAEs)~\cite{claude_sparse_encoder,cunningham2023sparse,shu2025SAEsurvey}. By reconstructing activations under sparsity constraints, SAEs identify features causally linked to reasoning within LLM hidden space across layers~\cite{zhang2025fantastic,fang2026controllable,li2025featureSAE,chen2025howdoesCOT,venhoff2025understanding}. However, SAE features are not endogenous properties of the model, but merely proxies whose identification requires nontrivial, model- and data-specific training, which limits their generality and practicality. Moreover, most SAE research adopts a localized view, focusing on interpreting and manipulating the extracted features themselves, overlooking the significance of \textit{how such features are distributed across the model's depth with varying sparsity and activation strength, which provides insights into the model's \textbf{layer-wise reasoning dynamics}.}

To mitigate this gap, we delve into the model's layer-wise reasoning behavior and take a step toward answering the two fundamental research questions:
\begin{itemize}[leftmargin=*,noitemsep,topsep=2pt]
    \item \textit{\textbf{RQ1}: Do LLMs exhibit distinct layer-wise dynamics when performing reasoning?}
    \item \textit{\textbf{RQ2}: Can such reasoning activity be characterized by perceivable, model-intrinsic properties, independent of external probes?}
\end{itemize}

To address \textbf{RQ1}, we leverage established SAE methodologies to identify reasoning features within the LLM's hidden space. Our investigation reveals a clear \textbf{layer-wise reasoning trajectory}: when the LLM is engaged in reasoning, SAE reasoning feature activations in late layers exhibit significantly high magnitudes. This observation suggests that LLMs enter a distinct reasoning state at intermediate depths. However, while SAEs provide a lens into these dynamics, they are not endogenous to the model; their training is computationally expensive and data-specific, limiting their utility as a general-purpose diagnostic tool.

This limitation leads us to \textbf{RQ2}: we seek a training-free and model-intrinsic indicator of reasoning. We theoretically demonstrate that under the conditions of faithful SAE reconstruction and orthogonal reasoning and semantic features, the $\ell_2$ norm of hidden states provides two-sided bounds on the activation strength of SAE reasoning features. Intuitively, reasoning induces feature shifts in hidden space that significantly inflate activation magnitudes, leading to higher $\ell_2$ norms. Our empirical correlation analysis confirms that layer-wise $\ell_2$ norm uniquely aligns with SAE-identified reasoning features and model output entropy, outperforming other statistical metrics. Causal interventions that suppress high-norm tokens in reasoning-heavy layers lead to a more drastic performance drop compared to random suppression, further validating the conclusion. These findings establish the \textit{layer-wise $\ell_2$ norm as a robust, training-free indicator for thinking intensity and reasoning activeness in LLMs.}

Building on these insights (Figure~\ref{fig:llm_reasoning_dynamics}), we explore a new direction for test-time reasoning control. We propose a suite of $\ell_2$-norm-based techniques that adaptively amplify reasoning activity during intermediate computations, without any additional data or training. Specifically, we introduce: (i) \textbf{Adaptive Layer-wise Reasoning Recursion}, allocating additional compute to critical reasoning layers, (ii) \textbf{Endogenous Reasoning State Steering}, guiding hidden states toward high-intensity subspaces to amplify reasoning, and (iii) \textbf{$\ell_2$-guided Response Selection}, where a prediction with the highest reasoning intensity is selected among candidates. These techniques yield \textit{+4.51\%} average gain across benchmarks and \textit{+9.13\%} on challenging reasoning tasks, highlighting their effectiveness.

\textbf{Our contributions are as follows:}
\begin{itemize}[leftmargin=*,noitemsep,topsep=2pt]
    \item \textbf{Insightfully}, we reveal LLMs' layer-wise reasoning dynamics through SAEs, and identify the intrinsic layer $\ell_2$ norm as a faithful indicator that captures reasoning activeness and intensity.
    \item \textbf{Theoretically}, we prove that the $\ell_2$ norm bounds SAE reasoning feature activation, supported by extensive correlation and causal analysis.
    \item \textbf{Practically}, we propose three training-free, data-free, adaptive techniques for amplifying reasoning and boosting model performance at test time.
    \item \textbf{Experimentally}, we demonstrate consistent improvements across multiple mathematical and general reasoning benchmarks across diverse LLM architectures.
\end{itemize}

\section{Probing LLM Layer-wise Reasoning Dynamics via Sparse Autoencoders}
\label{sae}
In this section, to answer \textbf{RQ1}, we analyze internal representations in LLMs using SAEs, to understand how reasoning unfolds within the model's hidden space.

\subsection{Preliminaries: Sparse Autoencoders as Interpretability Probes}

The primary challenge in interpreting LLM hidden states is neuron polysemanticity and superposition, where meaningful features are embedded in overlapping subspaces. SAEs address this by decomposing hidden activations $h \in \mathbb{R}^d$ into a sparse, overcomplete set of interpretable features.

An SAE typically consists of an encoder and a decoder. The encoder maps $h$ to high-dimensional latent space $f(h) = \text{ReLU}(W_{enc}(h - b_{dec}) + b_{enc})$, where $f \in \mathbb{R}^M$ ($M \gg d$). The decoder reconstructs the original state: $\hat{h} = W_{dec} f(h) + b_{dec}$. Columns of the decoder matrix are denoted by
$\{\mathbf{d}_i\}_{i=1}^M$, form a set of basis vectors and are commonly
interpreted as \emph{SAE features}, where each $\mathbf{d}_i$ corresponds to a distinct
direction in the hidden space. To ensure the features are interpretable, the SAE is trained by minimizing a reconstruction loss alongside an $l_1$ penalty:
$$\mathcal{L} = \|h - \hat{h}\|_2^2 + \lambda \|f(h)\|_1$$
By enforcing sparsity, SAEs disentangle the polysemantic states into monosemantic latent features, making it possible to isolate features that correspond exclusively to reasoning.

\subsection{Layer-wise Reasoning Dynamics Revealed by SAE Reasoning Feature Activations}

\begin{figure*}[t]
\centering

\begin{subfigure}[t]{0.32\linewidth}
    \centering
    \includegraphics[width=\linewidth]{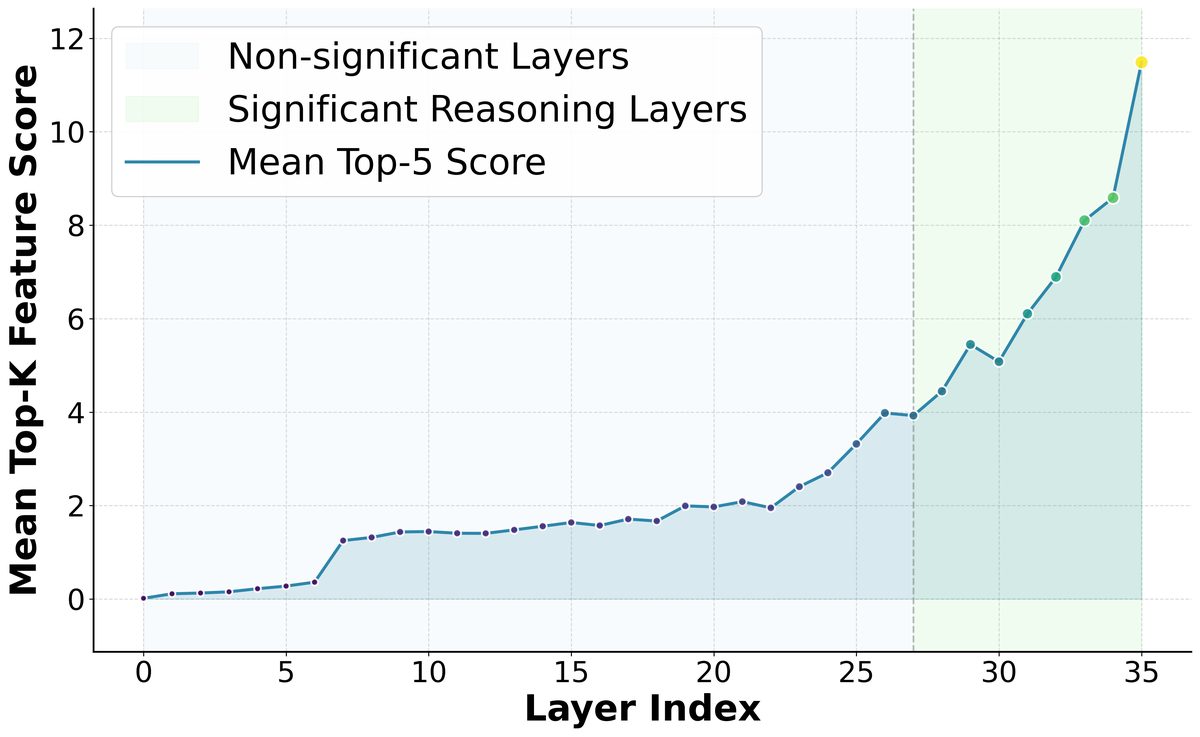}
    \caption{Qwen3-4B SAE}
\end{subfigure}
\hfill
\begin{subfigure}[t]{0.32\linewidth}
    \centering
    \includegraphics[width=\linewidth]{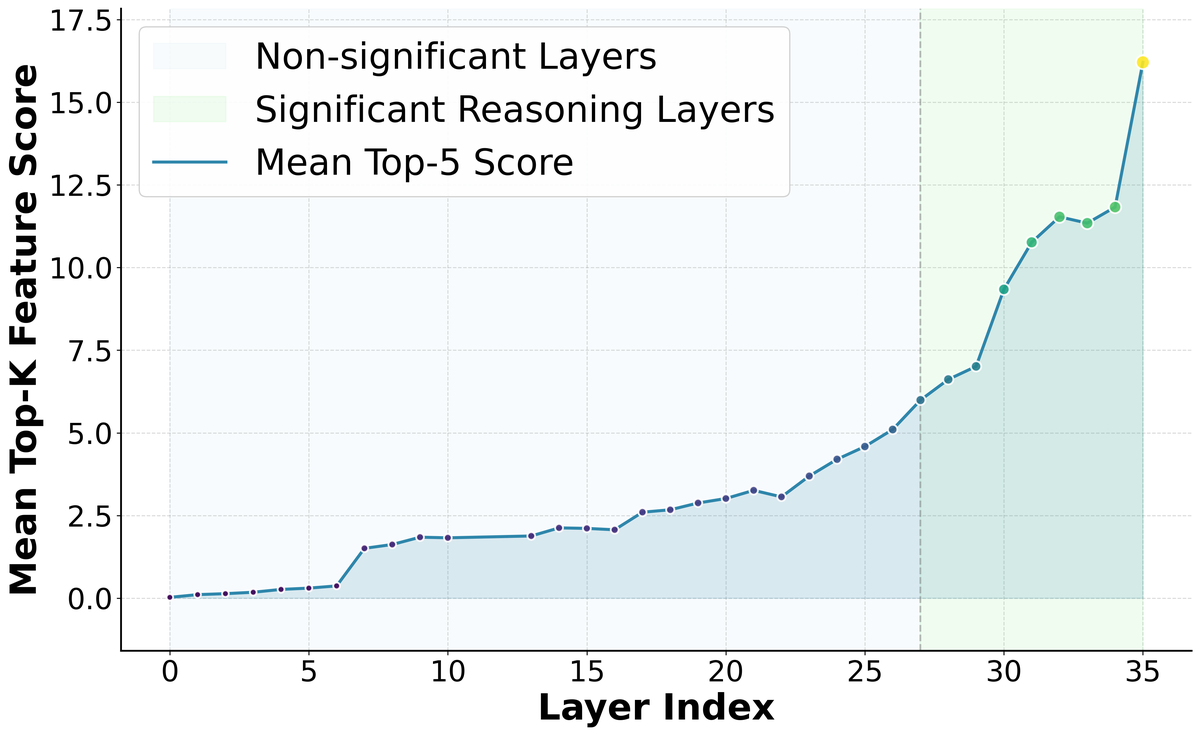}
    \caption{Qwen3-8B SAE}
\end{subfigure}
\hfill
\begin{subfigure}[t]{0.32\linewidth}
    \centering
    \includegraphics[width=\linewidth]{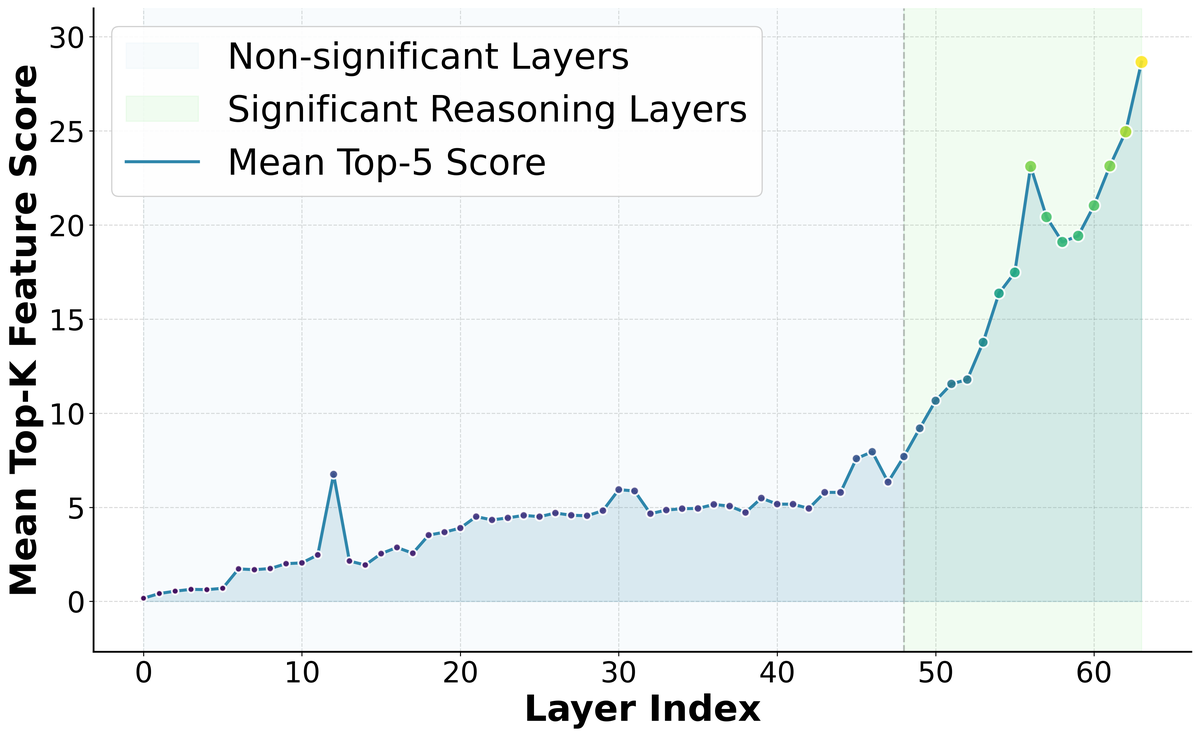}
    \caption{Qwen3-32B SAE}
\end{subfigure}

\vspace{0.3cm}

\begin{subfigure}[t]{0.32\linewidth}
    \centering
    \includegraphics[width=\linewidth]{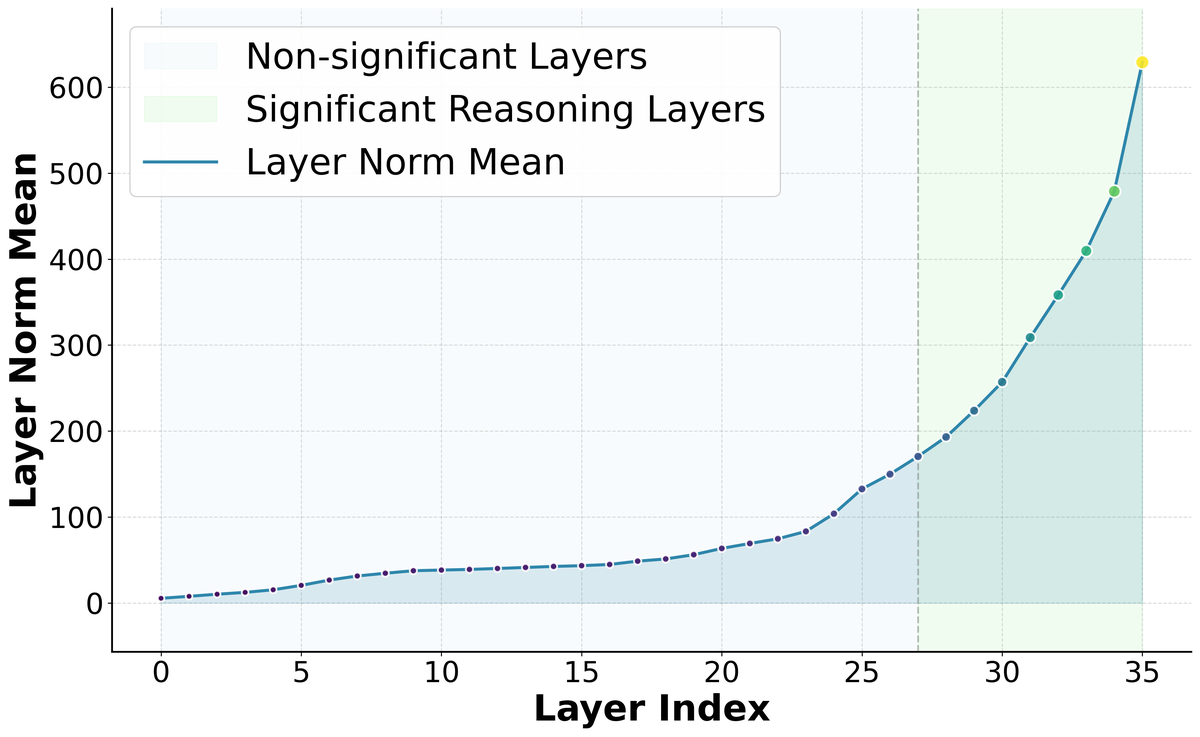}
    \caption{Qwen3-4B $\ell_2$ Norm}
\end{subfigure}
\hfill
\begin{subfigure}[t]{0.32\linewidth}
    \centering
    \includegraphics[width=\linewidth]{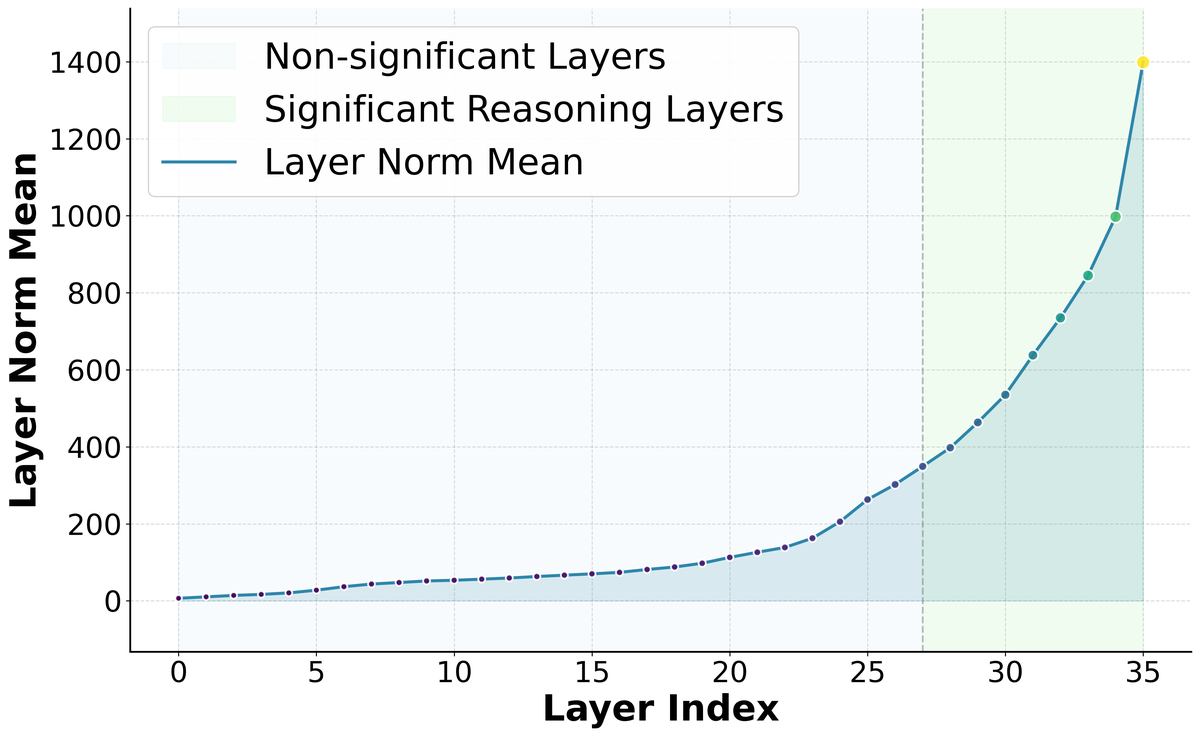}
    \caption{Qwen3-8B $\ell_2$ Norm}
\end{subfigure}
\hfill
\begin{subfigure}[t]{0.32\linewidth}
    \centering
    \includegraphics[width=\linewidth]{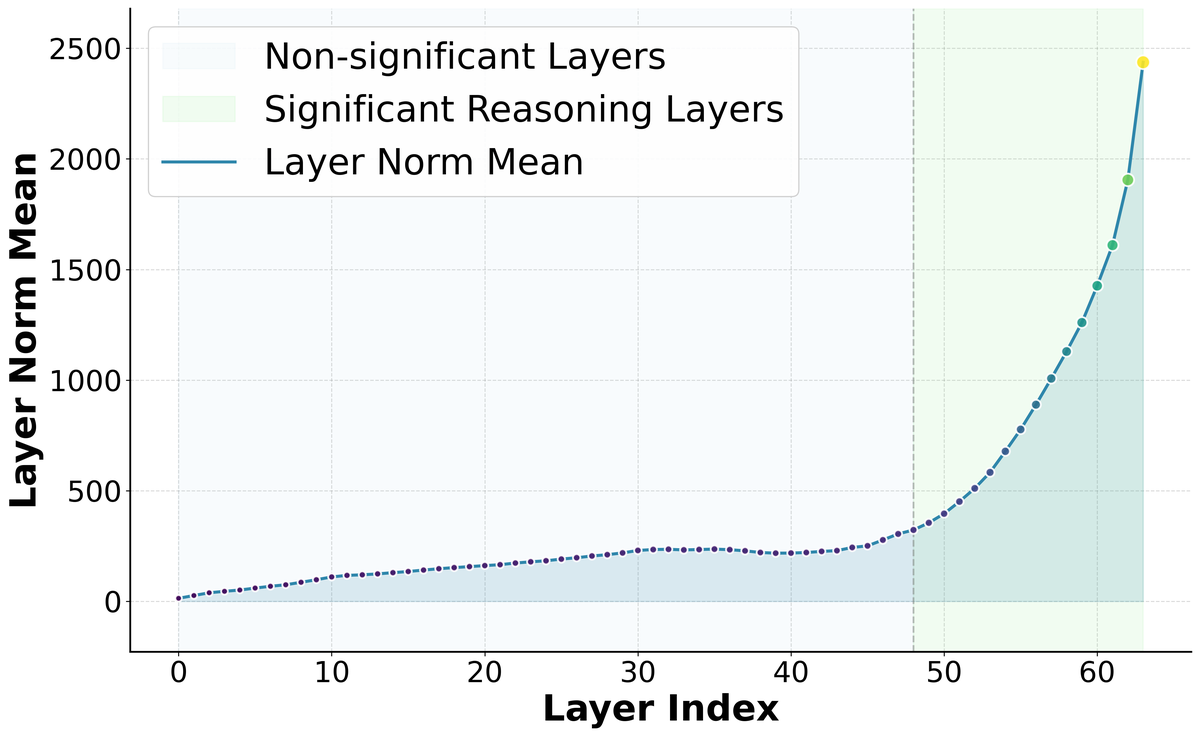}
    \caption{Qwen3-32B $\ell_2$ Norm}
\end{subfigure}

\caption{Layer-wise activations of top-5 SAE reasoning features (top row) and hidden state $\ell_2$ norms (bottom row) across Qwen-3 model family. Both SAE activations and $\ell_2$ norms rise and peak in the last quarter of layers with similar trends.}
\label{fig:sae_vs_norm_trends}
\end{figure*}

\noindent\textbf{Reasoning Feature Identification.}
For a given LLM, for each transformer layer $l$ in the model, we train an independent SAE on hidden states of that layer collected during inference on the MMLU-pro~\cite{wang2024mmlupro} dataset.
To identify reasoning-related features, we generate two paired responses:
(i) a \textit{thinking} response that contains an explicit chain-of-thought, and
(ii) a \textit{non-thinking} response that tries to directly answer the question. Given a feature $\mathbf{d}_i$ in the layer $l$ SAE, let $z_{l,i,t}$ denote its activation on token $t$. We compute the differential activation of feature $\mathbf{d}_i$ at layer $l$:
\begin{equation}
    \Delta z_{l,i}
= \frac{1}{\left|\mathcal{T}_{\text{think}}\right|}
\sum_{t \in \mathcal{T}_{\text{think}}} z_{l,i,t}
- \frac{1}{\left|\mathcal{T}_{\text{non}}\right|}
\sum_{t \in \mathcal{T}_{\text{non}}} z_{l,i,t}.
\end{equation}
where $\mathcal{T}_{\text{think}}$ and $\mathcal{T}_{\text{non}}$ denote token sets in the thinking and non-thinking responses, respectively. This contrast isolates reasoning activity by holding the semantics fixed: features that activate strongly only in the thinking response therefore reflect reasoning-specific internal states rather than answer content. We therefore identified the \textbf{Top-$k$} features with the highest $\Delta z_{l,i}$ as reasoning features in each layer. Qualitative inspection further validates that these features capture genuine reasoning behavior: tokens eliciting high activations for these features (visualized in Figure~\ref{fig:qwen3-8b-wordcloud-30}) predominantly correspond to reasoning-critical words (e.g., logical transitions and intermediate conclusions).

\noindent\textbf{Layer-wise Reasoning Dynamics}
We visualize the normalized mean activation magnitudes of top-$5$ reasoning features across LLM layers during thinking mode generation. As shown in the top row of Figure~\ref{fig:sae_vs_norm_trends}, across Qwen-3 variants (visualizations and discussions for more models are in Appendix~\ref{Appendix:grow_trends} and~\ref{appendix:architectural_differences}.), we observe a consistent pattern: \textbf{SAE reasoning features exhibit substantially higher activation magnitudes in late layers than in early-to-middle layers}. This layer-wise heterogeneity suggests that reasoning is not uniformly distributed throughout the network. Instead, early layers primarily process lexical and semantic information, while late layers increasingly concentrate reasoning-related computations, entering a distinct high-intensity thinking state. Notably, this transition precedes explicit token generation, enabling reasoning to be amplified or steered directly in the hidden space for efficient test-time scaling, without relying on expanding long CoT at the output level.

While SAEs effectively expose the internal reasoning dynamics and offer an opportunity to efficiently control model reasoning, they remain external proxies. The requirement to train SAEs for every LLM is computationally prohibitive and lacks generality and practicality. Naturally, this raises a fundamental question: \emph{if reasoning truly induces distinct internal states within the model, does it also manifest in an intrinsic property of the model’s latent geometry, which can be perceived and controlled without relying on external probes?} In the following section, we move beyond external probes to explore how this reasoning activity is intrinsically reflected in the $\ell_2$ norm of the hidden states.

\section{$\ell_2$-Norm as an Intrinsic Signal for Reasoning Activity}
\label{sec:l2_norm}

In this section, to answer $RQ2$, we go beyond external probes and establish a formal link between reasoning and the model's endogenous $\ell_2$ norm of hidden states.

\subsection{Theoretical Foundations: Linking SAE Activations to Hidden-State $\ell_2$}

We prove that when an LLM enters reasoning states, resulting shifts in its feature-space representation manifest as a measurable expansion in the hidden state's $\ell_2$ magnitude.

\subsubsection{Foundational Assumptions}
To formalize the relationship between hidden states $\mathbf{h} \in \mathbb{R}^d$ and SAE features $\mathbf{z} \in \mathbb{R}^m$, we rely on the following priors:
\begin{itemize}[leftmargin=*,noitemsep,topsep=2pt]
    \item \textbf{Assumption 1 (Approximate Orthonormality)}: The decoder basis $\{\mathbf{d}_i\}_{i=1}^m$ satisfies $\langle \mathbf{d}_i, \mathbf{d}_j \rangle = \delta_{ij} + \mathcal{O}(\epsilon_{\text{orth}})$.
    \item \textbf{Assumption 2 (Reconstruction Quality)}: Let $\hat{h}$ be the SAE reconstructed hidden states, we assume faithful reconstruction where $\|\mathbf{h} - \hat{\mathbf{h}}\|_2 \leq \epsilon_{\text{recon}}$.
    \item \textbf{Assumption 3 (Semantic Orthogonality)}: We decompose the average hidden states into $\bar{\mathbf{h}}_{\text{think}} = \mathbf{h}_{\text{base}} + \mathbf{h}_{\text{reasoning}}$ and $\bar{\mathbf{h}}_{\text{non}} = \mathbf{h}_{\text{base}} + \boldsymbol{\epsilon}_{\text{noise}}$, where $\epsilon_{\text{noise}} \ll \|\mathbf{h}_{\text{reasoning}}\|_2$. We assume reasoning dynamics are approximately orthogonal to base semantics: $|\langle \mathbf{h}_{\text{base}}, \mathbf{h}_{\text{reasoning}} \rangle| \le \beta \|\mathbf{h}_{\text{base}}\|_2 \|\mathbf{h}_{\text{reasoning}}\|_2$ for $\beta \ll 1$.
    \item \textbf{Assumption 4 (Feature Disentanglement)}: The SAE partitions the latent space into disjoint sets $\mathcal{I}_{\text{base}}$ and $\mathcal{I}_{\text{reasoning}}$, where $\mathcal{I}_{\text{reasoning}}$ captures features predominantly induced by reasoning.
\end{itemize}

\subsubsection{Hidden-State $\ell_2$ Norm as Latent Energy}
We first establish that the squared $\ell_2$ norm of $\mathbf{h}$ serves as a faithful aggregator of its active latent feature energies.

\noindent\textbf{Lemma 1 ($$\ell_2$$ Energy Decomposition).} Under Assumption 1 and 2, the squared $\ell_2$ norm of $\mathbf{h}$ satisfies:
\begin{equation} 
    \left| \|\mathbf{h}\|_2^2 - \sum_{i=1}^m z_i^2 \right| \leq \mathcal{R}(\epsilon_{\text{recon}}, \epsilon_{\text{orth}}), 
\end{equation}
where $\mathcal{R}=2\epsilon_{\text{recon}} \|\mathbf{h}\|_2 + 3\epsilon_{\text{recon}}^2 + \mathcal{O}(\epsilon_{\text{orth}} \|\mathbf{z}\|_2^2)$.

\textit{Proof Sketch.} We expand $\|\hat{\mathbf{h}}\|_2^2 = \langle \sum z_i \mathbf{d}_i, \sum z_j \mathbf{d}_j \rangle$. Under Assumption 1, this yields $\sum z_i^2$ plus a term $\mathcal{O}(\epsilon_{\text{orth}}\|\mathbf{z}\|_2^2)$. Applying Cauchy-Schwarz to the reconstruction residual $\mathbf{r} = \mathbf{h} - \hat{\mathbf{h}}$ completes the bound. $\square$

\textit{Intuition.} This result implies that changes in the hidden-state norm directly reflect changes in total latent feature energy, up to controlled approximation error.

\subsubsection{Main Theorem: $\ell_2$ Norm as an Intensity Signal}
Our main theorem formally shows that the differential activation of SAE reasoning features, is directly reflected in the observable shift in the hidden state's magnitude.

\noindent\textbf{Theorem 1 (Reasoning Intensity Bounds).}
Let $\Delta z_i = z_i^{\text{think}} - z_i^{\text{non}}$.  
Under Assumptions 1–4 and Lemma 1, define
\[
\Delta_{\text{obs}} := \|\bar{\mathbf{h}}_{\text{think}}\|_2^2 - \|\bar{\mathbf{h}}_{\text{non}}\|_2^2.
\]
Then there exist constants $c_1,c_2>0$ depending only on $\beta$, $\epsilon_{\text{orth}}$, and SAE sparsity such that
\begin{equation}
c_1 \sqrt{\Delta_{\text{obs}} - \epsilon_{\text{total}}}
\;\le\;
\sum_{i\in\mathcal{I}_{\text{reasoning}}} \Delta z_i
\;\le\;
c_2 \sqrt{\Delta_{\text{obs}} + \epsilon_{\text{total}}}.
\end{equation}
In particular, when $\epsilon_{\text{total}}\ll\Delta_{\text{obs}}$ and $\beta\ll 1$,
\begin{equation}
\sum_{i\in\mathcal{I}_{\text{reasoning}}} \Delta z_i
=
\Theta\!\left(
\sqrt{\|\bar{\mathbf{h}}_{\text{think}}\|_2^2 - \|\bar{\mathbf{h}}_{\text{non}}\|_2^2}
\right).
\end{equation}

\noindent \textbf{Intuition.} Theorem~1 formalizes a direct quantitative link between the observable $\ell_2$ norm expansion in hidden states and the cumulative activation of reasoning-specific features. Intuitively, when the model enters a reasoning mode, additional latent components in $\mathcal{I}_{\text{reasoning}}$ become activated on top of semantic representations. Due to their approximate orthogonality to non-reasoning feature, these activations add constructive "energy" to the hidden state, causing a noticeable increase in $|\mathbf{h}|_2^2$. This justifies using the $\ell_2$ norm as a \textbf{training-free signal}: we can monitor the "expansion" of the model's latent space to quantify reasoning intensity without deploying expensive SAE probes at every layer. Detailed assumptions, extended justifications, and the complete formal proofs for all lemmas and theorems are provided in Appendix~\ref{Appendix:SAE_l2_theory}. 

What's more, a complementary theoretical perspective, which links the \(\ell_2\)-norm of hidden states to the optimization pressure exerted by the cross-entropy loss on hard and reasoning-critical tokens, is presented in Appendix~\ref{further_discussion}.

\subsection{Empirical Correlation Analysis}
We then conduct a comprehensive multi-granular empirical analysis and further validate the link between $\ell_2$ norm and the model's latent reasoning intensity.

\paragraph{Validity of theoretical assumptions.}
We empirically validate the reliability of the main theoretical assumptions underlying our analysis, including decoder near-orthogonality, reconstruction fidelity, semantic--reasoning separation, and feature-level disentanglement. Across models and layers, the results consistently support these assumptions to a sufficient approximation for our geometric interpretation. Detailed evidence and quantitative results are provided in Appendix~\ref{Appendix:assumption_validity}.

\noindent \textbf{Layer-wise Trend Alignment.} We first visualize the average $\ell_2$ norm of hidden states for thinking response tokens across the model's depth. As shown in the bottom row of Figure~\ref{fig:sae_vs_norm_trends}, a pronounced two-stage pattern that is consistent across all models is observed. The activation magnitudes remain relatively low and stable in early-to-middle layers while a sharp increase occurs in approximately the final 25\% of layers. A vertical comparison with SAE trends within each model reveals that the $\ell_2$ norm closely mirrors the layer-wise behavior of SAE reasoning feature activations, suggesting that \textbf{the late-stage separation of reasoning features is coupled with a global expansion of the hidden state energy}. More detailed analysis and visualizations for other models are provided in Appendix~\ref{Appendix:grow_trends}

\begin{table*}[t]
\centering
\caption{Spearman correlation between layer-wise intrinsic statistics and (i) SAE reasoning feature activations (across all layers), and (ii) final output entropy (across the last quarter of layers as these layers are closer to the output).}

\label{tab:spearman_sae_entropy}
\resizebox{\linewidth}{!}{
\footnotesize
\begin{tabular}{lcccccccc}
\toprule
\multirow{2}{*}{\textbf{Metric}} 
& \multicolumn{2}{c}{\textbf{Qwen3-1.7B}} 
& \multicolumn{2}{c}{\textbf{Qwen3-8B}} 
& \multicolumn{2}{c}{\textbf{Qwen3-14B}} 
& \multicolumn{2}{c}{\textbf{LLaMA3-8B}} 
\\
\cmidrule(lr){2-3}
\cmidrule(lr){4-5}
\cmidrule(lr){6-7}
\cmidrule(lr){8-9}
& \textbf{SAE} & \textbf{Final Entropy}
& \textbf{SAE} & \textbf{Final Entropy}
& \textbf{SAE} & \textbf{Final Entropy}
& \textbf{SAE} & \textbf{Final Entropy}
\\
\midrule
$\ell_2$ Norm                   & \textbf{86.47}  & \textbf{63.52}  & \textbf{85.15}  & \textbf{63.25}  & \textbf{87.62}  & \textbf{66.65}  & \textbf{84.11}  & \textbf{52.10} \\
Layer Hidden Entropy & -41.58 & -8.62  & -19.94 & -13.67 & -47.07 & -15.76 & -69.97 & -31.59 \\
Layer Output Entropy & -61.89 & -22.97 & -65.35 & -31.82 & -69.97 & -27.54 & -72.15 & -34.50 \\
Attention Entropy    & -51.42 & 27.59  & -54.37 & 32.39  & -52.76 & 23.46  & 1.80   & 36.65 \\
\bottomrule
\end{tabular}
}
\vspace{-0.2cm}
\end{table*}


\noindent \textbf{Quantitative Correlation with SAE Activations and Final Output Entropy.}
To quantitatively evaluate the alignment between $\ell_2$ norm and reasoning-related behavior, we compute Spearman correlations~\cite{spearman1961proof} between several intrinsic, layer-wise metrics and two reasoning signals: (i) SAE reasoning feature activations, and (ii) final output entropy, which is associated with reasoning processes, as high-entropy output tokens typically correspond to critical reasoning steps where multiple plausible continuations compete~\cite{wang2025beyond80/20}.

Besides $\ell_2$, we consider additional layer-wise intrinsic metrics: layer hidden entropy, layer output entropy, and attention entropy, as calculated in Appendix~\ref{appendix:correlation_analysis}.
As summarized in Table~\ref{tab:spearman_sae_entropy}, $\ell_2$ norm consistently exhibits strong and positive correlations with both SAE activations and final output entropy.
In contrast, alternative metrics show substantially weaker and unstable correlations. These results demonstrate that $\ell_2$ norm uniquely aligns with established reasoning signals and more faithfully tracks internal reasoning dynamics than other statistical properties.

\begin{figure}[t]
    \centering
    \includegraphics[width=\linewidth]{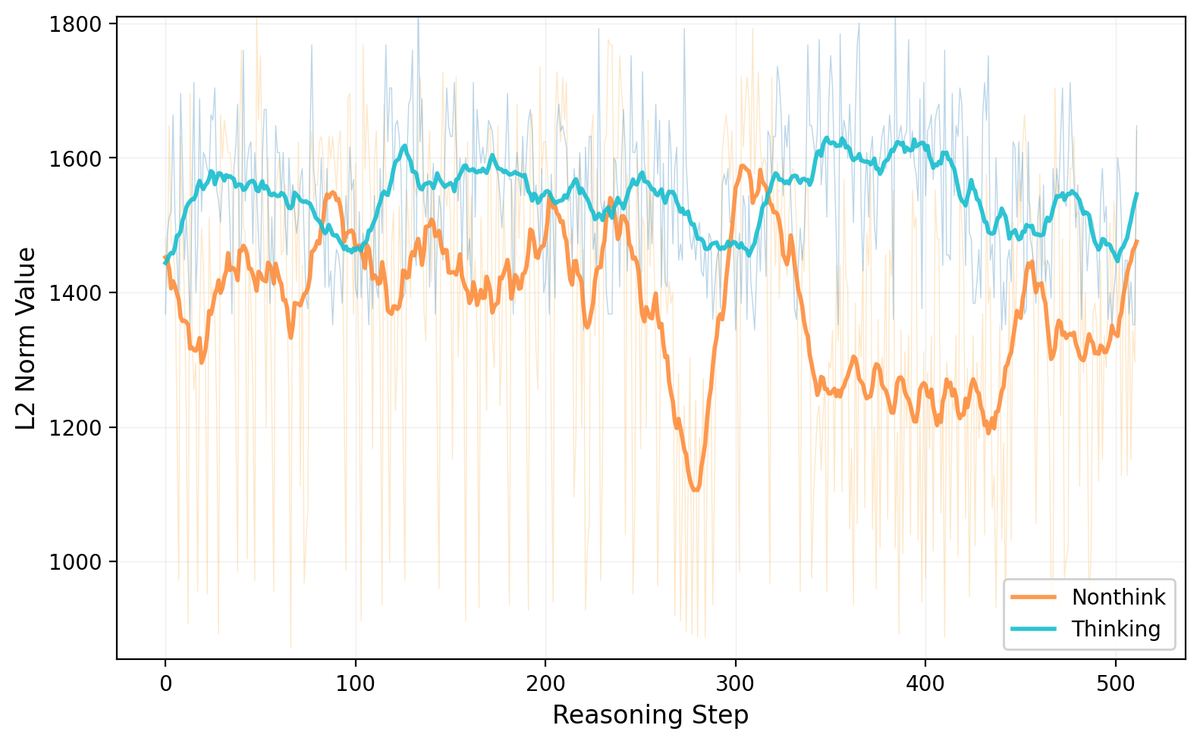}
    
    \vspace{0.3cm}
    
    \includegraphics[width=\linewidth]{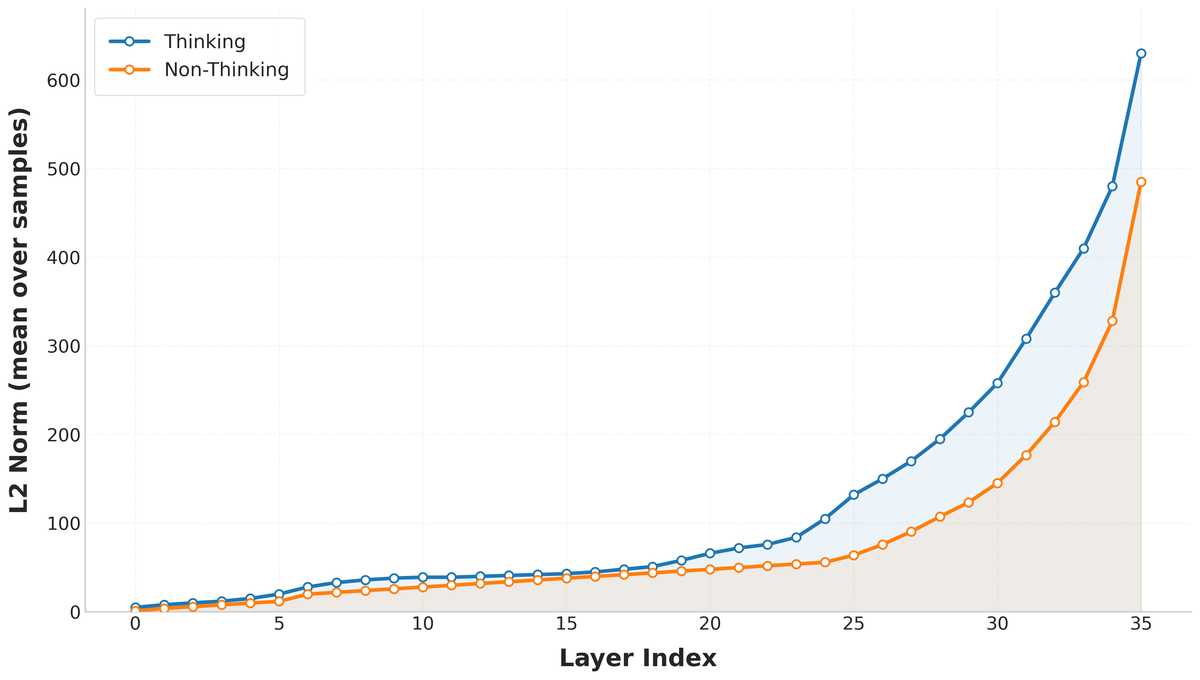}
    \vspace{-0.2cm}
    \caption{Hidden-state $\ell_2$ norm comparison between thinking and non-thinking generations. \textbf{Top:} Token-wise evolution of hidden-state $\ell_2$ norms at Layer 34 of Qwen3-8B, during thinking vs. non-thinking generations. $\ell_2$ norm remains consistently higher for the thinking mode. \textbf{Bottom:} Layer-wise evolution of hidden-state $\ell_2$ norms of Qwen3-4B, during thinking vs. non-thinking generations. $\ell_2$ norm remains consistently higher for the thinking mode.}
    \label{fig:thinking_vs_nonthinking_norm}
    \vspace{-0.2cm}
\end{figure}

\noindent \textbf{Thinking vs. Non-thinking Contrast.}
We further compare hidden-state $\ell_2$ norms between thinking and non-thinking generations. As shown in Figure~\ref{fig:thinking_vs_nonthinking_norm}, the thinking mode consistently exhibits larger norms in both token-wise and layer-wise views, with the gap becoming more pronounced in later layers/tokens. This suggests that reasoning states are associated with a clearer high-energy signature.

\noindent \textbf{Token-wise Semantic Evidence.} We examine tokens with high $\ell_2$ norms and compare them with those identified by high SAE activations. As visualized in Figure~\ref{fig:qwen3-8b-wordcloud-combined}, we observe a similar and overlapped depth-dependent shift in focus: tokens related to control-flow markers that structure multi-step reasoning (e.g., "if", "but") exhibit higher saliency in late-layers (e.g., Layer 30) than those in middle layers (e.g., Layer 20). This transition suggests that the late-layer amplification of the $\ell_2$ norm specifically signals the emergence of explicit reasoning control. While tokens identified are not identical, we note that this is because $\ell_2$ norm provides a coarse "energy-like" indicator of where the model expends representational capacity. On the other hand, SAE features capture more selective, directional subspaces aligned with reasoning. Nevertheless, the partial overlap and systematic co-variation across depth suggest that high-$\ell_2$ tokens mark locations where the model allocates substantial computation toward reasoning control. A detailed comparative analysis of content-vs-control dynamics is provided in Appendix~\ref{sec:token_level_commonality}. Case studies of critical tokens marked by $\ell_2$ are in Appendix~\ref{appendix:traces_samples}.

\subsection{Causal Intervention Analysis}
\begin{figure*}[t]
    \centering
    \includegraphics[width=\linewidth]{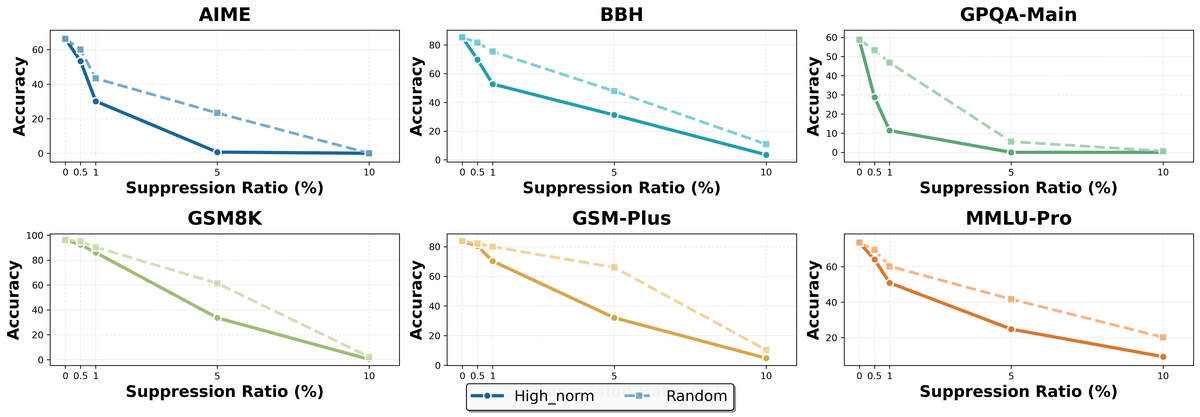}
    \caption{Causal intervention results on Qwen3-14B across reasoning benchmarks under different suppression ratios.
    We compare suppressing high-$\ell_2$-norm hidden states against Random Suppression.}
    \label{fig:causal_intervention_qwen14b}
\end{figure*}

To rigorously verify whether the observed $\ell_2$ norm peaks are functionally necessary for reasoning, we perform a controlled intervention study that directly manipulates hidden states during inference. We focus on \emph{significant reasoning layers}, defined as the final $25\%$ of layers ($l \ge 0.75L$), where both SAE activations and $\ell_2$ norms exhibit sharp increases. Given a hidden state $h_l$ at layer $l$, if its $\ell_2$ norm exceeds an \textit{adaptive layer-wise threshold} $\tau_\ell$ (Algorithm~\ref{alg:adaptive_tau}), we suppress the state by scaling it as: $h_l' = \alpha \cdot h_l$,
with $\alpha =0.1$, as detailed in Algorithm~\ref{alg:suppress}. This intervention selectively weakens high-magnitude hidden states, which we hypothesize correspond to moments of intensive reasoning. As a baseline, we implement \textit{Random Suppression}, where the same proportion of tokens is randomly selected and suppressed by the same factor.

Figure~\ref{fig:causal_intervention_qwen14b} visualizes the intervention results on Qwen3-14B. Across all suppression levels, targeting high-$\ell_2$-norm states consistently causes severe performance degradation, even at low intervention ratios, while random suppression leads to a substantially milder decline. These results provide strong causal evidence that the model’s reasoning activity is concentrated in the high-magnitude subspace, and that $\ell_2$ norm is functionally tied to the model’s reasoning process. Complete numerical results, visualizations for other models, more analysis, and case studies are in Appendix~\ref{Appendix:suppression_result}.

\section{Application: $\ell_2$ Norm Guided Test-Time Reasoning Control}
\label{sec:applications}

\begin{figure*}[t]
    \centering
    \includegraphics[width=1.0\textwidth]{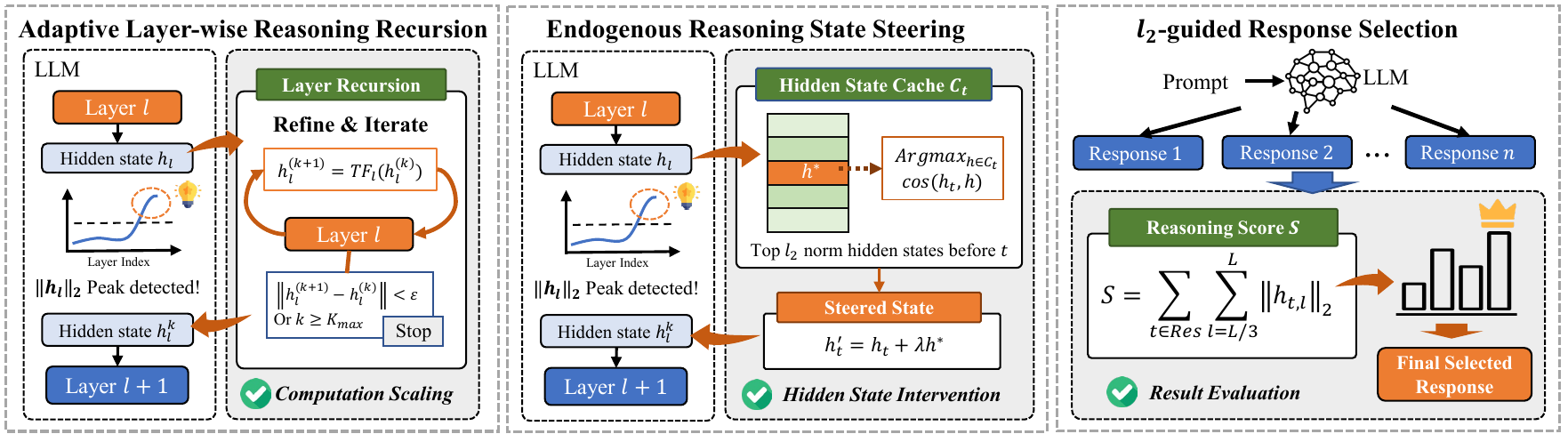}
    \caption{Practical applications based on $\ell_2$ norm dynamics.}
    \vspace{-0.3cm}
    \label{fig:reasoning_applications}
\end{figure*}

Drawing insights from our previous analysis, which reveals that the $\ell_2$ serves as a reliable signal for model reasoning intensity, we explore three complementary directions that leverage $\ell_2$ norm dynamics for test-time reasoning control: 
\begin{itemize}[leftmargin=*,noitemsep,topsep=2pt]
    \item \textit{Computation Scaling:} Identifying critical layers where standard processing may be insufficient and dynamically assigning more computation to deepen understanding.
    \item \textit{State Intervention:} Guiding latent representations toward high-energy subspaces to amplify reasoning.
    \item \textit{Result Evaluation:} Leveraging internal signals to rank candidate outputs without external labels.
\end{itemize}
An overview of these techniques is illustrated in Figure~\ref{fig:reasoning_applications}. Pseudo codes are provided in Appendix~\ref{appendix:pseudo_codes_of_applications}.

\subsection{$\ell_2$ Norm Peak Detection}
Efficient intervention requires pinpointing \textit{when} and \textit{where} reasoning is most intensive. While the $\ell_2$-norm reflects reasoning intensity, identifying these states during test-time decoding is non-trivial. Naively applying a fixed global threshold is infeasible for two reasons. First, determining which states belong to the ``top" percentile is non-causal, as such global statistics are only available after the entire sequence is generated. Second, as demonstrated in Section~\ref{sec:l2_norm}, the average $\ell_2$-norm varies significantly between "thinking" and "non-thinking" regimes. Since benchmarks with different difficulty require varying levels of thinking intensity, a threshold inferred from one dataset cannot generalize to others. Such cross-benchmark tuning would also violate the data-free objective. To overcome these challenges, we propose an adaptive \textbf{$\ell_2$ norm peak detection} mechanism. By detecting outlier magnitudes relative to the local context within the current decoding sequence, we capture high-norm pivotal reasoning moments.

Let $\{m_t\}_{t=1}^{T}$ be the sequence of $\ell_2$ norm scores of target hidden state residual stream vectors at decoding step $t$. Following the peak definition~\cite{tukey1977exploratory}, we define the inter-quartile range (IQR) as $\mathrm{IQR} = Q_3 - Q_1$, where $Q_1$ and $Q_3$ are the 25th and 75th percentiles of the sequence $\{m_t\}$. We identify the set of $\ell_2$ norm peaks $\mathcal{P}_{\ell_2}$ as:
\begin{equation}
\mathcal{P}_{\ell_2}
= \left\{\, t \;\middle|\; m_t \ge Q_3 + \tau \cdot \mathrm{IQR}(m) \right\},
\end{equation}
where $\tau$ (typically set to 1.5~\cite{tukey1977exploratory}) is a scale factor controlling the strictness of the peak threshold. Thisclas  approach is fully adaptive and prior-free: it automatically calibrates to the difficulty of the prompt and the model’s internal state, ensuring that interventions are triggered at moments of relatively high-intensity thinking.

\subsection{Adaptive Layer-wise Reasoning Recursion}
\label{subsec:sdcl}

Motivated by the goal of \textit{Computation Scaling}, we propose \textbf{Adaptive Layer-wise Reasoning Recursion (ALRR)} to deepen reasoning at layers where $\ell_2$ peaks occur. These positions often encode complex, pivotal reasoning elements, such as intermediate conclusions or variable bindings. When a peak is detected at layer $l$, rather than directly passing the state to layer $l+1$ following standard transformer information propagating, we perform an in-place refinement by iteratively re-injecting and reprocessing the state in the same layer:
\begin{equation}
h_l^{(0)} = h_l, \quad h_l^{(k+1)} = \mathrm{TF}_l\big(h_l^{(k)}\big)
\end{equation}
The recursion terminates when $\|h_l^{(k+1)} - h_l^{(k)}\|_2 < \epsilon$ or $k$ reaches a limit $K_{max}$. This procedure allocates additional computation precisely where reasoning activity is most concentrated, effectively increasing the depth of the reasoning chain, enabling the model to ``ponder" and encouraging local convergence in representation space.

\subsection{Endogenous Reasoning State Steering}
\label{subsec:ctvi}

For \textit{State Intervention}, we introduce \textbf{Endogenous Reasoning State Steering (ERSS)} to amplify ongoing reasoning and guide latent paths toward more intensified regions in the hidden space. Unlike existing steering methods that rely on handcrafted lexical triggers or externally computed control vectors, ERSS is entirely self-contained.

During decoding, we maintain a cache $\mathcal{C}_t$ of historical hidden states ${h_i},{i<t}$ with top-percentile $\ell_2$ norms, representing highly activated reasoning states encountered so far. When a new $\ell_2$ peak is detected at step $t$, we retrieve the cached state $h^*$ most semantically aligned with the current hidden representation $h_t$:
\begin{equation}
h^* = \arg\max_{h_i \in \mathcal{C}_t} \cos(h_t, h_i).
\end{equation}
The current state is then steered via additive injection:
\begin{equation} h_t' = h_t + \lambda h^*\end{equation}
Crucially, because $h^*$ is drawn from the same decoding trajectory and selected via maximum cosine similarity, the steering remains contextually grounded, without introducing external semantic drift or logical incoherence.
The mathematical elegance of this selection lies in its self-reinforcing nature. Based on the law of cosines:
\begin{equation}
    \|h_t + h^*\|^2 = \|h_t\|^2 + \|h^*\|^2 + 2 \|h_t\| \|h^*\| \cos(h_t, h^*)
\end{equation}
Since $h^*$ is a high-norm state by definition, maximizing $\cos(h_t, h^*)$ inherently maximizes the $\ell_2$-norm of $h_t'$. Consequently, this process reinforces high-activation dynamics in the hidden space while preserving contextual consistency, effectively amplifying the model's latent reasoning.

\subsection{$\ell_2$-guided Response Selection}
\label{subsec:lur}

For \textit{Result Evaluation}, we propose \textbf{$\ell_2$-guided Response Selection (LRS)} to rank candidate outputs in an unsupervised setting. We hypothesize that a reasoning path's quality is reflected by the sustained intensity of its internal activations, particularly in the later stages of the model where high-level semantic integration occurs. For each candidate response, we compute a reasoning score $S$:
\begin{equation}
S = \sum_{t \in \text{output}} \sum_{l = L/3}^{L} |h_{l,t}|_2
\end{equation}
This metric aggregates $\ell_2$ magnitudes from middle-to-late layers, prioritizing deep semantic processing over shallow token prediction. The candidate with the highest score $S$ is selected as the final prediction. This approach requires no external supervision and relies solely on internal representational dynamics, providing a simple yet effective mechanism for unsupervised response evaluation.

\subsection{Experiment Results and Analysis}

\subsubsection{Main Results}
\begin{figure}[t]
    \centering
    \includegraphics[width=\linewidth]{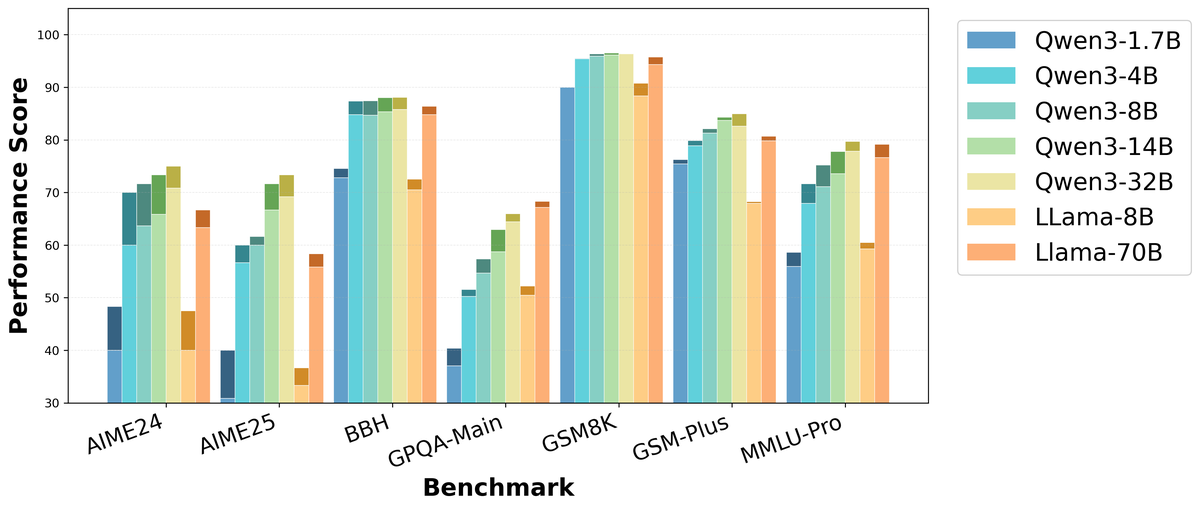}
    \caption{Performance across benchmarks using Adaptive Layer-wise Recursion (ALRR).}
    \label{fig:alrr_results}
\end{figure}
\vspace{-0.2cm}
\begin{figure}[t]
    \centering
    \includegraphics[width=\linewidth]{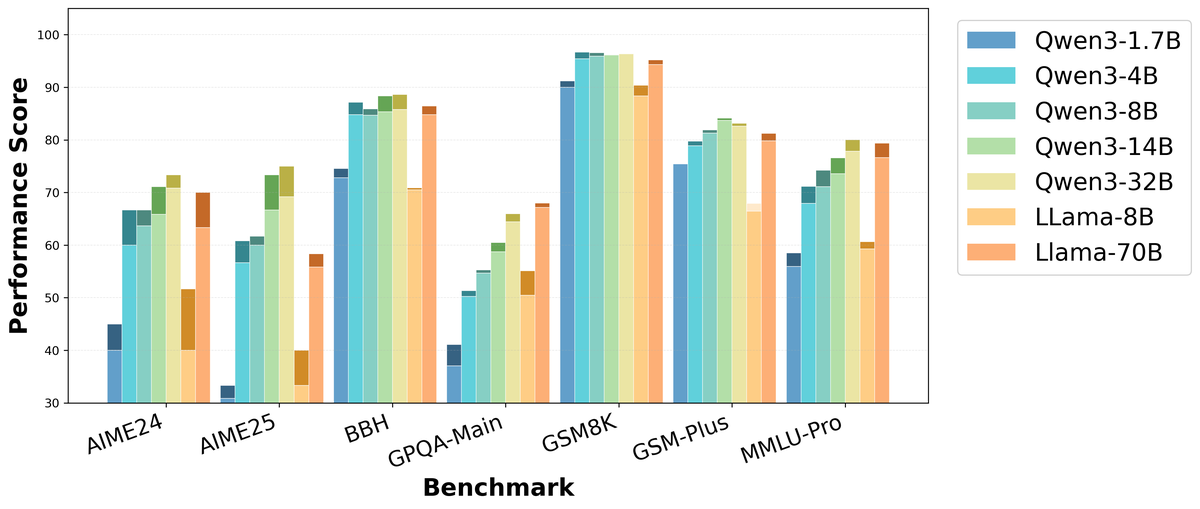}
    \caption{Performance across benchmarks using Endogenous Reasoning State Steering (ERSS).}
    \label{fig:erss_results}
    \vspace{-0.3cm}
\end{figure}

\begin{figure}[t]
    \centering
    \includegraphics[width=\linewidth]{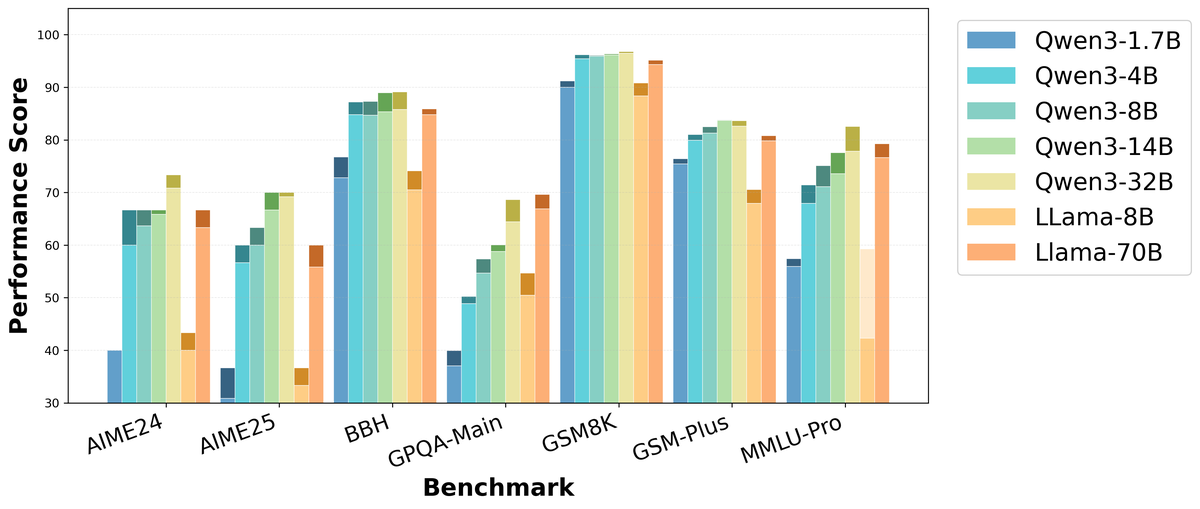}
    \caption{Performance across benchmarks using $\ell_2$-guided Response Selection (LRS).}
    \label{fig:lrs_results}
    \vspace{-0.3cm}
\end{figure}

\begin{figure}[t]
    \centering
    \includegraphics[width=\linewidth]{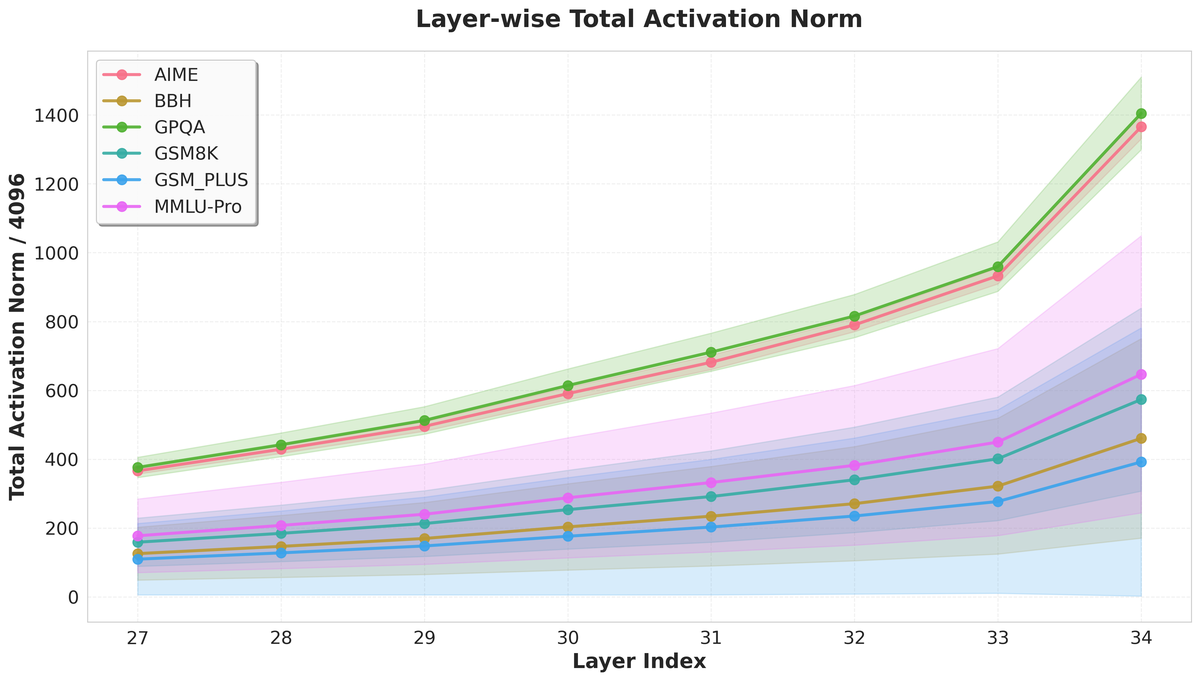}
    \caption{The average total $\ell_2$ norm per sequence distribution across reasoning benchmarks on Qwen3-8B per sequence~(For ease of visualization, we divide them by 4096). Solid lines represent mean values, while shaded regions indicate standard deviations.}
    \label{fig:norm_comparison_main}
    \vspace{-0.3cm}
\end{figure}

We evaluate proposed techniques across Qwen3 series and the DeepSeek-R1-Distill-Llama family on a diverse suite of mathematical and general reasoning benchmarks. Evaluation settings are reported in Appendix~\ref{appendix:eval_setting}.

As illustrated in Figure~\ref{fig:alrr_results},~\ref{fig:erss_results},~\ref{fig:lrs_results}, across all evaluated models and benchmarks, all three $\ell_2$-guided intervention strategies, Adaptive Layer-wise Recursion (ALRR), Endogenous State Steering (ERSS), and $\ell_2$-guided Selection (LRS), yield consistent performance gains. On average, our methods achieve a \textbf{4.51\% relative improvement across all benchmarks}, and deliver \textbf{substantially larger gains of 9.13\% on challenging reasoning tasks} such as AIME. Detailed numerical results and analysis are provided in Appendix~\ref{appendix:ALRR_analysis},~\ref{appendix:ERSS_Analysis},~\ref{appendix:LRS_Analysis}. Specifically, ALRR and ERSS successfully enhance the internal reasoning of models during the generation process, while LRS provides a robust unsupervised reranking mechanism that outperforms average performance. These results demonstrate that the $\ell_2$-norm is an effective control handle for test-time optimization.


\subsubsection{Cross-Benchmark $\ell_2$ Norm Distribution}
To further understand $\ell_2$ norm dynamics across tasks, we visualize late-layers' average total $\ell_2$ magnitudes per sequence for each benchmark on Qwen3-8B. As shown in Figure~\ref{fig:norm_comparison_main}, $\ell_2$ profiles exhibit substantial heterogeneity depending on the task difficulty. For challenging reasoning benchmarks such as AIME and GPQA, $\ell_2$ norms are consistently much higher than those observed on simpler and more general benchmarks like MMLU-Pro, which aligns closely with our earlier observations in Section~\ref{sec:l2_norm} that more intensified reasoning exhibits significantly larger total norms. Furthermore, this disparity also justifies our $\ell_2$ norm peak detection. The fact that a ``high" norm in GSM-8k might be considered ``low" in GPQA confirms that a universal cutoff for locating reasoning step is ineffective. Instead, our approach remains benchmark-agnostic and fully data-free, ensuring that interventions are consistently targeted at the model's true internal reasoning surges. Appendix~\ref{appendix:benchmark_difference} provides a detailed analysis of the relationship between different norm-based statistics and both task domain (e.g., mathematical vs.\ general) and task difficulty, together with our findings.

In addition, we provide a broader sensitivity and boundary-condition analysis in Appendix~\ref{appendix:sensitivity}, including ablations of different hyperparameters and design choices in three methods; robustness under different SAE configurations; compatibility across different test-time enhancement methods; and analyses of failure modes and boundary conditions on non-reasoning benchmarks.

\section{Discussion}
\noindent\textbf{Insight.}
Recent work on enhancing LLM reasoning mainly follows two paradigms. Explicit CoT expands reasoning in the output space but incurs substantial token and latency overhead. Latent CoT internalizes this process, yet often requires new token types or architectural modifications, limiting generality. More broadly, explicit CoT can be seen as scaling computation along the output dimension by generating more reasoning tokens, while latent-space scaling has also been shown possible by Looped Transformers. This shifts the key question to where extra computation should be allocated across layers. Our work addresses this question and identifies hidden-state $\ell_2$ norm as a training-free signal of reasoning-critical locations.

Current reasoning control is also dominated by decode-stage signals, such as entropy-based criteria for CoT generation or token-level signals used in reinforcement learning and reranking. While useful, these signals mainly operate at the output level and provide limited access to fine-grained layer-wise dynamics, which is especially inconvenient for studying or controlling latent thought. In this sense, we hope to encourage further exploration of simple intrinsic signals already present inside the model. Our work provides one such example: token-wise and layer-wise variations in hidden-state $\ell_2$ norm offer a lightweight way to probe internal reasoning states without additional training or architectural changes.

\noindent\textbf{Interpretation.}
The $\ell_2$ norm of a hidden representation admits an intuitive physical analogy: it measures representational energy, and peaks where the model allocates computation most intensively. This intuition is supported by prior work in computer vision, which shows that for cross-entropy-trained models, in-distribution samples tend to have larger feature norms than OOD samples because increasing the correct logit often requires increasing feature magnitude. We argue that a similar mechanism holds in autoregressive LLMs: tokens that are critical for a correct continuation require sharper logit separation, which induces larger hidden-state norms. Under this view, $\ell_2$ norm peaks mark computation-critical tokens in the reasoning process.

At the same time, our benchmark analyses suggest that the relation between $\ell_2$ norm and reasoning is not captured by a single explanation. Both task difficulty and task domain are associated with different norm statistics, indicating that $\ell_2$ norm is not a complete or one-dimensional characterization of reasoning. Rather, it likely reflects a richer mixture of computational factors whose mechanism remains to be understood. We therefore view our results not as a complete account of latent reasoning, but as evidence that simple endogenous signals such as $\ell_2$ norm can reveal meaningful internal structure and provide a useful starting point for further study.

\section{Conclusion}

We show that hidden-state $\ell_2$ norm provides a simple endogenous signal of reasoning intensity in LLMs. Theoretically and empirically, it tracks layer-wise reasoning activity and bounds the activation of SAE reasoning features. Based on this observation, we develop three training-free inference-time methods that improve reasoning performance across multiple benchmarks. More broadly, our results suggest that simple internal signals can provide useful access to latent computation, offering both practical tools for reasoning enhancement and a promising direction for mechanistic interpretability.

\section*{Acknowledgement}
This work is supported by the National Natural Science Foundation of China~(No.62576013, No.U23A20468).

\section*{Impact Statement}
This work explores the internal reasoning dynamics in LLMs. We identify hidden-state $l_2$ norms as reliable indicators of reasoning intensity and leverage them for training-free test-time reasoning enhancement, contributing to the development of more transparent and computationally efficient AI. The proposed method does not require additional data and optimization and does not introduce additional ethical or privacy concerns. While these techniques empower users to amplify reasoning capabilities, we emphasize that they measure the model's internal deliberation intensity rather than absolute factual correctness, encouraging a more nuanced and responsible deployment of LLMs in critical decision-making tasks.

\bibliographystyle{icml2026}
\bibliography{icml2026/main}

@article{yang2025qwen3,
  title={Qwen3 technical report},
  author={Yang, An and Li, Anfeng and Yang, Baosong and Zhang, Beichen and Hui, Binyuan and Zheng, Bo and Yu, Bowen and Gao, Chang and Huang, Chengen and Lv, Chenxu and others},
  journal={arXiv preprint arXiv:2505.09388},
  year={2025}
}

@article{lin2024criticaltokens,
  title={Critical Tokens Matter: Token-Level Contrastive Estimation Enhances LLM's Reasoning Capability},
  author={Lin, Zicheng and Liang, Tian and Xu, Jiahao and Lin, Qiuzhi and Wang, Xing and Luo, Ruilin and Shi, Chufan and Li, Siheng and Yang, Yujiu and Tu, Zhaopeng},
  journal={arXiv preprint arXiv:2411.19943},
  year={2024}
}

@article{venhoff2025understanding,
  title={Understanding reasoning in thinking language models via steering vectors},
  author={Venhoff, Constantin and Arcuschin, Iv{\'a}n and Torr, Philip and Conmy, Arthur and Nanda, Neel},
  journal={arXiv preprint arXiv:2506.18167},
  year={2025}
}

@article{achiam2023gpt4,
  title={Gpt-4 technical report},
  author={Achiam, Josh and Adler, Steven and Agarwal, Sandhini and Ahmad, Lama and Akkaya, Ilge and Aleman, Florencia Leoni and Almeida, Diogo and Altenschmidt, Janko and Altman, Sam and Anadkat, Shyamal and others},
  journal={arXiv preprint arXiv:2303.08774},
  year={2023}
}

@article{wei2022chain-of-thought,
  title={Chain-of-thought prompting elicits reasoning in large language models},
  author={Wei, Jason and Wang, Xuezhi and Schuurmans, Dale and Bosma, Maarten and Xia, Fei and Chi, Ed and Le, Quoc V and Zhou, Denny and others},
  journal={Advances in neural information processing systems},
  volume={35},
  pages={24824--24837},
  year={2022}
}

@article{feng2023towardsrevealingCOT,
  title={Towards revealing the mystery behind chain of thought: a theoretical perspective},
  author={Feng, Guhao and Zhang, Bohang and Gu, Yuntian and Ye, Haotian and He, Di and Wang, Liwei},
  journal={Advances in Neural Information Processing Systems},
  volume={36},
  pages={70757--70798},
  year={2023}
}

@article{tang2025unlocking,
  title={Unlocking General Long Chain-of-Thought Reasoning Capabilities of Large Language Models via Representation Engineering},
  author={Tang, Xinyu and Wang, Xiaolei and Lv, Zhihao and Min, Yingqian and Zhao, Wayne Xin and Hu, Binbin and Liu, Ziqi and Zhang, Zhiqiang},
  journal={arXiv preprint arXiv:2503.11314},
  year={2025}
}

@article{galichin2025havecovered,
  title={I Have Covered All the Bases Here: Interpreting Reasoning Features in Large Language Models via Sparse Autoencoders},
  author={Galichin, Andrey and Dontsov, Alexey and Druzhinina, Polina and Razzhigaev, Anton and Rogov, Oleg Y and Tutubalina, Elena and Oseledets, Ivan},
  journal={arXiv preprint arXiv:2503.18878},
  year={2025}
}

@article{wang2025beyond80/20,
  title={Beyond the 80/20 rule: High-entropy minority tokens drive effective reinforcement learning for llm reasoning},
  author={Wang, Shenzhi and Yu, Le and Gao, Chang and Zheng, Chujie and Liu, Shixuan and Lu, Rui and Dang, Kai and Chen, Xionghui and Yang, Jianxin and Zhang, Zhenru and others},
  journal={arXiv preprint arXiv:2506.01939},
  year={2025}
}

@article{shu2025SAEsurvey,
  title={A survey on sparse autoencoders: Interpreting the internal mechanisms of large language models},
  author={Shu, Dong and Wu, Xuansheng and Zhao, Haiyan and Rai, Daking and Yao, Ziyu and Liu, Ninghao and Du, Mengnan},
  journal={arXiv preprint arXiv:2503.05613},
  year={2025}
}

@article{chen2025howdoesCOT,
  title={How does chain of thought think? mechanistic interpretability of chain-of-thought reasoning with sparse autoencoding},
  author={Chen, Xi and Plaat, Aske and van Stein, Niki},
  journal={arXiv preprint arXiv:2507.22928},
  year={2025}
}

@article{zhao2025activation-control,
  title={Activation Control for Efficiently Eliciting Long Chain-of-thought Ability of Language Models},
  author={Zhao, Zekai and Liu, Qi and Zhou, Kun and Liu, Zihan and Shao, Yifei and Hu, Zhiting and Huang, Biwei},
  journal={arXiv preprint arXiv:2505.17697},
  year={2025}
}

@inproceedings{muennighoff2025s1,
  title={s1: Simple test-time scaling},
  author={Muennighoff, Niklas and Yang, Zitong and Shi, Weijia and Li, Xiang Lisa and Fei-Fei, Li and Hajishirzi, Hannaneh and Zettlemoyer, Luke and Liang, Percy and Cand{\`e}s, Emmanuel and Hashimoto, Tatsunori B},
  booktitle={Proceedings of the 2025 Conference on Empirical Methods in Natural Language Processing},
  pages={20286--20332},
  year={2025}
}

@article{chen2025towardslongcot,
  title={Towards reasoning era: A survey of long chain-of-thought for reasoning large language models},
  author={Chen, Qiguang and Qin, Libo and Liu, Jinhao and Peng, Dengyun and Guan, Jiannan and Wang, Peng and Hu, Mengkang and Zhou, Yuhang and Gao, Te and Che, Wanxiang},
  journal={arXiv preprint arXiv:2503.09567},
  year={2025}
}

@article{li2025featureSAE,
  title={Feature Extraction and Steering for Enhanced Chain-of-Thought Reasoning in Language Models},
  author={Li, Zihao and Wang, Xu and Yang, Yuzhe and Yao, Ziyu and Xiong, Haoyi and Du, Mengnan},
  journal={arXiv preprint arXiv:2505.15634},
  year={2025}
}

@article{fang2026controllable,
  title={Controllable LLM Reasoning via Sparse Autoencoder-Based Steering},
  author={Fang, Yi and Wang, Wenjie and Xue, Mingfeng and Deng, Boyi and Xu, Fengli and Liu, Dayiheng and Feng, Fuli},
  journal={arXiv preprint arXiv:2601.03595},
  year={2026}
}

@article{zhang2025fantastic,
  title={Fantastic Reasoning Behaviors and Where to Find Them: Unsupervised Discovery of the Reasoning Process},
  author={Zhang, Zhenyu and Zhang, Shujian and Lambert, John and Zhou, Wenxuan and Wang, Zhangyang and Chen, Mingqing and Hard, Andrew and Mathews, Rajiv and Wang, Lun},
  journal={arXiv preprint arXiv:2512.23988},
  year={2025}
}

@misc{claude_sparse_encoder,
  title        = {Towards Monosemanticity: Decomposing Language Models with Dictionary Learning},
  author       = {Bricken, Trenton and Templeton, Adly and Batson, Joshua and Chen, Brian and Jermyn, Adam and Conerly, Tom and Turner, Nick and Anil, Cem and Denison, Carson and Askell, Amanda and Lasenby, Robert and Wu, Yifan and Kravec, Shauna and Schiefer, Nicholas and Maxwell, Tim and Joseph, Nicholas and Hatfield-Dodds, Zac and Tamkin, Alex and Nguyen, Karina and McLean, Brayden and Burke, Josiah E. and Hume, Tristan and Carter, Shan and Henighan, Tom and Olah, Christopher},
  year         = {2023},
  howpublished = {Transformer Circuits Thread},
  url          = {https://transformer-circuits.pub/2023/monosemantic-features/index.html},
  note         = {Accessed: 2026-01-11}
}

@article{spearman1961proof,
  title={The proof and measurement of association between two things.},
  author={Spearman, Charles},
  year={1961},
  publisher={Appleton-Century-Crofts}
}

@article{bengio2013representation,
  title={Representation learning: A review and new perspectives},
  author={Bengio, Yoshua and Courville, Aaron and Vincent, Pascal},
  journal={IEEE transactions on pattern analysis and machine intelligence},
  volume={35},
  number={8},
  pages={1798--1828},
  year={2013},
  publisher={IEEE}
}

@article{guo2025deepseek,
  title={Deepseek-r1: Incentivizing reasoning capability in llms via reinforcement learning},
  author={Guo, Daya and Yang, Dejian and Zhang, Haowei and Song, Junxiao and Zhang, Ruoyu and Xu, Runxin and Zhu, Qihao and Ma, Shirong and Wang, Peiyi and Bi, Xiao and others},
  journal={arXiv preprint arXiv:2501.12948},
  year={2025}
}

@article{tukey1977exploratory,
  title={Exploratory data analysis},
  author={Tukey, John W},
  journal={Reading/Addison-Wesley},
  year={1977}
}

@misc{eval-harness,
  author       = {Gao, Leo and Tow, Jonathan and Abbasi, Baber and Biderman, Stella and Black, Sid and DiPofi, Anthony and Foster, Charles and Golding, Laurence and Hsu, Jeffrey and Le Noac'h, Alain and Li, Haonan and McDonell, Kyle and Muennighoff, Niklas and Ociepa, Chris and Phang, Jason and Reynolds, Laria and Schoelkopf, Hailey and Skowron, Aviya and Sutawika, Lintang and Tang, Eric and Thite, Anish and Wang, Ben and Wang, Kevin and Zou, Andy},
  title        = {The Language Model Evaluation Harness},
  month        = 07,
  year         = 2024,
  publisher    = {Zenodo},
  version      = {v0.4.3},
  doi          = {10.5281/zenodo.12608602},
  url          = {https://zenodo.org/records/12608602}
}

@article{gandhi2025cognitive,
  title={Cognitive behaviors that enable self-improving reasoners, or, four habits of highly effective stars},
  author={Gandhi, Kanishk and Chakravarthy, Ayush and Singh, Anikait and Lile, Nathan and Goodman, Noah D},
  journal={arXiv preprint arXiv:2503.01307},
  year={2025}
}

@article{shao2024deepseekmath,
  title={Deepseekmath: Pushing the limits of mathematical reasoning in open language models},
  author={Shao, Zhihong and Wang, Peiyi and Zhu, Qihao and Xu, Runxin and Song, Junxiao and Bi, Xiao and Zhang, Haowei and Zhang, Mingchuan and Li, YK and Wu, Yang and others},
  journal={arXiv preprint arXiv:2402.03300},
  year={2024}
}

@article{li2025llms,
  title={LLMs Can Easily Learn to Reason from Demonstrations Structure, not content, is what matters!},
  author={Li, Dacheng and Cao, Shiyi and Griggs, Tyler and Liu, Shu and Mo, Xiangxi and Tang, Eric and Hegde, Sumanth and Hakhamaneshi, Kourosh and Patil, Shishir G and Zaharia, Matei and others},
  journal={arXiv preprint arXiv:2502.07374},
  year={2025}
}

@article{amalric2016language,
  title={Origins of the brain networks for advanced mathematics in expert mathematicians},
  author={Marie Amalric and Stanislas Dehaene},
  journal={Proceedings of the National Academy of Sciences},
  year={2016},
  volume={113},
  pages={4909 - 4917},
  url={https://api.semanticscholar.org/CorpusID:12041996}
}

@article{cunningham2023sparse,
  title={Sparse autoencoders find highly interpretable features in language models},
  author={Cunningham, Hoagy and Ewart, Aidan and Riggs, Logan and Huben, Robert and Sharkey, Lee},
  journal={arXiv preprint arXiv:2309.08600},
  year={2023}
}

@article{qian2025demystifying,
  title={Demystifying reasoning dynamics with mutual information: Thinking tokens are information peaks in llm reasoning},
  author={Qian, Chen and Liu, Dongrui and Wen, Haochen and Bai, Zhen and Liu, Yong and Shao, Jing},
  journal={arXiv preprint arXiv:2506.02867},
  year={2025}
}

@article{yao2023treeofthought,
  title={Tree of thoughts: Deliberate problem solving with large language models},
  author={Yao, Shunyu and Yu, Dian and Zhao, Jeffrey and Shafran, Izhak and Griffiths, Tom and Cao, Yuan and Narasimhan, Karthik},
  journal={Advances in neural information processing systems},
  volume={36},
  pages={11809--11822},
  year={2023}
}

@article{fu2025deepconf,
  title={Deep think with confidence},
  author={Fu, Yichao and Wang, Xuewei and Tian, Yuandong and Zhao, Jiawei},
  journal={arXiv preprint arXiv:2508.15260},
  year={2025}
}

@inproceedings{vaswani2017attention,
  title={Attention Is All You Need},
  author={Vaswani, Ashish and Shazeer, Noam and Parmar, Niki and Uszkoreit, Jakob and Jones, Llion and Gomez, Aidan N and Kaiser, {\L}ukasz and Polosukhin, Illia},
  booktitle={Advances in Neural Information Processing Systems (NeurIPS)},
  year={2017}
}

@article{cobbe2021gsm8k,
  title={Training verifiers to solve math word problems},
  author={Cobbe, Karl and Kosaraju, Vineet and Bavarian, Mohammad and Chen, Mark and Jun, Heewoo and Kaiser, Lukasz and Plappert, Matthias and Tworek, Jerry and Hilton, Jacob and Nakano, Reiichiro and others},
  journal={arXiv preprint arXiv:2110.14168},
  year={2021}
}

@inproceedings{li2024gsmPlus,
  title={GSM-Plus: A Comprehensive Benchmark for Evaluating the Robustness of LLMs as Mathematical Problem Solvers},
  author={Li, Qintong and Cui, Leyang and Zhao, Xueliang and Kong, Lingpeng and Bi, Wei},
  booktitle={Proceedings of the 62nd Annual Meeting of the Association for Computational Linguistics (Volume 1: Long Papers)},
  pages={2961--2984},
  year={2024}
}

@inproceedings{
zhang2023aime,
title={Action Inference by Maximising Evidence: Zero-Shot Imitation from Observation with World Models},
author={Xingyuan Zhang and Philip Becker-Ehmck and Patrick van der Smagt and Maximilian Karl},
booktitle={Thirty-seventh Conference on Neural Information Processing Systems},
year={2023},
url={https://openreview.net/forum?id=WjlCQxpuxU}
}

@inproceedings{suzgun2023BBH,
  title={Challenging big-bench tasks and whether chain-of-thought can solve them},
  author={Suzgun, Mirac and Scales, Nathan and Sch{\"a}rli, Nathanael and Gehrmann, Sebastian and Tay, Yi and Chung, Hyung Won and Chowdhery, Aakanksha and Le, Quoc and Chi, Ed and Zhou, Denny and others},
  booktitle={Findings of the Association for Computational Linguistics: ACL 2023},
  pages={13003--13051},
  year={2023}
}

@article{wang2024mmlupro,
  title={Mmlu-pro: A more robust and challenging multi-task language understanding benchmark},
  author={Wang, Yubo and Ma, Xueguang and Zhang, Ge and Ni, Yuansheng and Chandra, Abhranil and Guo, Shiguang and Ren, Weiming and Arulraj, Aaran and He, Xuan and Jiang, Ziyan and others},
  journal={Advances in Neural Information Processing Systems},
  volume={37},
  pages={95266--95290},
  year={2024}
}

@inproceedings{park2023understanding,
  title={Understanding the feature norm for out-of-distribution detection},
  author={Park, Jaewoo and Chai, Jacky Chen Long and Yoon, Jaeho and Teoh, Andrew Beng Jin},
  booktitle={Proceedings of the IEEE/CVF international conference on computer vision},
  pages={1557--1567},
  year={2023}
}

@article{zhang2025adept,
  title={ADEPT: Continual Pretraining via Adaptive Expansion and Dynamic Decoupled Tuning},
  author={Zhang, Jinyang and Fang, Yue and Ding, Hongxin and Liao, Weibin and Ye, Muyang and Chu, Xu and Zhao, Junfeng and Wang, Yasha},
  journal={arXiv preprint arXiv:2510.10071},
  year={2025}
}

@article{ding2025promed,
  title={Promed: Shapley information gain guided reinforcement learning for proactive medical llms},
  author={Ding, Hongxin and Huang, Baixiang and Fang, Yue and Liao, Weibin and Jiang, Xinke and Li, Zheng and Zhao, Junfeng and Wang, Yasha},
  journal={arXiv preprint arXiv:2508.13514},
  year={2025}
}

@inproceedings{ding20253ds,
  title={3DS: Medical Domain Adaptation of LLMs via Decomposed Difficulty-based Data Selection},
  author={Ding, Hongxin and Fang, Yue and Zhu, Runchuan and Jiang, Xinke and Zhang, Jinyang and Xu, Yongxin and Liao, Weibin and Chu, Xu and Zhao, Junfeng and Wang, Yasha},
  booktitle={Proceedings of the 2025 Conference on Empirical Methods in Natural Language Processing},
  pages={19473--19495},
  year={2025}
}

@inproceedings{fang2026toward,
  title={Toward better EHR reasoning in llms: Reinforcement learning with expert attention guidance},
  author={Fang, Yue and Guo, Yuxin and Gao, Jiaran and Ding, Hongxin and Jiang, Xinke and Liao, Weibin and Xu, Yongxin and Zhu, Yinghao and Yang, Zhibang and Ma, Liantao and others},
  booktitle={Proceedings of the AAAI Conference on Artificial Intelligence},
  volume={40},
  number={36},
  pages={30690--30698},
  year={2026}
}

@article{liao2026magical,
  title={Magical: Medical lay language generation via semantic invariance and layperson-tailored adaptation},
  author={Liao, Weibin and Wang, Tianlong and Zhu, Yinghao and Wang, Yasha and Gao, Junyi and Ma, Liantao},
  journal={Advances in Neural Information Processing Systems},
  volume={38},
  pages={118787--118817},
  year={2026}
}

@article{liao2025learnat,
  title={Learnat: Learning nl2sql with ast-guided task decomposition for large language models},
  author={Liao, Weibin and Gao, Xin and Jia, Tianyu and Qiu, Rihong and Zhu, Yifan and Lin, Yang and Chu, Xu and Zhao, Junfeng and Wang, Yasha},
  journal={arXiv preprint arXiv:2504.02327},
  year={2025}
}

@article{fang2026graphwalker,
  title={GraphWalker: Graph-Guided In-Context Learning for Clinical Reasoning on Electronic Health Records},
  author={Fang, Yue and Liao, Weibin and Guo, Yuxin and Gao, Jiaran and Ding, Hongxin and Zhang, Jinyang and Jiang, Xinke and Yang, Zhibang and Zhao, Junfeng and Wang, Yasha and others},
  journal={arXiv preprint arXiv:2604.06684},
  year={2026}
}

\clearpage
\newpage
\appendix

\newtcolorbox[auto counter]{promptbox}{
  colback=white,
  colframe=purple!80!black,
  arc=10pt,
  title={\textcolor{white}{\textbf{Prompt \thetcbcounter}}},
  colbacktitle=purple!80!black,
  breakable
}

\newtcolorbox[auto counter]{examplebox}{
    colback=yellow!10,  
    colframe=yellow!70!black,  
    arc=10pt,
    title={\textcolor{black}{\textbf{Example \thetcbcounter}}},
    colbacktitle=yellow!40,  
    breakable,
}

\onecolumn

\section{Related Work}
\subsection{LLM Reasoning Behaviors and Test-Time Scaling}
Recent advances in Large Language Models have demonstrated remarkable reasoning capabilities, particularly in specialized reasoning-oriented models such as DeepSeek-R1~\cite{guo2025deepseek,shao2024deepseekmath}, OpenAI's o1~\cite{achiam2023gpt4} and Qwen-3 model family~\cite{yang2025qwen3}. These models utilize extended Chain-of-Thought (CoT)~\cite{wei2022chain-of-thought,chen2025howdoesCOT} sequences to solve complex problems. Recent studies have focused on the characteristics of these reasoning trajectories, demonstrating that allocating more compute during inference, either through searching multiple paths~\cite{yao2023treeofthought}, generating longer sequences~\cite{chen2025towardslongcot,ding2025promed}, majority voting~\cite{fu2025deepconf} or inserting special tokens~\cite{zhao2025activation-control,muennighoff2025s1}, directly correlates with improved accuracy. A key focus within this area is the identification of critical reasoning tokens at the output level, leveraging rollout sampling~\cite{lin2024criticaltokens,fang2026graphwalker}, output entropy~\cite{wang2025beyond80/20} or mutual information with correctness~\cite{qian2025demystifying}. While these works successfully highlight where reasoning occurs in the output sequence, they remain largely descriptive and retrospective, failing to explore the internal dynamics of how reasoning is computationally represented in the model's hidden space. This lack of mechanistic insight limits the ability to control reasoning behaviors more efficiently. Instead of waiting for a ``thinking token" to be decoded, our work seeks to identify reasoning activity as it unfolds across hidden layers.

\subsection{Mechanistic Interpretability and Reasoning Control}
To move beyond output-level analysis, recent work has turned to mechanistic interpretability tools to probe the internal representations underlying reasoning. Sparse Autoencoders (SAEs) are utilized to decompose polysemantic neural activations into sparse, interpretable features~\cite{cunningham2023sparse,claude_sparse_encoder,liao2026magical}. Recent studies isolate features specifically triggered during reflection, backtracking, and logical deduction~\cite{galichin2025havecovered,zhang2025fantastic,ding20253ds}. These insights have enabled representation engineering, where identified reasoning directions are used as steering vectors to induce longer CoT or improve performance by intervening directly in the activation space~\cite{li2025featureSAE,venhoff2025understanding,chen2025howdoesCOT,fang2026toward}. However, existing interpretability and control frameworks face two major limitations. First, most studies focus primarily on identifying and manipulating individual reasoning features, rather than systematically characterizing how reasoning activity itself evolves across model depth. Second, they are heavily probe-dependent. Identifying and steering reasoning features requires training expensive, model-specific SAEs and often relies on labeling features based on token-level heuristics~\cite{fang2026controllable, galichin2025havecovered}. This makes the methods difficult to generalize across different architectures or scales without significant overhead. Our work addresses these gaps by analyzing LLM layer-wise reasoning dynamics and identifying the $\ell_2$ norm as an endogenous, probe-free indicator of reasoning intensity. Unlike SAE-based steering, our approach senses the model's internal ``thinking" state adaptively, offering a more universal and efficient mechanism for understanding and controlling reasoning.

\section{Theoretical Analysis Between $\ell_2$ Norm and SAE Reasoning Feature Activation}
\label{Appendix:SAE_l2_theory}

After observing significant layer-wise variations in SAE activations across models, we note that training a separate sparse autoencoder (SAE) for every layer of every model incurs prohibitive computational costs and suffers from poor scalability. To address this, we seek simpler and more general-purpose proxies that can effectively capture the presence of reasoning-related features in hidden states.

Theoretically, we find that the magnitude of SAE feature activations is inherently constrained by the $\ell_2$ norm of the hidden state. Below, we present our derivation.

\subsection{Preliminaries and Notation}

\paragraph{Sparse Autoencoder (SAE).}
A Sparse Autoencoder consists of an encoder $f_{\text{enc}}: \mathbb{R}^d \to \mathbb{R}^m$ and a decoder $f_{\text{dec}}: \mathbb{R}^m \to \mathbb{R}^d$, where $m \gg d$ represents the overcomplete feature dimension. Let $\mathbf{h}_t \in \mathbb{R}^d$ denote the hidden activation at layer $\ell$ and token position $t$ in the input sequence. The encoder maps the hidden state to the \textbf{overcomplete SAE feature} $\mathbf{z}_t \in \mathbb{R}^m$ as:
\begin{equation}
    \mathbf{z}_t = f_{\text{enc}}(\mathbf{h}_t) = \text{ReLU}(\mathbf{W}_{\text{enc}} \mathbf{h}_t + \mathbf{b}_{\text{enc}}),
\end{equation}
where $\mathbf{W}_{\text{enc}} \in \mathbb{R}^{m \times d}$ is the encoding matrix and $\mathbf{b}_{\text{enc}} \in \mathbb{R}^m$ is the bias vector. The decoder reconstructs the original hidden state from the SAE feature as:
\begin{equation}
    \hat{\mathbf{h}}_t = f_{\text{dec}}(\mathbf{z}_t) = \mathbf{W}_{\text{dec}} \mathbf{z}_t = \sum_{i=1}^m z_{i,t} \mathbf{d}_i,
\end{equation}
where $\mathbf{d}_i$ denotes the $i$-th column of $\mathbf{W}_{\text{dec}} \in \mathbb{R}^{d \times m}$ and $z_{i,t}$ is the $i$-th dimension of the SAE feature $\mathbf{z}_t$.

\paragraph{Training Objective.}
SAE is trained to minimize the following loss function, which jointly enforces reconstruction fidelity and feature sparsity:
\begin{equation}
    \mathcal{L}_{\text{SAE}} = \mathbb{E}_{\mathbf{h}} \left[ \|\mathbf{h} - \hat{\mathbf{h}}\|_2^2 + \lambda \|\mathbf{z}\|_1 \right],
    \label{eq:sae_loss}
\end{equation}
where the first term is the $\ell_2$ reconstruction loss to preserve the original hidden state information, and the second term is the $\ell_1$ regularization term with coefficient $\lambda > 0$ to promote sparsity of the SAE feature $\mathbf{z}$.

\paragraph{Differential Feature Activation.}
For a given reasoning problem, we generate two response sequences: a \textbf{thinking version} and a \textbf{non-thinking version}. Let $\mathbf{z}^{\text{think}}_t \in \mathbb{R}^m$ and $\mathbf{z}^{\text{non}}_t \in \mathbb{R}^m$ denote the SAE features at token position $t$ for the thinking and non-thinking response sequences, respectively. We first compute the \textbf{average activation} of the $i$-th SAE feature dimension across all token positions for each version, then define the \textbf{differential feature activation} as the difference between these two averages:
\begin{equation}
    \Delta z_i = \mathbb{E}_t\left[z_{i,t}^{\text{think}}\right] - \mathbb{E}_t\left[z_{i,t}^{\text{non}}\right], \quad i = 1, \ldots, m,
\end{equation}
where $z_{i,t}^{\text{think}}$ and $z_{i,t}^{\text{non}}$ are the $i$-th dimension of $\mathbf{z}^{\text{think}}_t$ and $\mathbf{z}^{\text{non}}_t$, respectively, and $\mathbb{E}_t[\cdot]$ denotes the expectation (average) over all token positions $t$ in the response sequence.

\subsubsection{Fundamental Assumptions}

\begin{assumption}[Decoder Approximate Orthonormality]
\label{assump:orthonormal}
The decoder basis vectors $\{\mathbf{d}_i\}_{i=1}^m$ satisfy approximate orthonormality\citep{claude_sparse_encoder}:
\begin{equation}
    \langle \mathbf{d}_i, \mathbf{d}_j \rangle = \delta_{ij} + \mathcal{O}(\epsilon_{\text{orth}}), \quad \|\mathbf{d}_i\|_2 = 1 + \mathcal{O}(\epsilon_{\text{orth}}),
\end{equation}
where $\epsilon_{\text{orth}} \ll 1$ is a small constant representing deviation from exact orthonormality.
\end{assumption}

\begin{assumption}[Reconstruction Quality]
\label{assump:reconstruction}
The SAE achieves good reconstruction quality, i.e., there exists a small constant $\epsilon_{\text{recon}} > 0$ such that:
\begin{equation}
    \|\mathbf{h}_t - f_{\text{dec}}(f_{\text{enc}}(\mathbf{h}_t))\|_2 \leq \epsilon_{\text{recon}}, \quad \forall t.
\end{equation}
\end{assumption}

\begin{assumption}[Semantic Inclusion Property at Response Level]
\label{assump:semantic_inclusion}
For a given reasoning problem, we generate two complete response sequences: a \emph{thinking response} $\mathbf{H}_{\text{think}} = \{\mathbf{h}^{\text{think}}_t\}_{t=1}^{T_{\text{think}}}$ and a \emph{non-thinking response} $\mathbf{H}_{\text{non}} = \{\mathbf{h}^{\text{non}}_t\}_{t=1}^{T_{\text{non}}}$, where $T_{\text{think}}$ and $T_{\text{non}}$ denote the token lengths of the two responses, and $\mathbf{h}^{\text{think}}_t, \mathbf{h}^{\text{non}}_t \in \mathbb{R}^d$ are the hidden state embeddings of the $t$-th token in each sequence respectively.

At the response level, the semantic information of $\mathbf{H}_{\text{think}}$ is assumed to contain the base semantic content of $\mathbf{H}_{\text{non}}$ plus additional reasoning-specific semantic information. This inclusion relationship is formalized based on the \textbf{expectation (average) of hidden state embeddings} over the entire response sequence, which avoids relying on local token alignment and reflects a global semantic constraint.

Formally, let $\bar{\mathbf{h}}_{\text{think}} = \mathbb{E}_t\left[\mathbf{h}^{\text{think}}_t\right] = \frac{1}{T_{\text{think}}}\sum_{t=1}^{T_{\text{think}}}\mathbf{h}^{\text{think}}_t$ denote the average hidden state embedding of the thinking response, and $\bar{\mathbf{h}}_{\text{non}} = \mathbb{E}_t\left[\mathbf{h}^{\text{non}}_t\right] = \frac{1}{T_{\text{non}}}\sum_{t=1}^{T_{\text{non}}}\mathbf{h}^{\text{non}}_t$ denote that of the non-thinking response. There exist two vectors $\mathbf{h}_{\text{base}} \in \mathbb{R}^d$ (global base semantic embedding) and $\mathbf{h}_{\text{reasoning}} \in \mathbb{R}^d$ (global reasoning-specific semantic embedding, $\mathbf{h}_{\text{reasoning}} \neq \mathbf{0}$) such that:
\begin{equation}
\bar{\mathbf{h}}_{\text{think}} = \mathbf{h}_{\text{base}} + \mathbf{h}_{\text{reasoning}},
\end{equation}
and the average hidden state of the non-thinking response satisfies:
\begin{equation}
\bar{\mathbf{h}}_{\text{non}} = \mathbf{h}_{\text{base}} + \boldsymbol{\epsilon}_{\text{noise}},
\end{equation}
where $\epsilon_{\text{noise}} \ll \|\mathbf{h}_{\text{reasoning}}\|_2$. This ensures that the deviation of the non-thinking response's average embedding from the base semantic embedding is negligible compared to the reasoning-specific semantic component.
\end{assumption}

\noindent\textit{Justification.}
While local token embeddings may vary due to context-dependent fluctuations (e.g., different wording or syntax), the average hidden state over the entire response sequence can effectively characterize the global core semantics of the response, filtering out local noise. A thinking response, which incorporates deliberate reasoning steps, does not replace the basic understanding of the problem but adds reasoning-specific content on top of it—this is reflected in the global average embedding by the additional $\mathbf{h}_{\text{reasoning}}$ component.

This assumption is supported by two lines of evidence: (i) Empirical observations show that removing thinking tokens degrades reasoning performance while preserving basic language understanding \citep{guo2025deepseek}, indicating that thinking content is an incremental semantic component rather than a replacement for base semantics; (ii) Both responses target the same reasoning problem, so their global average embeddings must share a common base semantic foundation ($\mathbf{h}_{\text{base}}$), with the difference primarily arising from the presence or absence of reasoning-specific content~\cite{li2025featureSAE,venhoff2025understanding}. Using average embeddings to formalize the inclusion relationship also aligns with the statistical nature of SAE feature activation analysis (i.e., average activation of SAE features), ensuring consistency across theoretical assumptions and subsequent analyses.

\begin{assumption}[Approximate Orthogonality of Reasoning Component]
\label{assump:reasoning_orthogonal}
For the global base semantic embedding $\mathbf{h}_{\text{base}}$ and global reasoning-specific semantic embedding $\mathbf{h}_{\text{reasoning}}$ (defined in Assumption~\ref{assump:semantic_inclusion}), there always exists at least one such decomposition where the two components are approximately orthogonal. Formally:
\begin{equation}
|\langle \mathbf{h}_{\text{base}}, \mathbf{h}_{\text{reasoning}} \rangle|
\le \beta \|\mathbf{h}_{\text{base}}\|_2 \|\mathbf{h}_{\text{reasoning}}\|_2,
\end{equation}
where $\beta \in (0,1)$ is a small constant satisfying $\beta \ll 1$, and $\langle \cdot, \cdot \rangle$ denotes the inner product. This implies the cosine similarity between the two components is bounded by $\beta$, ensuring weak correlation.
\end{assumption}

\noindent\textit{Justification.}
We take this approximate orthogonality as a modeling prior to separate base semantics from reasoning dynamics. This aligns with cognitive neuroscience evidence that human reasoning functions largely independently of core language processing \citep{amalric2016language}, extending to LLMs where semantic and reasoning representations can be effectively disentangled \citep{li2025featureSAE}. It also conforms to representation learning principles that independent feature factors tend to be encoded as weakly correlated components \citep{bengio2013representation}. Moreover, modern reasoning-oriented LLMs are optimized via additional reasoning-specific signals on top of pretrained language models \citep{guo2025deepseek}, naturally fostering relatively separable semantic and reasoning representations.

\begin{assumption}[Feature Disentanglement via SAE]
\label{assump:disentangle}
When sufficiently trained, the SAE learns disentangled monosemantic features that capture the orthogonal decomposition in Assumption~\ref{assump:reasoning_orthogonal}. These features naturally partition into two essentially disjoint index sets:
\begin{itemize}[leftmargin=*,label=\textbullet,noitemsep,topsep=2pt]
    \item $\mathcal{I}_{\text{base}}$: Indices of features capturing base semantic properties, with average activation shared by both thinking and non-thinking responses;
    \item $\mathcal{I}_{\text{reasoning}}$: Indices of features capturing reasoning-specific properties, whose average activation is predominantly induced by thinking responses.
\end{itemize}
It holds that $|\mathcal{I}_{\text{base}} \cap \mathcal{I}_{\text{reasoning}}| \ll \min(|\mathcal{I}_{\text{base}}|, |\mathcal{I}_{\text{reasoning}}|)$. Let $z_i^{\text{base}}$ and $z_i^{\text{reasoning}}$ denote the average activation coefficients of the $i$-th SAE feature for base semantics and reasoning, respectively. Then the linear decomposition of the average embeddings via SAE decoder basis $\{\mathbf{d}_i\}$ satisfies:
\begin{align}
    \mathbf{h}_{\text{base}} &\approx \sum_{i \in \mathcal{I}_{\text{base}}} z_i^{\text{base}} \mathbf{d}_i, \\
    \mathbf{h}_{\text{reasoning}} &\approx \sum_{i \in \mathcal{I}_{\text{reasoning}}} z_i^{\text{reasoning}} \mathbf{d}_i,
\end{align}
with $\|\mathbf{h}_{\text{base}} - \sum_{i \in \mathcal{I}_{\text{base}}} z_i^{\text{base}} \mathbf{d}_i\|_2 \le \epsilon_{\text{dis}}$ and $\|\mathbf{h}_{\text{reasoning}} - \sum_{i \in \mathcal{I}_{\text{reasoning}}} z_i^{\text{reasoning}} \mathbf{d}_i\|_2 \le \epsilon_{\text{dis}}$, 
where $\epsilon_{\text{dis}} > 0$ is a small constant.
\end{assumption}

\noindent\textit{Justification.}
The $\ell_1$ penalty in Equation~\eqref{eq:sae_loss} encourages the SAE to learn monosemantic features that capture specific interpretable patterns \citep{cunningham2023sparse}. Given the approximate orthogonality of $\mathbf{h}_{\text{base}}$ and $\mathbf{h}_{\text{reasoning}}$, the SAE optimizes to separate these two components into disjoint feature sets—consistent with findings that SAEs can disentangle target representations from redundant signals \citep{chen2025howdoesCOT}. This separation minimizes reconstruction error while maintaining sparsity, leveraging the SAE's inherent capacity to disentangle independent semantic factors \citep{bengio2013representation} and thus capturing the orthogonal decomposition of base semantics and reasoning.

\begin{assumption}[Base Feature Consistency]
\label{assump:base_consistency}
For base semantic features, the activations at thinking and non-thinking tokens are approximately equal:
\begin{equation}
    z_i^{\text{think}} \approx z_i^{\text{non}}, \quad \forall i \in \mathcal{I}_{\text{base}},
\end{equation}
or more precisely, $|z_i^{\text{think}} - z_i^{\text{non}}| \leq \gamma z_i^{\text{non}}$ for small $\gamma \ll 1$.
\end{assumption}

\noindent\textit{Justification:} Since both thinking and non-thinking tokens process the same problem, they should activate similar base semantic features. By Assumption~\ref{assump:semantic_inclusion}, $\bar{\mathbf{h}}_{\text{think}}$ includes $\mathbf{h}_{\text{base}}$ which is approximately $\bar{\mathbf{h}}_{\text{non}}$ (up to noise $\boldsymbol{\epsilon}_{\text{noise}}$). Through the SAE encoder, this translates to similar activations on $\mathcal{I}_{\text{base}}$ features.

\subsubsection{Preliminary Lemmas}
\label{Appendix:preliminary_lemmas}
\begin{lemma}[$\ell_2$ Energy Decomposition]
\label{lem:energy_decomposition}
Under Assumption~\ref{assump:orthonormal}, the $\ell_2$ norm of the SAE reconstruction satisfies:
\begin{equation}
    \|\hat{\mathbf{h}}\|_2^2 = \sum_{i=1}^m z_i^2 + \mathcal{O}(\epsilon_{\text{orth}} \|\mathbf{z}\|_2^2).
\end{equation}
\end{lemma}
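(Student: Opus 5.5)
The plan is to expand the squared norm of the reconstruction bilinearly and split it into its diagonal and off-diagonal parts. Since the appendix defines the decoder without a bias, write $\hat{\mathbf{h}} = \sum_{i=1}^m z_i \mathbf{d}_i$. Then
\begin{equation*}
\|\hat{\mathbf{h}}\|_2^2 = \sum_{i,j} z_i z_j \langle \mathbf{d}_i, \mathbf{d}_j\rangle = \sum_{i} z_i^2 \|\mathbf{d}_i\|_2^2 + \sum_{i\neq j} z_i z_j \langle \mathbf{d}_i, \mathbf{d}_j\rangle .
\end{equation*}
By Assumption~\ref{assump:orthonormal}, $\|\mathbf{d}_i\|_2^2 = 1 + \mathcal{O}(\epsilon_{\text{orth}})$. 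The diagonal part is therefore $\sum_i z_i^2 + \mathcal{O}(\epsilon_{\text{orth}})\sum_i z_i^2$, which already has the required form.

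The off-diagonal part is the real issue. I would bound it by packaging it as a quadratic form: $\sum_{i\neq j} z_i z_j G_{ij} = \mathbf{z}^\top E \mathbf{z}$, where $E = G - \mathrm{diag}(G)$ and $G = W_{\text{dec}}^\top W_{\text{dec}}$ is the Gram matrix. This gives $|\mathbf{z}^\top E\mathbf{z}| \le \|E\|_{\text{op}}\|\mathbf{z}\|_2^2$, so it suffices to show $\|E\|_{\text{op}} = \mathcal{O}(\epsilon_{\text{orth}})$.

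This step is the main obstacle. Entrywise smallness of $E$ does not control its operator norm. By Gershgorin, $\|E\|_{\text{op}} \le \max_i \sum_{j\ne i}|G_{ij}|$, which can be as large as $m\,\epsilon_{\text{orth}}$, and $m \gg d$.

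The first fix I would try uses SAE sparsity. Let $S = \mathrm{supp}(\mathbf{z})$ with $|S| = k$; only the principal submatrix $E_{SS}$ enters the quadratic form. Gershgorin on that submatrix gives $|\mathbf{z}^\top E \mathbf{z}| \le (k-1)\,\epsilon_{\text{orth}}\|\mathbf{z}\|_2^2$. For bounded sparsity $k$ the factor $k-1$ is an absolute constant and can be absorbed into the $\mathcal{O}(\cdot)$. I would state this explicitly, in the same way the main theorem lets its constants depend on SAE sparsity.

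A fallback is to read Assumption~\ref{assump:orthonormal} in its intended operator sense, namely $\|G - I\|_{\text{op}} = \mathcal{O}(\epsilon_{\text{orth}})$ on sparse supports, i.e., a restricted-isometry-type condition. Under that reading the bound follows in one line, since $|\mathbf{z}^\top(G-I)\mathbf{z}| \le \|G-I\|_{\text{op}}\|\mathbf{z}\|_2^2$.

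Combining the diagonal and off-diagonal estimates gives $\|\hat{\mathbf{h}}\|_2^2 = \sum_i z_i^2 + \mathcal{O}(\epsilon_{\text{orth}}\|\mathbf{z}\|_2^2)$. This is Lemma~\ref{lem:energy_decomposition}.

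For the main-text version of Lemma 1, I would then pass from $\hat{\mathbf{h}}$ to $\mathbf{h}$ using Assumption~\ref{assump:reconstruction}. Write $\mathbf{h} = \hat{\mathbf{h}} + \mathbf{r}$ with $\|\mathbf{r}\|_2 \le \epsilon_{\text{recon}}$. Then
\begin{equation*}
\|\mathbf{h}\|_2^2 - \|\hat{\mathbf{h}}\|_2^2 = 2\langle \mathbf{h}, \mathbf{r}\rangle - \|\mathbf{r}\|_2^2,
\end{equation*}
and Cauchy--Schwarz bounds the right-hand side in absolute value by $2\epsilon_{\text{recon}}\|\mathbf{h}\|_2 + \epsilon_{\text{recon}}^2$. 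This is within the stated $\mathcal{R}$.
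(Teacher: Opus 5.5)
Your proof is correct, but at the decisive step it deliberately avoids the route the paper takes.

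The paper expands the bilinear form exactly as you do. It then bounds the off-diagonal part by $\mathcal{O}(\epsilon_{\text{orth}}\sum_{i,j}|z_iz_j|)=\mathcal{O}(\epsilon_{\text{orth}}\|\mathbf{z}\|_1^2)$, uses $\|\mathbf{z}\|_1^2\le m\|\mathbf{z}\|_2^2$, and simply absorbs the dictionary size $m$ into the $\mathcal{O}(\cdot)$. It also drops the diagonal deviation $\|\mathbf{d}_i\|_2^2-1$, which you treat explicitly.

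Read literally, with constants allowed to depend on $m$, that argument proves the lemma under Assumption~\ref{assump:orthonormal} alone. Your own full-matrix Gershgorin estimate $\|E\|_{\text{op}}\le (m-1)\,C\epsilon_{\text{orth}}$ already gives the same result. So your sparsity hypothesis is not logically needed for the statement as written.

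What restricting to $S=\mathrm{supp}(\mathbf{z})$ buys is the factor $k-1$ in place of $m-1$, and this is substantive rather than cosmetic. The Gram matrix $G=W_{\text{dec}}^\top W_{\text{dec}}$ has rank at most $d$ and trace about $m$, so $\|G-I\|_{\text{op}}\gtrsim m/d-1\gg 1$ for an overcomplete SAE. Any bound uniform over all $\mathbf{z}$ must therefore carry a large constant. Since Gershgorin then forces $m\,\epsilon_{\text{orth}}\gtrsim m/d$, the paper's absorbed $m$ makes the error term comparable to or larger than the main term.

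The price of your version is an extra hypothesis: bounded support size, or the restricted-isometry reading of Assumption~\ref{assump:orthonormal}. You should add it to the lemma's statement rather than leave it implicit. This is consistent with how the main theorem already lets its constants depend on SAE sparsity.

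Your passage from $\hat{\mathbf{h}}$ to $\mathbf{h}$ is fine and marginally sharper than the paper's. You get $\epsilon_{\text{recon}}^2$ instead of $3\epsilon_{\text{recon}}^2$ because you expand around $\mathbf{h}$ rather than $\hat{\mathbf{h}}$.
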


\begin{proof}
By the definition of the decoder:
\begin{equation}
    \|\hat{\mathbf{h}}\|_2^2 = \left\| \sum_{i=1}^m z_i \mathbf{d}_i \right\|_2^2 = \sum_{i=1}^m \sum_{j=1}^m z_i z_j \langle \mathbf{d}_i, \mathbf{d}_j \rangle.
\end{equation}
By Assumption~\ref{assump:orthonormal}, $\langle \mathbf{d}_i, \mathbf{d}_j \rangle = \delta_{ij} + \mathcal{O}(\epsilon_{\text{orth}})$. Thus:
\begin{align}
    \|\hat{\mathbf{h}}\|_2^2 &= \sum_{i=1}^m z_i^2 + \sum_{i \neq j} z_i z_j \mathcal{O}(\epsilon_{\text{orth}}) \nonumber \\
    &= \sum_{i=1}^m z_i^2 + \mathcal{O}\left(\epsilon_{\text{orth}} \sum_{i,j} |z_i z_j|\right) \nonumber \\
    &= \sum_{i=1}^m z_i^2 + \mathcal{O}(\epsilon_{\text{orth}} \|\mathbf{z}\|_1^2) \nonumber \\
    &= \sum_{i=1}^m z_i^2 + \mathcal{O}(\epsilon_{\text{orth}} \|\mathbf{z}\|_2^2),
\end{align}
where the last step uses $\|\mathbf{z}\|_1^2 \leq m \|\mathbf{z}\|_2^2$ and absorbs the constant into $\mathcal{O}(\cdot)$.
\end{proof}

\begin{lemma}[Reconstruction Error Propagation]
\label{lem:recon_l2}
Under Assumptions~\ref{assump:orthonormal} and~\ref{assump:reconstruction}:
\begin{equation}
    \left| \|\mathbf{h}\|_2^2 - \sum_{i=1}^m z_i^2 \right| \leq 2\epsilon_{\text{recon}} \|\mathbf{h}\|_2 + 3\epsilon_{\text{recon}}^2 + \mathcal{O}(\epsilon_{\text{orth}} \|\mathbf{z}\|_2^2).
\end{equation}
\end{lemma}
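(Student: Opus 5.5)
The plan is to interpolate through the reconstruction $\hat{\mathbf{h}} = \sum_i z_i \mathbf{d}_i$. Write $\mathbf{r} = \mathbf{h} - \hat{\mathbf{h}}$ and split
\begin{equation}
\left| \|\mathbf{h}\|_2^2 - \sum_{i=1}^m z_i^2 \right| \le \left| \|\mathbf{h}\|_2^2 - \|\hat{\mathbf{h}}\|_2^2 \right| + \left| \|\hat{\mathbf{h}}\|_2^2 - \sum_{i=1}^m z_i^2 \right|.
\end{equation}
Lemma~\ref{lem:energy_decomposition} (which uses only Assumption~\ref{assump:orthonormal}) bounds the second term directly by $\mathcal{O}(\epsilon_{\text{orth}}\|\mathbf{z}\|_2^2)$. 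So all the work is in the first term, and it only needs Assumption~\ref{assump:reconstruction}.

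For the first term, I would write $\hat{\mathbf{h}} = \mathbf{h} - \mathbf{r}$. Expanding gives
\begin{equation}
\|\hat{\mathbf{h}}\|_2^2 = \|\mathbf{h}\|_2^2 - 2\langle \mathbf{h}, \mathbf{r}\rangle + \|\mathbf{r}\|_2^2 .
\end{equation}
Hence $\bigl|\|\mathbf{h}\|_2^2 - \|\hat{\mathbf{h}}\|_2^2\bigr| \le 2|\langle \mathbf{h},\mathbf{r}\rangle| + \|\mathbf{r}\|_2^2$. Cauchy--Schwarz together with $\|\mathbf{r}\|_2 \le \epsilon_{\text{recon}}$ bounds this by $2\epsilon_{\text{recon}}\|\mathbf{h}\|_2 + \epsilon_{\text{recon}}^2$. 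This is already tighter than the stated $3\epsilon_{\text{recon}}^2$, so the claim follows a fortiori.

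If instead one expands around $\hat{\mathbf{h}}$, as $\|\mathbf{h}\|^2 = \|\hat{\mathbf{h}}\|^2 + 2\langle\hat{\mathbf{h}},\mathbf{r}\rangle + \|\mathbf{r}\|^2$, then the cross term must be converted via $\|\hat{\mathbf{h}}\|_2 \le \|\mathbf{h}\|_2 + \epsilon_{\text{recon}}$. That route yields exactly $2\epsilon_{\text{recon}}\|\mathbf{h}\|_2 + 3\epsilon_{\text{recon}}^2$, which is presumably where the constant $3$ comes from. Either route works, and I would present the second to match the statement.

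There is no real obstacle here. The only point of care is that the $\mathcal{O}(\epsilon_{\text{orth}}\|\mathbf{z}\|_2^2)$ term from Lemma~\ref{lem:energy_decomposition} carries a hidden constant that may depend on $m$, through $\|\mathbf{z}\|_1^2 \le m\|\mathbf{z}\|_2^2$. I would simply inherit that convention rather than re-derive it. If desired, I would remark that with sparsity $\|\mathbf{z}\|_0 \le k$ the constant improves to $k$.
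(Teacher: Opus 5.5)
Your proposal is correct, and your second route is exactly the paper's proof. The paper expands $\|\mathbf{h}\|_2^2 = \|\hat{\mathbf{h}}\|_2^2 + 2\langle \hat{\mathbf{h}}, \mathbf{r}\rangle + \|\mathbf{r}\|_2^2$, bounds the cross term by Cauchy--Schwarz together with $\|\hat{\mathbf{h}}\|_2 \le \|\mathbf{h}\|_2 + \epsilon_{\text{recon}}$ (which is where the constant $3$ comes from), and adds the $\mathcal{O}(\epsilon_{\text{orth}}\|\mathbf{z}\|_2^2)$ term from Lemma~\ref{lem:energy_decomposition}. Your first route is also valid and gives the sharper $\epsilon_{\text{recon}}^2$ coefficient, and your remark that the hidden constant inherited from Lemma~\ref{lem:energy_decomposition} depends on $m$ (or on $k$ under sparsity) is a fair caveat about the paper's $\mathcal{O}$ convention.
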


\begin{proof}
Let $\mathbf{r} = \mathbf{h} - \hat{\mathbf{h}}$ be the reconstruction residual. By Assumption~\ref{assump:reconstruction}, $\|\mathbf{r}\|_2 \leq \epsilon_{\text{recon}}$. Then:
\begin{equation}
    \|\mathbf{h}\|_2^2 = \|\hat{\mathbf{h}} + \mathbf{r}\|_2^2 = \|\hat{\mathbf{h}}\|_2^2 + \|\mathbf{r}\|_2^2 + 2\langle \hat{\mathbf{h}}, \mathbf{r} \rangle.
\end{equation}
By Cauchy-Schwarz inequality:
\begin{equation}
    |\langle \hat{\mathbf{h}}, \mathbf{r} \rangle| \leq \|\hat{\mathbf{h}}\|_2 \|\mathbf{r}\|_2 \leq \epsilon_{\text{recon}} (\|\mathbf{h}\|_2 + \|\mathbf{r}\|_2) \leq \epsilon_{\text{recon}} \|\mathbf{h}\|_2 + \epsilon_{\text{recon}}^2.
\end{equation}
Combining with Lemma~\ref{lem:energy_decomposition}:
\begin{equation}
    \|\mathbf{h}\|_2^2 = \sum_{i=1}^m z_i^2 + \mathcal{O}(\epsilon_{\text{orth}} \|\mathbf{z}\|_2^2) + \|\mathbf{r}\|_2^2 + 2\langle \hat{\mathbf{h}}, \mathbf{r} \rangle.
\end{equation}
Taking the absolute value of the difference:
\begin{equation}
    \left| \|\mathbf{h}\|_2^2 - \sum_{i=1}^m z_i^2 \right| \leq 2\epsilon_{\text{recon}} \|\mathbf{h}\|_2 + 3\epsilon_{\text{recon}}^2 + \mathcal{O}(\epsilon_{\text{orth}} \|\mathbf{z}\|_2^2).
\end{equation}
\end{proof}

\subsection{Main Theorem}

\begin{theorem}[SAE Differential Features and $\ell_2$ Norm Correlation]
\label{thm:main}
Under Assumptions~\ref{assump:orthonormal}--\ref{assump:base_consistency}, let $\Delta_{\text{obs}} = \|\bar{\mathbf{h}}_{\text{think}}\|_2^2 - \|\bar{\mathbf{h}}_{\text{non}}\|_2^2$ denote the observable $$\ell_2$$ norm difference, and let $\epsilon_{\text{total}} = E_{\text{recon}}^{\text{think}} + E_{\text{recon}}^{\text{non}}$ with $E_{\text{recon}} = 2\epsilon_{\text{recon}}\|\mathbf{h}\|_2 + 3\epsilon_{\text{recon}}^2 + \mathcal{O}(\epsilon_{\text{orth}} \|\mathbf{z}\|_2^2)$ being the reconstruction error for a single representation. Let $k \ll |\mathcal{I}_{\text{reasoning}}|$ be the number of active reasoning features per token for the sparse SAE. Then the sum of differential feature activations on reasoning features satisfies the two-sided bound:
\begin{equation}
    \frac{1 - \mathcal{O}(\beta)}{\sqrt{1 + \mathcal{O}(\epsilon_{\text{orth}})}} \sqrt{\Delta_{\text{obs}} - \epsilon_{\text{total}}} 
    \leq \sum_{i \in \mathcal{I}_{\text{reasoning}}} \Delta z_i 
    \leq \sqrt{k} \sqrt{\Delta_{\text{obs}} + \epsilon_{\text{total}}}.
\end{equation}
In particular, when $\epsilon_{\text{total}} \ll \Delta_{\text{obs}}$ and $\beta \ll 1$ (small approximation and reconstruction errors), the aggregate differential activation is tightly characterized by the observable norm difference:
\begin{equation}
    \sum_{i \in \mathcal{I}_{\text{reasoning}}} \Delta z_i = \Theta\left( \sqrt{\|\bar{\mathbf{h}}_{\text{think}}\|_2^2 - \|\bar{\mathbf{h}}_{\text{non}}\|_2^2} \right),
\end{equation}
where the $\Theta(\cdot)$ constant factors depend only on $\beta$, $\epsilon_{\text{orth}}$, and the sparse activation count $k$ of the SAE.
\end{theorem}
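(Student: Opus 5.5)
We plan to pass between the observable quantity $\Delta_{\text{obs}}$ and the latent vector $\Delta\mathbf{z}$ in three steps. First we express $\Delta_{\text{obs}}$ through the geometry of $\mathbf{h}_{\text{reasoning}}$. Then we move to SAE feature energies using Lemma~\ref{lem:recon_l2}. Finally we convert an $\ell_2$-type quantity on $\mathcal{I}_{\text{reasoning}}$ into the $\ell_1$-type sum $\sum_{i\in\mathcal{I}_{\text{reasoning}}}\Delta z_i$ via Cauchy--Schwarz and sparsity.

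\textbf{Step 1 (norm difference in terms of the reasoning component).} By Assumption~\ref{assump:semantic_inclusion},
\[
\Delta_{\text{obs}}=\|\mathbf{h}_{\text{reasoning}}\|_2^2+2\langle\mathbf{h}_{\text{base}},\mathbf{h}_{\text{reasoning}}\rangle-2\langle\mathbf{h}_{\text{base}},\boldsymbol{\epsilon}_{\text{noise}}\rangle-\|\boldsymbol{\epsilon}_{\text{noise}}\|_2^2 .
\]
Assumption~\ref{assump:reasoning_orthogonal} bounds the cross term by $2\beta\|\mathbf{h}_{\text{base}}\|_2\|\mathbf{h}_{\text{reasoning}}\|_2$. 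We will absorb the noise terms into the error budget, folding them into $\epsilon_{\text{total}}$ alongside the reconstruction errors. This yields $\|\mathbf{h}_{\text{reasoning}}\|_2^2=(1\pm\mathcal{O}(\beta))\,\Delta_{\text{obs}}$ up to additive error.

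Alternatively, and more in the spirit of the statement, we apply Lemma~\ref{lem:recon_l2} separately to $\bar{\mathbf{h}}_{\text{think}}$ and $\bar{\mathbf{h}}_{\text{non}}$. This gives
\[
\Big|\Delta_{\text{obs}}-\big(\|\mathbf{z}^{\text{think}}\|_2^2-\|\mathbf{z}^{\text{non}}\|_2^2\big)\Big|\le\epsilon_{\text{total}} .
\]
We then split the sum over $\mathcal{I}_{\text{base}}$ and $\mathcal{I}_{\text{reasoning}}$ using Assumption~\ref{assump:disentangle}; the small overlap of the two index sets contributes only lower-order error. By Assumption~\ref{assump:base_consistency}, the base contribution $\sum_{\mathcal{I}_{\text{base}}}\big((z_i^{\text{think}})^2-(z_i^{\text{non}})^2\big)$ is $\mathcal{O}(\gamma)$ relative to the base energy, and we also absorb it. Since reasoning features are predominantly induced by thinking, we take $z_i^{\text{non}}\approx 0$ on $\mathcal{I}_{\text{reasoning}}$. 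This leaves
\[
\Delta_{\text{obs}}\mp\epsilon_{\text{total}}\lessgtr\sum_{i\in\mathcal{I}_{\text{reasoning}}}(\Delta z_i)^2=\|\Delta\mathbf{z}_{\mathcal{I}_{\text{reasoning}}}\|_2^2 .
\]
The $\beta$ factor enters through the cross term that arises when the reconstruction of $\bar{\mathbf{h}}_{\text{think}}$ mixes the base and reasoning blocks. The $\epsilon_{\text{orth}}$ factor enters via Lemma~\ref{lem:energy_decomposition}, which relates $\|\sum_{\mathcal{I}_{\text{reasoning}}}z_i\mathbf{d}_i\|_2^2$ to $(1+\mathcal{O}(\epsilon_{\text{orth}}))\sum z_i^2$.

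\textbf{Step 2 (upper bound).} By Cauchy--Schwarz restricted to the support of the active reasoning features, which has size at most $k$,
\[
\sum_{\mathcal{I}_{\text{reasoning}}}\Delta z_i\le\sqrt{k}\,\|\Delta\mathbf{z}_{\mathcal{I}_{\text{reasoning}}}\|_2\le\sqrt{k}\sqrt{\Delta_{\text{obs}}+\epsilon_{\text{total}}} .
\]

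\textbf{Step 3 (lower bound).} We use nonnegativity: the ReLU gives $z_i\ge0$, and on $\mathcal{I}_{\text{reasoning}}$ we have $\Delta z_i\approx z_i^{\text{think}}\ge0$. For nonnegative vectors $\|\cdot\|_1\ge\|\cdot\|_2$, so
\[
\sum_{\mathcal{I}_{\text{reasoning}}}\Delta z_i\ge\|\Delta\mathbf{z}_{\mathcal{I}_{\text{reasoning}}}\|_2 .
\]
Combining this with the lower side of Step 1 gives the left-hand bound. The prefactor is $(1-\mathcal{O}(\beta))/\sqrt{1+\mathcal{O}(\epsilon_{\text{orth}})}$: the $\beta$ loss comes from the cross term and the $\sqrt{1+\mathcal{O}(\epsilon_{\text{orth}})}$ from the near-orthonormal Gram matrix on the reasoning block. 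The $\Theta$ statement then follows by taking $\epsilon_{\text{total}}\ll\Delta_{\text{obs}}$.

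\textbf{Main obstacle.} The main obstacle is Step 1: cleanly isolating $\|\Delta\mathbf{z}_{\mathcal{I}_{\text{reasoning}}}\|_2^2$ from $\Delta_{\text{obs}}$. The SAE encoder is nonlinear and is applied token-wise, whereas the assumptions concern averaged hidden states. We therefore need to interpret the $z$'s as the averaged coefficients $z_i^{\text{base}},z_i^{\text{reasoning}}$ of Assumption~\ref{assump:disentangle}, and to control both the base-block change and the non-thinking activation on reasoning features. Our first attempt will be to treat all of these as additive errors absorbed into $\epsilon_{\text{total}}$ and $\mathcal{O}(\beta)$.
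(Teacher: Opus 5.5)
Your argument is essentially the paper's. Lemma~\ref{lem:recon_l2} applied to both averages produces $\epsilon_{\text{total}}$. Assumption~\ref{assump:base_consistency} removes the base block, and ``predominantly induced by thinking'' removes $z_i^{\text{non}}$ on $\mathcal{I}_{\text{reasoning}}$. ReLU nonnegativity gives the lower bound via $\|\cdot\|_1\ge\|\cdot\|_2$, and Cauchy--Schwarz on a $k$-sparse support gives the upper bound. The paper detours through $\|\mathbf{h}_{\text{reasoning}}\|_2$ and $z_i^{\text{reasoning}}$ (its Steps 3--5) and then does your ``alternative'' computation in its Step 6.

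That alternative is actually the cleaner route. It does not need Assumptions~\ref{assump:semantic_inclusion}--\ref{assump:reasoning_orthogonal}, because the only base/reasoning cross terms already sit inside the $\mathcal{O}(\epsilon_{\text{orth}}\|\mathbf{z}\|_2^2)$ part of Lemma~\ref{lem:recon_l2}. So the factor $1-\mathcal{O}(\beta)$ is pure slack, not something a cross term forces on you.

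The obstacle you flag also dissolves. Lemmas~\ref{lem:energy_decomposition}--\ref{lem:recon_l2} use only $\hat{\mathbf{h}}=\mathbf{W}_{\text{dec}}\mathbf{z}$ and $\|\mathbf{h}-\hat{\mathbf{h}}\|_2\le\epsilon_{\text{recon}}$. For $\bar{\mathbf{z}}=\mathbb{E}_t[\mathbf{z}_t]$, linearity of the decoder gives $\|\bar{\mathbf{h}}-\mathbf{W}_{\text{dec}}\bar{\mathbf{z}}\|_2\le\mathbb{E}_t\|\mathbf{h}_t-\hat{\mathbf{h}}_t\|_2\le\epsilon_{\text{recon}}$. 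Hence the lemma holds verbatim for $(\bar{\mathbf{h}},\bar{\mathbf{z}})$, and $\bar z_i^{\text{think}}-\bar z_i^{\text{non}}$ is exactly the theorem's $\Delta z_i$.

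\textbf{The gap is in Step~2, and the paper's Step~5 has the same gap.} With averaged coefficients, $\operatorname{supp}(\Delta\mathbf{z}_{\mathcal{I}_{\text{reasoning}}})$ is the union of the active reasoning sets of all thinking tokens. Sparsity of $k$ features per token says nothing about the size of that union. Cauchy--Schwarz therefore only yields a factor $\sqrt{s}$, where $s=|\operatorname{supp}(\Delta\mathbf{z}_{\mathcal{I}_{\text{reasoning}}})|\le|\mathcal{I}_{\text{reasoning}}|$.

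The $\sqrt{k}$ bound in fact fails. Take:
\begin{itemize}
\item the decoder directions involved exactly orthonormal, and reconstruction exact;
\item every non-thinking token with code $\mathbf{b}$ supported on $\mathcal{I}_{\text{base}}$;
\item $T$ thinking tokens with codes $\mathbf{b}+a\,\mathbf{e}_{r_t}$, for distinct $r_t\in\mathcal{I}_{\text{reasoning}}$.
\end{itemize}
All assumptions hold, with $\gamma=0$, $\boldsymbol{\epsilon}_{\text{noise}}=\mathbf{0}$, $\langle\mathbf{h}_{\text{base}},\mathbf{h}_{\text{reasoning}}\rangle=0$ and $k=1$. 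Then $\Delta_{\text{obs}}=a^2/T$ exactly, yet $\sum_{i\in\mathcal{I}_{\text{reasoning}}}\Delta z_i=a=\sqrt{T}\sqrt{\Delta_{\text{obs}}}$. In the regime $\epsilon_{\text{total}}\ll\Delta_{\text{obs}}$ this exceeds $\sqrt{k}\sqrt{\Delta_{\text{obs}}+\epsilon_{\text{total}}}$ by a factor of about $\sqrt{T}$.

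The paper writes $|\{i:z_i^{\text{reasoning}}>0\}|\le k$ for averaged coefficients, and its own Step~7 only reaches the factor $\sqrt{|\mathcal{I}_{\text{reasoning}}|}$. To close the step you must either replace $k$ by $s$, or assume that all thinking tokens draw on a common set of at most $k$ reasoning features. The lower bound is unaffected either way.

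A smaller point, also glossed over in the paper, concerns the constants. The discarded base-block term is about $2\gamma\|\mathbf{h}_{\text{base}}\|_2^2$. In your first route, turning $2\beta\|\mathbf{h}_{\text{base}}\|_2\|\mathbf{h}_{\text{reasoning}}\|_2$ into $\mathcal{O}(\beta)\|\mathbf{h}_{\text{reasoning}}\|_2^2$ needs $\|\mathbf{h}_{\text{base}}\|_2\lesssim\|\mathbf{h}_{\text{reasoning}}\|_2$. So the constants really depend on $\gamma$ and on this norm ratio, not only on $\beta$, $\epsilon_{\text{orth}}$ and $k$.
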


\begin{proof}
The proof proceeds in seven steps.

\paragraph{Step 1: Decompose thinking token representation.}
By Assumption~\ref{assump:semantic_inclusion}:
\begin{equation}
    \bar{\mathbf{h}}_{\text{think}} = \mathbf{h}_{\text{base}} + \mathbf{h}_{\text{reasoning}}, \quad \bar{\mathbf{h}}_{\text{non}} = \mathbf{h}_{\text{base}} + \boldsymbol{\epsilon}_{\text{noise}}.
\end{equation}
Computing the squared $$\ell_2$$ norms:
\begin{align}
    \|\bar{\mathbf{h}}_{\text{think}}\|_2^2 &= \|\mathbf{h}_{\text{base}}\|_2^2 + \|\mathbf{h}_{\text{reasoning}}\|_2^2 + 2\langle \mathbf{h}_{\text{base}}, \mathbf{h}_{\text{reasoning}} \rangle, \\
    \|\bar{\mathbf{h}}_{\text{non}}\|_2^2 &= \|\mathbf{h}_{\text{base}}\|_2^2 + \|\boldsymbol{\epsilon}_{\text{noise}}\|_2^2 + 2\langle \mathbf{h}_{\text{base}}, \boldsymbol{\epsilon}_{\text{noise}} \rangle.
\end{align}

\paragraph{Step 2: Apply orthogonality and bound error terms.}
By Assumption~\ref{assump:reasoning_orthogonal} and Cauchy-Schwarz:
\begin{align}
    |\langle \mathbf{h}_{\text{base}}, \mathbf{h}_{\text{reasoning}} \rangle| &\leq \beta \|\mathbf{h}_{\text{base}}\|_2 \|\mathbf{h}_{\text{reasoning}}\|_2, \\
    |\langle \mathbf{h}_{\text{base}}, \boldsymbol{\epsilon}_{\text{noise}} \rangle| &\leq \epsilon_{\text{noise}} \|\mathbf{h}_{\text{base}}\|_2,
\end{align}
with $\|\boldsymbol{\epsilon}_{\text{noise}}\|_2 \leq \epsilon_{\text{noise}} \ll \|\mathbf{h}_{\text{reasoning}}\|_2$. Subtracting:
\begin{align}
    \|\bar{\mathbf{h}}_{\text{think}}\|_2^2 - \|\bar{\mathbf{h}}_{\text{non}}\|_2^2 
    &= \|\mathbf{h}_{\text{reasoning}}\|_2^2 - \|\boldsymbol{\epsilon}_{\text{noise}}\|_2^2 
    + 2\langle \mathbf{h}_{\text{base}}, \mathbf{h}_{\text{reasoning}} \rangle - 2\langle \mathbf{h}_{\text{base}}, \boldsymbol{\epsilon}_{\text{noise}} \rangle \nonumber \\
    &\geq \|\mathbf{h}_{\text{reasoning}}\|_2^2 - 2\beta \|\mathbf{h}_{\text{base}}\|_2 \|\mathbf{h}_{\text{reasoning}}\|_2 - 2\epsilon_{\text{noise}} \|\mathbf{h}_{\text{base}}\|_2 - \epsilon_{\text{noise}}^2.
\end{align}

\paragraph{Step 3: Lower and upper bounds on the $$\ell_2$$ norm difference.}
From Step 2, we obtain a \textbf{lower bound}:
\begin{equation}
    \|\bar{\mathbf{h}}_{\text{think}}\|_2^2 - \|\bar{\mathbf{h}}_{\text{non}}\|_2^2 \geq (1 - \delta) \|\mathbf{h}_{\text{reasoning}}\|_2^2,
    \label{eq:lower_bound_delta}
\end{equation}
where $\delta = 2\beta \frac{\|\mathbf{h}_{\text{base}}\|_2}{\|\mathbf{h}_{\text{reasoning}}\|_2} + o(1)$, assuming $\|\mathbf{h}_{\text{reasoning}}\|_2 \sim \|\mathbf{h}_{\text{base}}\|_2$.

Similarly, using the upper bound on the inner product:
\begin{equation}
    \|\bar{\mathbf{h}}_{\text{think}}\|_2^2 - \|\bar{\mathbf{h}}_{\text{non}}\|_2^2 \leq (1 + \delta) \|\mathbf{h}_{\text{reasoning}}\|_2^2 + \epsilon_{\text{noise}}^2.
    \label{eq:upper_bound_delta}
\end{equation}
Thus, the $$\ell_2$$ norm difference sandwiches the reasoning energy:
\[
(1 - \delta) \|\mathbf{h}_{\text{reasoning}}\|_2^2 \lesssim \|\bar{\mathbf{h}}_{\text{think}}\|_2^2 - \|\bar{\mathbf{h}}_{\text{non}}\|_2^2 \lesssim (1 + \delta) \|\mathbf{h}_{\text{reasoning}}\|_2^2.
\]

\paragraph{Step 4: Connect to SAE feature decomposition.}
By Assumptions~\ref{assump:disentangle} and Lemma~\ref{lem:energy_decomposition}:
\begin{align}
    \|\mathbf{h}_{\text{reasoning}}\|_2^2 
    &= \sum_{i \in \mathcal{I}_{\text{reasoning}}} (z_i^{\text{reasoning}})^2 + \mathcal{O}(\epsilon_{\text{orth}} \|\mathbf{z}^{\text{reasoning}}\|_2^2).
\end{align}
By Assumption~\ref{assump:base_consistency}, base features change little: $\Delta z_i = \mathcal{O}(\gamma z_i^{\text{non}})$ for $i \in \mathcal{I}_{\text{base}}$, so
\[
\Delta z_i \approx 
\begin{cases}
0, & i \in \mathcal{I}_{\text{base}}, \\
z_i^{\text{reasoning}}, & i \in \mathcal{I}_{\text{reasoning}}.
\end{cases}
\]
Hence:
\[
\sum_{i \in \mathcal{I}_{\text{reasoning}}} \Delta z_i \approx \sum_{i \in \mathcal{I}_{\text{reasoning}}} z_i^{\text{reasoning}}.
\]

\paragraph{Step 5: Bounds via sparsity and non-negativity.}
Since SAE uses ReLU activation, all feature activations are non-negative: $z_i \geq 0$. For any non-negative sequence, the sum is at least the $\ell_2$ norm:
\begin{equation}
    \sum_{i \in \mathcal{I}_{\text{reasoning}}} z_i^{\text{reasoning}} 
    \geq \sqrt{ \sum_{i \in \mathcal{I}_{\text{reasoning}}} (z_i^{\text{reasoning}})^2 }
    = \|\mathbf{h}_{\text{reasoning}}\|_2 \sqrt{1 + \mathcal{O}(\epsilon_{\text{orth}})}.
    \label{eq:dz_lower}
\end{equation}
Moreover, by the sparsity of SAEs, only a small number $k$ of reasoning features are active per token (i.e., $|\{i : z_i^{\text{reasoning}} > 0\}| \leq k$). Applying Cauchy-Schwarz over the active set yields:
\begin{equation}
    \sum_{i \in \mathcal{I}_{\text{reasoning}}} z_i^{\text{reasoning}} 
    \leq \sqrt{k} \, \|\mathbf{h}_{\text{reasoning}}\|_2.
    \label{eq:dz_upper}
\end{equation}
Thus, the total change in reasoning features satisfies:
\[
\|\mathbf{h}_{\text{reasoning}}\|_2 \lesssim \sum_{i \in \mathcal{I}_{\text{reasoning}}} \Delta z_i \lesssim \sqrt{k} \, \|\mathbf{h}_{\text{reasoning}}\|_2.
\]

\paragraph{Step 6: Relate to observable $$\ell_2$$ norm difference.}
From the sandwich bounds \eqref{eq:lower_bound_delta}–\eqref{eq:upper_bound_delta}, we have:
\[
\|\mathbf{h}_{\text{reasoning}}\|_2 \asymp \sqrt{ \|\bar{\mathbf{h}}_{\text{think}}\|_2^2 - \|\bar{\mathbf{h}}_{\text{non}}\|_2^2 },
\]
where $\asymp$ means “bounded above and below up to constants depending on $\beta$”.

Now incorporate reconstruction error via Lemma~\ref{lem:recon_l2}. The observed norms satisfy:
\[
\left| \|\mathbf{h}\|_2^2 - \sum_i (z_i)^2 \right| \leq E_{\text{recon}},
\]
with $E_{\text{recon}} = \mathcal{O}(\epsilon_{\text{recon}} \|\mathbf{h}\|_2 + \epsilon_{\text{recon}}^2 + \epsilon_{\text{orth}} \|\mathbf{z}\|_2^2)$. Define total error $\epsilon_{\text{total}} = E_{\text{recon}}^{\text{think}} + E_{\text{recon}}^{\text{non}}$. Then:
\[
\left| \left( \|\bar{\mathbf{h}}_{\text{think}}\|_2^2 - \|\bar{\mathbf{h}}_{\text{non}}\|_2^2 \right) - \left( \sum_i (z_i^{\text{think}})^2 - (z_i^{\text{non}})^2 \right) \right| \leq \epsilon_{\text{total}}.
\]
Since $\sum_i (z_i^{\text{think}})^2 - (z_i^{\text{non}})^2 \approx \|\mathbf{h}_{\text{reasoning}}\|_2^2$, we conclude:
\[
\|\mathbf{h}_{\text{reasoning}}\|_2^2 \in \left[ \Delta_{\text{obs}} - \epsilon_{\text{total}},\, \Delta_{\text{obs}} + \epsilon_{\text{total}} \right],
\quad \text{where } \Delta_{\text{obs}} := \|\bar{\mathbf{h}}_{\text{think}}\|_2^2 - \|\bar{\mathbf{h}}_{\text{non}}\|_2^2.
\]

\paragraph{Step 7: Final two-sided bound on $\sum \Delta z_i$.}
Combining \eqref{eq:dz_lower}, \eqref{eq:dz_upper}, and Step 6:
\begin{align}
    \frac{1 - \mathcal{O}(\beta)}{\sqrt{|\mathcal{I}_{\text{reasoning}}|}} \sqrt{ \Delta_{\text{obs}} - \epsilon_{\text{total}} }
    &\lesssim \sum_{i \in \mathcal{I}_{\text{reasoning}}} \Delta z_i \nonumber \\
    &\lesssim \sqrt{|\mathcal{I}_{\text{reasoning}}|} \, \sqrt{ \Delta_{\text{obs}} + \epsilon_{\text{total}} }.
\end{align}
In particular, when $\epsilon_{\text{total}} \ll \Delta_{\text{obs}}$ and $\beta \ll 1$, we obtain the clean two-sided control:
\[
\sum_{i \in \mathcal{I}_{\text{reasoning}}} \Delta z_i = \Theta\left( \sqrt{ \|\bar{\mathbf{h}}_{\text{think}}\|_2^2 - \|\bar{\mathbf{h}}_{\text{non}}\|_2^2 } \right),
\]
up to factors depending on $|\mathcal{I}_{\text{reasoning}}|$ and small constants.

This shows that the aggregate change in reasoning features is \textbf{both lower- and upper-bounded} by the observable $$\ell_2$$ norm difference, completing the proof.
\end{proof}

\subsection{Further Discussion}
\label{further_discussion}
Our finding that high \(\ell_2\)-norm tokens tend to mark reasoning-intensive steps 
admits an interesting connection to a recent theoretical analysis from the computer 
vision community. Xu et al.~\cite{park2023understanding} established that the feature 
norm in deep classifiers is intrinsically linked to the cross-entropy loss: minimizing 
the loss drives the model to produce both a large norm and a confident alignment with 
the correct class. The norm, in effect, reflects how strongly the network is pushed 
to commit to a prediction.

This insight extends naturally to the sequential cross-entropy objective used in 
language model training. At each generation step, the loss for predicting the 
ground-truth next token \(k\) is \(\mathcal{L}_t = -\log p_k\). The gradient with 
respect to the hidden state \(\mathbf{h}_t\) takes the form:

\[
\frac{\partial \mathcal{L}_t}{\partial \mathbf{h}_t} 
= \sum_j \bigl(p_j - \mathbf{1}[j=k]\bigr) \, \mathbf{e}_j,
\]

where \(p_j\) is the predicted probability of token \(j\) and \(\mathbf{e}_j\) is 
its output embedding. Crucially, the magnitude of this gradient is large precisely 
when the model is uncertain (\(p_k \ll 1\)). There are two notable scenarios where 
this condition holds. First, tokens that are inherently difficult to learn---due to 
rarity, ambiguity, or long-range dependencies---naturally exhibit low initial 
confidence and thus receive disproportionately large updates. Second, contemporary 
language models are increasingly trained with reasoning-intensive data, both during 
large-scale pretraining (e.g., code, mathematics, and logical reasoning corpora) 
and during post-training stages (e.g., instruction tuning and reinforcement learning 
with reasoning traces). In these data, tokens that form reasoning patterns, such as 
deductive connectors or intermediate conclusions, are precisely those that demand 
sustained optimization effort. Following the logic of Xu et al.~\cite{park2023understanding}, 
these stronger updates repeatedly drive the hidden state to align with the correct 
token embedding and simultaneously increase its \(\ell_2\)-norm.

Hence, high-norm tokens in a trained language model are not exclusively those that 
the model finds trivially easy; they also include tokens that were historically hard 
to learn and reasoning-critical tokens that received sustained corrective pressure 
during both pretraining and post-training. This provides a complementary theoretical 
perspective for why \(\ell_2\)-norm can serve as a proxy for reasoning intensity: 
the norm implicitly records the cumulative optimization force expended on each 
prediction, making it a particularly suitable signal in models exposed to 
reasoning-rich training regimes.

\section{Empirical Validation}
\label{Appendix:empirical_validation}
Following the theoretical analysis, we empirically validate our hypothesis using the training data introduced in Section~\ref{sae}. Our experiments reveal a strong positive correlation between the $$\ell_2$$ norm of hidden states and the activation strength of reasoning-associated features identified by the SAE. Specifically, we observe that both metrics exhibit highly aligned growth trends across layers, co-localize on similar reasoning-related tokens, and demonstrate significant linear correlation. Below, we present visualizations and quantitative results supporting this finding. What's more, we validate our fundamental assumptions in Appendix~\ref{Appendix:SAE_l2_theory} here to make our theoretical analysis more solid.

\subsection{Validity of Theoretical Assumptions}
\label{Appendix:assumption_validity}
We provide empirical evidence for the main assumptions used in our geometric analysis. 
Unless otherwise stated, the validation is conducted on Qwen3-4B and Llama3-8B, with statistics averaged over the last quarter of layers. 
Overall, the results indicate that these assumptions hold approximately but reliably enough to support the interpretations in the main text.

\paragraph{Decoder orthonormality.}
We first examine whether SAE decoder bases are approximately orthonormal. Empirically, the diagonal entries remain very close to 1.0 (mean $\approx 1.0000$, std $7.6\times10^{-7}$), while the average off-diagonal correlation is small ($\epsilon_{\mathrm{orth}} = 0.0173 \ll 1$). These results indicate that the decoder basis is near-orthonormal, consistent with prior observations.

\paragraph{Reconstruction quality.}
We evaluate reconstruction fidelity in both magnitude and direction. The average reconstruction error is $8.7\%$, while the cosine similarity between the original and reconstructed representations remains high:
\[
\cos(h_{\text{think}}, \hat{h}_{\text{think}})=0.9932,\qquad
\cos(h_{\text{nothink}}, \hat{h}_{\text{nothink}})=0.9831.
\]
This suggests that SAE reconstruction preserves the geometry of the representation space sufficiently well for our angle-based analysis.

\paragraph{Semantic orthogonality and reasoning separation.}
We next evaluate the relation between base semantics $h_{\text{base}}$ and reasoning signals $h_{\text{reasoning}}$. Their cosine similarity is small,
\[
\cos(h_{\text{base}}, h_{\text{reasoning}})=0.0570,
\]
indicating near-orthogonality and substantial disentanglement between semantic content and reasoning-related directions. Moreover, the norms of the two components are comparable ($\|h_{\text{base}}\|\approx 186.1$, $\|h_{\text{reasoning}}\|\approx 116.9$), showing that the reasoning component is not negligible noise but a substantial independent factor. This separation also becomes stronger in deeper layers, with cosine similarity decreasing from 0.3144 (Layer 5) to approximately 0.057 in later layers.

\paragraph{Feature disentanglement.}
Finally, we test whether the identified reasoning feature subset can faithfully reconstruct the reasoning direction. Using only the selected $k=668$ reasoning features, we obtain a high directional recovery:
\[
\cos\!\left(\sum_{i\in I_{\text{reasoning}}}\Delta z_i\cdot d_i,\; h_{\text{reasoning}}\right)=0.9499.
\]
Even under extreme sparsity (top-5 features only), the recovery remains high (0.8747). In contrast, base features exhibit near-zero correlation with the reasoning direction,
\[
\cos\!\left(\sum_{i\in I_{\text{base}}}\Delta z_i\cdot d_i,\; h_{\text{reasoning}}\right)=0.1443,
\]
confirming that the reasoning signal is highly concentrated in the identified reasoning subset and well separated from base semantic features.

\paragraph{Conclusion.}
Taken together, these results support the validity of our theoretical assumptions: the relevant directions are approximately separable, reconstruction preserves geometry, and the identified reasoning features capture a substantial and distinct reasoning component. While the assumptions hold only approximately, they do so reliably enough for the geometric analysis in the main text.

\subsection{Growth Trends across Layers of LLMs}
\label{Appendix:grow_trends}
We observe a pronounced two-stage pattern in Qwen3 models (1.7B--32B) in Figure~\ref{fig:qwen3-1.7b-sae-top5},~\ref{fig:qwen3-4b-sae-top5},~\ref{fig:qwen3-8b-sae-top5},~\ref{fig:qwen3-14b-sae-top5},~\ref{fig:qwen3-32b-sae-top5}, when inspecting SAE activations of reasoning-related features across layers.
For each layer, we compute the mean activation over the top-5 reasoning features and normalize the resulting score by the corresponding mean activation under non-thinking responses to 0 to the maximum value.
Across the early and middle layers, this normalized mean top-5 score remains relatively low and increases only gradually.
However, in roughly the last quarter of the network, the score rises sharply, suggesting a late-layer regime in which the model more clearly separates and amplifies reasoning features.

For convenience, and to enable a consistent layer-wise alignment with our subsequent $\ell_2$-norm analysis (which uses the same prefix--suffix partition), we set a fixed boundary at $0.75L$ (i.e., three-quarters of the total depth $L$) as the stage transition in the figure.
To verify that this choice reflects the data, we additionally ran a simple change-point detection on the layer-wise score curve (selecting the split that maximizes the pre/post difference in mean score, equivalently minimizing within-segment variance), which consistently places the transition in approximately the last quarter of layers.
This ``final $\sim$25\%'' estimate is aggregated across multiple Qwen3 model sizes; we therefore use the unified $0.75L$ boundary for a consistent presentation across models.

This two-stage pattern is mirrored in the layer-wise $\ell_2$-norm of hidden states of thinking response. As shown in Figure~\ref{fig:qwen3-1.7b-norm-layer-trends},~\ref{fig:qwen3-4b-norm-layer-trends},~\ref{fig:qwen3-8b-norm-layer-trends},~\ref{fig:qwen3-14b-norm-layer-trends},~\ref{fig:qwen3-32b-norm-layer-trends} the $\ell_2$-norm remains relatively stable across the first three-quarters of the network and then exhibits a sharp increase in approximately the final 25\% of layers—closely aligning with the surge observed in the normalized SAE reasoning-feature activations. This striking similarity suggests that the late-layer amplification of reasoning-related representations coincides with an overall increase in activation magnitude, reinforcing the view of a distinct late-stage reasoning regime in Qwen3 models.

\begin{figure*}[t]
    \centering
    \begin{subfigure}[t]{0.49\linewidth}
        \centering
        \includegraphics[width=\linewidth]{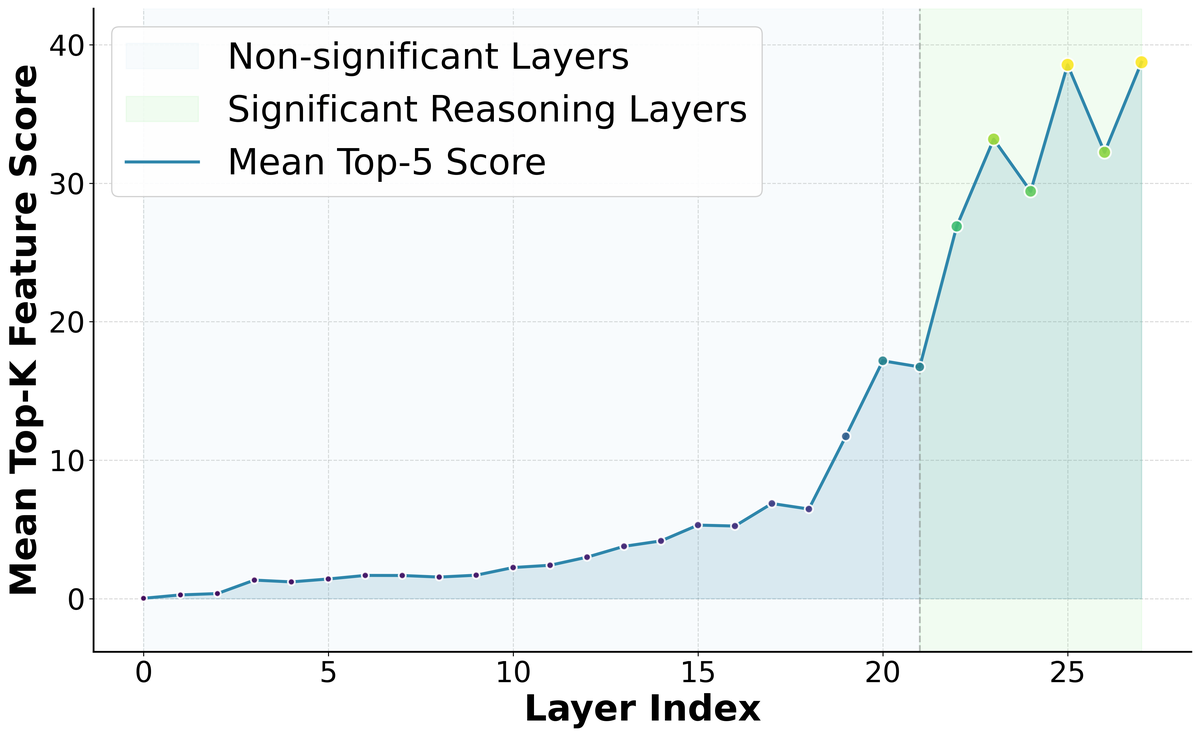}
        \caption{Layer-wise normalized mean activation of the top-5 reasoning-related SAE features.}
        \label{fig:qwen3-1.7b-sae-top5}
    \end{subfigure}
    \hfill
    \begin{subfigure}[t]{0.49\linewidth}
        \centering
        \includegraphics[width=\linewidth]{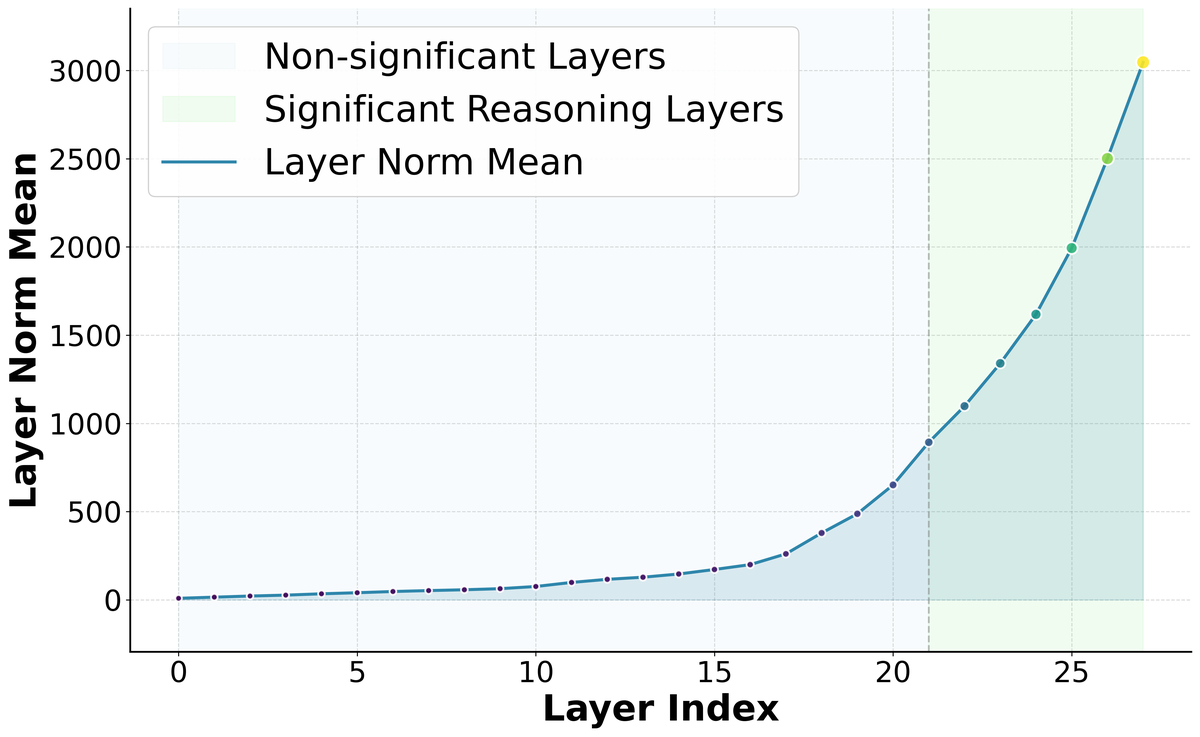}
        \caption{Average $\ell_2$ norm of thinking responses.}
        \label{fig:qwen3-1.7b-norm-layer-trends}
    \end{subfigure}
    \caption{
    Layer-wise trends of reasoning-related SAE feature activation and hidden-state $\ell_2$ norm for Qwen3-1.7B.
    }
    \label{fig:qwen3-1.7b-trends}
\end{figure*}

\begin{figure*}[t]
    \centering
    \begin{subfigure}[t]{0.49\linewidth}
        \centering
        \includegraphics[width=\linewidth]{icml2026/Figures/SAE_layer_trends/Qwen3-4B.png}
        \caption{Layer-wise normalized mean activation of the top-5 reasoning-related SAE features.}
        \label{fig:qwen3-4b-sae-top5}
    \end{subfigure}
    \hfill
    \begin{subfigure}[t]{0.49\linewidth}
        \centering
        \includegraphics[width=\linewidth]{icml2026/Figures/L2_Norm_Trends/Qwen3-4B.png}
        \caption{Average $\ell_2$ norm of thinking responses.}
        \label{fig:qwen3-4b-norm-layer-trends}
    \end{subfigure}
    \caption{
    Layer-wise trends of reasoning-related SAE feature activation and hidden-state $\ell_2$ norm for Qwen3-4B.
    }
    \label{fig:qwen3-4b-trends}
\end{figure*}

\begin{figure*}[t]
    \centering
    \begin{subfigure}[t]{0.49\linewidth}
        \centering
        \includegraphics[width=\linewidth]{icml2026/Figures/SAE_layer_trends/Qwen3-8B.png}
        \caption{Layer-wise normalized mean activation of the top-5 reasoning-related SAE features.}
        \label{fig:qwen3-8b-sae-top5}
    \end{subfigure}
    \hfill
    \begin{subfigure}[t]{0.49\linewidth}
        \centering
        \includegraphics[width=\linewidth]{icml2026/Figures/L2_Norm_Trends/Qwen3-8B.png}
        \caption{Average $\ell_2$ norm of thinking responses.}
        \label{fig:qwen3-8b-norm-layer-trends}
    \end{subfigure}
    \caption{
    Layer-wise trends of reasoning-related SAE feature activation and hidden-state $\ell_2$ norm for Qwen3-8B.
    }
    \label{fig:qwen3-8b-trends}
\end{figure*}

\begin{figure*}[t]
    \centering
    \begin{subfigure}[t]{0.49\linewidth}
        \centering
        \includegraphics[width=\linewidth]{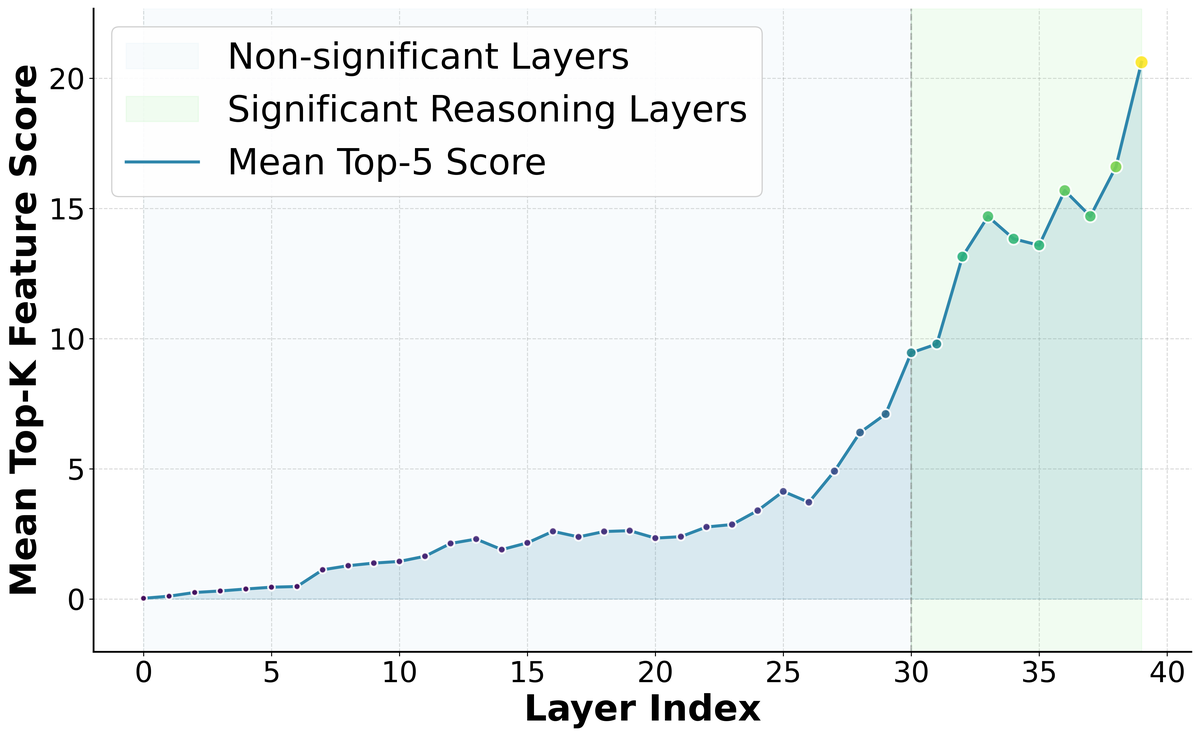}
        \caption{Layer-wise normalized mean activation of the top-5 reasoning-related SAE features.}
        \label{fig:qwen3-14b-sae-top5}
    \end{subfigure}
    \hfill
    \begin{subfigure}[t]{0.49\linewidth}
        \centering
        \includegraphics[width=\linewidth]{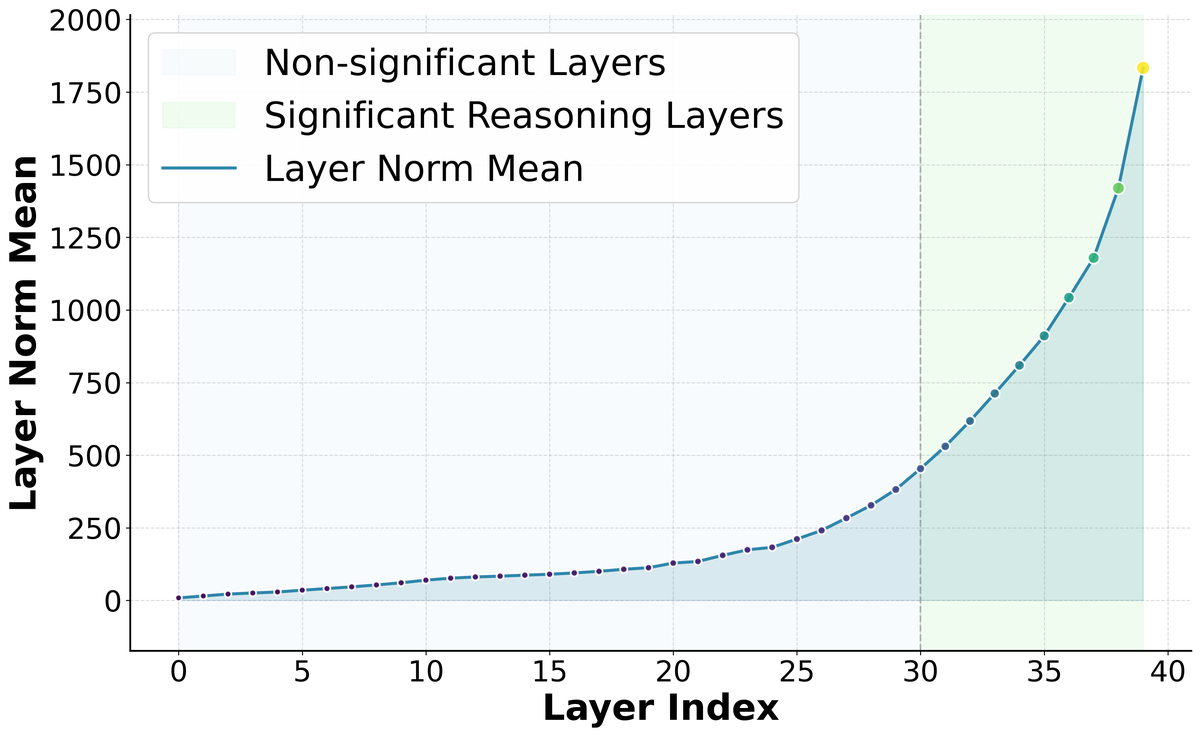}
        \caption{Average $\ell_2$ norm of thinking responses.}
        \label{fig:qwen3-14b-norm-layer-trends}
    \end{subfigure}
    \caption{
    Layer-wise trends of reasoning-related SAE feature activation and hidden-state $\ell_2$ norm for Qwen3-14B.
    }
    \label{fig:qwen3-14b-trends}
\end{figure*}

\begin{figure*}[t]
    \centering
    \begin{subfigure}[t]{0.49\linewidth}
        \centering
        \includegraphics[width=\linewidth]{icml2026/Figures/SAE_layer_trends/Qwen3-32B.png}
        \caption{Layer-wise normalized mean activation of the top-5 reasoning-related SAE features.}
        \label{fig:qwen3-32b-sae-top5}
    \end{subfigure}
    \hfill
    \begin{subfigure}[t]{0.49\linewidth}
        \centering
        \includegraphics[width=\linewidth]{icml2026/Figures/L2_Norm_Trends/Qwen3-32B.png}
        \caption{Average $\ell_2$ norm of thinking responses.}
        \label{fig:qwen3-32b-norm-layer-trends}
    \end{subfigure}
    \caption{
    Layer-wise trends of reasoning-related SAE feature activation and hidden-state $\ell_2$ norm for Qwen3-32B.
    }
    \label{fig:qwen3-32b-trends}
\end{figure*}

\subsection{Token-Level Commonality of Reasoning Signals Across Depth}
\label{sec:token_level_commonality}

To complement our layer-wise analysis of reasoning feature separation, we further examine how the reasoning signal manifests at the \emph{token level}. 
Across model families, we observe substantial overlap in the lexical items that co-occur with high activations of reasoning-related SAE features, suggesting that token-level indicators of reasoning are relatively stable and comparable across models.
For clarity and consistency, we focus on Qwen3-8B and construct token word clouds for two representative depths (Layer 20 and Layer 30) in Figure~\ref{fig:qwen3-8b-wordcloud-combined}, where each token is sized by its aggregated activation under top-5 reasoning SAE features.

\begin{figure}[t]
    \centering

    \begin{subfigure}[t]{0.48\linewidth}
        \centering
        \includegraphics[width=\linewidth]{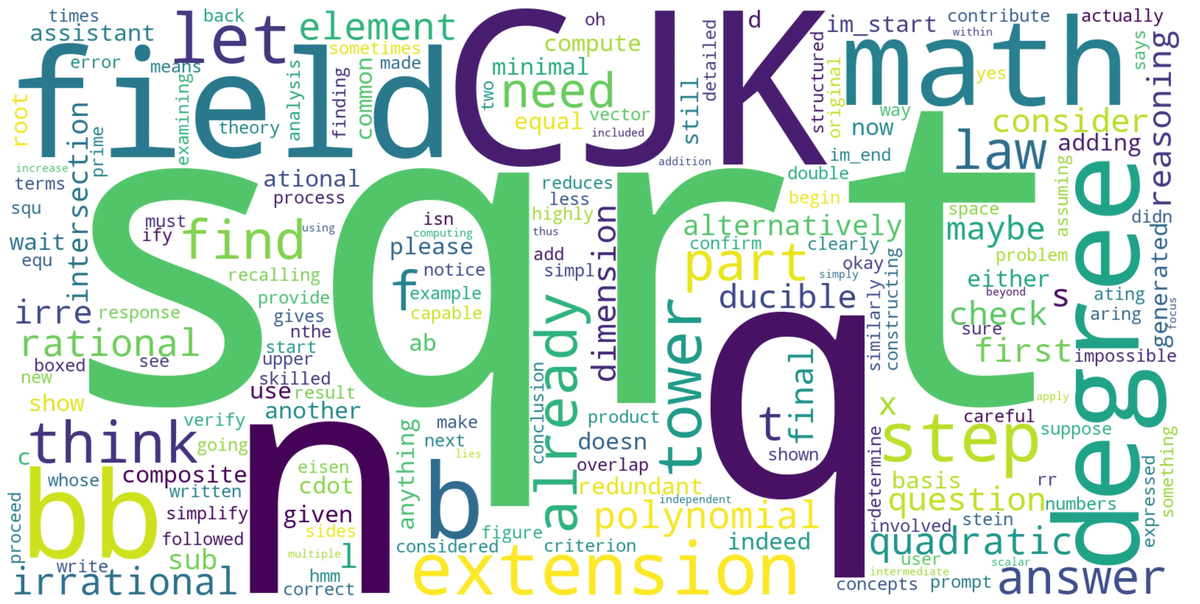}
        \caption{High reasoning-feature activation tokens in Layer 20}
        \label{fig:qwen3-8b-wordcloud-20}
    \end{subfigure}
    \hfill
    \begin{subfigure}[t]{0.48\linewidth}
        \centering
        \includegraphics[width=\linewidth]{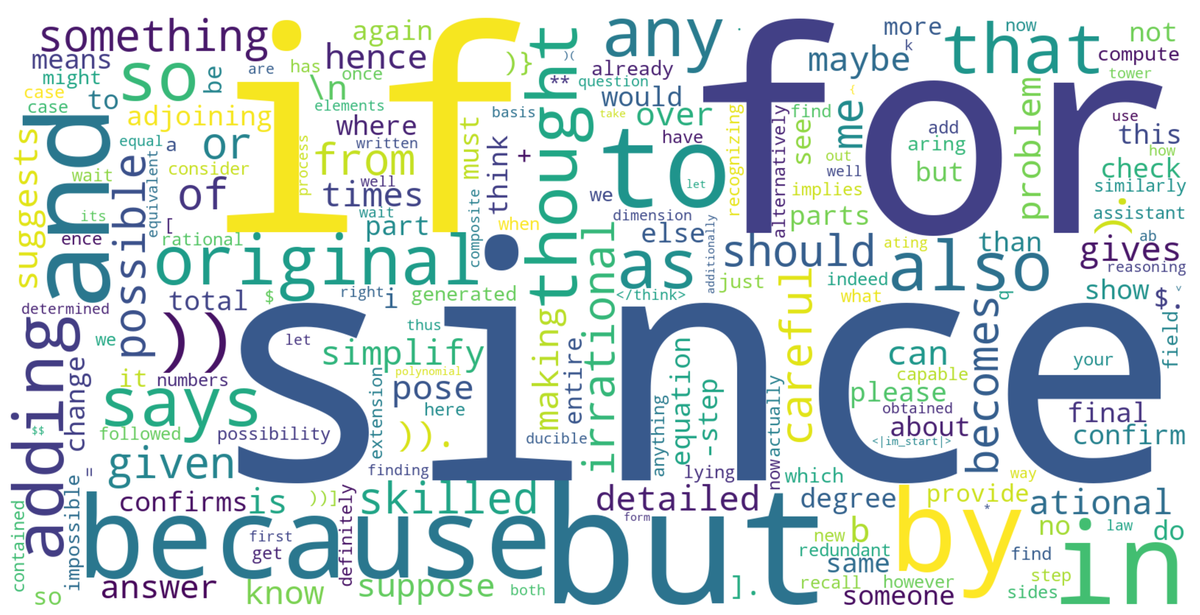}
        \caption{High reasoning-feature activation tokens in Layer 30}
        \label{fig:qwen3-8b-wordcloud-30}
    \end{subfigure}

    \vspace{0.5em}

    \begin{subfigure}[t]{0.48\linewidth}
        \centering
        \includegraphics[width=\linewidth]{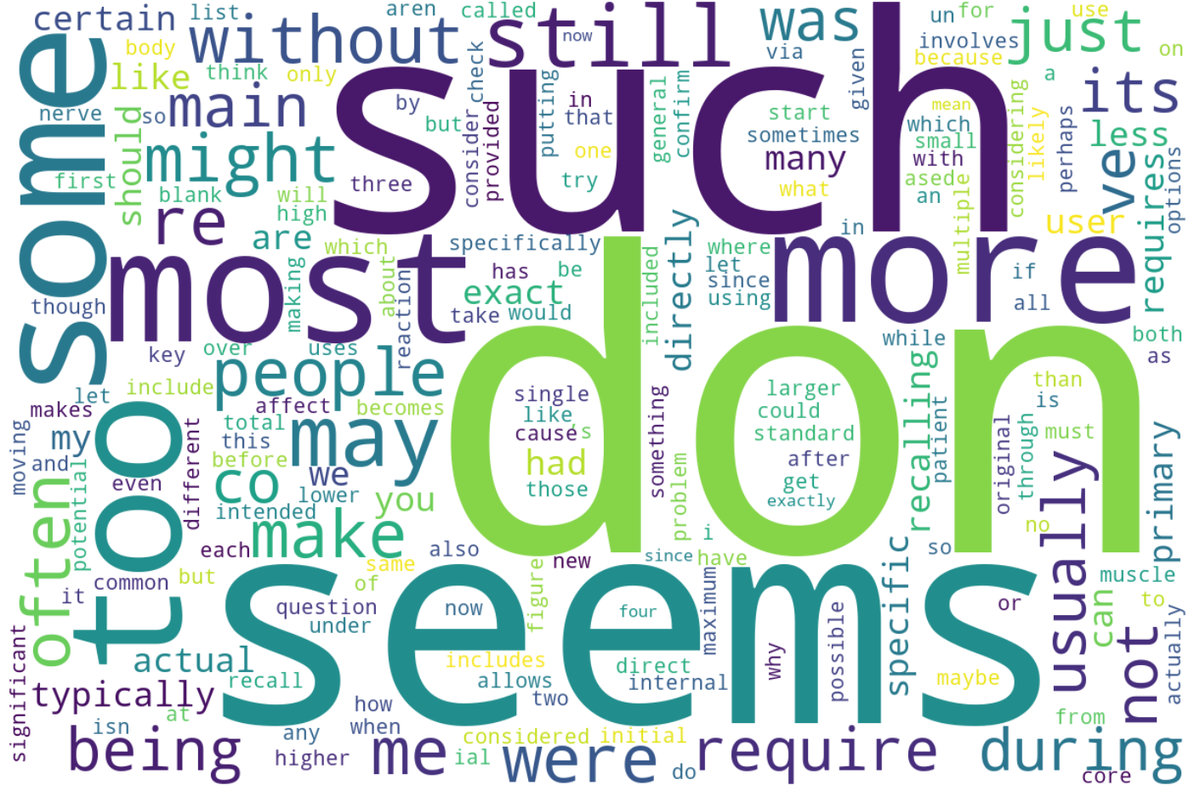}
        \caption{High $\ell_2$ norm tokens in Layer 20}
        \label{fig:qwen3-8b-wordcloud-norm-20}
    \end{subfigure}
    \hfill
    \begin{subfigure}[t]{0.48\linewidth}
        \centering
        \includegraphics[width=\linewidth]{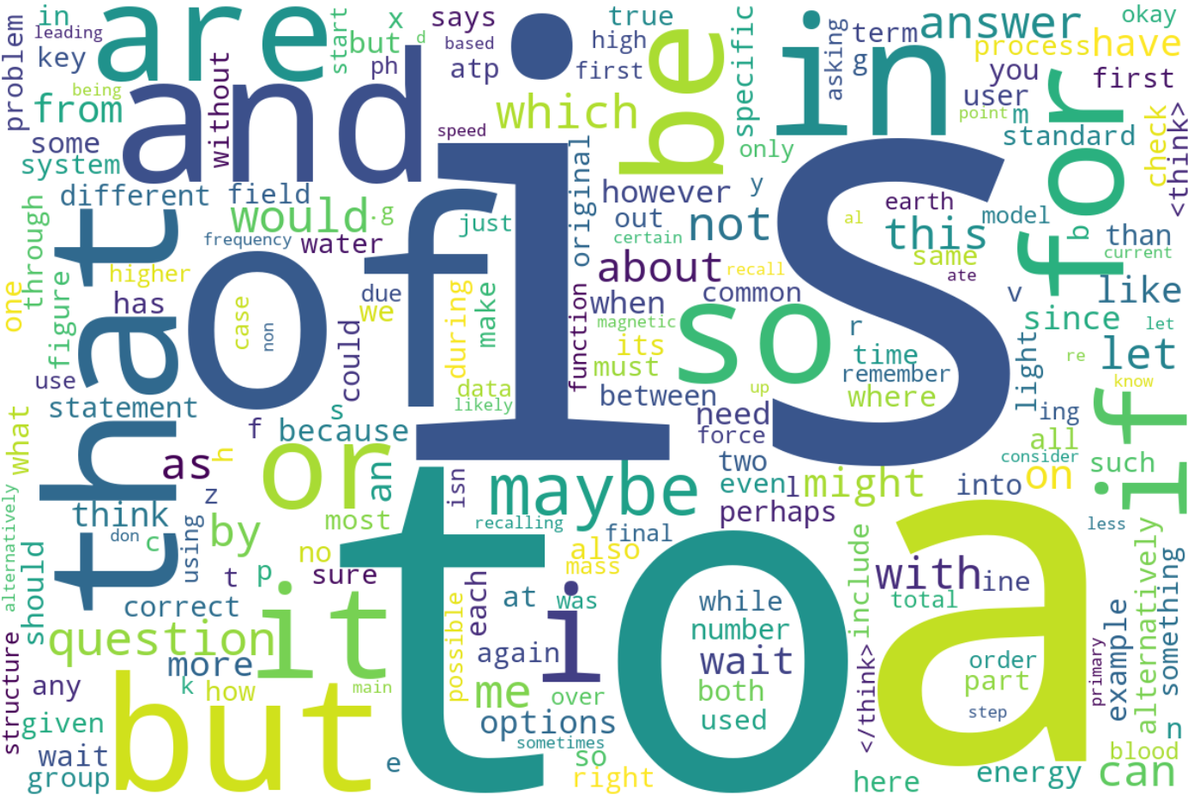}
        \caption{High $\ell_2$ norm tokens in Layer 30}
        \label{fig:qwen3-8b-wordcloud-norm-30}
    \end{subfigure}

    \caption{
    Word clouds of frequent tokens associated with (top) high reasoning-feature activation and (bottom) high $\ell_2$ norm at different layers of Qwen3-8B.
    }
    \label{fig:qwen3-8b-wordcloud-combined}
\end{figure}

\paragraph{Reasoning-feature-associated tokens.}
At Layer 20~(Figure~\ref{fig:qwen3-8b-wordcloud-20}), the tokens most associated with high reasoning-feature activation are dominated by \emph{domain and object words} that anchor the problem in a mathematical semantic space, such as \texttt{math}, \texttt{polynomial}, \texttt{quadratic}, \texttt{law}, \texttt{extension}, \texttt{dimension}, \texttt{vector}, \texttt{product}, and \texttt{basis}. 
Alongside these content tokens, we also observe action-like tokens that indicate local progress in a solution attempt (e.g., \texttt{find}, \texttt{compute}, \texttt{check}, \texttt{verify}, \texttt{step}, \texttt{part}, \texttt{final}, \texttt{answer}). 
Overall, the Layer 20 cloud reflects a regime where the model is primarily organizing \emph{what the problem is about}---identifying relevant mathematical objects and operators---while only partially expressing explicit reasoning control.

By Layer 30~(Figure~\ref{fig:qwen3-8b-wordcloud-30}), the high-activation tokens shift markedly toward \emph{discourse and control-flow markers} that structure multi-step reasoning trajectories. 
The most salient tokens include contrast and progression markers such as \texttt{but}, \texttt{so}, \texttt{thus}, \texttt{hence}, \texttt{then}, \texttt{also}, and \texttt{again}, as well as logical scaffolding tokens such as \texttt{if}, \texttt{any}, \texttt{all}, \texttt{not}, \texttt{or}, \texttt{given}, and \texttt{possible}. 
While some mathematical terms remain present (e.g., \texttt{rational}, \texttt{irrational}, \texttt{equation}), they appear more as \emph{arguments embedded in an explicit proof-like structure} rather than as the dominant signal.
This layer therefore emphasizes \emph{how the model reasons}---introducing assumptions, branching over cases, drawing implications, and advancing toward conclusions---rather than merely grounding the problem in mathematical content.

\paragraph{Depth-wise similarity and its link to late-layer separation.}
Importantly, the qualitative transition from content-heavy tokens (Layer 20) to structure-heavy tokens (Layer 30) mirrors the two-stage pattern observed in our layer-wise reasoning feature score: earlier deep layers primarily support semantic grounding and local operations, whereas later layers increasingly implement explicit reasoning control that is more separable from non-thinking behavior.
This provides an interpretable token-level view of why reasoning/non-reasoning separation becomes clearer in late layers: non-thinking responses can still contain domain content words, but they typically exhibit fewer discourse-level control markers that are characteristic of structured, step-by-step reasoning trajectories.
Consequently, once late-layer representations begin to preferentially route through reasoning-related SAE features, the associated token distribution becomes more dominated by these structural markers, reinforcing the observed late-layer specialization.

\paragraph{$\ell_2$-norm word clouds across depth.}
We perform an analogous token-level inspection using the $\ell_2$ magnitude signal. 
For Qwen3-8B, we construct word clouds at Layer 20 and Layer 30 by collecting tokens associated with high $\ell_2$-norm activations (token size proportional to the aggregated $\ell_2$ score).
Despite being derived from a different signal than SAE features, the $\ell_2$-based word clouds exhibit a depth-dependent shift that mirrors the SAE-based analysis: as depth increases, the salient tokens move away from content-heavy lexical items and toward discourse-level and control-flow markers that structure multi-step reasoning (e.g., function words and connectors such as \texttt{if}, \texttt{then}, \texttt{so}, \texttt{but}, \texttt{and}, \texttt{or}, \texttt{not}).

At Layer 20~(Figure~\ref{fig:qwen3-8b-wordcloud-norm-20}), high-$\ell_2$ tokens already include many generic connective and modality words (e.g., \texttt{can}, \texttt{may}, \texttt{might}, \texttt{should}, \texttt{still}), reflecting a regime in which the model is actively maintaining and updating intermediate state while remaining relatively broad and non-specialized.
By Layer 30 (Figure~\ref{fig:qwen3-8b-wordcloud-norm-30}), the high-$\ell_2$ tokens are even more dominated by explicit structural scaffolding (e.g., \texttt{if}, \texttt{then}, \texttt{so}, \texttt{but}, \texttt{and}, \texttt{or}, \texttt{not}), consistent with the emergence of a late-layer ``reasoning-control'' phase where the model routes computation through long-horizon dependency management and step-to-step transitions.
This qualitative progression aligns with our SAE finding that reasoning-related features become substantially more separable in the last portion of the network: the same late layers that show elevated reasoning-feature scores are also the layers where $\ell_2$ saliency concentrates on tokens that govern reasoning flow rather than topical content.

\paragraph{Why $\ell_2$-norm and SAE word clouds are similar but not identical.}
Importantly, we do \emph{not} expect the $\ell_2$-based and SAE-based word clouds to match token-by-token, even though we previously showed that SAE activations are constrained by (and correlate with) the residual-stream magnitude measured by the $\ell_2$ norm.
The reason is that the $\ell_2$ norm is a \emph{scalar} summary of the overall representation magnitude at a token, whereas SAE reasoning features correspond to a \emph{directional} decomposition that isolates specific subspaces aligned with reasoning.
As a result, tokens can attain large $\ell_2$ magnitude for multiple reasons unrelated to our selected reasoning features (e.g., generic distribution shifts, attention-driven routing, or other high-variance subspaces), and conversely, a token can strongly activate a small set of reasoning directions without being among the largest-magnitude tokens under $\ell_2$.
Therefore, $\ell_2$ saliency provides a coarse ``energy-like'' view of where the model expends representational capacity, while SAE reasoning features provide a more selective lens for \emph{which} computational directions (in particular, reasoning-aligned ones) are being engaged.
The partial overlap and systematic depth-wise co-variation between the two signals support our interpretation that late-layer reasoning separation is accompanied by a concentration of representation magnitude on tokens that implement reasoning control, even though the two token sets are not expected to coincide exactly.

\subsection{Quantitative Correlation Analysis}
\label{appendix:correlation_analysis}

To validate the effectiveness of our proposed $\ell_2$ norm as a proxy for reasoning dynamics in large language models, we conduct a comprehensive correlation analysis across four representative models: Qwen3-1.7B, Qwen3-8B, Qwen3-14B, and LLaMA3-8B. Specifically, we compute the Spearman correlation between four layer-wise intrinsic metrics and two key metrics related to reasoning: 
\begin{itemize}[leftmargin=*,label=\textbullet,noitemsep,topsep=2pt]
\item  Activations of Sparse Autoencoder (SAE) reasoning features (compute the correlation between all layers’ metrics and the SAE activation of the corresponding layer, across all layers.)

\item  The entropy of the final model output~\cite{wang2025beyond80/20} (computed correlation between output entropy and the last quarter of layers, which are closest to the prediction head and thus most reflective of the model’s ultimate uncertainty).
\end{itemize}

The three baseline metrics are defined as follows:

\begin{itemize}[leftmargin=*,label=\textbullet,noitemsep,topsep=2pt]
    \item \textbf{Layer Hidden Entropy} ($H_h$): 
    \[
    H_h = -\sum_{i=1}^{d} p_i \log p_i, \quad \text{where } p_i = \frac{e^{h_i}}{\sum_j e^{h_j}}.
    \]
    This measures the entropy of the softmax-normalized hidden state vector, capturing how uniformly information is distributed across semantic dimensions. A low $H_h$ indicates that the representation is concentrated in a few dominant features, which are often interpreted as strong, interpretable semantic signals, while high entropy suggests diffuse or ambiguous representations.

    \item \textbf{Layer Output Entropy} ($H_o$):
    \[
    H_o = -\sum_{v=1}^{V} q_v \log q_v, \quad \text{where } q_v = \big[\texttt{softmax}(h \cdot W_{\text{lm\_head}})\big]_v.
    \]
    This quantifies the uncertainty of the next-token prediction at an intermediate layer by projecting the hidden state through the language modeling head. It reflects the model’s local confidence in its generative decision and is closely related to perplexity-based analyses in prior work~\cite{wang2025beyond80/20}.

    \item \textbf{Attention Entropy} ($H_a$):
    \[
    H_a = -\sum_{j=1}^{n} a_j \log a_j, \quad \text{where } a_j = \big[\texttt{softmax}(\frac{QK^\top}{ \sqrt{d_k}})\big]_j,
    \]
    averaged over all attention heads. This metric characterizes the dispersion of attention weights: low entropy implies focused attention on a few tokens (suggesting selective reasoning), while high entropy indicates broad contextual integration. It has been widely used in Transformer interpretability studies~\cite{vaswani2017attention}.
\end{itemize}

As shown in Table~\ref{tab:spearman_sae_entropy}, the $\ell_2$ norm exhibits consistently strong positive correlations with both SAE reasoning feature activations (ranging from 84.11 to 87.62) and final output entropy (52.10 to 66.65) across all models. In stark contrast, the alternative entropy-based metrics show relatively weak and unstable correlations with these targets suggesting they capture different aspects of model behavior.

This empirical evidence strongly supports two claims: (1) the theoretical connection between signal magnitude and reasoning saliency established in Appendix~B is not only mathematically sound but also statistically robust across model scales and architectures; and (2) the $\ell_2$ norm serves as a reliable, scalable, and architecture-agnostic indicator of both internal reasoning activity (via SAEs) and output uncertainty. Consequently, it provides a principled foundation for layer-wise probing, dynamic routing, or intervention strategies aimed at enhancing controllability and interpretability in large language models.

\section{Causal Intervention via $\ell_2$-Norm-Based Suppression}
\label{Appendix:suppression_result}

To further validate the functional role of high-$\ell_2$-norm hidden states in reasoning, we perform causal intervention experiments across multiple models. Specifically, we suppress a fraction of hidden state dimensions with the largest $\ell_2$ norms at each layer and measure the resulting performance drop on a suite of reasoning benchmarks. As a control, we compare against random suppression of the same number of dimensions.

\subsection{Complete Suppresion Result}
We show complete experiment results in Table~\ref{tab:qwen3_causal_intervention} and Table~\ref{tab:llama_causal}.

Our suppression is applied \textit{layer-wise} and \textit{token-wise}. Specifically, for each layer $l$ in the designated suppression set (e.g., the last quarter of layers), we independently decide whether to suppress each token’s hidden state at that layer.

For example, in \texttt{random} mode with suppression ratio $p = 0.1$, every token in each targeted layer is suppressed with probability $p = 10\%$, independently across tokens and layers. Consequently, each of the selected layers (e.g., all layers in the final 25\% of the model) experiences an expected suppression rate of 10\% of its active tokens—not a global 10\% across the entire model. The reported suppression ratios always refer to the \textit{per-layer expected or empirical token suppression rate}, not a cumulative or global fraction.

\begin{table*}[!t]
\centering
\caption{
Causal suppression results on Qwen3 model family under different intervention ratios.
We compare random suppression with targeted suppression of high-$\ell_2$-norm hidden states (High-Norm).
All values are accuracy scores (\%).
}
\label{tab:qwen3_causal_intervention}
\scriptsize
\setlength{\tabcolsep}{6pt}
\begin{tabular}{llcccccc}
\toprule
\multirow{2}{*}{\textbf{Ratio}} 
& \multirow{2}{*}{\textbf{Method}}
& \multicolumn{3}{c}{\textbf{Mathematical}} 
& \textbf{Complex}
& \textbf{Scientific}
& \textbf{Knowledge} \\
\cmidrule(lr){3-5}
\cmidrule(lr){6-6}
\cmidrule(lr){7-7}
\cmidrule(lr){8-8}
& 
& \textbf{AIME}
& \textbf{GSM8K}
& \textbf{GSM-Plus}
& \textbf{BBH}
& \textbf{GPQA-main}
& \textbf{MMLU-Pro} \\
\midrule
\multicolumn{8}{c}{\textit{Qwen3-1.7B}} \\
\midrule
0\% (Baseline)
& -- 
& 35.42 & 90.03 & 75.45 & 72.83 & 37.05 & 55.92 \\
\midrule
\multirow{2}{*}{0.5\%}
& High-Norm
& 0.00 & 48.67 & 42.69 & 18.22 & 4.20 & 23.48 \\
& Random
& 6.67 & 88.32 & 66.12 & 65.43 & 14.32 & 46.87 \\
\midrule
\multirow{2}{*}{1\%}
& High-Norm
& 0.00 & 24.94 & 22.22 & 9.43 & 1.34 & 12.13 \\
& Random
& 6.67 & 70.58 & 60.63 & 62.27 & 9.03 & 39.88 \\
\midrule
\multirow{2}{*}{5\%}
& High-Norm
& 0.00 & 0.23 & 8.76 & 6.87 & 0.00 & 1.38 \\
& Random
& 3.33 & 45.56 & 38.32 & 31.45 & 2.23 & 22.00 \\
\midrule
\multirow{2}{*}{10\%}
& High-Norm
& 0.00 & 0.00 & 1.72 & 4.32 & 0.00 & 0.00 \\
& Random
& 1.67 & 21.15 & 16.81 & 13.30 & 0.89 & 11.97 \\
\midrule
\multicolumn{8}{c}{\textit{Qwen3-4B}} \\
\midrule
0\% (Baseline)
& -- 
& 72.83 & 95.41 & 79.90 & 84.78 & 48.88 & 67.92 \\
\midrule
\multirow{2}{*}{0.5\%}
& High-Norm
& 46.67 & 85.29 & 68.66 & 66.87 & 18.30 & 53.75 \\
& Random
& 63.33 & 90.76 & 75.13 & 79.38 & 34.73 & 60.32 \\
\midrule
\multirow{2}{*}{1\%}
& High-Norm
& 15.00 & 70.43 & 54.21 & 57.23 & 8.26 & 35.98 \\
& Random
& 48.33 & 83.32 & 69.35 & 68.30 & 19.63 & 47.11 \\
\midrule
\multirow{2}{*}{5\%}
& High-Norm
& 0.67 & 44.96 & 50.41 & 37.60 & 3.79 & 21.34 \\
& Random
& 20.00 & 70.36 & 58.07 & 46.67 & 5.13 & 33.94 \\
\midrule
\multirow{2}{*}{10\%}
& High-Norm
& 0.00 & 10.28 & 4.30 & 8.56 & 0.67 & 2.04 \\
& Random
& 0.00 & 18.80 & 14.93 & 15.87 & 1.79 & 14.62 \\
\midrule
\multicolumn{8}{c}{\textit{Qwen3-8B}} \\
\midrule
0\% (Baseline)
& --
& 61.83 & 95.91 & 81.29 & 84.67 & 54.69 & 71.11 \\
\midrule

\multirow{2}{*}{0.5\%}
& High-Norm
& 45.00 & 91.74 & 78.30 & 70.63 & 35.49 & 63.07 \\
& Random
& 53.33 & 94.17 & 80.65 & 80.24 & 50.62 & 67.54 \\
\midrule

\multirow{2}{*}{1\%}
& High-Norm
& 25.00 & 83.78 & 71.62 & 51.84 & 11.83 & 50.87 \\
& Random
& 33.33 & 89.52 & 79.93 & 75.17 & 41.83 & 58.02 \\
\midrule

\multirow{2}{*}{5\%}
& High-Norm
& 1.67 & 45.03 & 35.76 & 34.12 & 1.12 & 24.76 \\
& Random
& 5.00 & 75.59 & 66.03 & 56.54 & 13.62 & 42.32 \\
\midrule

\multirow{2}{*}{10\%}
& High-Norm
& 0.00 & 5.00 & 7.55 & 4.43 & 0.00 & 9.22 \\
& Random
& 0.00 & 12.51 & 11.39 & 7.14 & 1.12 & 18.67 \\
\midrule
\multicolumn{8}{c}{\textit{Qwen3-14B}} \\
\midrule
0\% (Baseline)
& --
& 66.25 & 96.13 & 83.71 & 85.33 & 58.79 & 73.56 \\
\midrule

\multirow{2}{*}{0.5\%}
& High-Norm
& 53.33 & 92.34 & 80.43 & 69.84 & 28.79 & 64.02 \\
& Random
& 60.00 & 95.07 & 82.17 & 81.64 & 53.30 & 69.45 \\
\midrule

\multirow{2}{*}{1\%}
& High-Norm
& 30.13 & 86.05 & 70.26 & 52.76 & 11.38 & 50.87 \\
& Random
& 43.33 & 90.32 & 79.97 & 75.40 & 46.83 & 60.20 \\
\midrule

\multirow{2}{*}{5\%}
& High-Norm
& 0.67 & 33.60 & 32.11 & 31.21 & 0.00 & 24.76 \\
& Random
& 23.33 & 61.33 & 66.03 & 47.83 & 5.58 & 41.65 \\
\midrule

\multirow{2}{*}{10\%}
& High-Norm
& 0.00 & 0.27 & 4.87 & 3.52 & 0.00 & 9.22 \\
& Random
& 0.00 & 2.20 & 10.39 & 10.87 & 0.67 & 20.17 \\
\midrule
\multicolumn{8}{c}{\textit{Qwen3-32B}} \\
\midrule
0\% (Baseline)
& --
& 70.00 & 96.44 & 82.62 & 85.79 & 64.40 & 77.83 \\
\midrule

\multirow{2}{*}{0.5\%}
& High-Norm
& 63.33 & 95.06 & 80.98 & 83.24 & 61.73 & 74.30 \\
& Random
& 68.33 & 85.39 & 64.4 & 96.44 & 83.87 & 78.91 \\
\midrule

\multirow{2}{*}{1\%}
& High-Norm
& 50.33 & 92.77 & 72.62 & 78.65 & 32.12 & 60.74 \\
& Random
& 66.67 & 85.16 & 53.84 & 96.06 & 80.79 & 67.17 \\
\midrule

\multirow{2}{*}{5\%}
& High-Norm
& 16.67 & 63.87 & 37.41 & 53.38 & 10.74 & 44.98 \\
& Random
& 48.33 & 72.77 & 42.72 & 83.45 & 61.85 & 52.1 \\
\midrule

\multirow{2}{*}{10\%}
& High-Norm
& 0.00 & 31.72 & 13.87 & 13.78 & 0.00 & 15.53 \\
& Random
& 28.33 & 41.07 & 21.61 & 70.1 & 20.93 & 30.29 \\
\bottomrule
\end{tabular}
\end{table*}

\begin{table}[t]
\centering
\caption{
Causal suppression results on LLaMA models under different intervention ratios.
We compare random suppression with targeted suppression of high-$\ell_2$-norm hidden states (High-Norm).
All values are accuracy scores (\%).
}
\label{tab:llama_causal}
\setlength{\tabcolsep}{6pt}
\begin{tabular}{llcccccc}
\toprule
\multirow{2}{*}{\textbf{Ratio}}
& \multirow{2}{*}{\textbf{Method}}
& \multicolumn{3}{c}{\textbf{Mathematical}}
& \textbf{Complex}
& \textbf{Scientific}
& \textbf{Knowledge} \\
\cmidrule(lr){3-5}
\cmidrule(lr){6-6}
\cmidrule(lr){7-7}
\cmidrule(lr){8-8}
&
& \textbf{AIME}
& \textbf{GSM8K}
& \textbf{GSM-Plus}
& \textbf{BBH}
& \textbf{GPQA-main}
& \textbf{MMLU-Pro} \\
\midrule

\multicolumn{8}{c}{\textit{LLaMA-8B}} \\
\midrule
0\% (Baseline)
& --
& 36.67 & 88.34 & 67.95 & 70.48 & 50.45 & 59.27 \\
\midrule

\multirow{2}{*}{0.5\%}
& High-Norm
& 26.67 & 87.72 & 65.29 & 69.13 & 50.34 & 58.04 \\
& Random
& 30.00 & 88.08 & 67.91 & 70.49 & 52.24 & 58.62 \\
\midrule

\multirow{2}{*}{1\%}
& High-Norm
& 18.33 & 84.88 & 64.36 & 66.63 & 47.55 & 56.75 \\
& Random
& 23.33 & 86.11 & 65.88 & 69.14 & 48.64 & 58.04 \\
\midrule

\multirow{2}{*}{5\%}
& High-Norm
& 5.00 & 80.14 & 54.98 & 58.34 & 37.56 & 52.65 \\
& Random
& 8.33 & 85.90 & 58.02 & 65.14 & 43.97 & 57.45 \\
\midrule

\multirow{2}{*}{10\%}
& High-Norm
& 0.00 & 72.83 & 47.34 & 40.08 & 15.22 & 44.28 \\
& Random
& 0.00 & 82.71 & 52.61 & 57.40 & 25.67 & 49.69 \\
\midrule

\multicolumn{8}{c}{\textit{LLaMA-70B}} \\
\midrule
0\% (Baseline)
& --
& 60.83 & 94.31 & 79.84 & 84.81 & 66.91 & 76.64 \\
\midrule

\multirow{2}{*}{0.5\%}
& High-Norm
& 38.33 & 89.27 & 75.76 & 75.12 & 63.19 & 74.04 \\
& Random
& 58.33 & 94.47 & 80.54 & 79.31 & 66.96 & 76.19 \\
\midrule

\multirow{2}{*}{1\%}
& High-Norm
& 26.67 & 86.32 & 69.22 & 58.64 & 56.93 & 60.38 \\
& Random
& 55.00 & 92.31 & 77.88 & 76.36 & 62.46 & 68.40 \\
\midrule

\multirow{2}{*}{5\%}
& High-Norm
& 13.33 & 82.65 & 57.56 & 49.07 & 37.56 & 57.56 \\
& Random
& 41.67 & 88.87 & 68.20 & 62.53 & 46.64 & 63.54 \\
\midrule

\multirow{2}{*}{10\%}
& High-Norm
& 0.67 & 74.36 & 49.23 & 28.76 & 25.98 & 49.13 \\
& Random
& 20.00 & 85.45 & 55.43 & 47.80 & 34.45 & 54.18 \\
\bottomrule
\end{tabular}
\end{table}

\subsection{Analysis}

Figures~\ref{fig:causal_intervention_qwen1.7b}–\ref{fig:causal_intervention_llama70b} show the results for all the models, respectively. Across all model scales, suppressing high-$\ell_2$-norm components leads to a significantly sharper decline in performance compared to random suppression—especially under higher suppression ratios. This demonstrates that dimensions with large $\ell_2$ norms are not merely energetic but causally contribute to correct reasoning behavior. The consistency of this effect across architectures and benchmark suites reinforces our claim that the $\ell_2$ norm serves as a reliable indicator of reasoning-critical features in transformer hidden states.

What's more, we find that as model size increases, the model becomes increasingly robust to random token suppression. However, suppressing tokens with high-norm hidden states remains consistently detrimental across all model scales.

\begin{figure*}[t]
    \centering
    \includegraphics[width=\linewidth]{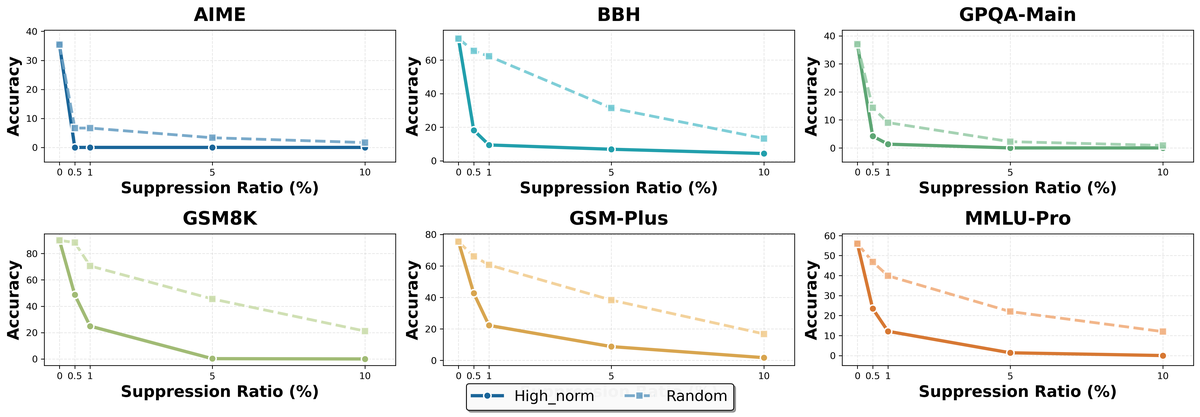}
    \caption{Causal intervention results on Qwen3-1.7B across reasoning benchmarks under different suppression ratios.
    We compare suppressing high-$\ell_2$-norm hidden states against Random Suppression.}
    \label{fig:causal_intervention_qwen1.7b}
\end{figure*}
\begin{figure*}[t]
    \centering
    \includegraphics[width=\linewidth]{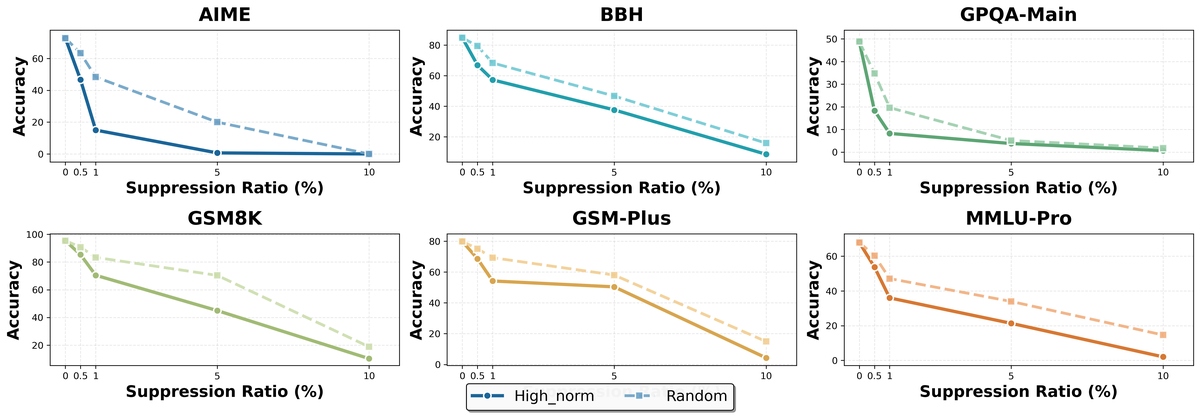}
    \caption{Causal intervention results on Qwen3-4B across reasoning benchmarks under different suppression ratios.
    We compare suppressing high-$\ell_2$-norm hidden states against Random Suppression.}
    \label{fig:causal_intervention_qwen4b}
\end{figure*}
\begin{figure*}[t]
    \centering
    \includegraphics[width=\linewidth]{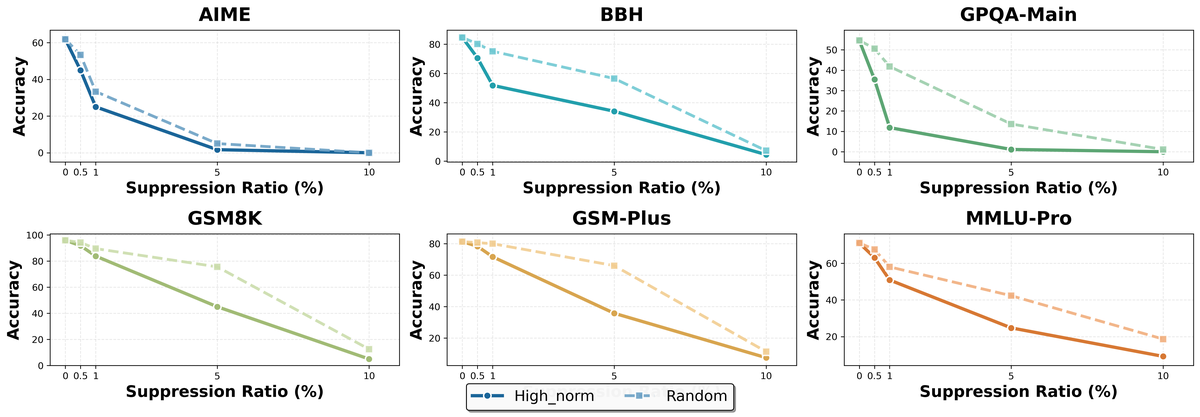}
    \caption{Causal intervention results on Qwen3-8B across reasoning benchmarks under different suppression ratios.
    We compare suppressing high-$\ell_2$-norm hidden states against Random Suppression.}
    \label{fig:causal_intervention_qwen8b_appendix}
\end{figure*}
\begin{figure*}[t]
    \centering
    \includegraphics[width=\linewidth]{icml2026/Figures/Suppress/Qwen3-14B_all_benchmarks.png}
    \caption{Causal intervention results on Qwen3-14B across reasoning benchmarks under different suppression ratios.
    We compare suppressing high-$\ell_2$-norm hidden states against Random Suppression.}
    \label{fig:causal_intervention_qwen14b_appendix}
\end{figure*}
\begin{figure*}[t]
    \centering
    \includegraphics[width=\linewidth]{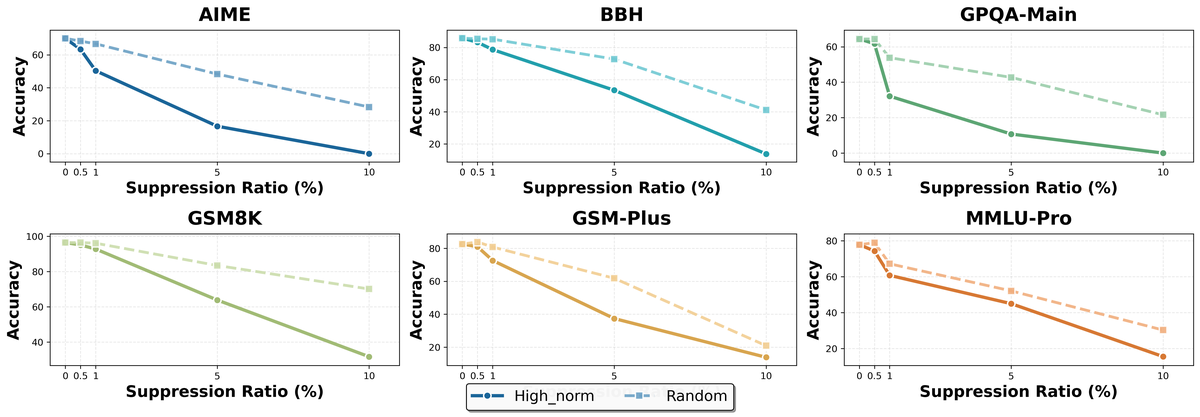}
    \caption{Causal intervention results on Qwen3-32B across reasoning benchmarks under different suppression ratios.
    We compare suppressing high-$\ell_2$-norm hidden states against Random Suppression.}
    \label{fig:causal_intervention_qwen32b}
\end{figure*}

\subsection{Example Of Suppression}
\label{Appendix:example_suppression}

To illustrate the effect of our suppression experiments, we select a simple arithmetic problem as a case study to highlight the differences between the two suppression strategies~(Example 1, 2).

Our main observation is that the $\ell_2$-norm criterion highlights tokens that function as \emph{critical pivot points} in the model's reasoning: suppressing only a small fraction of such tokens can catastrophically break the chain of thought, while a larger amount of random suppression often leaves the reasoning trajectory intact (despite surface-level corruption such as code-switching or garbled characters).

\begin{promptbox}
Solve this problem step by step:

Three friends Alice, Bob, and Carol share some money. Alice gets 1/3 of the total,
Bob gets 1/4 of the remaining, and Carol gets the rest which is \$60.
How much money did they have in total?
\end{promptbox}

\paragraph{Random-token suppression.}
Even when we suppress more than 10\% of tokens (271 / 2048), the model output may exhibit severe corruption (e.g., mixed languages, broken words, and noisy symbols). However, the underlying reasoning structure remains recoverable: the model still defines the total amount, computes the remaining share after Alice, applies Bob's fraction on the remainder, and solves for the total consistently. In other words, the \emph{reasoning chain does not collapse}, and the model still reaches the correct total (\$120).

\begin{CJK}{UTF8}{gbsn}
\begin{examplebox}
{\small\ttfamily
<think> O.K., let's try to solve this problem step by step. So很满意, we need to find out how much total mName they has. Let me think.

First, the problem says that three friends—Alice, Bob, and congrat, Rita—share some subsidi. Alice gets 1/ of the total, then Bob gets 1/4 of the叫我 remaining, and Carol gets the rest, which is \$60. 

Hmm,叫我 is <non-EN>. Let me rewrite it in my own words.

Let me. Let's denoteagra aValue for the total amount of money they(withId. Let\'s <non-EN>. Alice gets 1/3 ofROID. So, Alice's share is (1/3 of total). Then, after Alice takes要不然, there isERVED left.尼斯, Bob 三大阶段 1/4 of the remaining amount. And whatever, the rest isgons,什么事情 \$
SCRI.. 60.夏

Soagra, let's set up the equations. Let me denote the total amount as T可以更好.

所以他. Alice gets .cx(1/3)T. So, the remaining amount after Alice takes her share is T - (1/3)T\%/3. Wait, Jinping, 1/也是很. So, the津 is T - ("<1/3)犁 =. 2/3 T? Wait, <non-EN>. Let me confirmPassword.. If you is 1/3 of芾, then theandal is 2/3. So,agra,. The remaining after Alice is 2/3 T.

Thenagra,,...

now, Bob getsrogen of the remaining. The remaining is 2/3 T. Soagra,. Bob gets 1/4 of that. So, Bob痿 1/4 * (2/3 T"3. Let.. Let me. 1/4 of 2/3. Let. (1/4 * 2/3) = 2/12. So, Bob's share is 2/12 of ...? No, wait, apologies, let叫我 <non-EN>. *---------------------

Let me do it againagra.

Total amount: T.

Alice gets 1/3 * T. So, Alice is 1/3 T.

Remaining after Alice: (T - 1/3 T) = 2/3 T.
Now, Bob ... of the remaining. So, Bob gets 1/4 of 2/3 T. So,
1/4 * 2/3 T = 2/12 T = 1/6 T.

So?, Bob getCode 犁 1/6 T. <ctrl>

Then, what is left after 1/3 T and 1/6 T? Let me calculate.

Total可以更好. T.

貸三大阶段. Джaffe. 1/ Garrison T + 1/6 T = (1/3 +  1/6) T = (2-thirds津 1/6 T = 1/2 T? Wait, 1/+=(1/. nameLabel. 1/3 1/6 is 2/6, which is 1/三大阶段. So, 1. 1/3/6 T = 1/ 3 + 1 <clock> 6 = 2/6agra. 1/3 + 1/6 =  (2/6). .,g. 2/6juana. So, the remaining is 1/2 T.

But the problem says that Carol gets piss which is \$60. Og, so 1/2 T = 60:[[. So, T = 6en..6 2. So, ;\
Wait, 1党的十 2 is 60. So,可以更好奥但不限  2T = 6, soofi, T = 60 * 2 = 120? Wait,elleicht.,\
Let me check5. If T 120, then:

Alice gets也是很 1/3 of 120 = 40.

Wait, but let me check if there's another way to approach it. Maybeartner.

urved. Let's say et. Total is三大阶段.

Alice, 1/3. So GDP tslib.. 1 third. Bale, 1/3.

Bale. urved. 1/3 of total.  Sourved 2/3 remains.

Then, Bob gets 1/4 of the 2/3. So, 1/4 * 2/tractorackson 1/\_plural.

So, 1/3 + 1/6三大阶段.='  1/? + 1/6 = 2/6 + 1/1/6 = 3/6 = 1/2. So, 1/2 of the total is left人大常委.

.7, 1 2 T = 60. So, T / 2 = 60,)?;

.. T = 120. 

</think>

}
\end{examplebox}
\end{CJK}

\paragraph{High-$\ell_2$-norm suppression.}
In contrast, when we suppress only 4.5\% of tokens (91 / 2048) selected by high $\ell_2$-norm, the model's reasoning fails abruptly. The output shows a structural breakdown: after partially restating the problem and attempting to introduce a variable, the generation degenerates into fragmented text and repetitive placeholders, and the step-by-step derivation is never completed. This indicates that the suppressed set is not arbitrary; rather, it disproportionately contains tokens that the model relies on to maintain coherent intermediate state updates.

\begin{CJK}{UTF8}{gbsn}
\begin{examplebox}
{\small\ttfamily

<think>
Okay, let's see. So there are 3 friends: Alice, Bob, and Cinderella? Wait—was 3 friends ACLU?";

RD,.wait Algebra,.wait,...

Wait, 3叫我,.wait,...

NoSuch, März,agra...

Wait,', the problem is about threesome,— Trio,\

!";

Alice, Bob, cdecl.

Alright, so the problem says:'' Three的女孩 ~>,...

叫我 aValue,...

,\

Alice gets 1/3 of the in overall, \$

\%");

 \$

Bob gets ``(1/4:[[remaining],|

and|

\%</60.

How much什麽;\

total?

Okay,蕞 let's try to parse this step by step.

First, let me denote theennifer total amount of money as T. So, T is what we|-------

need to sof. Now,,…

\%");

Alice gets 1/3 of ?");\

So, Alice's part is (1/3)*/\

T. So, Alice has (1/3)*/\
T.

Then,avier要不然也是很三大阶段isorasmineзы,edi!";

theþ remaining after3×</ is what

\$ 

\$ 

\$ 

\$ ... (degeneration) ...}
\end{examplebox}
\end{CJK}

\begin{figure*}[t]
    \centering
    \includegraphics[width=\linewidth]{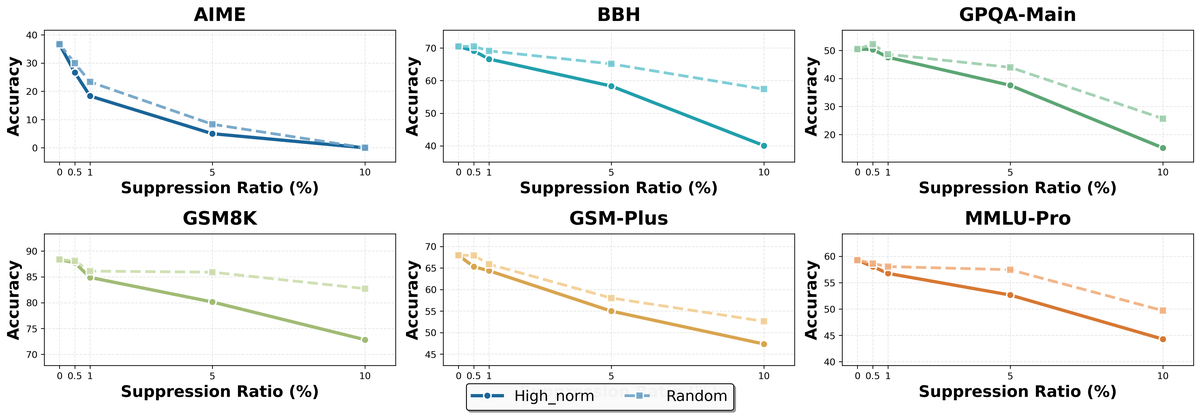}
    \caption{Causal intervention results on R1-Distill-Llama-8B across reasoning benchmarks under different suppression ratios.
    We compare suppressing high-$\ell_2$-norm hidden states against Random Suppression.}
    \label{fig:causal_intervention_llama8b}
\end{figure*}
\begin{figure*}[t]
    \centering
    \includegraphics[width=\linewidth]{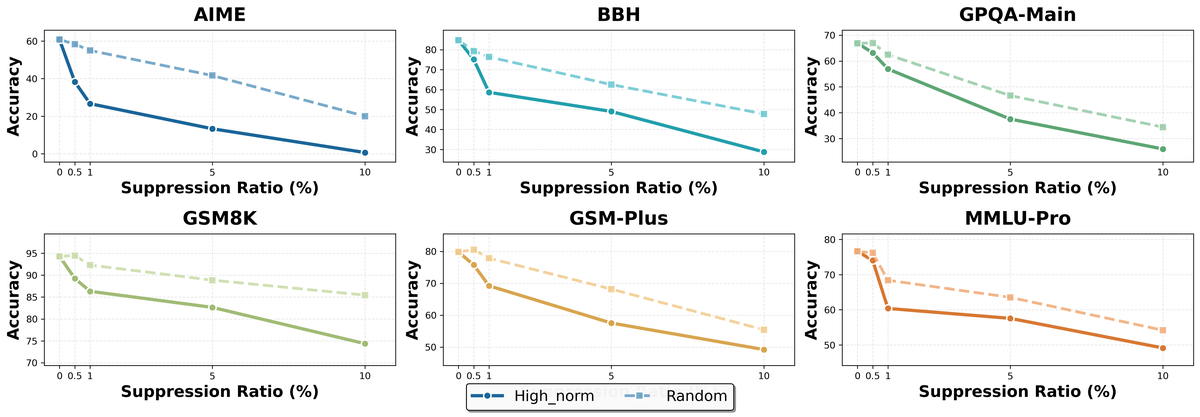}
    \caption{Causal intervention results on R1-Distill-Llama-70B across reasoning benchmarks under different suppression ratios.
    We compare suppressing high-$\ell_2$-norm hidden states against Random Suppression.}
    \label{fig:causal_intervention_llama70b}
\end{figure*}

\section{Evaluation Setting}
\label{appendix:eval_setting}

\subsection{Evaluation Framework}
We conduct all evaluations using the \texttt{lm-evaluation-harness} (lm-evaluation) framework~\cite{eval-harness}, which provides standardized task implementations, prompt templates, answer extraction, and metric aggregation. All experiments are executed via the \texttt{vLLM} backend to ensure efficient batched generation and consistent decoding behavior across models.

\subsection{Benchmarks}
To cover both mathematical reasoning and general-purpose reasoning/knowledge, we evaluate on a diverse set of public benchmarks:
\begin{itemize}[leftmargin=*,label=\textbullet,noitemsep,topsep=2pt]
\item Competition-style and grade-school math reasoning (GSM8K~\cite{cobbe2021gsm8k}, GSM-Plus~\cite{li2024gsmPlus}, AIME 2024, AIME 2025~\cite{zhang2023aime}), and
\item Broader reasoning and knowledge-intensive benchmarks (BBH~\cite{suzgun2023BBH}, MMLU-Pro~\cite{wang2024mmlupro}).
For each benchmark, we use the task configuration specified in lm-evaluation-harness (notably the CoT zero-shot variants where applicable).
\end{itemize}

\paragraph{Benchmark descriptions and targeted abilities.}
Below, we summarize each benchmark, provide the corresponding Hugging Face dataset link, and describe the primary capability being measured.

\newcommand{\hflink}[1]{\href{https://huggingface.co/datasets/#1}{\texttt{#1}}}

\newcolumntype{C}[1]{>{\centering\arraybackslash}p{#1}}
\newcolumntype{M}[1]{>{\centering\arraybackslash}m{#1}} 

\begin{table}[!t]
\centering
\small
\setlength{\tabcolsep}{5pt}      
\renewcommand{\arraystretch}{1.25} 

\begin{tabular}{C{1.6cm} C{3.8cm} p{5.5cm} p{4.6cm}}
\toprule
\textbf{Benchmark} & \textbf{HuggingFace link} & \textbf{Dataset summary} & \textbf{Primary abilities tested} \\
\midrule
BBH &
\hflink{lukaemon/bbh} &
Big-Bench Hard (BBH) is a curated subset of difficult BIG-bench tasks spanning logical, symbolic, linguistic, and algorithmic problem types. &
Multi-step reasoning; compositional generalization; symbolic manipulation; instruction following. \\

GSM-Plus &
\hflink{qintongli/GSM-Plus} &
A GSM-style arithmetic word-problem benchmark designed to increase difficulty and reduce shortcut biases via more diverse (often adversarial) problem constructions. &
Robust math word-problem solving; numerical reasoning; reduced reliance on superficial heuristics. \\

GSM8K&
\hflink{openai/gsm8k} &
A widely used grade-school math word-problem dataset with short (often integer) answers; commonly evaluated under chain-of-thought prompting. &
Multi-step arithmetic reasoning; mapping text to equations; consistent computation. \\

AIME 2024&
\hflink{Maxwell-Jia/AIME\_2024} &
Competition mathematics problems from AIME 2024, typically involving algebra, counting, geometry, and creative reasoning; answers are usually integers (often 0--999). &
Advanced mathematical reasoning; decomposition; symbolic manipulation; combinatorics/geometry. \\

AIME 2025&
\hflink{math-ai/aime25} &
Competition mathematics problems from AIME 2025 with similar style and difficulty characteristics as AIME 2024. &
Advanced mathematical reasoning; generalization to unseen contest problems; multi-step symbolic reasoning. \\

MMLU-Pro &
\hflink{TIGER-Lab/MMLU-Pro} &
An enhanced variant of MMLU with increased difficulty and stronger contamination resistance, spanning many academic and professional subjects. &
Broad knowledge + reasoning; domain generalization; reading comprehension; selecting among close options. \\
\bottomrule
\end{tabular}

\caption{Benchmarks and their Hugging Face dataset pages. The table shows relative dataset paths, each linking to the full Hugging Face URL.}
\label{tab:bench_hf}
\end{table}

\subsection{Metric Definition and Subset Aggregation}
\paragraph{Per-task accuracy.}
For each task (or subtask) with $N$ evaluation instances, we compute accuracy as:
\begin{equation}
\mathrm{Acc} = \frac{1}{N}\sum_{i=1}^{N}\mathbb{I}\left[\hat{y}_i = y_i\right],
\label{eq:acc}
\end{equation}
where $y_i$ is the ground-truth answer, $\hat{y}_i$ is the extracted model answer, and $\mathbb{I}[\cdot]$ is the indicator function.

\paragraph{\texttt{weight\_by\_size} aggregation for benchmarks with subsets.}
Some benchmarks (e.g., BBH, MMLU-Pro) consist of multiple subsets $\{s\}_{s=1}^{S}$ with different sizes $N_s$.
Following lm-evaluation-harness's \texttt{weight\_by\_size} strategy, we aggregate subset accuracies by weighting each subset by its number of examples:
\begin{equation}
\mathrm{Acc}_{\mathrm{weighted}} = \sum_{s=1}^{S} w_s \cdot \mathrm{Acc}_s,
\quad
w_s = \frac{N_s}{\sum_{j=1}^{S} N_j},
\label{eq:weight_by_size}
\end{equation}
where $\mathrm{Acc}_s$ is computed using Eq.~\eqref{eq:acc} on subset $s$.

\subsection{Inference Configuration}
We use \texttt{temperature=0.6}, \texttt{top\_p=0.95}, \texttt{top\_k=20},  \texttt{repetition\_penalty=1.05}, \texttt{min\_p=0}, and \texttt{max\_gen\_toks=16384}. These values follow the officially recommended decoding configuration for the \texttt{Qwen3} series \emph{think} models. Since the \texttt{DeepSeek-R1-Distill-LLaMA} family does not provide an officially recommended inference setting for reasoning-focused decoding, we apply the same decoding configuration to ensure consistent evaluation across model families.

\paragraph{Output format constraint and answer extraction.}
For all benchmarks, we require the model to output the final answer in the normalized format:
\begin{quote}
\texttt{The answer is <answer>}
\end{quote}
This constraint simplifies deterministic rule-based extraction of $\hat{y}$ from model generations and reduces ambiguity caused by free-form outputs. For methods like suppression that may impair instruction-following fidelity, we adopt a more lenient extraction strategy that relies on identifying numerical values or inherent output patterns (e.g., explicit answer delimiters or canonical formats) to determine $\hat{y}$.

\subsection{Repeated Runs and Randomness Control}
To mitigate evaluation variance induced by stochastic decoding, each benchmark evaluation is repeated $8$ times, and we report the mean accuracy across runs:
\begin{equation}
\overline{\mathrm{Acc}} = \frac{1}{8}\sum_{r=1}^{8}\mathrm{Acc}^{(r)}.
\label{eq:mean_over_runs}
\end{equation}
This protocol is applied consistently to all tasks. Due to space constraints, we were unable to report standard deviations for all benchmarks in the tables. Nevertheless, the standard deviations are consistently small across evaluations: for the AIME suite—which contains relatively few questions—the standard deviation is below 5\%; for GPQA, it is under 2\%; and for all other benchmarks, it is less than 0.5\%. Crucially, the magnitude of the performance gains we report (or the degradation caused by suppression) is substantially larger than the corresponding standard deviations. This indicates that our experimental results are statistically reliable and our conclusions are well-supported.

\subsection{Rationale for Benchmark Coverage}
Beyond standard mathematical reasoning benchmarks used to validate step-by-step problem solving (GSM8K, GSM-Plus, AIME 2024/2025), we also include BBH and MMLU-Pro to probe broader reasoning competence and knowledge-intensive generalization. This combined suite provides a more comprehensive assessment of our method and experimental outcomes across both math-centric and general reasoning settings.

\section{Model Differences between Qwen3 Series and R1-Distilled-Llama3 Series}
\label{appendix:architectural_differences}

We strictly follow the setup in Appendix~\ref{Appendix:empirical_validation} to plot the \emph{Layer-wise Reasoning Feature Score} (Figure~\ref{fig:llama-8b-sae-top5},~\ref{fig:llama-70b-sae-top5}) and the layer-wise mean $\ell_2$ norm under thinking responses (Figure~\ref{fig:llama-8b-norm-layer-trends},~\ref{fig:llama-70b-norm-layer-trends}). 
For each layer, we select the top-5 highest-scoring reasoning features, average their SAE activations, and normalize by the corresponding mean activation of these features on \emph{non-thinking} responses at the same layer, yielding a layer-comparable measure of relative reasoning-representation strength. 

\begin{figure*}[t]
    \centering
    \begin{subfigure}[t]{0.49\linewidth}
        \centering
        \includegraphics[width=\linewidth]{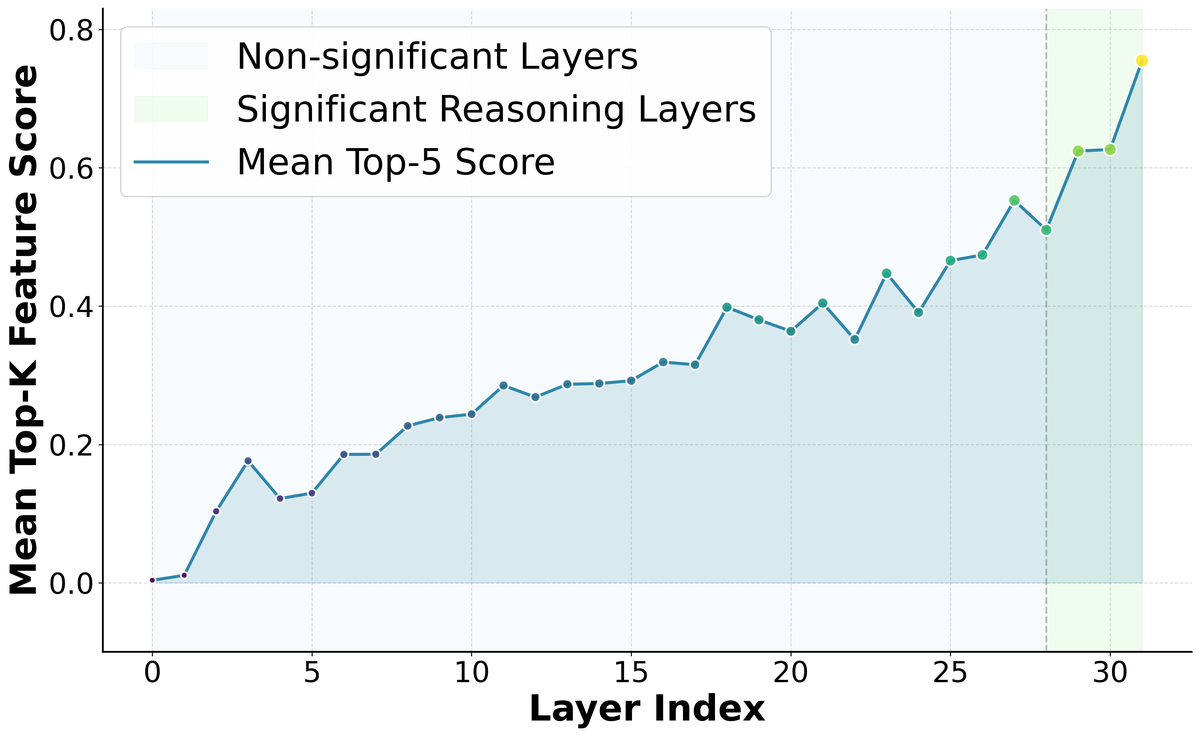}
        \caption{Layer-wise normalized mean activation of the top-5 reasoning-related SAE features.}
        \label{fig:llama-8b-sae-top5}
    \end{subfigure}
    \hfill
    \begin{subfigure}[t]{0.49\linewidth}
        \centering
        \includegraphics[width=\linewidth]{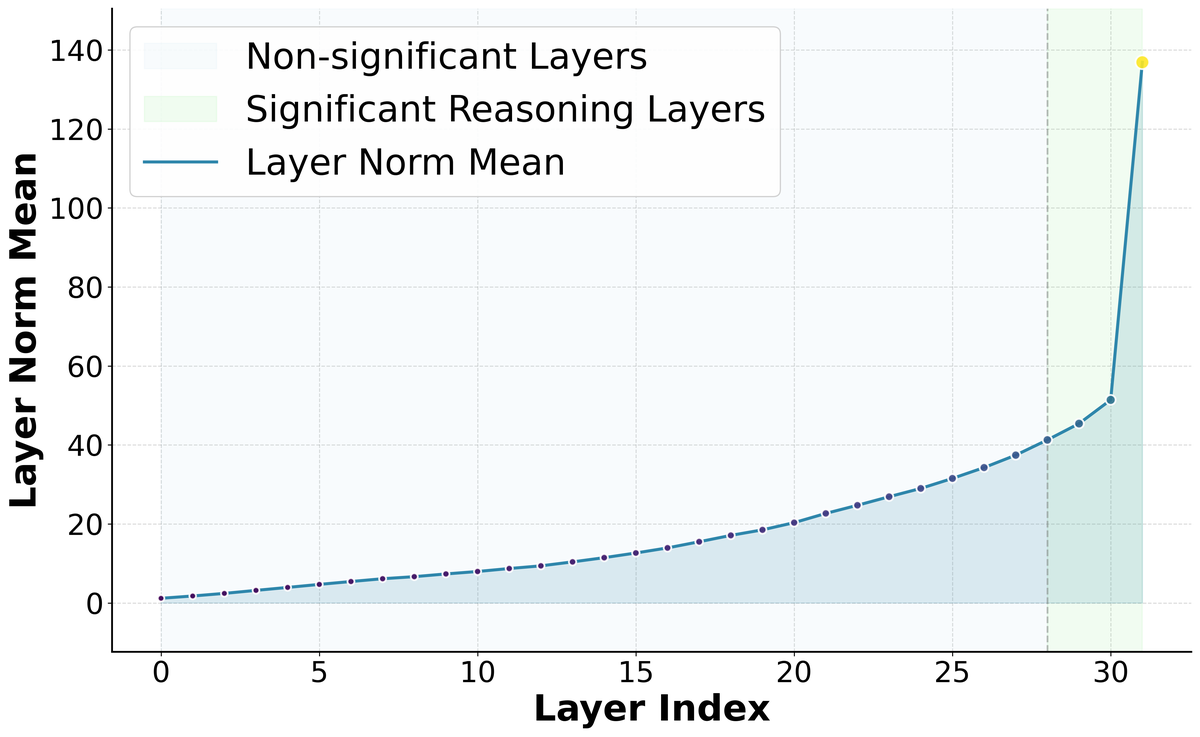}
        \caption{Average $\ell_2$ norm of thinking responses.}
        \label{fig:llama-8b-norm-layer-trends}
    \end{subfigure}
    \caption{
    Layer-wise trends of reasoning-related SAE feature activation and hidden-state $\ell_2$ norm for Deepseek-R1-Distill-LLama-8B.
    }
    \label{fig:llama-8b-trends}
\end{figure*}

\begin{figure*}[t]
    \centering
    \begin{subfigure}[t]{0.49\linewidth}
        \centering
        \includegraphics[width=\linewidth]{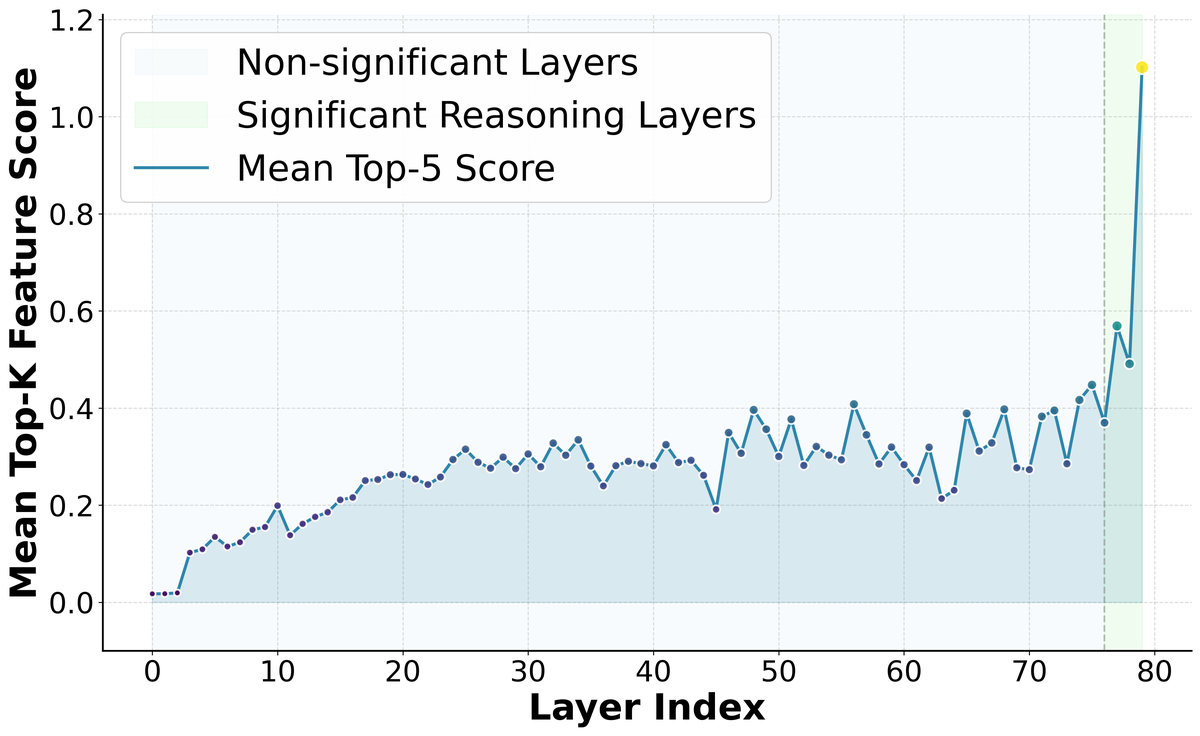}
        \caption{Layer-wise normalized mean activation of the top-5 reasoning-related SAE features.}
        \label{fig:llama-70b-sae-top5}
    \end{subfigure}
    \hfill
    \begin{subfigure}[t]{0.49\linewidth}
        \centering
        \includegraphics[width=\linewidth]{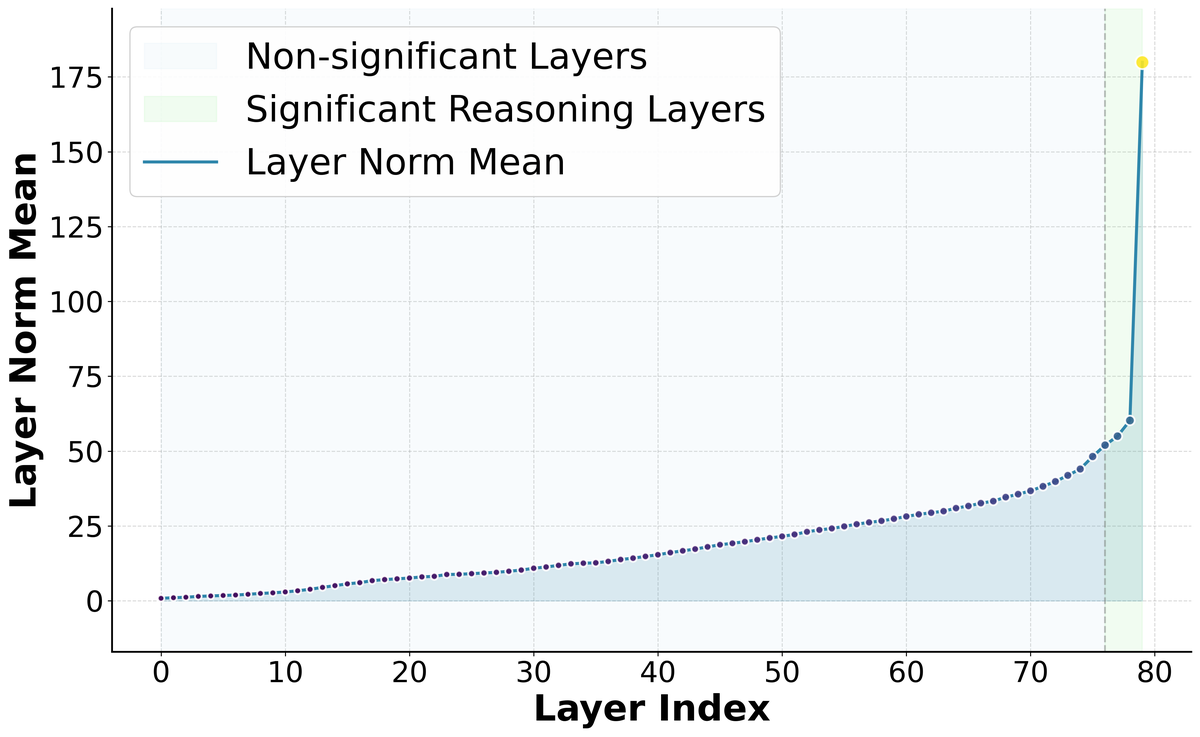}
        \caption{Average $\ell_2$ norm of thinking responses.}
        \label{fig:llama-70b-norm-layer-trends}
    \end{subfigure}
    \caption{
    Layer-wise trends of reasoning-related SAE feature activation and hidden-state $\ell_2$ norm for Deepseek-R1-Distill-LLama-70B.
    }
    \label{fig:llama-70b-trends}
\end{figure*}

\begin{figure}[t]
  \centering
  \begin{subfigure}[b]{0.48\linewidth}
    \centering
    \includegraphics[width=\linewidth]{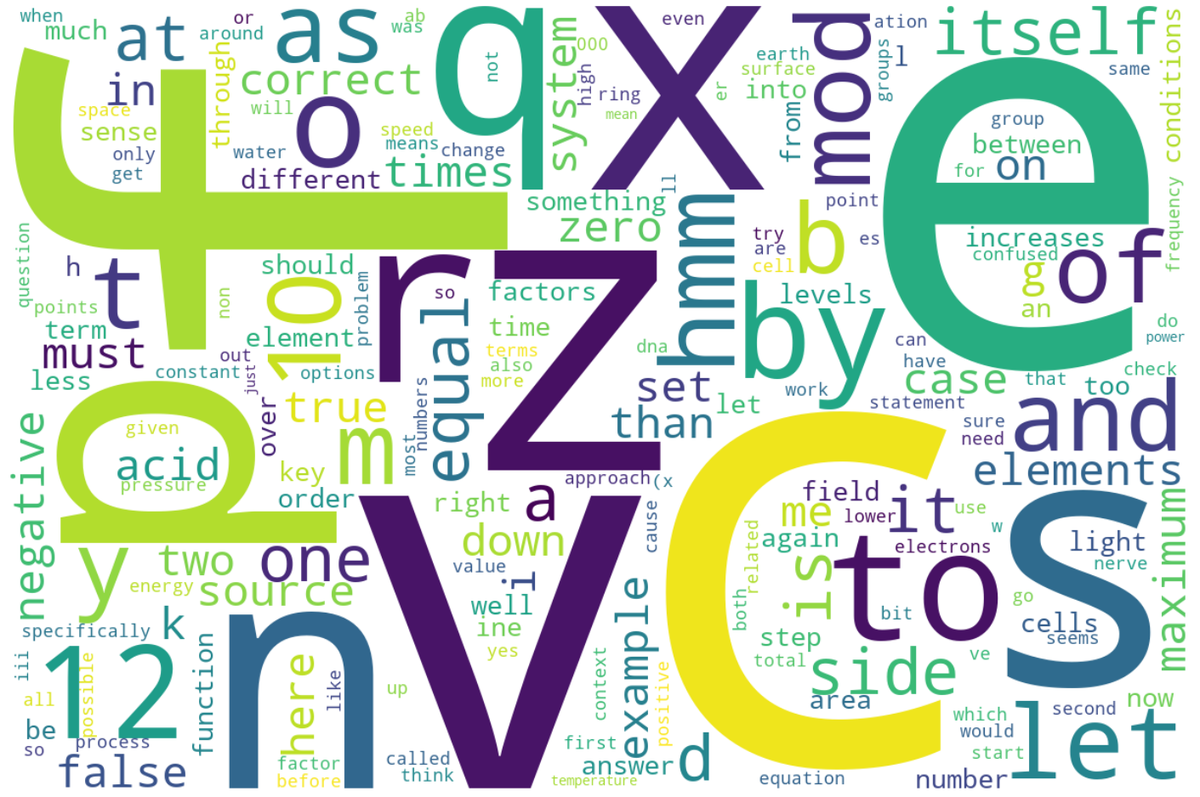}
    \caption{Layer 75}
    \label{fig:llama-wordcloud-norm-75}
  \end{subfigure}
  \hfill
  \begin{subfigure}[b]{0.48\linewidth}
    \centering
    \includegraphics[width=\linewidth]{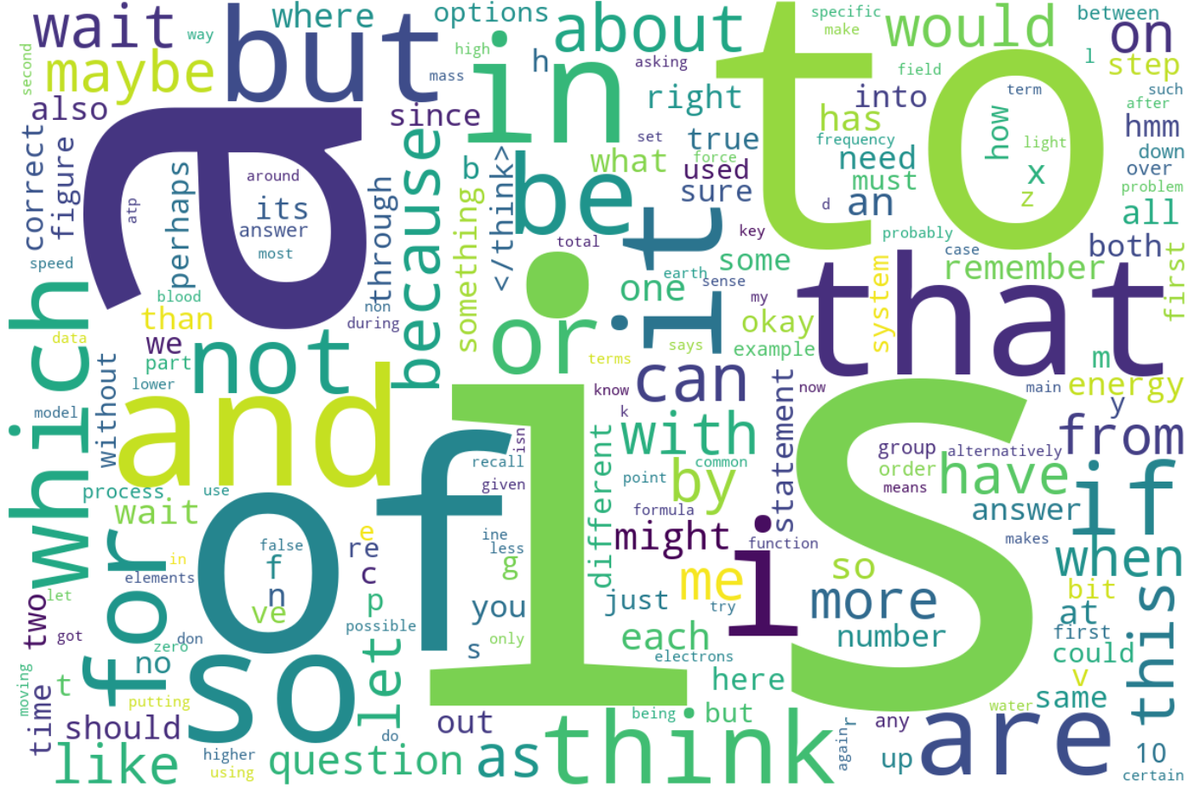}
    \caption{Layer 79}
    \label{fig:llama-wordcloud-norm-79}
  \end{subfigure}
  \caption{Frequent tokens most associated with high $\ell_2$ Norm in R1-Distill-Llama-70B. Both layers show reasoning-related tokens.}
  \label{fig:llama-wordcloud-combined}
\end{figure}

With this metric (identical to Appendix~\ref{Appendix:empirical_validation}), we observe a systematic difference in where \emph{reasoning vs.\ non-reasoning representation separation} emerges across model families: Qwen models enter a regime of strongly increased reasoning-feature activation earlier in depth, whereas DeepSeek-R1-Distilled-Llama models often do not exhibit a clear separation until only the final few layers.

Importantly, the R1-Distill-Llama-family results still exhibit the same qualitative \emph{SAE--norm correspondence} observed for Qwen: layers with increased reasoning-feature scores are also the layers where the mean $\ell_2$ norm under thinking responses rises, indicating that stronger reasoning-feature engagement coincides with increased representational ``energy''~(Figure~\ref{fig:llama-8b-sae-top5}--\ref{fig:llama-70b-norm-layer-trends}). 
Moreover, token-level $\ell_2$-norm word clouds in late layers (Layer 79 in Figure~\ref{fig:llama-wordcloud-norm-79}) show that the model does begin to exhibit sensitivity to reasoning-style control tokens, with salient discourse and logical scaffolding markers such as \texttt{if}, \texttt{then}, \texttt{so}, \texttt{but}, \texttt{and}, \texttt{not} becoming increasingly prominent, consistent with the emergence of more structured multi-step reasoning behavior near the end of the LLM.

At the same time, this emergence is clearly \emph{back-loaded}. 
Even for Llama-70B, the high-norm token distribution around Layer 75~(Figure \ref{fig:llama-wordcloud-norm-75}) remains dominated by surface-form artifacts and domain symbols (e.g., math variables and notation-like tokens), rather than a robust set of reasoning-control markers. 
Only in the final layers do discourse-level connectors and logical operators become strongly emphasized, aligning with the late-stage increase in our reasoning feature score. 
This delayed transition suggests that, compared to Qwen, Llama relies more heavily on terminal-layer transformations to express reasoning-like behavior, and that its internal representations allocate less depth (and thus less representational bandwidth) to forming a stable, separable reasoning subspace which aligns with the conclusions of existing works~\cite{qian2025demystifying,gandhi2025cognitive}. 

We attribute this discrepancy to the prior \emph{imprint of reasoning processes during pretraining} in the base policy, rather than a superficial behavior that post-training can arbitrarily rewrite. 
Gandhi et al. propose and empirically demonstrate that whether RL/SFT can elicit and amplify structured cognitive behaviors such as verification, backtracking, subgoal setting, and backward chaining depends critically on whether these behaviors already appear with non-trivial frequency in the base model policy.~\cite{gandhi2025cognitive} 
Models lacking such \emph{behavioral primitives} are more prone to plateau under matched RL settings, whereas shifting the pretraining distribution (e.g., filtering OpenWebMath to amplify these behaviors) can substantially change the subsequent self-improvement trajectory~\cite{gandhi2025cognitive}.

Beyond emphasizing the collection and filtering of high-quality mathematical and problem-solving data, prior work also suggests that explicitly training on \emph{process-oriented} reasoning signals and structured reasoning formats (e.g., chain-of-thought, programmatic reasoning, and tool-integrated reasoning) tightly couples reasoning capability to the underlying training distribution---in particular, to how often and in what form reasoning processes appear in the data~\cite{shao2024deepseekmath}. 

Accordingly, we interpret the observed difference in the depth at which separation appears as a stable inductive bias shaped by pretraining in parameter space: when the base policy has internalized richer reasoning-process primitives, reasoning-related features are more likely to form a separable and persistently activatable subspace in deep (but not only terminal) layers; conversely, when such primitives are scarce during pretraining, post-training (including distillation-based alignment) is less likely to fundamentally reorganize internal representations, and may instead yield more localized, output-adjacent manifestations of reasoning behavior.

Moreover, the structure of reasoning demonstrations (not merely answer correctness) can significantly affect what the model learns, providing a mechanistic rationale for why earlier injection of reasoning processes is more likely to reshape global representational organization. 
In particular, evidence indicates that models can learn reasoning primarily from the structural scaffolding of demonstrations rather than their specific content, implying that the organization of reasoning trajectories can directly drive parameter updates and influence how reasoning is implemented~\cite{li2025llms}.

Taken together, we interpret the different separation depths between Qwen and Deepseek-R1-Distill-Llama (distilled primarily during post-training and retaining substantial traces of Llama pretraining) as reflecting different levels of exposure to structured reasoning processes during pretraining, leading to fundamentally different speeds and extents with which a dedicated reasoning subspace is formed in deep layers. 

This account also helps explain the difference in intervention results of other experiments between Qwen and Llama Series Models. Because the R1-Distilled-Llama-8B family only develops a clearly separable reasoning subspace very late in the network, interventions have less representational ``room'' to act on and are therefore less effective than for Qwen3, whose reasoning-related subspace is established earlier and more broadly across deep layers.

\section{Adaptive Layer-wise Reasoning Recursion}
\label{appendix:ALRR_analysis}

We evaluate our Adaptive Layer-wise Reasoning Recursion (ALRR, also referred to as RR for brevity) method across seven state-of-the-art thinking models spanning two major model families: Qwen (1.7B to 32B parameters) and R1-Distill-LLama (8B and 70B parameters). Our comprehensive evaluation set is detailed described in~\ref{appendix:eval_setting} and our detailed experiment results are in Table~\ref{tab:loop_results}. Figure~\ref{fig:bar_overall} provides a holistic view of the performance improvements across all models and benchmarks, while Figure~\ref{fig:bubble_overall} illustrates the joint relationship between model size, baseline benchmark performance, and the performance gains achieved by the proposed method across benchmarks. The detailed performance of each model is shown in the Figure~\ref{fig:radar_all_models}.

\begin{table*}[t]
\centering
\caption{Performance comparison between Base models and Loop-enhanced variants across reasoning benchmarks. Accuracy (\%) is reported, with relative improvement over Base shown in Performance Gain (\%).}
\label{tab:loop_results}
\resizebox{\linewidth}{!}{
\footnotesize
\begin{tabular}{lcccccccc}
\toprule
\multirow{2}{*}{\textbf{Model}} & \multirow{2}{*}{\textbf{Setting}}
& \multicolumn{4}{c}{\textbf{Mathematical}}
& \textbf{Complex}
& \textbf{Scientific}
& \textbf{Knowledge} \\
\cmidrule(lr){3-6}
\cmidrule(lr){7-7}
\cmidrule(lr){8-8}
\cmidrule(lr){9-9}
& 
& \textbf{AIME24} & \textbf{AIME25}
& \textbf{GSM8K} & \textbf{GSM-Plus}
& \textbf{BBH}
& \textbf{GPQA-main}
& \textbf{MMLU-Pro} \\
\midrule
\multirow{2}{*}{Qwen3-1.7B} & Base
& 40.00 & 30.83 & 90.03 & 75.45 & 72.83 & 37.05 & 55.92 \\
& Recur
& \textbf{48.33} & \textbf{40.00} & 90.03 & \textbf{76.28} & \textbf{74.57} & \textbf{40.40} & \textbf{58.63} \\
\cmidrule(lr){3-9}
\multicolumn{2}{c}{\tiny \textit{Performance Gain (\%)}}
& \textit{+20.83} & \textit{+29.76} & \textit{+0.00} & \textit{+1.09} & \textit{+2.41} & \textit{+9.10} & \textit{+4.82} \\
\midrule

\multirow{2}{*}{Qwen3-4B} & Base
& 60.00 & 56.66 & 95.41 & 78.90 & 84.78 & 50.22 & 67.92 \\
& Recur
& \textbf{70.00} & \textbf{60.00} & \textbf{95.53} & \textbf{79.88} & \textbf{87.39} & \textbf{51.55} & \textbf{71.66} \\
\cmidrule(lr){3-9}
\multicolumn{2}{c}{\tiny \textit{Performance Gain (\%)}}
& \textit{+16.67} & \textit{+5.97} & \textit{+0.13} & \textit{+1.01} & \textit{+3.10} & \textit{+2.66} & \textit{+5.50} \\
\midrule

\multirow{2}{*}{Qwen3-8B} & Base
& 63.67 & 60.00 & 95.91 & 81.29 & 84.67 & 54.69 & 71.11 \\
& Recur
& \textbf{71.66} & \textbf{61.66} & \textbf{96.36} & \textbf{82.11} & \textbf{87.41} & \textbf{57.37} & \textbf{75.24} \\
\cmidrule(lr){3-9}
\multicolumn{2}{c}{\tiny \textit{Performance Gain (\%)}}
& \textit{+12.57} & \textit{+2.77} & \textit{+0.47} & \textit{+1.02} & \textit{+3.26} & \textit{+4.86} & \textit{+5.94} \\
\midrule

\multirow{2}{*}{Qwen3-14B} & Base
& 65.83 & 67.66 & 96.13 & 83.71 & 85.33 & 58.71 & 73.56 \\
& Recur
& \textbf{73.33} & \textbf{71.66} & \textbf{96.51} & \textbf{84.31} & \textbf{88.01} & \textbf{62.95} & \textbf{77.79} \\
\cmidrule(lr){3-9}
\multicolumn{2}{c}{\tiny \textit{Performance Gain (\%)}}
& \textit{+11.43} & \textit{+5.91} & \textit{+0.39} & \textit{+0.71} & \textit{+3.17} & \textit{+7.17} & \textit{+5.77} \\
\midrule

\multirow{2}{*}{Qwen3-32B} & Base
& 70.83 & 69.17 & \textbf{96.44} & 82.62 & 85.79 & 64.40 & 77.83 \\
& Recur 
& \textbf{75.00} & \textbf{73.33} & 96.36 & \textbf{84.98} & \textbf{88.11} & \textbf{65.96} & \textbf{79.72} \\
\cmidrule(lr){3-9}
\multicolumn{2}{c}{\tiny \textit{Performance Gain (\%)}}
& \textit{+5.81} & \textit{+6.00} & \textit{-0.08} & \textit{+2.80} & \textit{+2.75} & \textit{+2.46} & \textit{+2.42} \\
\midrule

\multirow{2}{*}{LLaMA-8B} & Base
& 40.00 & 33.33 & 89.23 & 67.95 & 70.48 & 50.45 & 59.27 \\
& Recur
& \textbf{47.50} & \textbf{36.67} & \textbf{90.78} & \textbf{68.25} & \textbf{72.53} & \textbf{52.23} & \textbf{60.47} \\
\cmidrule(lr){3-9}
\multicolumn{2}{c}{\tiny \textit{Performance Gain (\%)}}
& \textit{+18.75} & \textit{+10.04} & \textit{+1.73} & \textit{+0.44} & \textit{+2.90} & \textit{+3.63} & \textit{+2.03} \\
\midrule

\multirow{2}{*}{LLaMA-70B} & Base
& 63.33 & 55.83 & 94.31 & 79.84 & 84.81 & 67.19 & 76.64 \\
& Recur
& \textbf{66.67} & \textbf{58.33} & \textbf{95.73} & \textbf{80.72} & \textbf{86.41} & \textbf{68.30} & \textbf{79.15} \\
\cmidrule(lr){3-9}
\multicolumn{2}{c}{\tiny \textit{Performance Gain (\%)}}
& \textit{+5.25} & \textit{+4.44} & \textit{+1.50} & \textit{+1.10} & \textit{+1.88} & \textit{+1.66} & \textit{+3.27} \\
\bottomrule
\end{tabular}
}
\end{table*}

\begin{figure}[!t]
    \centering
    \includegraphics[width=\linewidth]{icml2026/Figures/RR_picture/all_comparison.png}
    \caption{Overall performance comparison across all models and benchmarks. Light-colored bars represent baseline performance, while dark-colored bars show results after applying our RR method. Each model family uses a distinct color scheme for easy identification.}
    \label{fig:bar_overall}
\end{figure}

\begin{figure}[!t]
    \centering
    \includegraphics[width=0.95\linewidth]{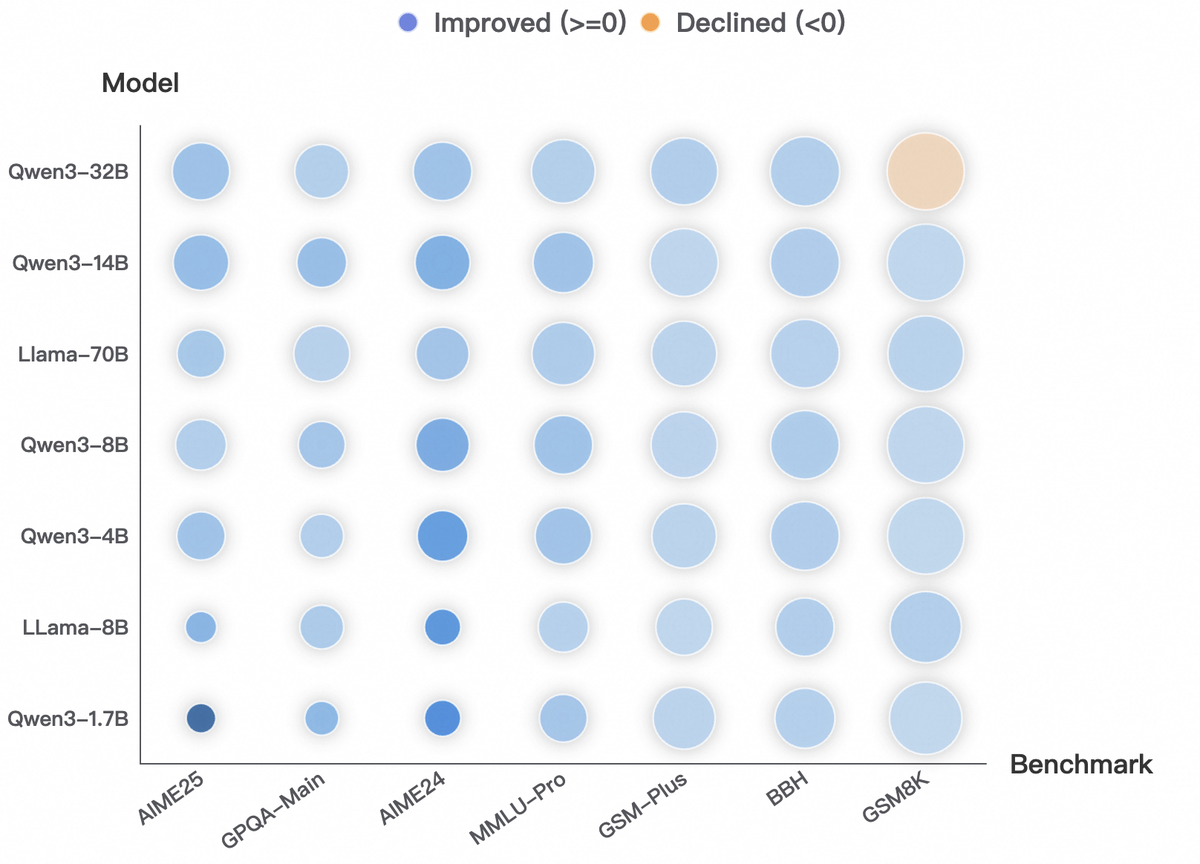}
    \caption{Bubble chart showing the combined relationship among baseline performance, RR improvement, and model size across benchmarks. The x-axis lists benchmarks (sorted by average baseline score) and the y-axis lists models (sorted by overall baseline performance). Bubble size encodes the baseline score on each benchmark, while bubble color encodes the RR improvement (blue for positive gains, orange for negative drops; darker shades indicate larger magnitude).}
    \label{fig:bubble_overall}
\end{figure}

\begin{figure*}[htbp]
    \centering
    \begin{subfigure}[b]{0.32\textwidth}
        \centering
        \includegraphics[width=\textwidth]{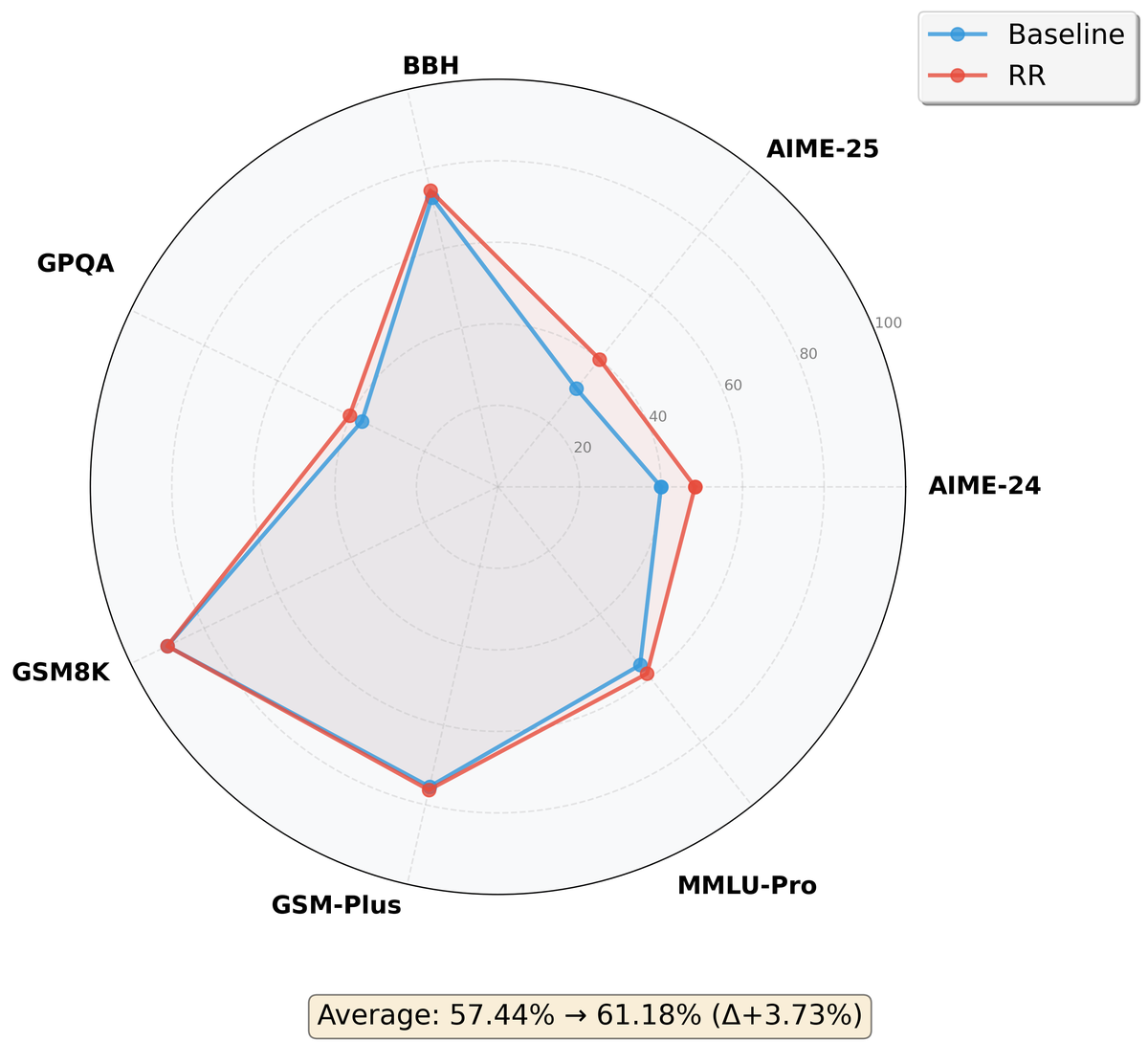}
        \caption{Qwen3-1.7B}
        \label{fig:radar_qwen1.7b}
    \end{subfigure}
    \hfill
    \begin{subfigure}[b]{0.32\textwidth}
        \centering
        \includegraphics[width=\textwidth]{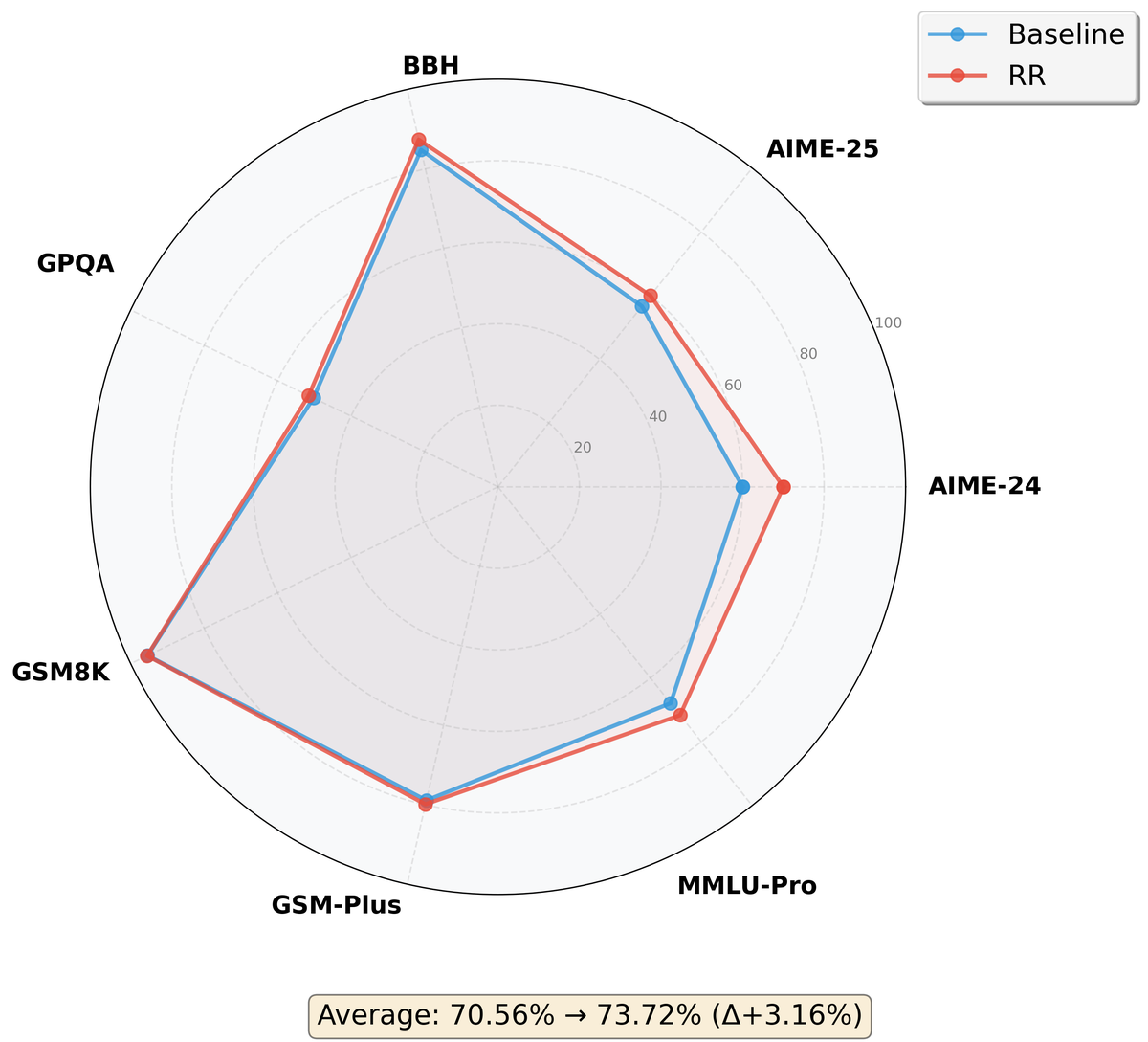}
        \caption{Qwen3-4B}
        \label{fig:radar_qwen4b}
    \end{subfigure}
    \hfill
    \begin{subfigure}[b]{0.32\textwidth}
        \centering
        \includegraphics[width=\textwidth]{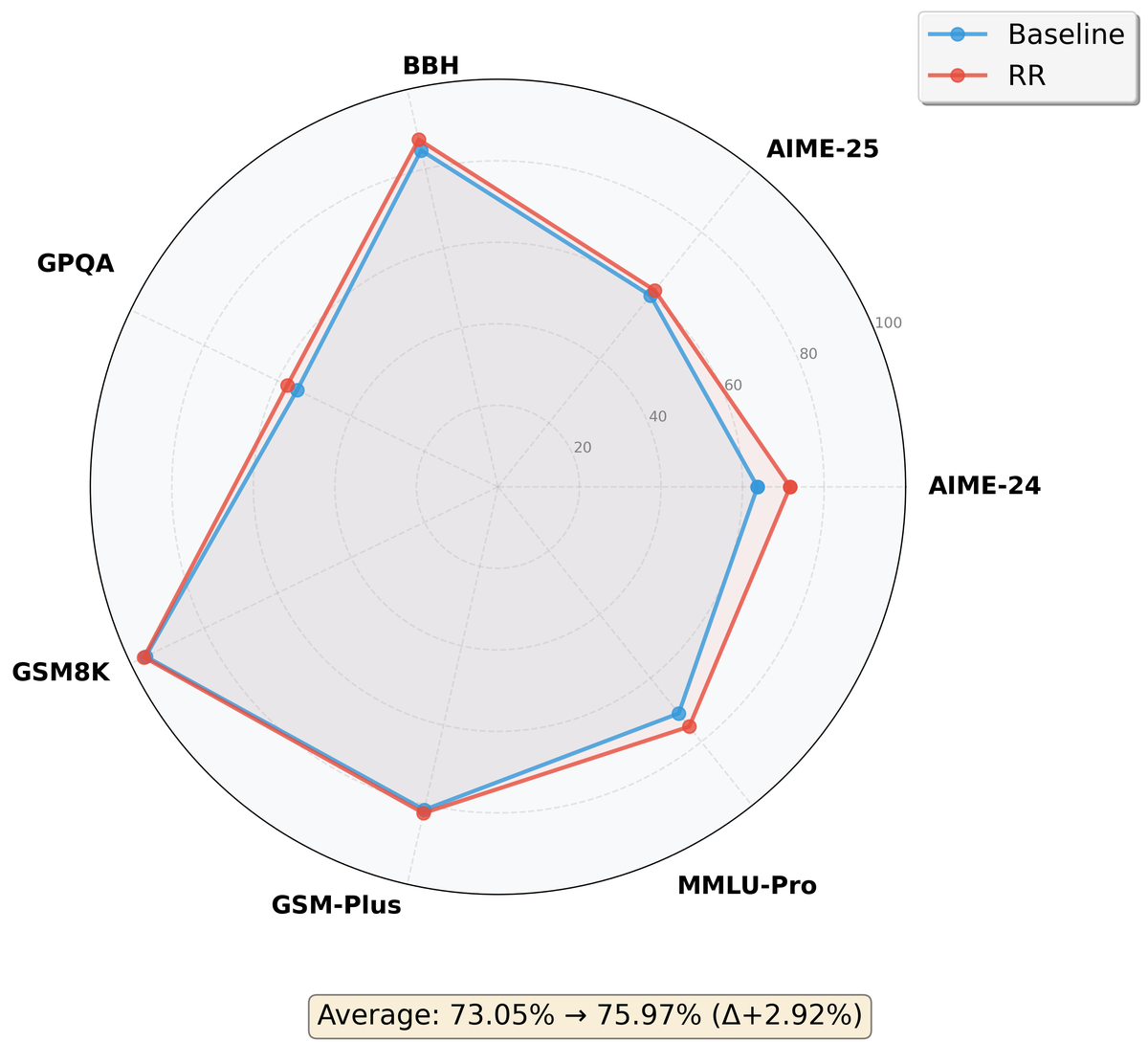}
        \caption{Qwen3-8B}
        \label{fig:radar_qwen8b}
    \end{subfigure}
    
    \vspace{0.5em}
    
    \begin{subfigure}[b]{0.32\textwidth}
        \centering
        \includegraphics[width=\textwidth]{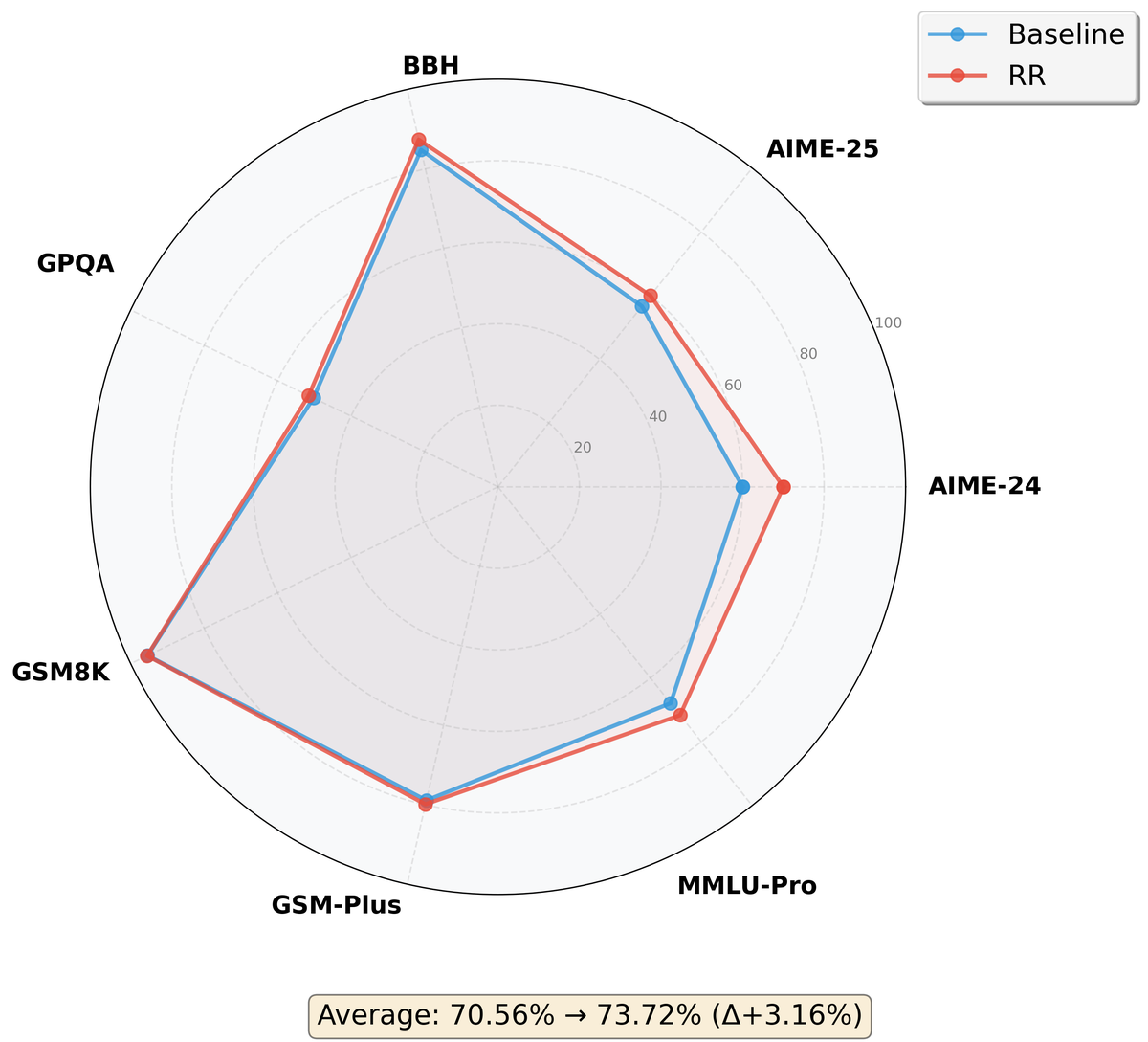}
        \caption{Qwen3-14B}
        \label{fig:radar_qwen14b}
    \end{subfigure}
    \hfill
    \begin{subfigure}[b]{0.32\textwidth}
        \centering
        \includegraphics[width=\textwidth]{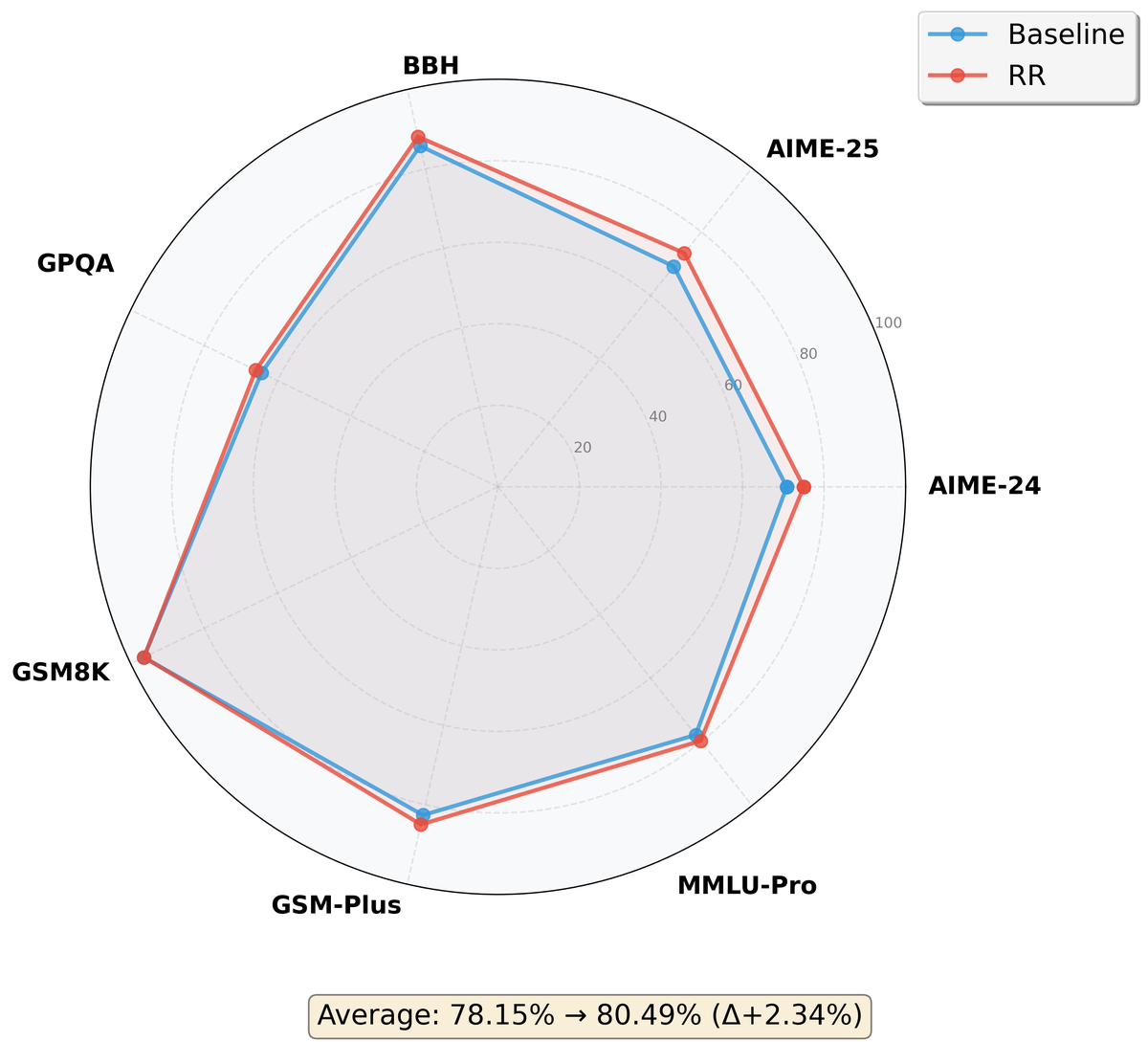}
        \caption{Qwen3-32B}
        \label{fig:radar_qwen32b}
    \end{subfigure}
    \hfill
    \begin{subfigure}[b]{0.32\textwidth}
        \centering
        \includegraphics[width=\textwidth]{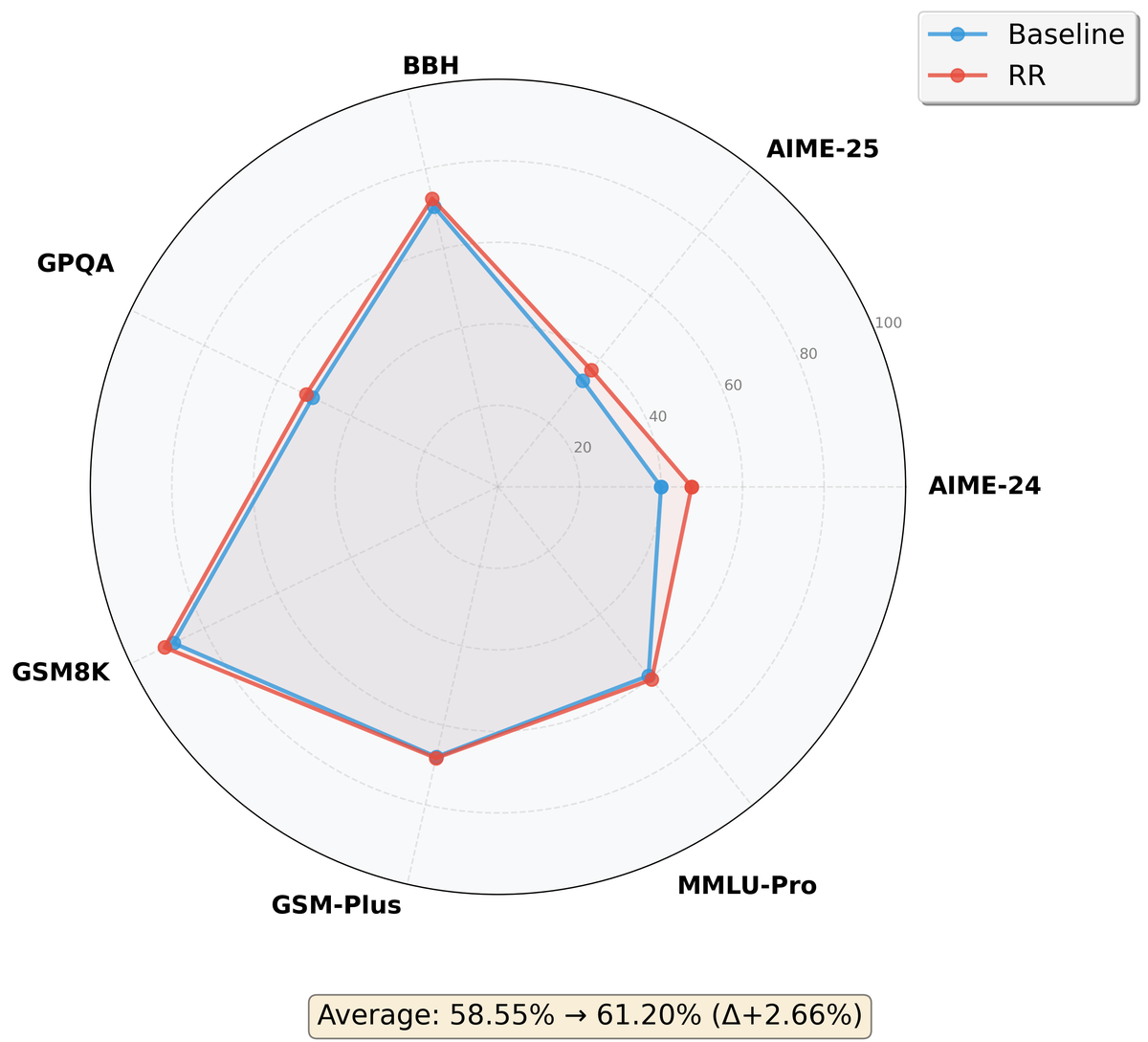}
        \caption{LLaMa-8B}
        \label{fig:radar_llama8b}
    \end{subfigure}
    
    \vspace{0.5em}
    
    \begin{subfigure}[b]{0.32\textwidth}
        \centering
        \includegraphics[width=\textwidth]{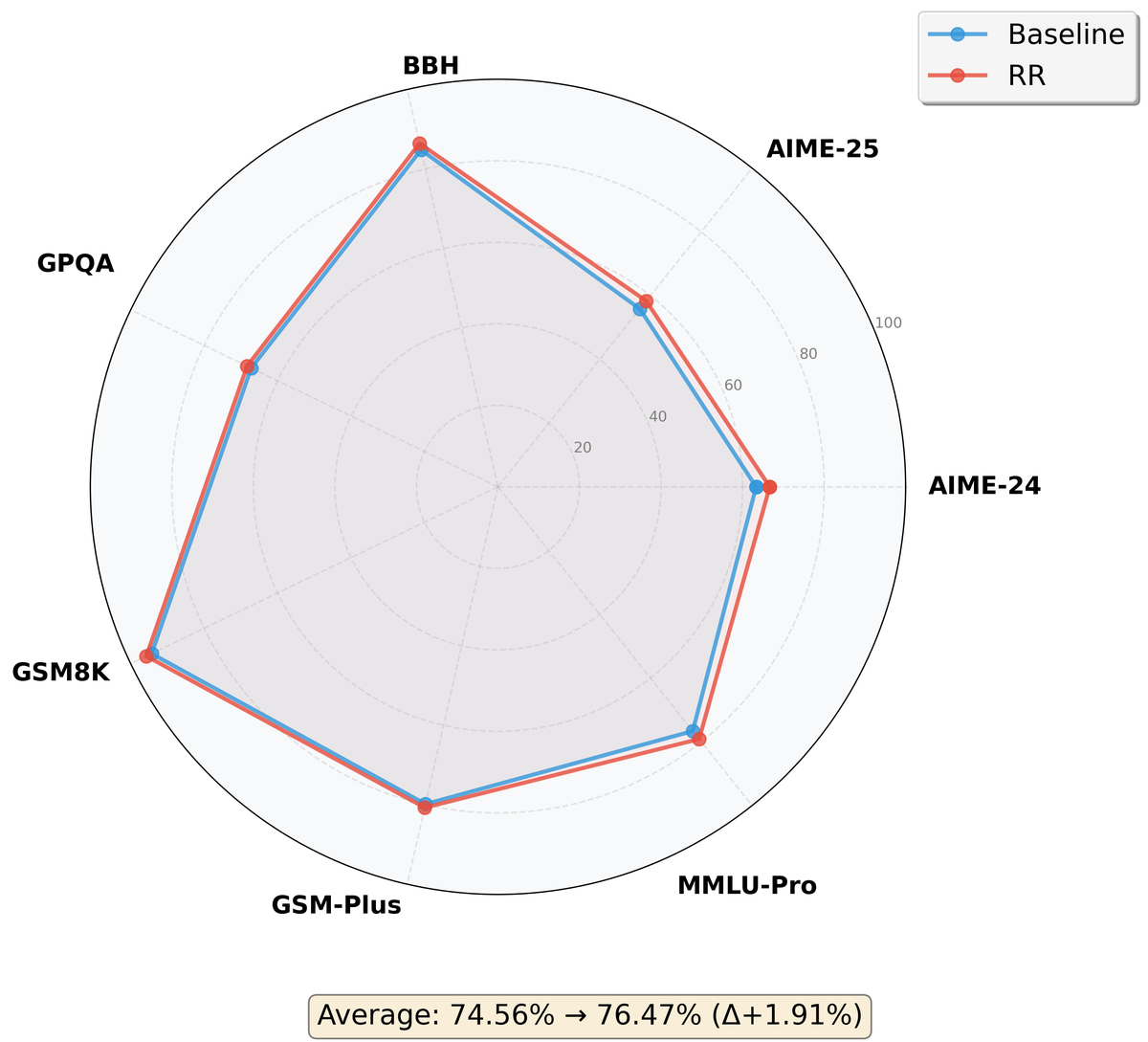}
        \caption{LLaMa-70B}
        \label{fig:radar_llama70b}
    \end{subfigure}
    
    \caption{Individual performance analysis for all seven models. Each radar chart compares baseline performance (dashed lines) with RR-enhanced results (solid lines) across seven benchmarks. The consistent outward expansion of the RR curves demonstrates universal effectiveness across different model sizes and architectures.}
    \label{fig:radar_all_models}
\end{figure*}

\subsection{Universal Effectiveness Across Model Families}

Our results demonstrate the universal applicability of the RR method across diverse model architectures and scales. As shown in Figure~\ref{fig:bar_overall}, \textbf{every model achieves higher average performance after applying RR}, with consistent gains across both model families and sizes. This validates our core hypothesis: by identifying high-$\ell_2$-norm layers—where activation spikes signal critical reasoning moments—and recursively refining representations at these points, we can systematically enhance problem-solving capabilities regardless of base architecture.

The Qwen series exhibits particularly great improvements. For instance, Qwen3-4B improves from 60.0\% to 70.0\% (+10.0\%) on AIME24, while Qwen3-14B advances from 65.8\% to 73.3\% (+7.5\%). Smaller models like Qwen3-1.7B also benefit substantially (e.g., +8.3\% on AIME2024), suggesting that when reasoning capacity is limited, targeted recursion at activation peaks provides disproportionate value. In contrast, the LLaMA family shows more modest but notably \textbf{uniform} gains: LLaMA-8B and LLaMA-70B both improve by +2.8\%. This difference aligns with our earlier analysis in Appendix~\ref{appendix:architectural_differences}: LLaMA models develop a separable reasoning subspace only very late in the network, leaving less representational ``room'' for interventions to act upon, whereas Qwen establishes reasoning-relevant structure earlier and more broadly across deep layers—making it more responsive to layer-wise refinement.

\subsection{Benchmark-Specific Performance Patterns}

The magnitude of improvement varies meaningfully across benchmarks, closely tied to how intensively models engage in multi-step reasoning at the layers where RR intervenes (see Section~\ref{appendix:benchmark_difference}). The most dramatic gains occur on competition-level mathematical tasks: on AIME2024, Qwen3-4B improves by +10.0\% (60.0\% → 70.0\%), and Qwen3-1.7B gains +9.2\% on AIME2025 (30.8\% → 40.0\%). These tasks often induce bottlenecks—such as strategy switching or constraint integration—that trigger high-norm activation spikes; RR’s iterative refinement at these moments enables effective “double-checking” and representation strengthening.

In contrast, benchmarks like BBH show smaller but consistent gains (+1–3\%). BBH’s diverse task types yield mixed sensitivity to norm-based recursion, averaging to moderate improvement. On well-optimized benchmarks like GSM8K, performance remains stable, confirming that RR selectively enhances reasoning where it is most needed.

In summary, RR delivers consistent and meaningful improvements across diverse architectures and tasks. These results demonstrate that RR is not only a practical and universally applicable method for enhancing reasoning, but also provide strong empirical validation that peaks in the $\ell_2$ norm of hidden states reliably signal critical reasoning moments worthy of targeted refinement.

\subsection{How Many Loops Are Enough?}
It is worth noting that increasing the number of loop iterations is \emph{not} always beneficial. We conduct a simple diagnostic experiment on the $25$-th layer of Qwen3-1.7B by repeatedly applying the same transformation and tracking the activation $\ell_2$-norm statistics across loop iterations. As shown in Figur e~\ref{fig:loop_norm}, the norm grows rapidly as the loop count increases; in our estimate, looping up to $3$ times is relatively safe, while $4$ loops is likely to exceed the current maximum norm observed in the model, which may lead to instability. Moreover, due to a fixed-point-like effect, the marginal gain of additional loops diminishes quickly, making it unnecessary to increase the loop iterations indefinitely.

\begin{figure}[!t]
    \centering
    \includegraphics[width=0.85\linewidth]{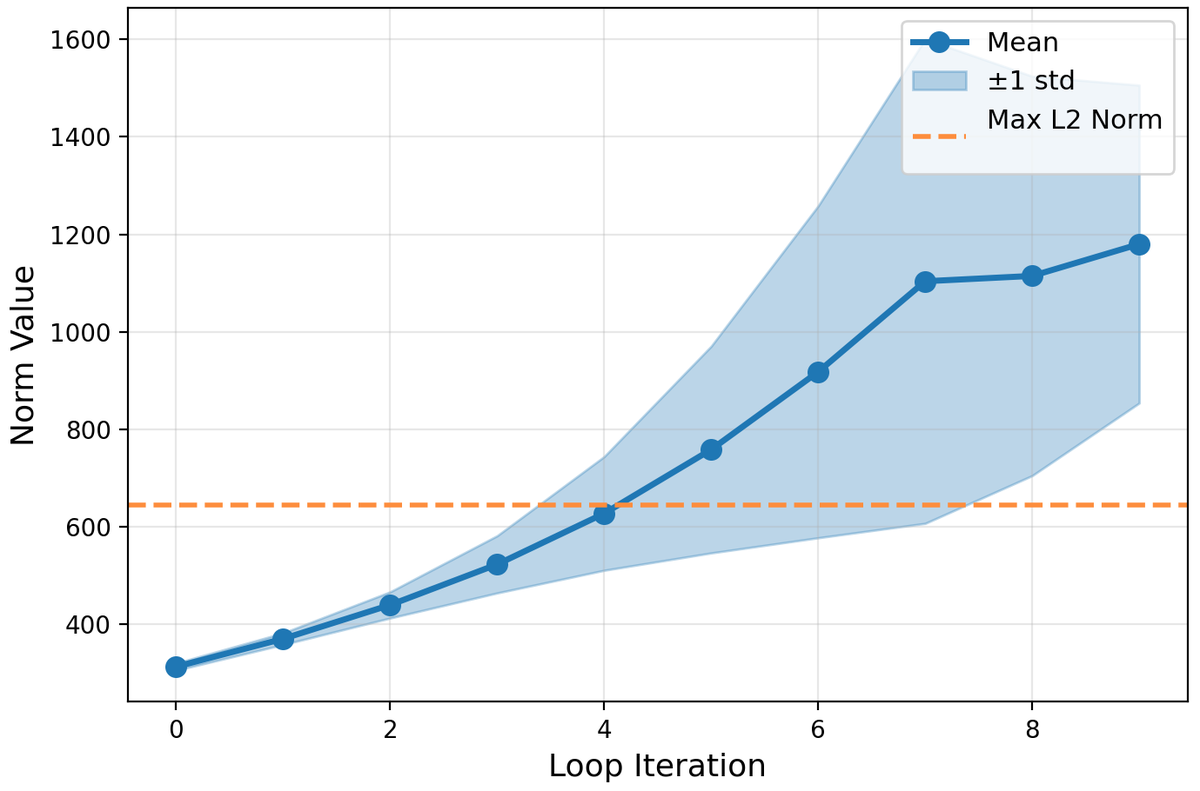}
    \caption{Activation $\ell_2$-norm statistics versus loop iterations on the 25-th layer of Qwen3-1.7B. The blue curve denotes the mean with $\pm 1$ standard deviation shading, and the dashed line indicates the current maximum $\ell_2$ norm.}
    \label{fig:loop_norm}
\end{figure}

\section{Endogenous Reasoning State Steering}
\label{appendix:ERSS_Analysis}

We apply the same evaluation protocol to our Endogenous Reasoning State Steering (ERSS) method across the identical set of seven state-of-the-art thinking models under the same benchmark suite described in Appendix~\ref{appendix:eval_setting}. And our detailed results are in Table~\ref{tab:ttts_results}. Figure~\ref{fig:bar_overall_erss} provides a holistic overview of performance improvements, while Figure~\ref{fig:bubble_overall_erss} visualizes the interplay between model size, baseline performance, and ERSS-induced gains across benchmarks. The detailed performance of each model is shown in the Figure~\ref{fig:radar_all_models_erss}.

The observed trends closely mirror those reported in Appendix~\ref{appendix:ALRR_analysis} for Adaptive Layer-wise Reasoning Recursion: ERSS consistently enhances reasoning performance across all models and tasks, with particularly strong gains on complex mathematical benchmarks and more modest but stable improvements on general reasoning suites. This reinforces the robustness of norm-guided intervention strategies and further validates the $\ell_2$-norm as a reliable signal for identifying critical reasoning states.

The strong similarity in empirical outcomes between RR and ERSS stems from their shared mechanistic foundation: both methods target the same underlying phenomenon that high-$\ell_2$-norm hidden states are indicators of critical reasoning moments and seek to deepen the model’s internal processing at these points. Specifically, RR achieves this through \textit{recursive refinement}, re-injecting and iteratively enhancing representations at norm peaks to simulate “thinking longer.” In contrast, ERSS performs \textit{endogenous state steering} by nudging the high-norm activations toward semantically similar, high-quality reasoning trajectories drawn from external memory or past successful executions. 

Despite their operational differences, both strategies amount to a form of \textit{intensive exploitation of high-signal reasoning states}. This common objective explains why they yield quantitatively and qualitatively consistent improvements across models and benchmarks, reinforcing the central role of $\ell_2$-norm peaks as actionable anchors for reasoning enhancement.

\begin{table*}[t]
\centering
\caption{Performance comparison with steering methods on Qwen3-4B across reasoning benchmarks.}
\label{tab:steering_qwen4b}
\begin{tabular}{lccccccc}
\toprule
Method & aime24 & aime25 & bbh & gpqa\_main & gsm8k & gsm-plus & mmlu-pro \\
\midrule
Base & 60.00 & 56.66 & 84.78 & 50.22 & 95.41 & 78.90 & 67.92 \\
\textbf{ERSS (Ours)} & \textbf{66.67} & \textbf{60.83} & 87.18 & \textbf{51.33} & \textbf{96.68} & 79.76 & \textbf{71.15} \\
Entropy & 56.67 & 53.33 & 85.87 & 48.06 & 96.38 & 79.48 & 68.55 \\
LogProb & 56.67 & 55.00 & 85.61 & 47.29 & 95.49 & 79.46 & 69.41 \\
SAE & 65.00 & 58.33 & \textbf{87.51} & 50.73 & 96.45 & \textbf{80.15} & 70.64 \\
\bottomrule
\end{tabular}
\end{table*}

\begin{table*}[t]
\centering
\caption{Performance comparison with steering methods on R1-Distilled-LLaMA-8B across reasoning benchmarks.}
\label{tab:steering_r1_8b}
\begin{tabular}{lccccccc}
\toprule
Method & aime24 & aime25 & bbh & gpqa\_main & gsm8k & gsm-plus & mmlu-pro \\
\midrule
Base & 40.00 & 33.33 & 70.48 & 50.45 & 88.34 & \textbf{67.95} & 59.27 \\
\textbf{ERSS (Ours)} & 51.67 & \textbf{40.00} & 70.90 & \textbf{55.13} & 90.40 & 66.43 & \textbf{60.67} \\
Entropy & 40.00 & 36.67 & 70.77 & 53.75 & 88.47 & 66.33 & 58.84 \\
LogProb & 43.33 & 33.33 & \textbf{71.27} & 53.53 & 88.36 & 65.93 & 59.10 \\
SAE & \textbf{53.33} & 38.33 & 70.94 & 54.20 & \textbf{90.87} & 65.88 & 59.57 \\
\bottomrule
\end{tabular}
\end{table*}

\paragraph{Comparison with steering baselines.}
Tables~\ref{tab:steering_qwen4b} and~\ref{tab:steering_r1_8b} compare \textbf{ERSS (Ours)} with representative steering-based baselines. Across both model families, \textbf{ERSS} delivers the most consistent gains over the base model, achieving the strongest overall performance on AIME, GPQA, and MMLU-Pro, while remaining competitive on GSM8K and BBH. Although SAE is occasionally the best method on individual benchmarks, its gains are less uniform across tasks and models. Entropy- and LogProb-based steering show even smaller or less stable improvements. Overall, these results indicate that \textbf{ERSS} offers a more robust steering signal for enhancing reasoning behavior across heterogeneous benchmarks.

\begin{table*}[!t]
\centering
\caption{Performance comparison between Base models and Steer-enhanced variants across reasoning benchmarks. Accuracy (\%) is reported, with relative improvement over Base shown in Impro (\%).}
\label{tab:ttts_results}
\resizebox{\linewidth}{!}{
\footnotesize
\begin{tabular}{lcccccccc}
\toprule

\multirow{2}{*}{\textbf{Model}} & \multirow{2}{*}{\textbf{Setting}}
& \multicolumn{4}{c}{\textbf{Mathematical}}
& \textbf{Complex}
& \textbf{Scientific}
& \textbf{Knowledge} \\

\cmidrule(lr){3-6}
\cmidrule(lr){7-7}
\cmidrule(lr){8-8}
\cmidrule(lr){9-9}
& 
& \textbf{AIME24} & \textbf{AIME25} & \textbf{GSM8K} & \textbf{GSM-Plus}
& \textbf{BBH}
& \textbf{GPQA-main}
& \textbf{MMLU-Pro} \\
\midrule

\multirow{2}{*}{Qwen3-1.7B} & Base
& 40.00 & 30.83 & 90.03 & \textbf{75.45} & 72.83 & 37.05 & 55.92 \\
& Steer
& \textbf{45.00} & \textbf{33.33} & \textbf{91.21} & 75.43 & \textbf{74.55} & \textbf{41.07} & \textbf{58.49} \\
\cmidrule(lr){3-9}
\multicolumn{2}{c}{\tiny \textit{Performance Gain (\%)}}
& \textit{+12.50} & \textit{+8.11} & \textit{+1.31} & \textit{-0.03} & \textit{+2.36} & \textit{+10.85} & \textit{+4.59} \\
\midrule

\multirow{2}{*}{Qwen3-4B} & Base
& 60.00 & 56.66 & 95.41 & 78.90 & 84.78 & 50.22 & 67.92 \\
& Steer
& \textbf{66.67} & \textbf{60.82} & \textbf{96.68} & \textbf{79.76} & \textbf{87.18} & \textbf{51.33} & \textbf{71.15} \\
\cmidrule(lr){3-9}
\multicolumn{2}{c}{\tiny \textit{Performance Gain (\%)}}
& \textit{+11.12} & \textit{+7.34} & \textit{+1.33} & \textit{+1.09} & \textit{+2.83} & \textit{+2.21} & \textit{+4.76} \\
\midrule

\multirow{2}{*}{Qwen3-8B} & Base
& 63.67 & 60.00 & 95.91 & 81.29 & 84.67 & 54.69 & 71.11 \\
& Steer
& \textbf{66.67} & \textbf{61.67} & \textbf{96.60} & \textbf{81.88} & \textbf{85.87} & \textbf{55.28} & \textbf{74.21} \\
\cmidrule(lr){3-9}
\multicolumn{2}{c}{\tiny \textit{Performance Gain (\%)}}
& \textit{+4.71} & \textit{+2.78} & \textit{+0.72} & \textit{+0.73} & \textit{+1.42} & \textit{+1.08} & \textit{+4.36} \\
\midrule

\multirow{2}{*}{Qwen3-14B} & Base
& 65.83 & 67.66 & 96.13 & 83.71 & 85.33 & 58.71 & 73.56 \\
& Steer
& \textbf{71.11} & \textbf{73.33} & \textbf{96.21} & \textbf{84.15} & \textbf{88.38} & \textbf{60.49} & \textbf{76.57} \\
\cmidrule(lr){3-9}
\multicolumn{2}{c}{\tiny \textit{Performance Gain (\%)}}
& \textit{+8.02} & \textit{+8.38} & \textit{+0.08} & \textit{+0.53} & \textit{+3.58} & \textit{+3.03} & \textit{+4.09} \\
\midrule

\multirow{2}{*}{Qwen3-32B} & Base
& 70.83 & 69.17 & \textbf{96.44} & 82.62 & 85.79 & 64.40 & 77.83 \\
& Steer
& \textbf{73.33} & \textbf{75.00} & 96.36 & \textbf{83.15} & \textbf{88.61} & \textbf{65.96} & \textbf{80.03} \\
\cmidrule(lr){3-9}
\multicolumn{2}{c}{\tiny \textit{Performance Gain (\%)}}
& \textit{+3.53} & \textit{+8.43} & \textit{-0.08} & \textit{+0.64} & \textit{+3.29} & \textit{+2.42} & \textit{+2.83} \\
\midrule

\multirow{2}{*}{LLaMA-8B} & Base
& 40.00 & 33.33 & 89.23 & \textbf{67.95} & 70.48 & 50.45 & 59.27 \\
& Steer
& \textbf{51.67} & \textbf{40.00} & \textbf{90.40} & 66.43 & \textbf{70.90} & \textbf{55.13} & \textbf{60.67} \\
\cmidrule(lr){3-9}
\multicolumn{2}{c}{\tiny \textit{Performance Gain (\%)}}
& \textit{+29.18} & \textit{+20.01} & \textit{+1.31} & \textit{-2.24} & \textit{+0.60} & \textit{+9.27} & \textit{+2.36} \\
\midrule

\multirow{2}{*}{LLaMA-70B} & Base
& 63.33 & 55.83 & 94.31 & 79.84 & 84.81 & 67.19 & 76.64 \\
& Steer
& \textbf{70.00} & \textbf{58.33} & \textbf{95.19} & \textbf{81.25} & \textbf{86.42} & \textbf{67.97} & \textbf{79.40} \\
\cmidrule(lr){3-9}
\multicolumn{2}{c}{\tiny \textit{Performance Gain (\%)}}
& \textit{+10.53} & \textit{+4.48} & \textit{+0.93} & \textit{+1.77} & \textit{+1.90} & \textit{+1.16} & \textit{+3.60} \\
\bottomrule
\end{tabular}
}
\end{table*}

\begin{figure}[!t]
    \centering
    \includegraphics[width=\linewidth]{icml2026/Figures/ERSS_picture/comparison_bar.png}
    \caption{Overall performance comparison across all models and benchmarks. Light-colored bars represent baseline performance, while dark-colored bars show results after applying our ERSS method. Each model family uses a distinct color scheme for easy identification.}
    \label{fig:bar_overall_erss}
\end{figure}

\begin{figure}[!t]
    \centering
    \includegraphics[width=0.95\linewidth]{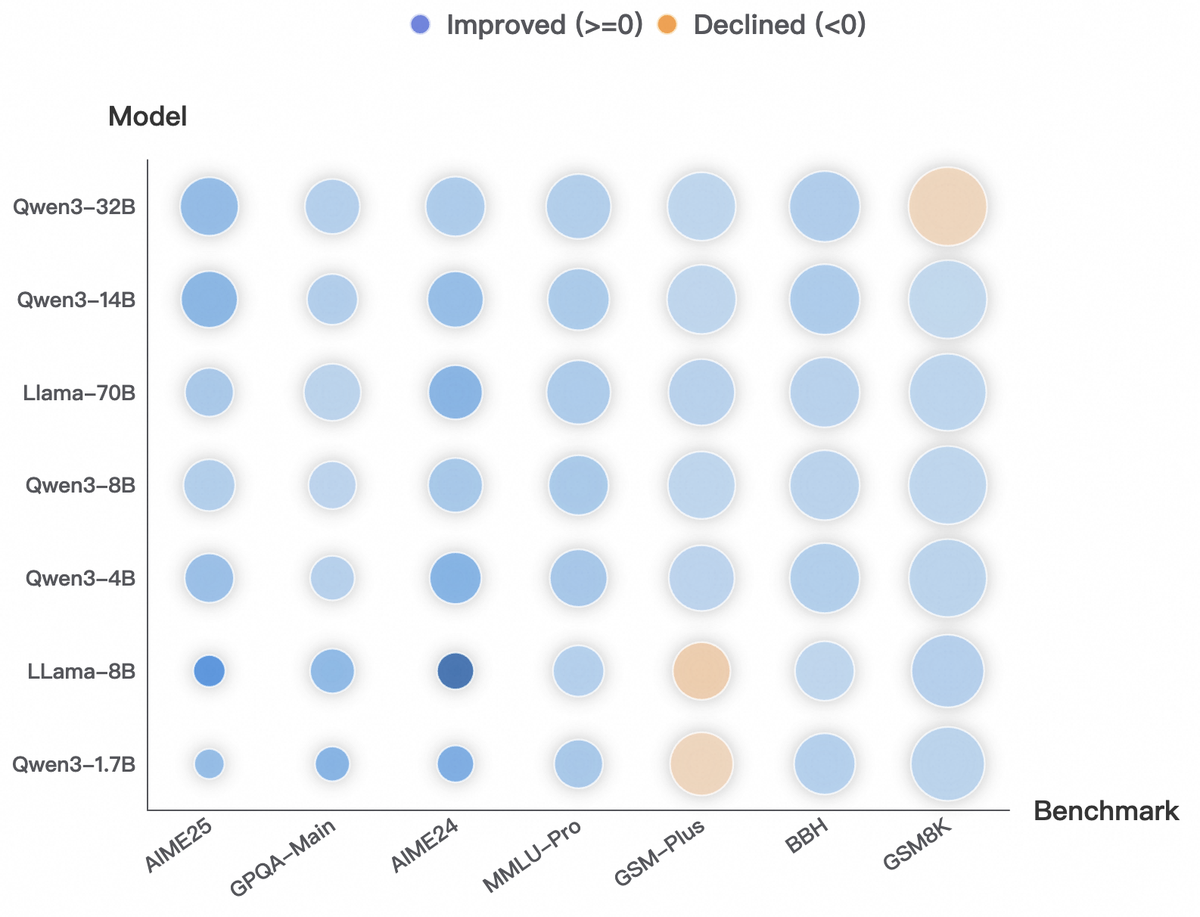}
    \caption{Bubble chart showing the combined relationship among baseline performance, ERSS improvement, and model size across benchmarks. The x-axis lists benchmarks (sorted by average baseline score) and the y-axis lists models (sorted by overall baseline performance). Bubble size encodes the baseline score on each benchmark, while bubble color encodes the RR improvement (blue for positive gains, orange for negative drops; darker shades indicate larger magnitude).}
    \label{fig:bubble_overall_erss}
\end{figure}

\begin{figure*}[!t]
    \centering
    \begin{subfigure}[b]{0.32\textwidth}
        \centering
        \includegraphics[width=\textwidth]{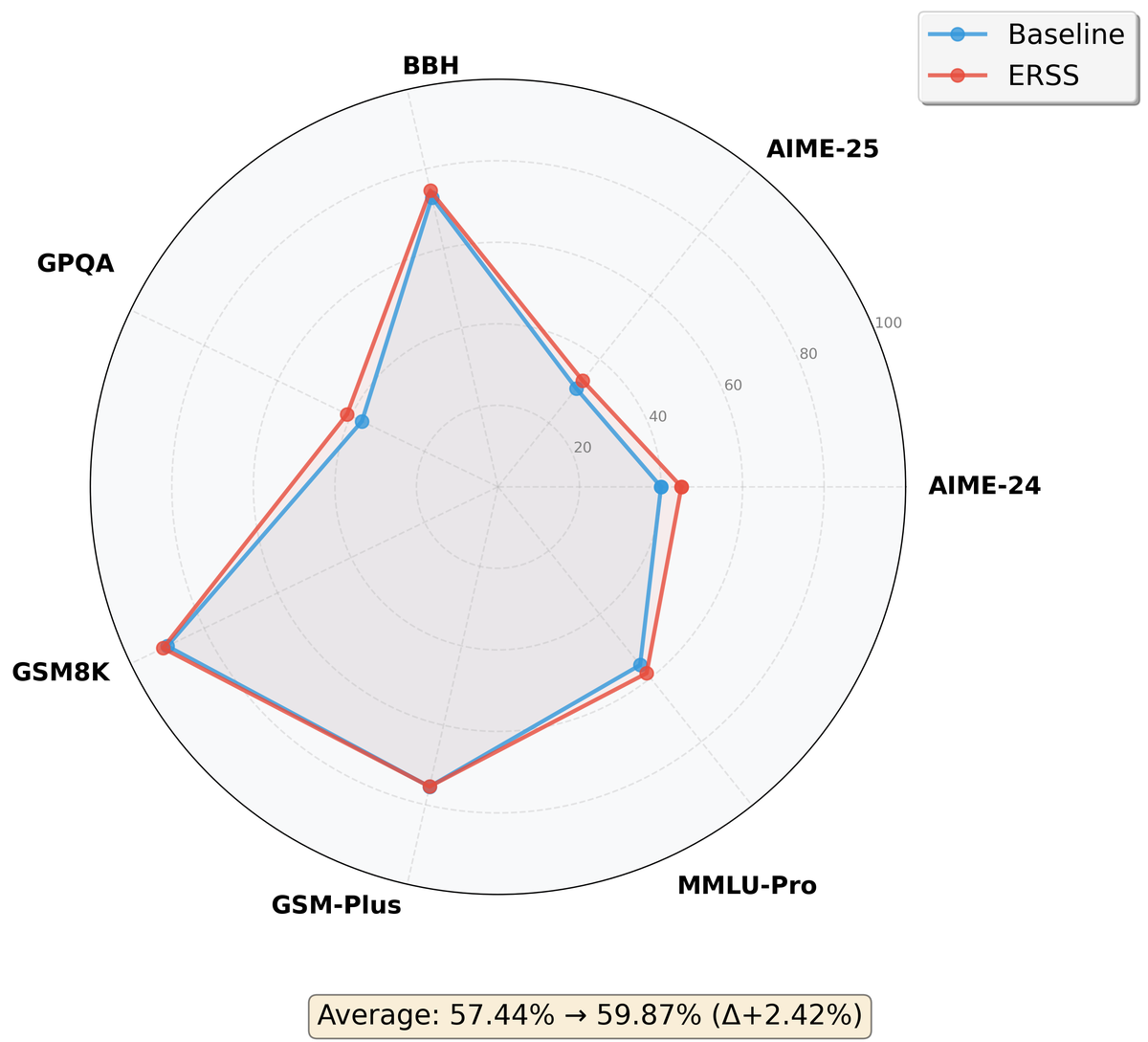}
        \caption{Qwen3-1.7B}
        \label{fig:radar_qwen1.7b}
    \end{subfigure}
    \hfill
    \begin{subfigure}[b]{0.32\textwidth}
        \centering
        \includegraphics[width=\textwidth]{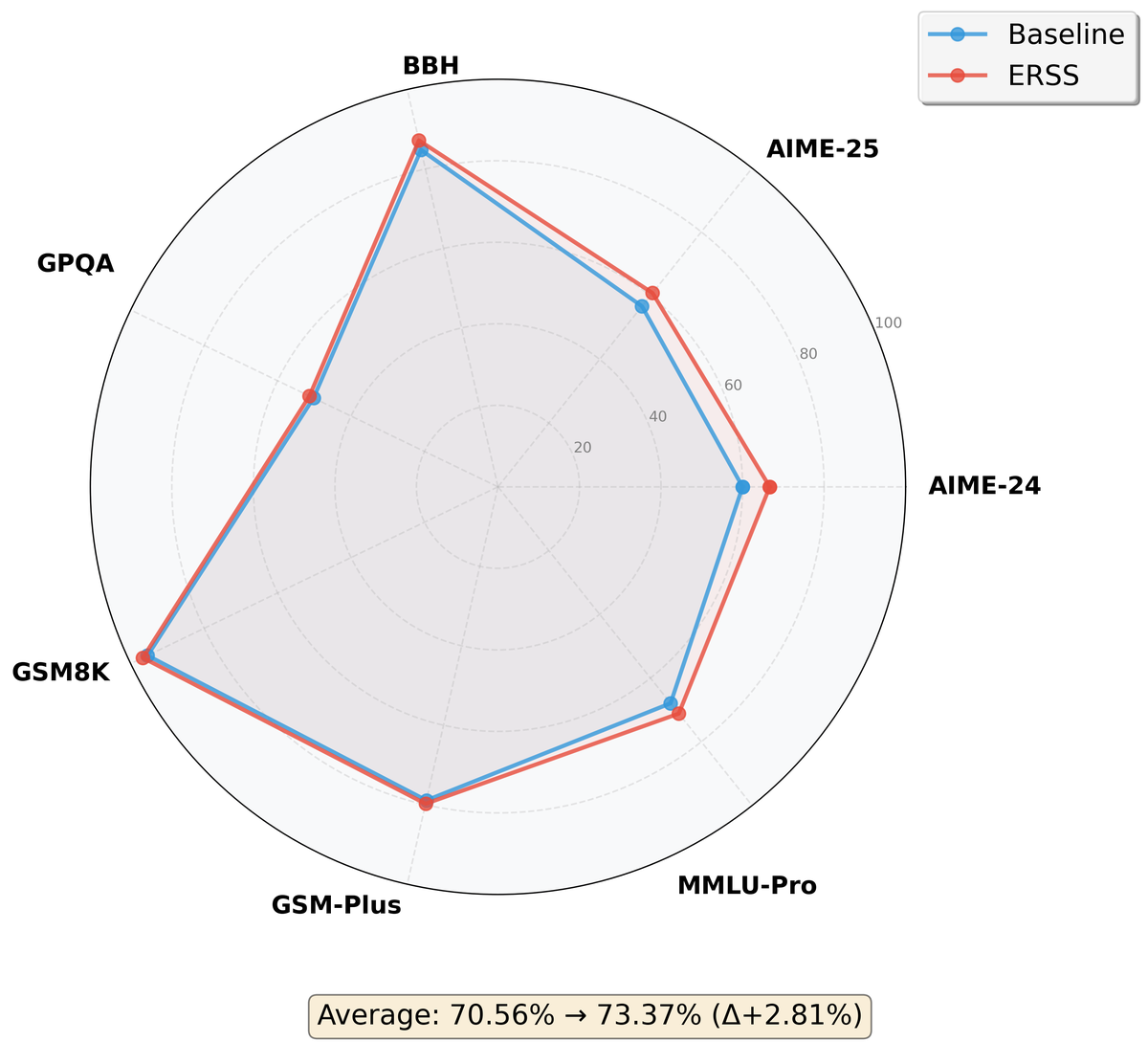}
        \caption{Qwen3-4B}
        \label{fig:radar_qwen4b}
    \end{subfigure}
    \hfill
    \begin{subfigure}[b]{0.32\textwidth}
        \centering
        \includegraphics[width=\textwidth]{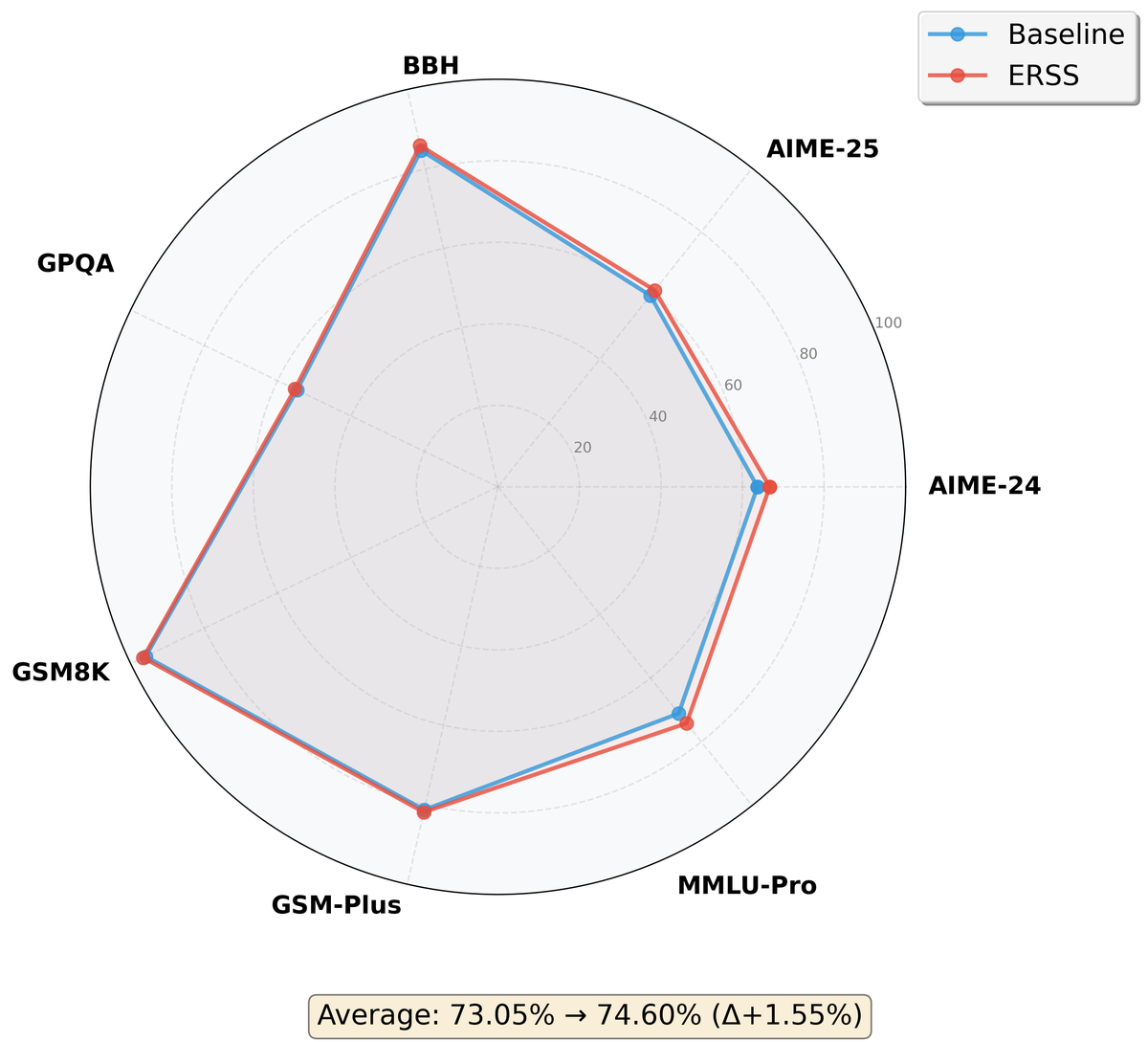}
        \caption{Qwen3-8B}
        \label{fig:radar_qwen8b}
    \end{subfigure}
    
    \vspace{0.5em}
    
    \begin{subfigure}[b]{0.32\textwidth}
        \centering
        \includegraphics[width=\textwidth]{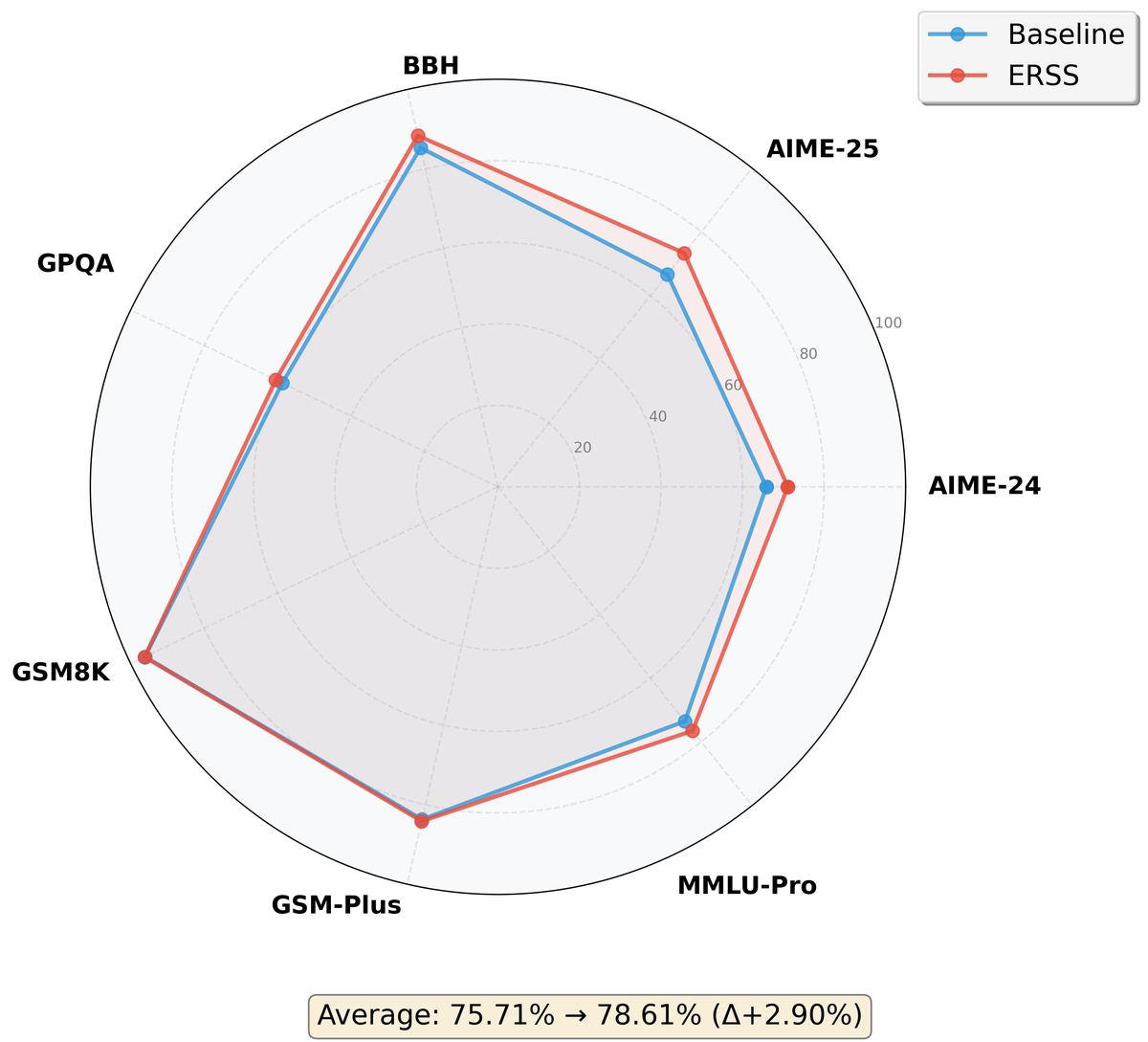}
        \caption{Qwen3-14B}
        \label{fig:radar_qwen14b}
    \end{subfigure}
    \hfill
    \begin{subfigure}[b]{0.32\textwidth}
        \centering
        \includegraphics[width=\textwidth]{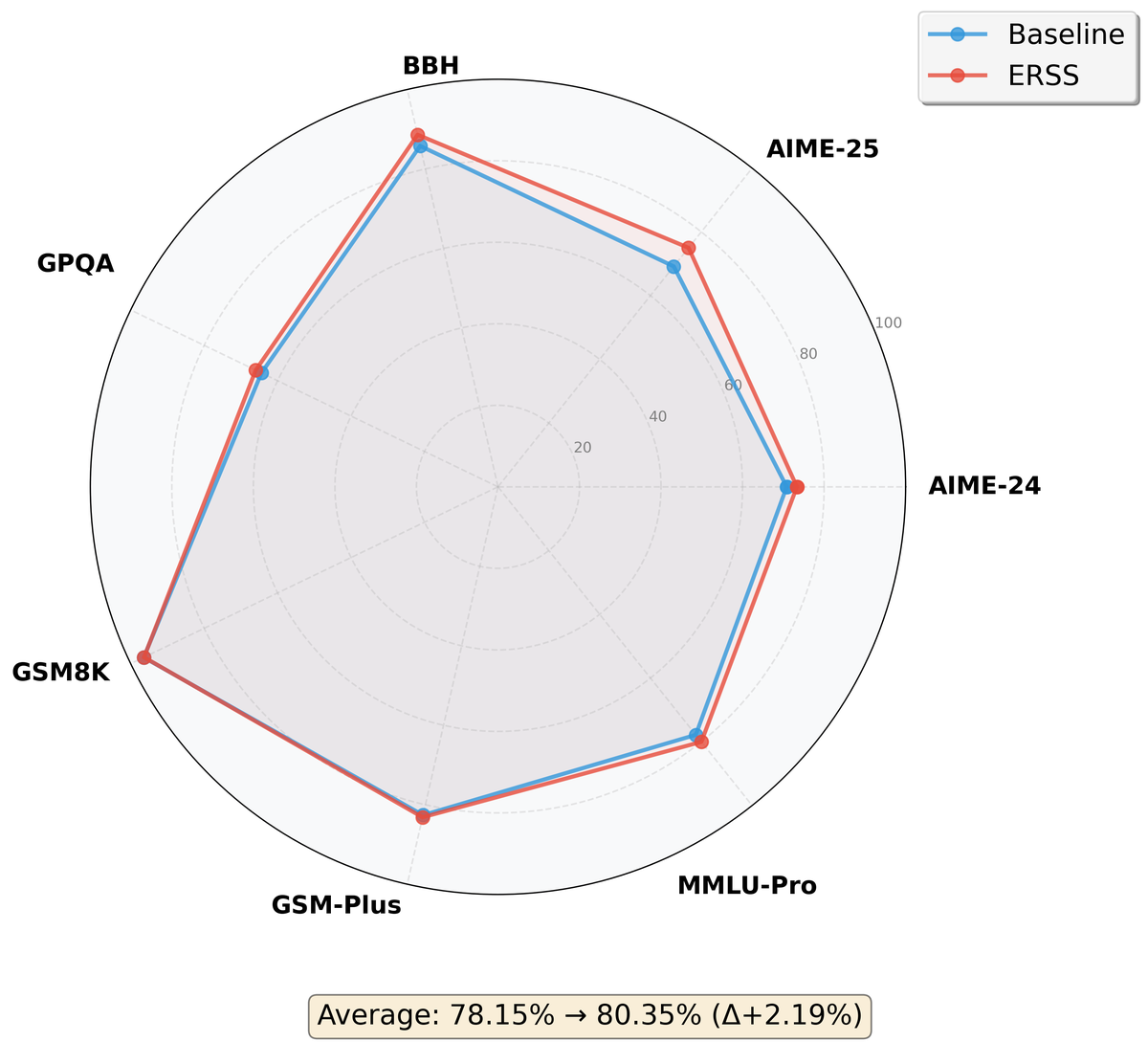}
        \caption{Qwen3-32B}
        \label{fig:radar_qwen32b}
    \end{subfigure}
    \hfill
    \begin{subfigure}[b]{0.32\textwidth}
        \centering
        \includegraphics[width=\textwidth]{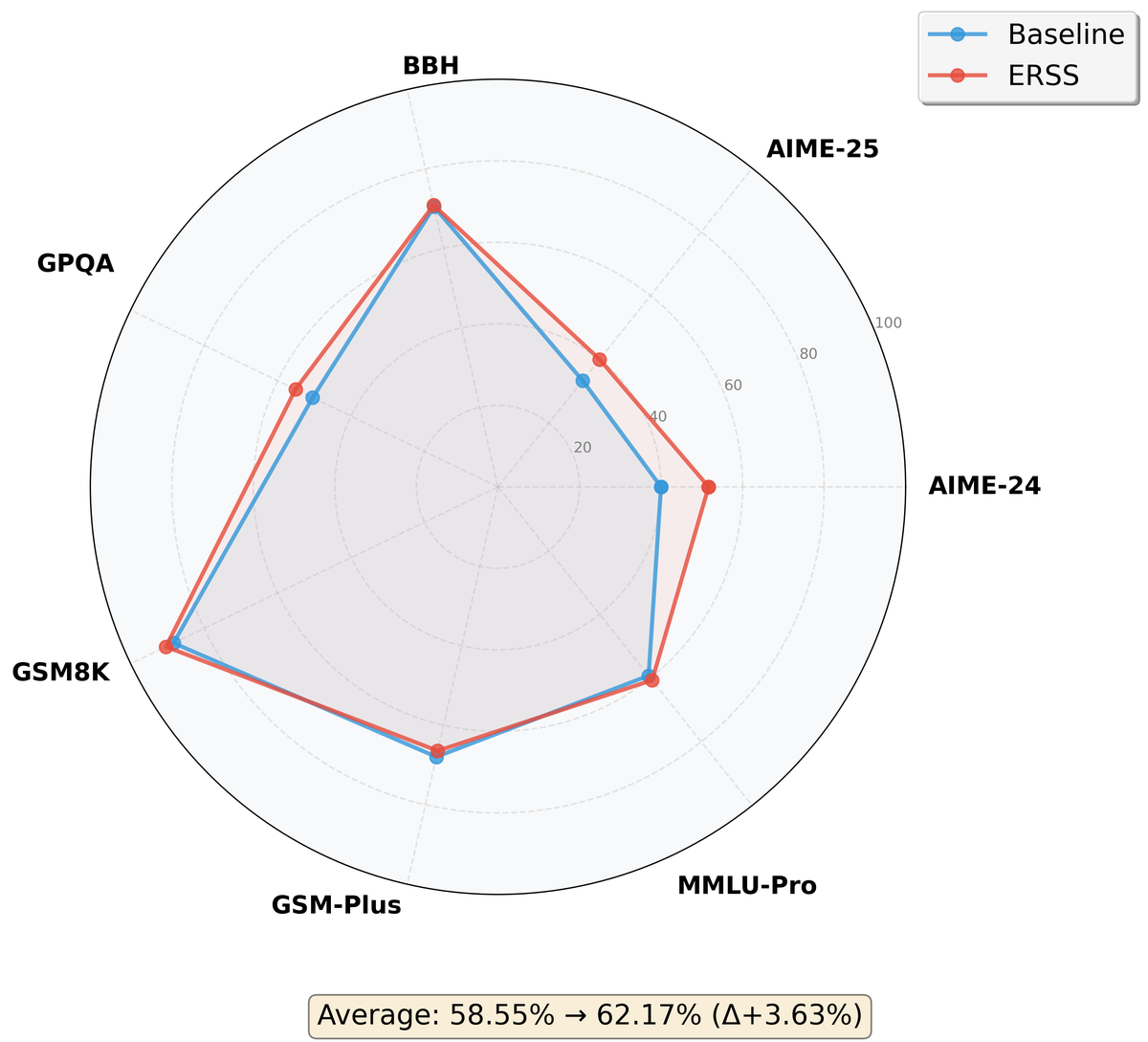}
        \caption{LLaMa-8B}
        \label{fig:radar_llama8b}
    \end{subfigure}
    
    \vspace{0.5em}
    
    \begin{subfigure}[b]{0.32\textwidth}
        \centering
        \includegraphics[width=\textwidth]{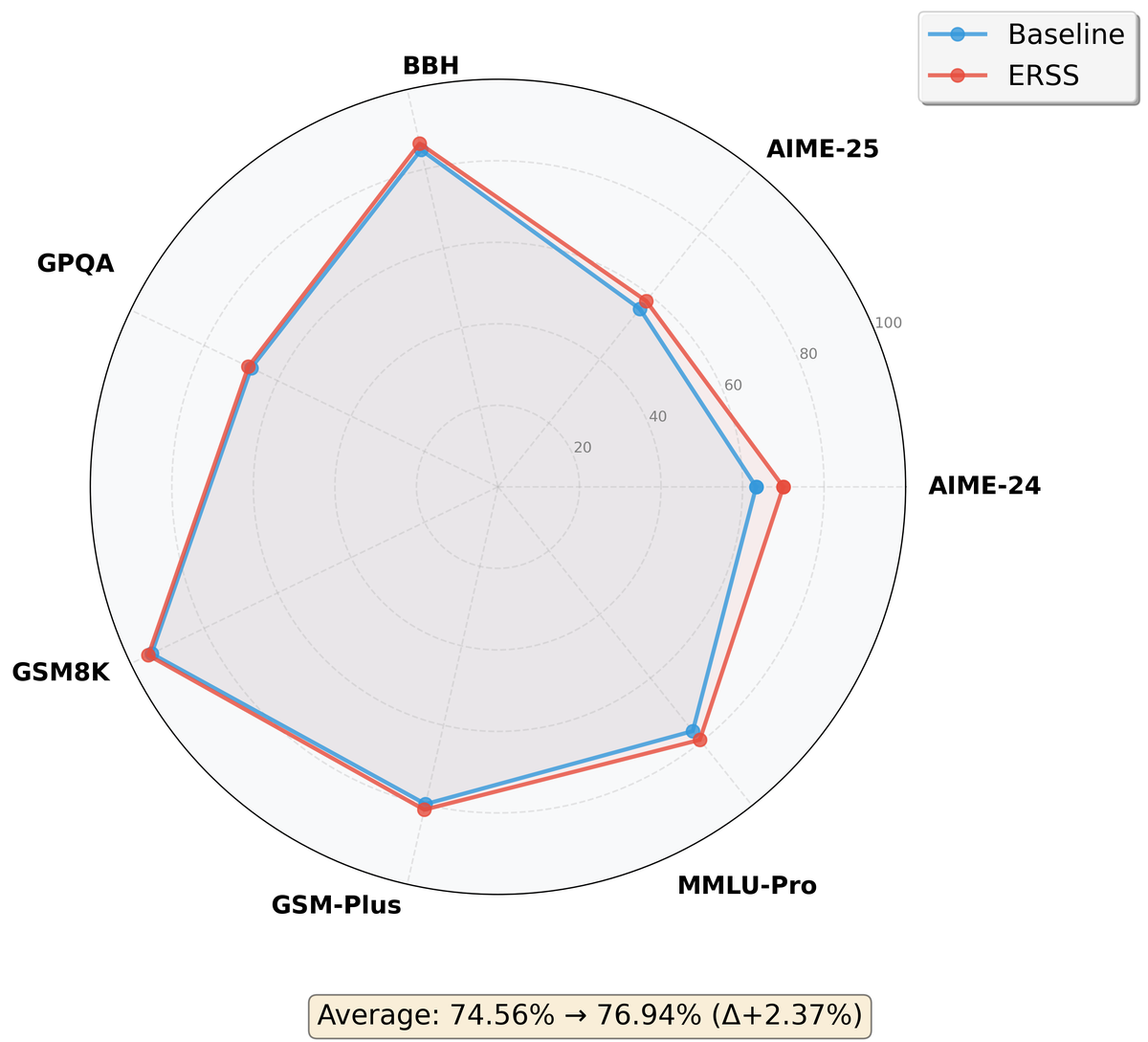}
        \caption{LLaMa-70B}
        \label{fig:radar_llama70b}
    \end{subfigure}
    
    \caption{Individual performance analysis for all seven models. Each radar chart compares baseline performance (dashed lines) with ERSS-enhanced results (solid lines) across seven benchmarks. The consistent outward expansion of the ERSS curves demonstrates universal effectiveness across different model sizes and architectures.}
    \label{fig:radar_all_models_erss}
\end{figure*}

\section{$\ell_2$-guided Response Selection}
\label{appendix:LRS_Analysis}

Beyond internal intervention, we explore the use of the $\ell_2$ norm as a \textit{discriminative signal} for post-hoc response selection among multiple model generations. Specifically, we propose a simple yet effective ranking criterion: for each candidate response, we compute the Reasoning Score—\textbf{defined as the sum of $\ell_2$-norm peak heights over the final one-third of generated tokens in the deepest 25\% layers.} This design reflects our observation that intense reasoning activity near the end of generation (i.e., during answer formulation) is more indicative of deliberative, high-quality reasoning than activity distributed uniformly across the entire sequence.

As shown in Figure~\ref{fig:bar_overall_rerank} and Table~\ref{tab:selection_results}, selecting the candidate with the highest Reasoning Score consistently outperforms the naive baseline of choosing the response with the highest average log-probability (or equivalently, lowest perplexity, often approximated by mean token likelihood). Across all models and benchmarks, our $\ell_2$-guided selection yields higher accuracy, demonstrating that norm-based signals capture aspects of reasoning quality and further corroborates the functional significance of high-norm states: not only are they causally involved in reasoning (as shown in RR and ERSS), but their temporal profile also serves as a reliable proxy for solution correctness.

\begin{table*}[t]
\centering
\caption{Performance comparison with selection methods on Qwen3-4B across reasoning benchmarks.}
\label{tab:selection_qwen4b}
\begin{tabular}{lccccccc}
\toprule
Method & aime24 & aime25 & bbh & gpqa\_main & gsm8k & gsm-plus & mmlu-pro \\
\midrule
Base & 60.00 & 56.66 & 84.78 & 48.88 & 95.41 & 79.90 & 67.92 \\
\textbf{LRS (Ours)} & \textbf{66.67} & \textbf{60.00} & 87.20 & \textbf{50.22} & 96.20 & \textbf{81.04} & \textbf{71.41} \\
DeepConf\_filter & 56.67 & 53.33 & 88.56 & 47.67 & \textbf{96.45} & 79.90 & 65.20 \\
DeepConf\_weighted & 63.33 & \textbf{60.00} & \textbf{89.81} & 48.36 & 95.83 & 80.08 & 61.12 \\
\bottomrule
\end{tabular}
\end{table*}

\begin{table*}[t]
\centering
\caption{Table 2: Performance comparison with selection methods on R1-Distilled-LLaMA-8B across reasoning benchmarks.}
\label{tab:selection_r1_8b}
\begin{tabular}{lccccccc}
\toprule
Method & aime24 & aime25 & bbh & gpqa\_main & gsm8k & gsm-plus & mmlu-pro \\
\midrule
Base & 40.00 & 33.33 & 70.48 & 50.45 & 88.34 & 67.95 & 39.75 \\
\textbf{LRS (Ours)} & \textbf{43.33} & \textbf{36.67} & 74.14 & \textbf{54.68} & 90.82 & \textbf{70.57} & \textbf{42.31} \\
DeepConf\_filter & \textbf{43.33} & 33.33 & 76.83 & 53.13 & 88.40 & 68.38 & 39.02 \\
DeepConf\_weighted & \textbf{43.33} & \textbf{36.67} & \textbf{80.65} & 50.45 & \textbf{90.96} & 68.47 & 39.16 \\
\bottomrule
\end{tabular}
\end{table*}

\paragraph{Comparison with selection baselines.}
Tables~\ref{tab:selection_qwen4b} and~\ref{tab:selection_r1_8b} compare \textbf{LRS (Ours)} with representative selection-based baselines on Qwen3-4B and R1-Distilled-LLaMA-8B. Overall, \textbf{LRS} achieves the most balanced improvements across benchmarks, consistently outperforming the base model and attaining the best or tied-best results on most tasks, especially on AIME, GPQA, GSM-Plus, and MMLU-Pro. In contrast, DeepConf-based methods occasionally obtain stronger results on individual benchmarks such as BBH or GSM8K, but their gains are less stable and often come at the cost of regressions on other tasks. This suggests that \textbf{LRS} provides a more reliable selection criterion for improving reasoning performance across diverse evaluation settings.

\begin{figure}[!t]
    \centering
    \includegraphics[width=\linewidth]{icml2026/Figures/Rerank/Rerank_results.png}
    \caption{Overall performance comparison across all models and benchmarks. Light-colored bars represent mean performance, while dark-colored bars show results after applying our Norm Selection method. Each model family uses a distinct color scheme for easy identification.}
    \label{fig:bar_overall_rerank}
\end{figure}

\begin{table*}[!t]
\centering
\caption{Performance comparison between Base (Mean@8) and $\ell_2$-guided Selection across reasoning benchmarks. Accuracy (\%) is reported, with relative improvement over Base shown in Impro (\%).}
\label{tab:selection_results}
\footnotesize
\begin{tabular}{lcccccccc}
\toprule
\multirow{2}{*}{\textbf{Model}} & \multirow{2}{*}{\textbf{Setting}}
& \multicolumn{4}{c}{\textbf{Mathematical}}
& \textbf{Complex}
& \textbf{Scientific}
& \textbf{Knowledge} \\
\cmidrule(lr){3-6} \cmidrule(lr){7-7} \cmidrule(lr){8-8} \cmidrule(lr){9-9}
& 
& \textbf{AIME24} & \textbf{AIME25} & \textbf{GSM8K} & \textbf{GSM-Plus} & \textbf{BBH} & \textbf{GPQA-main} & \textbf{MMLU-Pro} \\
\midrule
\multirow{2}{*}{Qwen3-1.7B} & Mean
& 40.00 & 30.83 & 90.03 & 75.45 & 72.83 & 37.05 & 55.92 \\
& Select
& \textbf{40.00} & \textbf{36.67} & \textbf{91.21} & \textbf{76.43} & \textbf{76.75} & \textbf{39.96} & \textbf{57.42} \\
\cmidrule(lr){3-9}
\multicolumn{2}{c}{\tiny \textit{Performance Gain (\%)}}
& \textit{+0.00} & \textit{+18.94} & \textit{+1.31} & \textit{+1.30} & \textit{+5.38} & \textit{+7.85} & \textit{+2.68} \\
\midrule

\multirow{2}{*}{Qwen3-4B} & Mean
& 60.00 & 56.66 & 95.41 & 79.90 & 84.78 & 48.88 & 67.92 \\
& Select
& \textbf{66.67} & \textbf{60.00} & \textbf{96.20} & \textbf{81.04} & \textbf{87.20} & \textbf{50.22} & \textbf{71.41} \\
\cmidrule(lr){3-9}
\multicolumn{2}{c}{\tiny \textit{Performance Gain (\%)}}
& \textit{+11.12} & \textit{+5.89} & \textit{+0.83} & \textit{+1.43} & \textit{+2.85} & \textit{+2.74} & \textit{+5.14} \\
\midrule

\multirow{2}{*}{Qwen3-8B} & Mean
& 63.67 & 60.00 & 95.91 & 81.29 & 84.67 & 54.69 & 71.11 \\
& Select
& \textbf{66.67} & \textbf{63.33} & \textbf{96.06} & \textbf{82.52} & \textbf{87.30} & \textbf{57.36} & \textbf{75.12} \\
\cmidrule(lr){3-9}
\multicolumn{2}{c}{\tiny \textit{Performance Gain (\%)}}
& \textit{+4.71} & \textit{+5.55} & \textit{+0.16} & \textit{+1.51} & \textit{+3.11} & \textit{+4.88} & \textit{+5.64} \\
\midrule

\multirow{2}{*}{Qwen3-14B} & Mean
& 65.83 & 66.67 & 96.13 & 83.71 & 85.33 & 58.79 & 73.56 \\
& Select
& \textbf{66.67} & \textbf{70.00} & \textbf{96.36} & \textbf{83.82} & \textbf{88.95} & \textbf{60.04} & \textbf{77.56} \\
\cmidrule(lr){3-9}
\multicolumn{2}{c}{\tiny \textit{Performance Gain (\%)}}
& \textit{+1.28} & \textit{+5.00} & \textit{+0.24} & \textit{+0.13} & \textit{+4.24} & \textit{+2.13} & \textit{+5.44} \\
\midrule

\multirow{2}{*}{Qwen3-32B} & Mean
& 70.83 & 69.17 & 96.44 & 82.62 & 85.79 & 64.40 & 77.83 \\
& Select
& \textbf{73.33} & \textbf{70.00} & \textbf{96.81} & \textbf{83.67} & \textbf{89.14} & \textbf{68.64} & \textbf{82.54} \\
\cmidrule(lr){3-9}
\multicolumn{2}{c}{\tiny \textit{Performance Gain (\%)}}
& \textit{+3.53} & \textit{+1.20} & \textit{+0.38} & \textit{+1.27} & \textit{+3.90} & \textit{+6.58} & \textit{+6.05} \\
\midrule

\multirow{2}{*}{LLaMA-8B} & Mean
& 40.00 & 33.33 & 88.34 & 67.95 & 70.48 & 50.45 & 39.75 \\
& Select
& \textbf{43.33} & \textbf{36.67} & \textbf{90.82} & \textbf{70.57} & \textbf{74.14} & \textbf{54.68} & \textbf{42.31} \\
\cmidrule(lr){3-9}
\multicolumn{2}{c}{\tiny \textit{Performance Gain (\%)}}
& \textit{+8.33} & \textit{+10.02} & \textit{+2.81} & \textit{+3.86} & \textit{+5.19} & \textit{+8.38} & \textit{+6.44} \\
\midrule

\multirow{2}{*}{LLaMA-70B} & Mean
& 63.33 & 55.83 & 94.31 & 79.84 & 84.81 & 66.91 & 74.64 \\
& Select
& \textbf{66.67} & \textbf{60.00} & \textbf{95.14} & \textbf{80.82} & \textbf{85.92} & \textbf{69.64} & \textbf{79.26} \\
\cmidrule(lr){3-9}
\multicolumn{2}{c}{\tiny \textit{Performance Gain (\%)}}
& \textit{+5.27} & \textit{+7.47} & \textit{+0.88} & \textit{+1.23} & \textit{+1.31} & \textit{+4.08} & \textit{+6.19} \\
\bottomrule
\end{tabular}
\end{table*}

\section{Sensitivity Analysis}
\label{appendix:sensitivity}

\subsection{Hyperparameters Sensitivity Analysis}
\begin{table}[t]
\centering
\caption{Ablation on layer aggregation strategies for ERSS, ALRR, and LRS on Qwen3-4B.}
\label{tab:ablation}
\small
\begin{tabular}{lccccccc}
\toprule
\textbf{Method} & \textbf{AIME24} & \textbf{AIME25} & \textbf{BBH} & \textbf{GPQA} & \textbf{GSM8K} & \textbf{GSM-Plus} & \textbf{MMLU-Pro} \\
\midrule
Qwen3-4B (Base) & 60.00 & 56.67 & 84.78 & 50.22 & 95.41 & 78.90 & 67.92 \\
\midrule
\multicolumn{8}{l}{\textbf{ALRR (default)}} \\
\quad ALRR (default) & 70.00 & 60.00 & 87.39 & 51.55 & 95.53 & 79.88 & 71.66 \\
\quad Fixed threshold & 23.33 & 16.67 & 85.02 & 25.78 & 95.39 & 78.93 & 69.59 \\
\quad $\tau=0.5$ & 36.67 & 23.33 & 70.56 & 36.43 & 80.64 & 52.69 & 39.88 \\
\quad $\tau=3$ & 70.83 & 60.83 & 86.88 & 52.91 & 95.59 & 79.91 & 72.73 \\
\quad $\tau=6$ & 69.17 & 60.83 & 86.70 & 52.25 & 96.34 & 79.98 & 71.41 \\
\quad Loop-1 & 66.67 & 59.17 & 88.81 & 52.45 & 96.32 & 79.47 & 70.42 \\
\quad Loop-8 & 36.67 & 20.00 & 60.25 & 35.94 & 67.92 & 53.49 & 46.55 \\
\midrule
\multicolumn{8}{l}{\textbf{ERSS (default)}} \\
\quad ERSS (default) & 66.67 & 60.83 & 87.18 & 51.33 & 96.68 & 79.76 & 71.15 \\
\quad Fixed threshold & 16.67 & 10.00 & 84.76 & 26.45 & 95.73 & 78.85 & 70.73 \\
\quad $\tau=0.5$ & 33.33 & 31.67 & 74.92 & 40.33 & 78.41 & 44.32 & 37.45 \\
\quad $\tau=3$ & 66.67 & 61.67 & 86.00 & 51.16 & 96.09 & 79.00 & 70.67 \\
\quad $\tau=6$ & 63.33 & 60.00 & 86.19 & 50.28 & 96.46 & 76.62 & 70.59 \\
\midrule
\multicolumn{8}{l}{\textbf{LRS (default)}} \\
\quad LRS (default) & 66.67 & 60.00 & 87.20 & 50.22 & 96.20 & 81.04 & 71.41 \\
\quad LRS (peak count) & 70.00 & 57.50 & 85.98 & 52.34 & 95.17 & 78.51 & 70.24 \\
\quad LRS (sum) & 66.67 & 62.50 & 87.57 & 52.17 & 96.80 & 79.27 & 72.08 \\
\quad LRS (middle layers) & 61.67 & 53.33 & 84.76 & 50.14 & 96.18 & 78.96 & 68.42 \\
\bottomrule
\end{tabular}
\end{table}

We conduct extensive sensitivity analyses to assess the robustness of our methods.
Table~\ref{tab:ablation} reports the results across three key design dimensions.

\textbf{IQR-based thresholding (ALRR and ERSS).} 
We compare our default dynamic IQR-based threshold against a fixed threshold 
(computed as the mean $\ell_2$-norm of MMLU-Pro samples) and several values of 
the multiplier $\tau$. Fixed thresholds dramatically degrade performance on hard 
reasoning tasks such as AIME (e.g., ALRR drops from 70.00 to 23.33 on AIME24), 
confirming that adaptive, per-sample thresholding is essential for distinguishing 
reasoning-intensive tokens across tasks of varying difficulty. The dynamic 
threshold remains stable under reasonable choices of $\tau$ ($1.5$, $3$, $6$), 
with $\tau=3$ even yielding marginal improvements on several benchmarks. However, 
an overly aggressive threshold ($\tau=0.5$) triggers excessive and unnecessary 
interventions, severely degrading reasoning coherence and causing substantial 
drops across all tasks (e.g., GSM-Plus drops by over 26 points for both methods).

\textbf{ALRR loop iterations.} 
A single refinement loop (Loop-1) already achieves significant gains over the 
base model (e.g., AIME24: $60.00 \to 66.67$), indicating that lightweight, 
one-step refinement suffices to capture the benefit. In contrast, excessive 
loops (Loop-8) substantially degrade performance, falling well below the base 
model on several tasks. This confirms that the effectiveness of ALRR stems from 
targeted feature refinement rather than from repeated computation, and that 
over-refinement disrupts the model's original reasoning dynamics.

\textbf{LRS layer aggregation strategy.} 
We compare three aggregation rules: the default (peak count normalized by total 
tokens), pure peak count, and sum of norms. All three yield comparable performance, 
indicating that the exact aggregation formula has only minor impact. In contrast, 
the choice of layers is critical: restricting aggregation to middle layers 
(middle layers) consistently underperforms the default, which draws signals 
from middle-to-late layers. This aligns with the intuition that deeper layers 
carry stronger and more task-specific reasoning signals, corroborating prior 
findings on the localization of reasoning in late transformer layers.

\subsection{Model Family Sensitivity Analysis}
\begin{table}[t]
\centering
\caption{Results across diverse model families (Phi-4 and Gemma3).}
\label{tab:diverse_models}
\small
\begin{tabular}{llccccccc}
\toprule
\textbf{Model} & \textbf{Method} & \textbf{AIME24} & \textbf{AIME25} & \textbf{BBH} & \textbf{GPQA} & \textbf{GSM8K} & \textbf{GSM-Plus} & \textbf{MMLU-Pro} \\
\midrule
\multirow{4}{*}{Phi-4}
& Baseline & 13.33 & 13.33 & 84.73 & 48.21 & 94.96 & 77.85 & 69.55 \\
& ALRR & 16.67 & 16.67 & 85.56 & 49.11 & \textbf{95.41} & 82.62 & 72.39 \\
& ERSS & \textbf{18.33} & \textbf{20.00} & 84.67 & \textbf{49.55} & 95.03 & 81.79 & 72.17 \\
& LRS & 15.00 & 15.83 & \textbf{85.61} & 48.55 & 95.38 & \textbf{82.86} & \textbf{72.55} \\
\midrule
\multirow{4}{*}{Phi-4-Reasoning}
& Baseline & 56.67 & 50.00 & 71.23 & 53.79 & 79.64 & 72.40 & 69.62 \\
& ALRR & 63.33 & \textbf{60.00} & 71.69 & 54.69 & 84.21 & 72.57 & 70.59 \\
& ERSS & \textbf{66.67} & 53.33 & \textbf{73.91} & \textbf{56.70} & \textbf{88.25} & \textbf{75.66} & \textbf{74.04} \\
& LRS & 65.00 & 56.67 & 71.27 & 55.47 & 82.68 & 72.45 & 71.83 \\
\midrule
\multirow{4}{*}{Gemma3-4B-it}
& Baseline & 10.00 & 13.33 & 69.03 & 26.67 & 87.60 & 70.79 & 42.73 \\
& ALRR & \textbf{13.33} & 16.67 & \textbf{69.92} & 26.45 & 87.64 & 71.16 & \textbf{43.41} \\
& ERSS & \textbf{13.33} & \textbf{20.00} & 69.67 & 27.12 & 88.78 & 70.80 & 42.98 \\
& LRS & 11.67 & 15.00 & 69.52 & \textbf{27.23} & \textbf{90.22} & \textbf{71.26} & 43.32 \\
\midrule
\multirow{4}{*}{Gemma3-12B-it}
& Baseline & 23.33 & 26.67 & 76.74 & 33.26 & 94.01 & 77.80 & 60.59 \\
& ALRR & \textbf{28.33} & \textbf{30.00} & 77.19 & 36.05 & \textbf{93.71} & 77.87 & 61.01 \\
& ERSS & 25.00 & 28.33 & \textbf{77.38} & \textbf{38.62} & 93.06 & \textbf{77.98} & \textbf{61.29} \\
& LRS & 26.67 & 28.33 & 76.79 & 36.05 & 93.75 & 77.82 & 60.90 \\
\bottomrule
\end{tabular}
\end{table}

\textbf{Diversity of model families and additional evaluation.}
Although both LLaMA3 and Qwen3 are transformer-based, they differ substantially in how reasoning ability is acquired: Qwen3 uses RL post-training to enhance reasoning, while the R1-distilled LLaMA3 variants rely primarily on SFT with distilled reasoning traces. Our original setup therefore already spans two qualitatively different reasoning regimes. To further test robustness, we additionally evaluate on Phi-4 and Gemma3. Phi-4 introduces a different data source and reasoning enhancement strategy, as its reasoning variant is trained primarily on high-quality synthetic data with dedicated reasoning optimization. Gemma3 further broadens coverage through a different training mixture, including multimodal data, and allows us to test scale sensitivity using both 4B and 12B instruction-tuned variants. Together, these models cover diverse axes including training data composition, model scale, and reasoning optimization strategy.

\textbf{Results and analysis.}
As shown in Table~\ref{tab:diverse_models}, ALRR, ERSS, and LRS consistently improve performance across all model families and tasks. The gains are especially pronounced on reasoning-optimized models: for example, on Phi-4-Reasoning, ERSS improves AIME24 and GSM8K by 10.0 and 8.6 points, respectively, suggesting that our norm-based token selection complements rather than duplicates built-in reasoning optimization. At the same time, the methods remain effective on weaker or non-reasoning-specialized models. For instance, on Gemma3-4B-it, both ALRR and ERSS improve AIME24 by 3.33 points, and LRS improves GSM8K by 2.6 points despite the model's relatively modest baseline performance. Overall, the consistent gains across models trained with RL, SFT-distillation, synthetic data, and multimodal data indicate that our approach is not tied to a specific architecture or training paradigm, but instead captures a more general $\ell_2$-norm-based signal of token importance.

\subsection{Robustness to SAE Design and Alternative Probes}
\label{sec:robustness_sae}

We provide additional evidence that our main findings are robust to both SAE design choices and alternative probing signals.

\paragraph{Robustness to SAE design.}
We vary several key SAE configurations on Qwen3-4B, including the feature dimension, training budget, sparsity level, selection criterion, and feature subset size. Specifically, we reduce the feature dimension from 4 to 2, reduce training epochs from 6 to 3, increase the $L_1$ penalty $\lambda_{L_1}$ from $1\mathrm{e}{-3}$ to $5\mathrm{e}{-3}$, replace raw-difference selection with a relative-ratio criterion, and retain only the top-40 features. As shown in Figure~\ref{fig:robustness_sae_1}, the late-layer increase in $\ell_2$ norm remains stable across all settings, indicating that our observations do not depend on a particular SAE hyperparameter choice.

\begin{figure}[t]
    \centering
    \includegraphics[width=\linewidth]{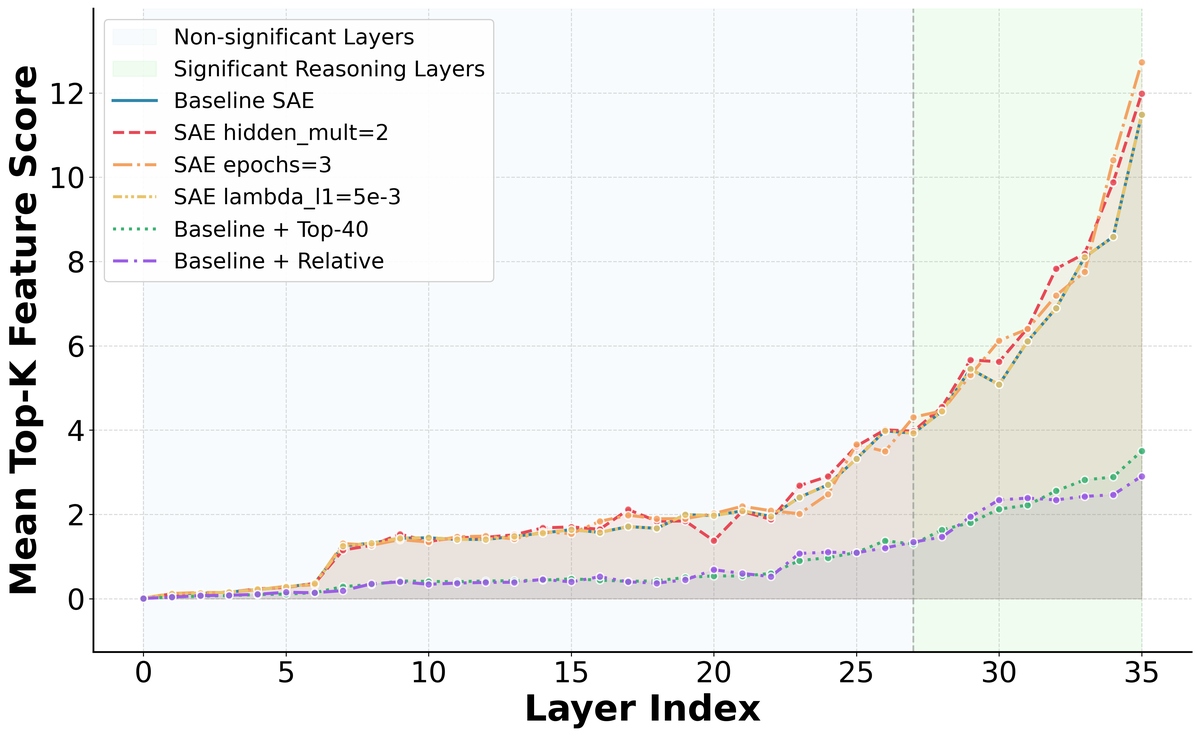}
    \caption{Robustness of the late-layer $\ell_2$ norm increase under different SAE configurations on Qwen3-4B. The overall pattern remains stable across changes in architecture, training budget, sparsity, selection criterion, and feature subset size.}
    \label{fig:robustness_sae_1}
\end{figure}

\paragraph{Alternative probes.}
We further compare the $\ell_2$ norm with two information-theoretic probes, mutual information (MI) and information gain (IG), following prior work on identifying key reasoning tokens. Since MI and IG are token-level quantities, they are not directly visualized layer-wise. As shown in Figures~\ref{fig:robustness_sae_23}, major $\ell_2$ peaks align well with MI and IG spikes at important reasoning steps. Some norm peaks also coincide with negative IG, which may reflect exploratory or trial-and-error reasoning. Overall, these results suggest that the $\ell_2$ norm captures similar key transitions while remaining substantially simpler and cheaper to compute.

\begin{figure}[htbp]
    \centering
    \includegraphics[width=0.49\linewidth]{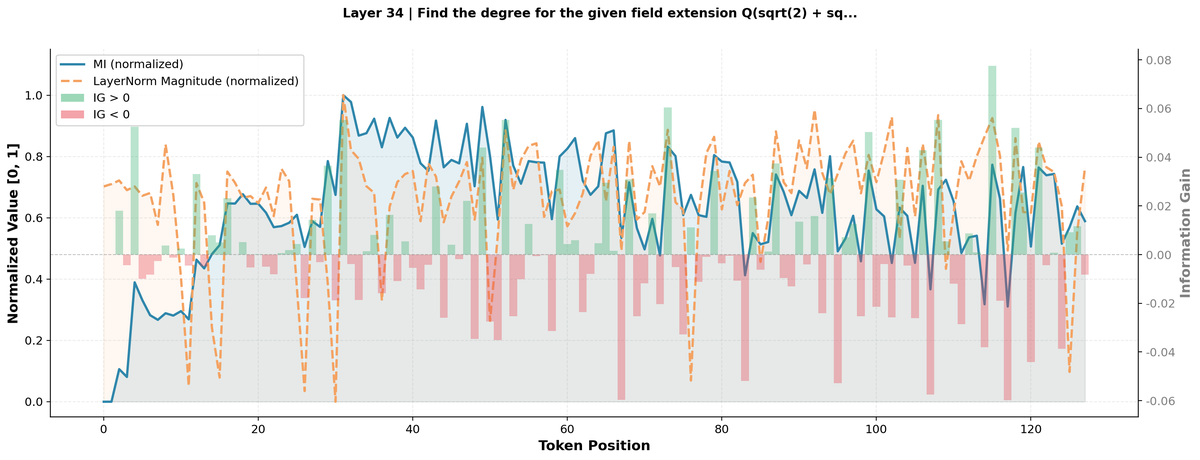}
    \hfill
    \includegraphics[width=0.49\linewidth]{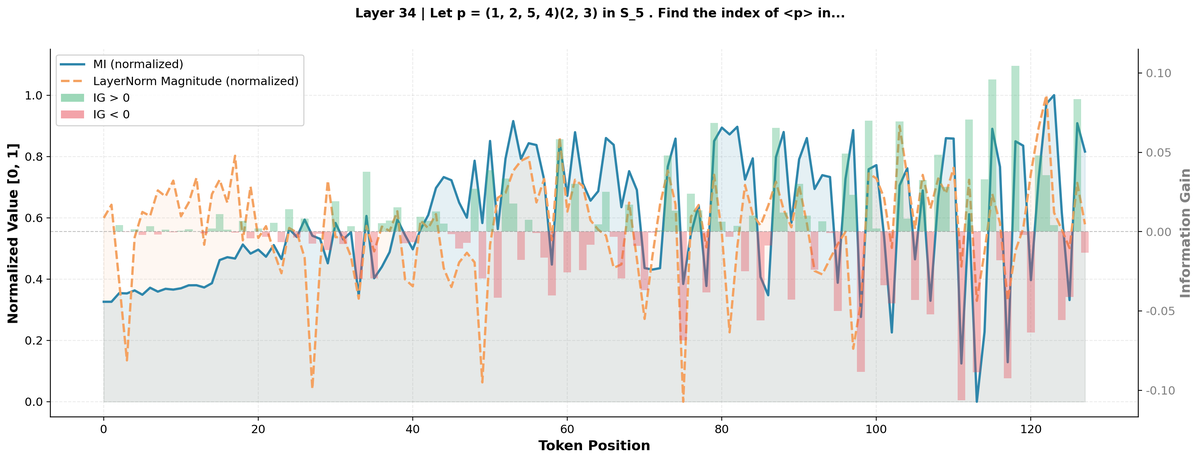}
    \caption{Comparison of $\ell_2$ norm dynamics with alternative information-theoretic probes. Major norm peaks align well with fitted MI/IG-related curves at important reasoning steps, while some peaks also coincide with negative IG, suggesting exploratory or trial-and-error reasoning behavior.}
    \label{fig:robustness_sae_23}
\end{figure}

\subsection{Compatibility of $\ell_2$-Based Methods}
\label{sec:compatibility_l2_methods}

We further study the compatibility of our proposed $\ell_2$-based methods by evaluating different combinations of ALRR, ERSS, and LRS. The results are summarized in Table~\ref{tab:l2_combinations}.

Overall, the results show that these methods are not uniformly additive. \textbf{ALRR+ERSS} is the most complementary combination, yielding the strongest joint improvements on several benchmarks across both Qwen3-4B and R1-Distilled-LLaMA-8B. This suggests that refinement~(high norm) and steering~(consecutive low norm) act on different aspects of the reasoning process and can reinforce each other when combined. In contrast, \textbf{ERSS+LRS} generally performs worse than its individual components, indicating interference between steering and selection. A likely reason is that ERSS alters the norm distribution that LRS relies on for response ranking, thereby reducing its discriminability. \textbf{ALRR+LRS} remains more stable than ERSS+LRS, but still does not consistently outperform the best single-method results, suggesting partial interference between refinement and selection. Overall, these findings indicate that $\ell_2$-based interventions are most effective when they target distinct stages of the reasoning process, while combinations that directly perturb the selection signal require more careful calibration.

\begin{table*}[t]
\small
\centering
\caption{\textbf{Table 8: Combinations of $\ell_2$-based methods (ALRR, ERSS, LRS).}}
\label{tab:l2_combinations}
\resizebox{\textwidth}{!}{
\begin{tabular}{llccccccc}
\toprule
Model & Method & aime24 & aime25 & bbh & gpqa\_main & gsm8k & gsm-plus & mmlu-pro \\
\midrule
\multirow{7}{*}{\textbf{Qwen3-4B}}
& Baseline & 60.00 & 56.67 & 84.78 & 50.22 & 95.41 & 78.90 & 67.92 \\
& ALRR & \textbf{70.00} & 60.00 & \textbf{87.39} & \textbf{51.55} & 95.53 & 79.88 & 71.66 \\
& ERSS & 66.67 & 60.83 & 87.18 & 51.33 & \textbf{96.68} & 79.76 & 71.15 \\
& LRS & 66.67 & 60.00 & 87.20 & 50.22 & 96.20 & \textbf{81.04} & 71.41 \\
& ALRR+ERSS & 68.33 & \textbf{63.33} & 87.29 & 50.54 & \textbf{96.68} & 80.46 & \textbf{71.90} \\
& ERSS+LRS & 60.00 & 50.00 & 84.81 & 46.88 & 95.00 & 79.02 & 67.64 \\
& ALRR+LRS & 63.33 & 58.33 & 87.17 & 49.41 & 96.13 & 79.46 & 71.78 \\
\midrule
\multirow{7}{*}{\textbf{R1-Distilled-LLaMA-8B}}
& Baseline & 40.00 & 33.33 & 70.48 & 50.45 & 88.34 & 67.95 & 59.27 \\
& ALRR & 47.50 & 36.67 & 72.53 & 52.23 & 90.78 & 68.25 & 60.47 \\
& ERSS & 51.67 & 40.00 & 70.90 & \textbf{55.13} & 90.40 & 66.43 & \textbf{60.67} \\
& LRS & 43.33 & 36.67 & \textbf{74.14} & 54.68 & \textbf{90.82} & \textbf{70.57} & 60.31 \\
& ALRR+ERSS & \textbf{53.33} & \textbf{43.33} & 72.03 & 53.12 & 90.61 & 66.86 & 60.23 \\
& ERSS+LRS & 36.67 & 23.33 & 69.44 & 48.21 & 87.64 & 65.77 & 58.04 \\
& ALRR+LRS & 46.67 & 33.33 & 72.03 & 52.12 & 89.86 & 67.38 & 59.32 \\
\bottomrule
\end{tabular}
}
\end{table*}

\subsection{Failure Modes and Boundary Conditions}
\label{sec:failure_modes}

We further evaluate our norm-guided methods on non-reasoning-centric benchmarks, including commonsense reasoning (HellaSwag), instruction following (IFEval), open-domain question answering (NQ-Open), contextual inference (PubMedQA), and truthfulness evaluation (TruthfulQA). The results are summarized in Table~\ref{tab:failure_modes}.

\paragraph{Failure modes}
While our methods are generally effective on reasoning-intensive tasks, their gains do not uniformly transfer to tasks centered on factual recall or direct answer retrieval. In particular, we observe consistent degradation on NQ-Open and PubMedQA across both model families. A plausible explanation is that increasing reasoning intensity can induce over-analysis: for open-domain QA, the model may attempt to ``derive'' facts that should instead be retrieved directly, while for contextual medical QA, excessive deliberation may lead to unnecessary caution or over-consideration of edge cases. These observations are consistent with prior findings that additional reasoning can be counterproductive on simple factual tasks. In contrast, IFEval often benefits from our interventions, suggesting that stronger reasoning can help the model better track and verify instruction constraints.

\paragraph{Boundary conditions}
These results highlight an important boundary condition of $\ell_2$-guided interventions. The $\ell_2$ norm appears to reflect computational intensity reliably, but whether increasing it is beneficial depends on the task. It serves as a useful signal for System-2-style tasks that require extended reasoning, verification, or constraint tracking, but may act as an overthinking signal for System-1-style tasks such as direct knowledge retrieval or intuitive commonsense judgments.

\begin{table*}[t]
\centering
\small
\caption{\textbf{Table 9: Results on general benchmarks including commonsense reasoning, hallucination detection, knowledge probing, instruction following, and faithfulness-based extraction tasks.}}
\label{tab:failure_modes}
\begin{tabular}{llccccc}
\toprule
Model & Method & hellaswag & ifeval & nq\_open & pubmed\_qa & truthfulqa \\
\midrule
\multirow{4}{*}{\textbf{Qwen3-4B}}
& Baseline & \textbf{80.61} & 49.04 & \textbf{28.80} & \textbf{68.20} & \textbf{46.14} \\
& ALRR & 80.58 & \textbf{49.76} & 25.18 & 66.80 & 43.76 \\
& ERSS & 80.60 & 49.28 & 25.24 & 67.00 & 44.67 \\
& LRS & 80.59 & 48.92 & 26.78 & 67.80 & 44.30 \\
\midrule
\multirow{4}{*}{\textbf{R1-Distilled-LLaMA-8B}}
& Baseline & 68.25 & 46.28 & \textbf{23.24} & \textbf{68.20} & 44.19 \\
& ALRR & 67.89 & 47.60 & 21.72 & 65.00 & 44.79 \\
& ERSS & 68.20 & \textbf{48.80} & 22.30 & 66.40 & 43.94 \\
& LRS & \textbf{68.61} & 47.48 & 22.05 & 67.40 & \textbf{44.80} \\
\bottomrule
\end{tabular}
\end{table*}

\section{Layer Norm Variance in Different Benchmarks}
\label{appendix:benchmark_difference}

\begin{figure}[htbp]
\centering
\includegraphics[width=\linewidth]{fig1a_norm_line.png}
\caption{Per-layer total activation norm (divided by 4096), averaged over sequences. AIME and GPQA accumulate $2$--$3\times$ more total norm than GSM tasks, reflecting their longer reasoning chains. The shaded region denotes $\pm 1$ standard deviation across sequences.}
\label{fig:norm_line}
\end{figure}

\begin{figure}[htbp]
\centering
\includegraphics[width=\linewidth]{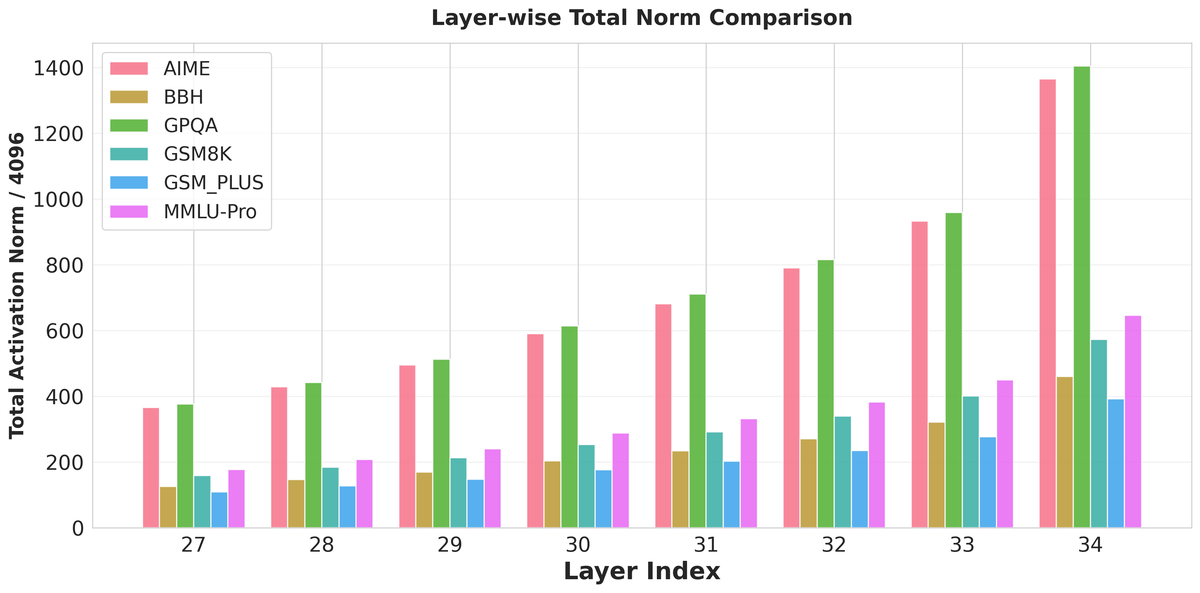}
\caption{Grouped bar chart of the same per-layer total norm data as Figure~\ref{fig:norm_line}. The consistent ordering across all layers confirms that the norm gap is a global property of the task, not an artifact of specific layers.}
\label{fig:norm_bar}
\end{figure}

\begin{figure}[htbp]
\centering
\begin{minipage}[t]{0.32\linewidth}
    \centering
    \includegraphics[width=\linewidth]{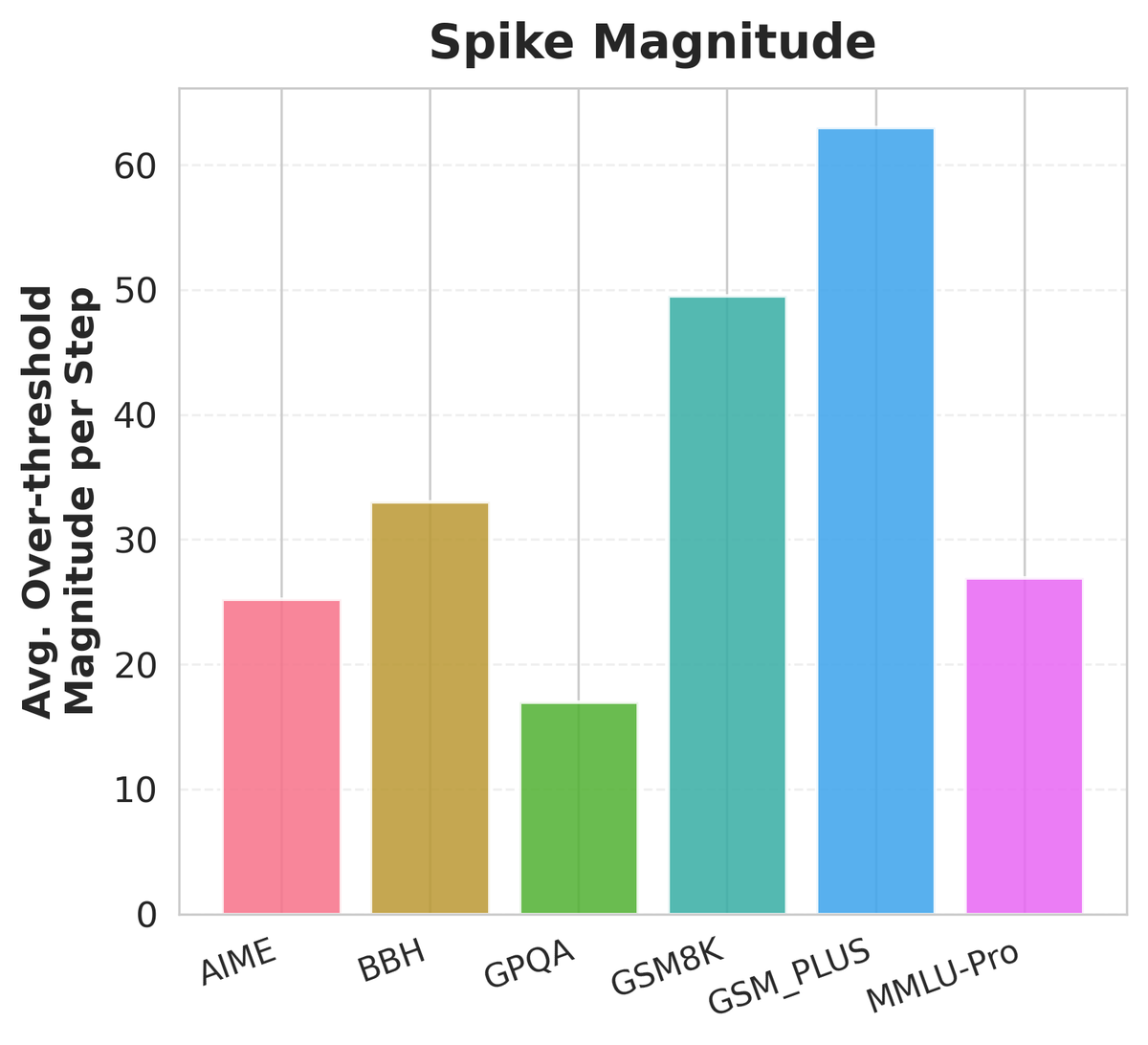}
\end{minipage}
\hfill
\begin{minipage}[t]{0.32\linewidth}
    \centering
    \includegraphics[width=\linewidth]{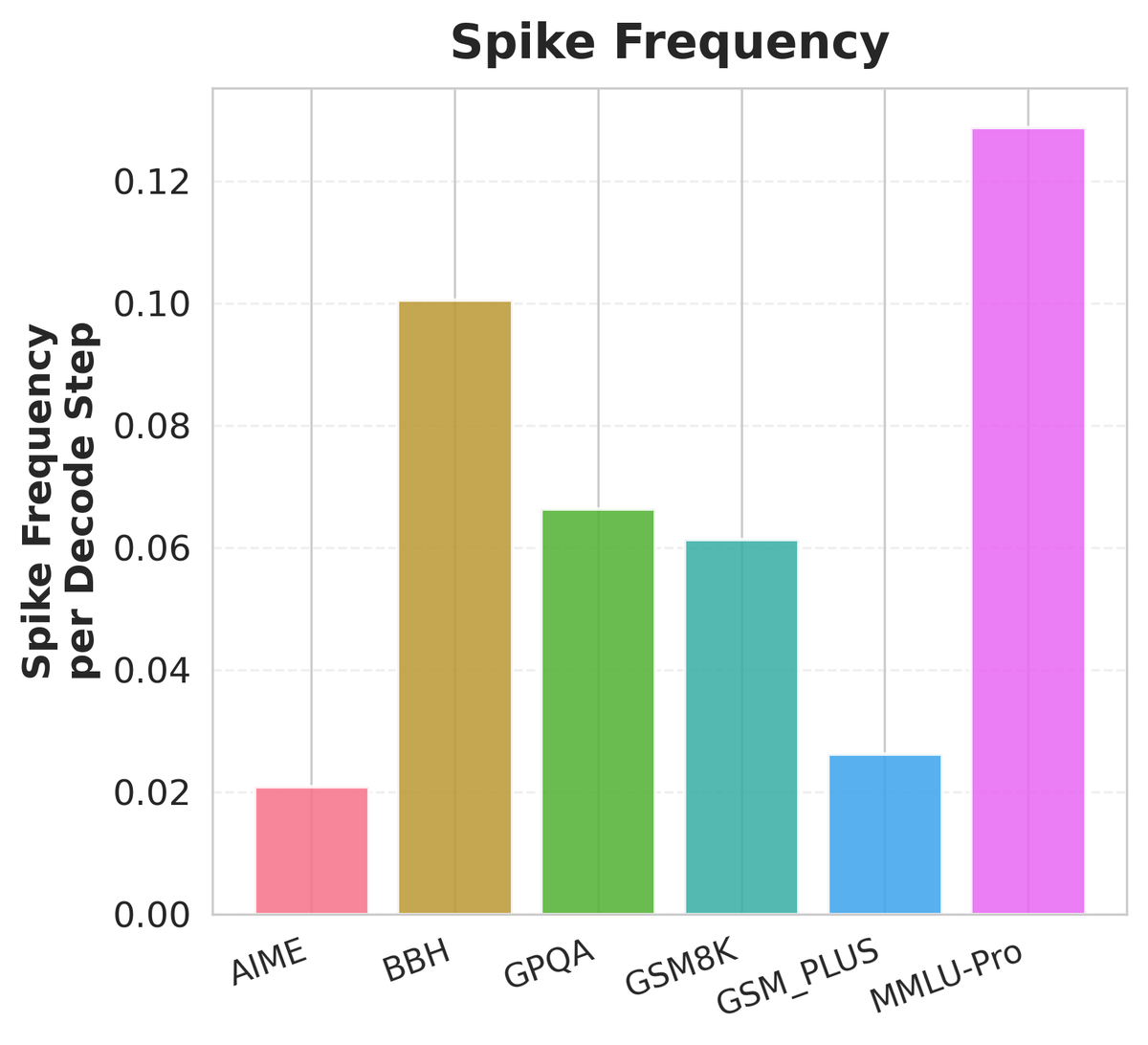}
\end{minipage}
\hfill
\begin{minipage}[t]{0.32\linewidth}
    \centering
    \includegraphics[width=\linewidth]{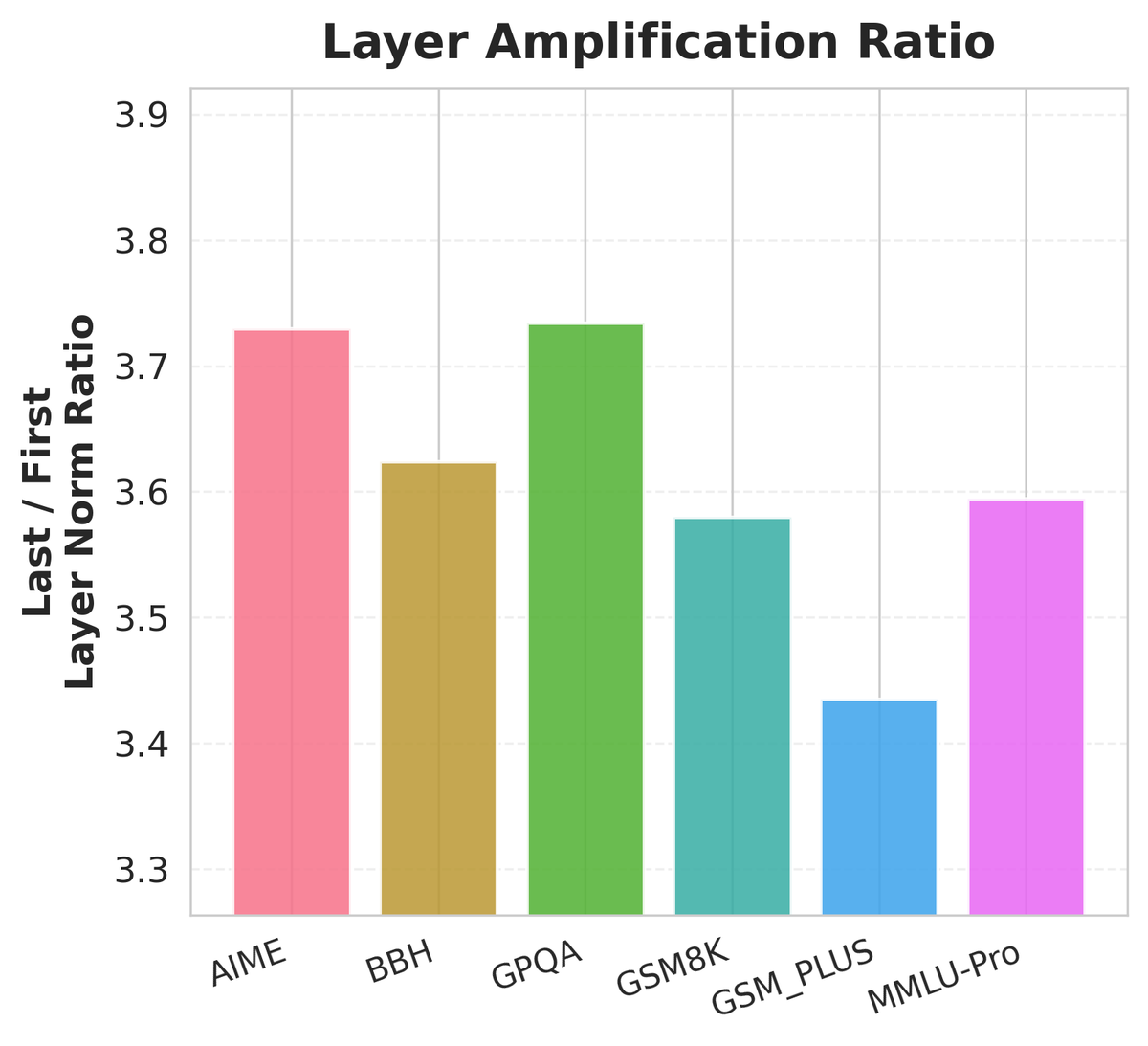}
\end{minipage}
\caption{Three key metrics across benchmarks. \textbf{Left}: Average over-threshold magnitude per decode step. \textbf{Middle}: Spike frequency per decode step. \textbf{Right}: Last-to-first layer norm ratio.}
\label{fig:three_metrics}
\end{figure}

The benchmark differences in $\ell_2$ norm statistics cannot be explained by a single factor. Instead, our results suggest that \emph{task difficulty} and \emph{task type} influence different aspects of the activation-norm dynamics. In particular, cumulative statistics such as total activation norm and layer amplification ratio are more strongly associated with task difficulty, whereas local spike statistics such as spike magnitude and spike frequency are more indicative of the underlying reasoning style, especially whether the task is mathematical or knowledge-intensive.

\paragraph{Difficulty primarily affects cumulative norm statistics.}
Figure~\ref{fig:norm_line} shows the per-layer total activation norm, averaged over sequences. All benchmarks exhibit a monotonic increase from layer~27 to layer~34, consistent with the norm growth phenomenon in transformer residual streams. More importantly, the hardest benchmarks, AIME and GPQA, accumulate substantially larger total norms than easier tasks, with a clear gap emerging in later layers. This pattern suggests that total norm primarily reflects \emph{overall computational load}: harder tasks typically require longer reasoning trajectories and more decode steps, so norm contributions accumulate over a larger number of steps. In this sense, total activation norm behaves more like a proxy for the total amount of reasoning performed than for any specific task domain.

This interpretation is reinforced by the layer amplification ratio in Figure~\ref{fig:three_metrics} (right), which measures the ratio between the last-layer and first-layer norms. Here again, the highest values are attained by AIME and GPQA, while easier tasks such as GSM\_PLUS and GSM8K show smaller amplification. Unlike total norm, this ratio is less sensitive to absolute response length and more directly reflects the extent to which the model relies on deeper-layer processing. The fact that both difficult math reasoning (AIME) and difficult knowledge reasoning (GPQA) exhibit similarly strong amplification suggests that this metric is driven primarily by difficulty rather than by whether the task is mathematical.

\paragraph{Task type primarily affects spike structure.}
By contrast, the spike-based statistics in Figure~\ref{fig:three_metrics} reveal a different axis of variation. The average over-threshold spike magnitude (left) is highest for math tasks, especially GSM8K and GSM\_PLUS, whereas the spike frequency per decode step (middle) is lowest for these same tasks. In other words, mathematical reasoning tends to produce \emph{rare but intense} activation spikes. This pattern is consistent with a chain-like computation process in which most steps proceed relatively smoothly, but a small number of key transitions require strong internal reorganization, for example when the model reaches a crucial intermediate deduction or a decisive computational step.

Knowledge-intensive and diverse reasoning tasks exhibit the opposite pattern: their spike magnitudes are more moderate, but their spike frequencies are substantially higher. This suggests a more retrieval-like or switching-heavy computation pattern, in which the model repeatedly makes smaller local updates while moving among multiple knowledge subspaces or partial hypotheses. From this perspective, spike magnitude and spike frequency do not mainly track how hard a task is, but rather how the reasoning process is organized over time.

\paragraph{Difficulty and task type are separable factors.}
GPQA provides a particularly clear example of this separation. It behaves like AIME in cumulative metrics, with high total norm and high layer amplification, indicating strong computational demand. However, in spike behavior it is closer to knowledge-intensive tasks, with relatively frequent and less extreme activations. This suggests that GPQA is difficult in the sense of requiring substantial overall computation, but its reasoning dynamics are not math-like. Conversely, AIME reflects the combined effect of both factors: as a hard task, it has large cumulative norm and strong amplification, but as a math task, it retains the sparse, high-impact spike structure characteristic of mathematical reasoning.

\paragraph{Summary.}
Taken together, these results suggest that $\ell_2$ norm is not a one-dimensional signal. Different norm-based statistics capture different properties of the underlying computation. Total norm and layer amplification mainly reflect how much computation the model performs and how deeply that computation propagates through the network, making them more sensitive to task difficulty. Spike magnitude and frequency, in contrast, characterize the temporal structure of that computation, making them more sensitive to task type and reasoning style. This distinction helps explain why some benchmarks are similar along one norm axis but differ substantially along another, and provides a more structured interpretation of how activation norms relate to reasoning behavior across domains.


\subsection{Discussion}

The analysis reveals that activation norm behavior is governed by two orthogonal factors:

\textbf{Task domain} determines the \emph{pattern} of activation.
Mathematical reasoning produces smooth, stable trajectories with rare but intense perturbations, while knowledge-intensive tasks produce noisy trajectories with frequent mild perturbations.
This likely reflects the fundamental difference between \emph{compositional} computation (chaining logical steps) and \emph{associative} computation (retrieving from parametric memory).

\textbf{Task difficulty} determines the \emph{scale} of activation.
Harder tasks accumulate more total norm (through longer responses) and require greater signal amplification through deeper layers (higher last/first ratio).
The layer amplification ratio is the only metric where both easy-math benchmarks rank strictly in the bottom two, making it a promising difficulty proxy independent of domain.

The spike magnitude--frequency dissociation is particularly noteworthy: when mathematical reasoning \emph{does} produce a spike, it is significantly larger than spikes in knowledge tasks.
This may correspond to critical phase transitions in the reasoning chain --- moments where the model must commit to a non-trivial logical step --- and could provide useful signal for detecting reasoning bottlenecks during inference.

\paragraph{Implications for Adaptive Inference.}
These findings establish $\ell_2$ norm as a reliable runtime signal for computational load. The clear stratification of norms across difficulty levels---ranging from $\sim$200 for simple arithmetic to $>$2000 for olympiad mathematics, enables principled threshold selection for adaptive techniques. High-norm regions (e.g., AIME at Layers 34--35 with norms exceeding 1400) warrant enhanced computation such as iterative refinement loops, while low-norm regions (e.g., GSM8K in shallower layers with norms below 200) can safely skip redundant operations. Moreover, the strong deep-layer amplification suggests that adaptive interventions should concentrate on deeper layers where reasoning intensifies. For example, at Layer 35, the large gap between AIME ($\sim$2080) and BBH ($\sim$265) indicates where dynamic resource allocation is most impactful---applying expensive operations only to high-norm tasks while maintaining efficient processing for simpler ones.

\section{Pseudo Codes of Application Methods}
\label{appendix:pseudo_codes_of_applications}

In this section, we provide detailed pseudocode for the key algorithms used in our test-time reasoning techniques, which dynamically modulate LLM hidden states based on $\ell_2$ norm peaks. These methods can be flexibly applied for tasks such as causal suppression, reasoning recursion, or endogenous reasoning steering.

\subsection{Adaptive Threshold Estimation for Layer-wise $\ell_2$ Norm Peaks}
To enable dynamic and layer-specific modulation, we employ an \textit{adaptive threshold} mechanism based on online estimation of hidden state $\ell_2$ norms. For each layer, we maintain streaming estimates of the 25th and 75th percentiles (Q1 and Q3) of the observed norms. The threshold $\tau$ is computed as an upper fence of the interquartile range (IQR):
\[
\tau = Q3 + k \cdot \text{IQR}, \quad \text{where } \text{IQR} = \max(\text{IQR}_{\min}, Q3 - Q1),
\]
with hyperparameters $k > 0$ and $\text{IQR}_{\min} > 0$ ensuring numerical stability. This approach adapts automatically to the intrinsic scale of representations at each layer, obviating manual tuning. Algorithm~\ref{alg:adaptive_tau} details the online computation procedure.

\begin{algorithm}[t]
\caption{Adaptive Threshold ($\tau$) Estimation via Online Quantiles}
\label{alg:adaptive_tau}
\begin{algorithmic}[1]
\Require Stream of scalar observations $x_t$ (e.g., $\|h_t\|_2$), 
         target quantiles $q_1 = 0.25$, $q_3 = 0.75$,
         IQR multiplier $k > 0$, minimum IQR $\delta > 0$, floor $\tau_{\min} \geq 0$
\Ensure Time-varying threshold $\tau_t$

\State Initialize online quantile estimators $\mathcal{Q}_1 \gets \texttt{QuantileEstimator}(q_1)$
\State $\mathcal{Q}_3 \gets \texttt{QuantileEstimator}(q_3)$

\For{each new observation $x_t$}
    \State $\mathcal{Q}_1.\texttt{update}(x_t)$
    \State $\mathcal{Q}_3.\texttt{update}(x_t)$
    
    \State $Q1 \gets \mathcal{Q}_1.\texttt{value}()$
    \State $Q3 \gets \mathcal{Q}_3.\texttt{value}()$
    
    \State $\text{IQR} \gets \max(\delta,\ Q3 - Q1)$
    \State $\tau_t \gets \max(\tau_{\min},\ Q3 + k \cdot \text{IQR})$
    
    \State \textbf{use} $\tau_t$ for downstream decision (e.g., suppression)
\EndFor
\end{algorithmic}
\end{algorithm}

\subsection{Hidden State Suppression}

In our causal intervention analysis, we apply \textit{hidden state suppression} (Algorithm~\ref{alg:suppress}). Tokens with norms exceeding $\tau_\ell$ can be selectively suppressed, or optionally, tokens can be randomly suppressed for comparison. This allows controlled intervention to evaluate the causal effect of high-norm activations on final model outputs.

\begin{algorithm}[!t]
\caption{Hidden State Suppression in Transformer Layers}
\label{alg:suppress}
\begin{algorithmic}[1]
\Require Hidden state $\mathbf{h} \in \mathbb{R}^d$, residual connection $\mathbf{r}$ (optional), layer index $\ell$, suppression mode $m \in \{\texttt{high\_norm}, \texttt{random}\}$, suppression layers $\mathcal{L}$, threshold $\tau_\ell$, suppression coefficient $\alpha$
\Ensure Modified hidden state $\mathbf{h}'$

\If{$\ell \notin \mathcal{L}$}
    \State \Return $\mathbf{h}$
\EndIf

\State $\mathbf{z} \gets \mathbf{h} + \mathbf{r}$
\State $\mathbf{z}' \gets \mathbf{z}$

\If{$m = \texttt{high\_norm}$}
    \State Compute norms: $n_i \gets \|\mathbf{z}_i\|_2$ for all tokens $i$
    \State $\mathcal{M} \gets \{ i \mid n_i > \tau_\ell \}$
\ElsIf{$m = \texttt{random}$}
    \State Sample mask $\mathcal{M}$ with probability $p$
\EndIf

\If{$\mathcal{M} \neq \emptyset$}
    \For{each token $i \in \mathcal{M}$}
        \State $\mathbf{z}'_i \gets \alpha \cdot \mathbf{z}_i$
    \EndFor
\EndIf

\State $\mathbf{h}' \gets \mathbf{z}' - \mathbf{r}$
\State \Return $\mathbf{h}'$
\end{algorithmic}
\end{algorithm}

\subsection{Adaptive Layer-wise Reasoning Recursion}

Norm peaks can trigger \textit{adaptive recursion} (Algorithm~\ref{alg:reasoning_recursion}), where a layer is recomputed iteratively until its hidden states fall below the threshold or a blowup limit is reached. This enables the model to internally refine high-activation states, effectively allocating more computation to layers exhibiting intense reasoning activity.

\begin{algorithm}[!t]
\caption{Adaptive Layer-wise Reasoning Recursion}
\label{alg:reasoning_recursion}
\begin{algorithmic}[1]
\Require Hidden state $\mathbf{h}$, layer index $\ell$, threshold $\tau_\ell$, 
         max iterations $K$, blowup limit $B$
\Ensure Refined hidden state $\mathbf{h}'$

\State Compute norm $n \gets \|\mathbf{h}\|_2$
\If{$n \leq \tau_\ell$}
    \State \Return $\mathbf{h}$ \Comment{No recursion needed}
\EndIf

\State $\mathbf{h}^{(0)} \gets \mathbf{h}$
\For{$k = 1$ to $K$}
    \State $\mathbf{h}^{(k)} \gets \texttt{Layer}_\ell(\mathbf{h}^{(k-1)})$ \Comment{Recompute layer}
    
    \State $n^{(k)} \gets \|\mathbf{h}^{(k)}\|_2$
    
    \If{$n^{(k)} \leq \tau_\ell$ \textbf{or} $n^{(k)} \geq B$ \textbf{or} $|n^{(k)} - n^{(k-1)}|$ is negligible}
        \State \textbf{break}
    \EndIf
\EndFor

\State \Return $\mathbf{h}^{(k)}$
\end{algorithmic}
\end{algorithm}

\subsection{Endogenous Reasoning State Steering}

In \textit{endogenous reasoning state steering} (Algorithm~\ref{alg:endogenous_steering}), historical high-norm hidden states are stored in a memory bank and selectively injected into current activations when low-activity streaks are detected. This mechanism promotes consistent engagement of reasoning-relevant subspaces without additional training, providing a dynamic, data-free way to guide the model toward high-intensity reasoning trajectories.

\begin{algorithm}[!t]
\caption{Endogenous Reasoning State Steering}
\label{alg:endogenous_steering}
\begin{algorithmic}[1]
\Require Current hidden state $\mathbf{h}$, layer index $\ell$, 
         adaptive threshold $\tau_\ell$, 
         low-threshold factor $\gamma \in (0,1)$,
         streak length $K$, injection probability $p$,
         cooldown period $C$
\Ensure Updated hidden state $\mathbf{h}'$

\State $n \gets \|\mathbf{h}\|_2$
\State Retrieve per-token state $\mathcal{S} = (\texttt{bank}, \texttt{streak}, \texttt{cooldown})$

\If{$n < \gamma \tau_\ell$}
    \State $\texttt{streak} \gets \texttt{streak} + 1$
\Else
    \State $\texttt{streak} \gets 0$
    \If{$n > \tau_\ell$}
        \State Add $\mathbf{h}$ to $\texttt{bank}$ (keep top-$M$ by norm)
    \EndIf
\EndIf

\If{$\texttt{streak} \geq K$ \textbf{and} $\texttt{cooldown} = 0$ \textbf{and} $\texttt{rand}() < p$}
    \State $\mathbf{v}^* \gets \arg\max_{\mathbf{v} \in \texttt{bank}} \cos(\mathbf{v}, \mathbf{h})$
    \If{$\cos(\mathbf{v}^*, \mathbf{h}) \geq \theta_{\min}$}
        \State $\mathbf{h}' \gets \mathbf{h} + \beta \cdot \mathbf{v}^*$  \Comment{Inject guidance}
        \State $\texttt{streak} \gets 0$
        \State $\texttt{cooldown} \gets C$
    \EndIf
\Else
    \State $\mathbf{h}' \gets \mathbf{h}$
    \If{$\texttt{cooldown} > 0$}
        \State $\texttt{cooldown} \gets \texttt{cooldown} - 1$
    \EndIf
\EndIf

\State Update $\mathcal{S}$
\State \Return $\mathbf{h}'$
\end{algorithmic}
\end{algorithm}

\section{Traces Samples}
\label{appendix:traces_samples}
Figures~\ref{fig:real_reasoning_example0}--\ref{fig:real_reasoning_example4} visualize, for five representative samples, the tokens with unusually large hidden-state $\ell_2$ norms at the final layer of Qwen3-8B during \texttt{<think>} decoding (highlighted with red borders). Across these diverse trajectories, the highlighted tokens are consistently tied to reasoning-relevant content, such as logical operators and structural markers (e.g., \emph{and/or}, \emph{if}, \emph{therefore}, \emph{which}), recall/verification cues (e.g., \emph{recall}, \emph{check}), and domain-specific pivots that anchor intermediate hypotheses (e.g., \emph{fixed}, \emph{program}, \emph{logic}, \emph{array}). Importantly, the model does not merely amplify a fixed shortlist of words; instead, high norms emerge around \emph{key decision points} that are specific to each sample, reflecting where the model concentrates computation to advance or revise the reasoning chain.
\begin{figure}[ht]
    \centering
    \includegraphics[width=0.9\linewidth]{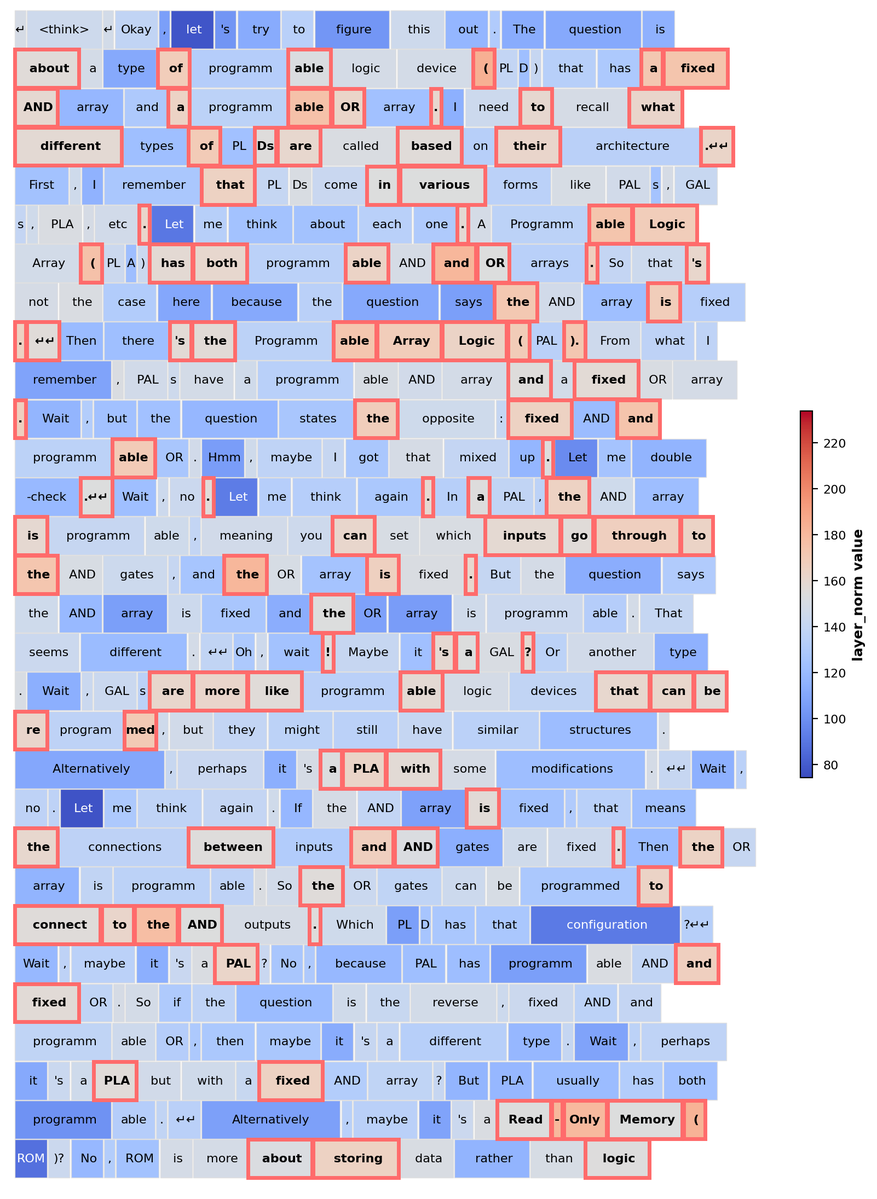}
    \caption{A reasoning example on Qwen3-8B}
    \label{fig:real_reasoning_example0}
\end{figure}
\begin{figure}[ht]
    \centering
    \includegraphics[width=0.9\linewidth]{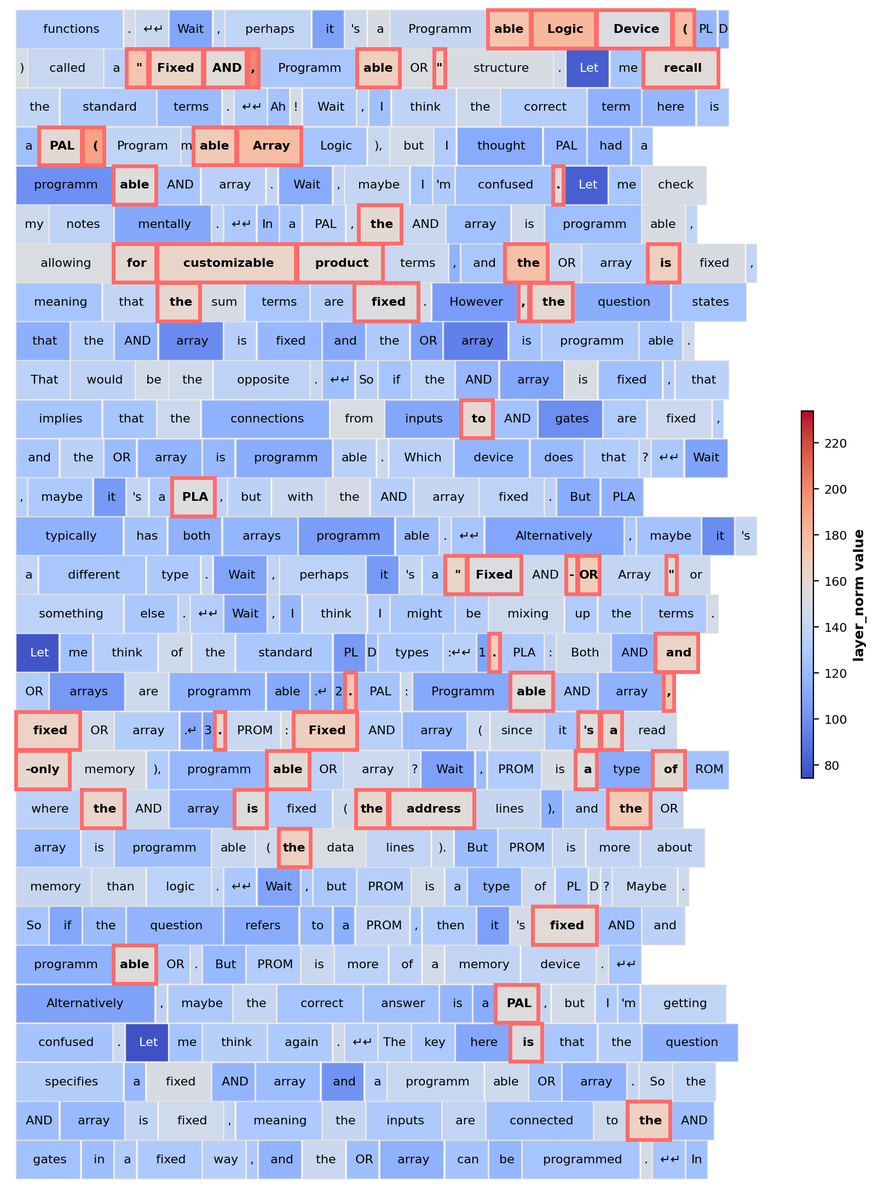}
    \caption{A reasoning example on Qwen3-8B}
    \label{fig:real_reasoning_example1}
\end{figure}
\begin{figure}[ht]
    \centering
    \includegraphics[width=0.9\linewidth]{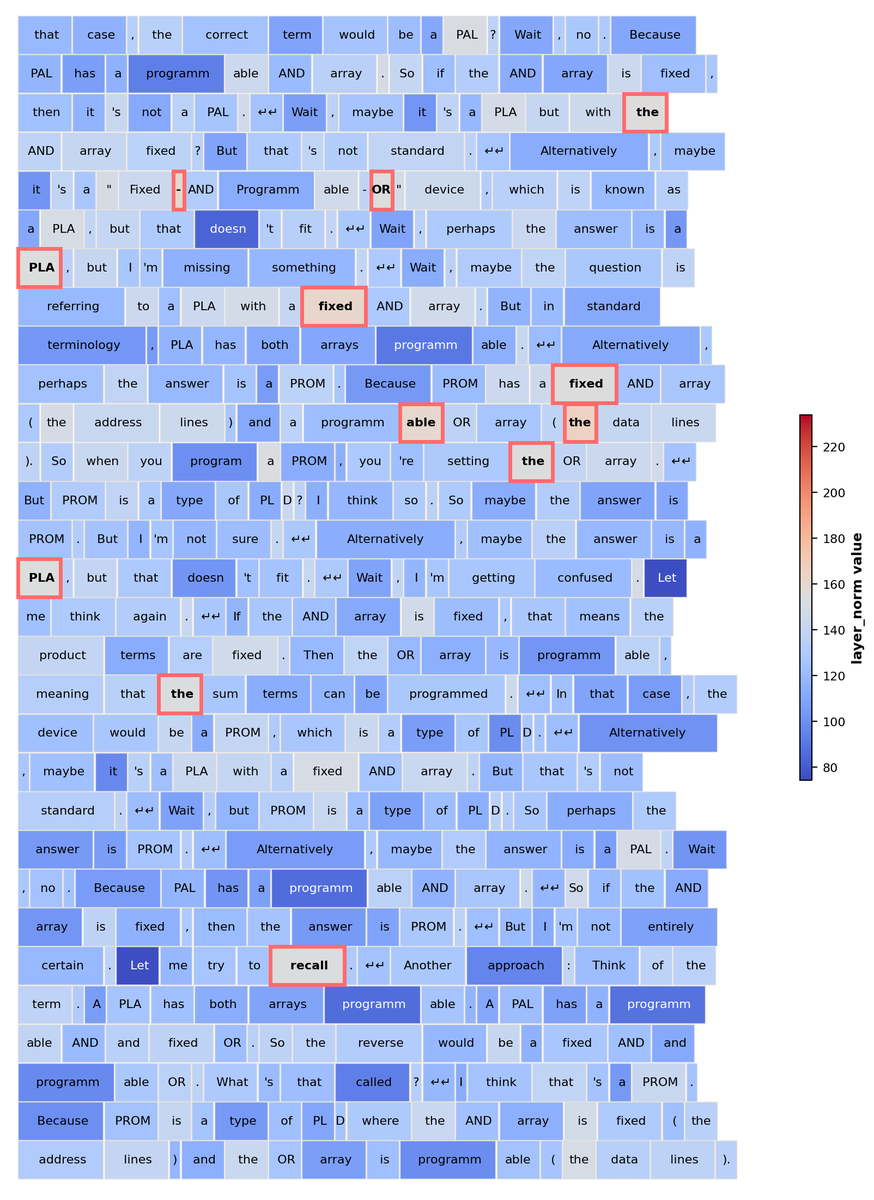}
    \caption{A reasoning example on Qwen3-8B}
    \label{fig:real_reasoning_example2}
\end{figure}
\begin{figure}[ht]
    \centering
    \includegraphics[width=0.9\linewidth]{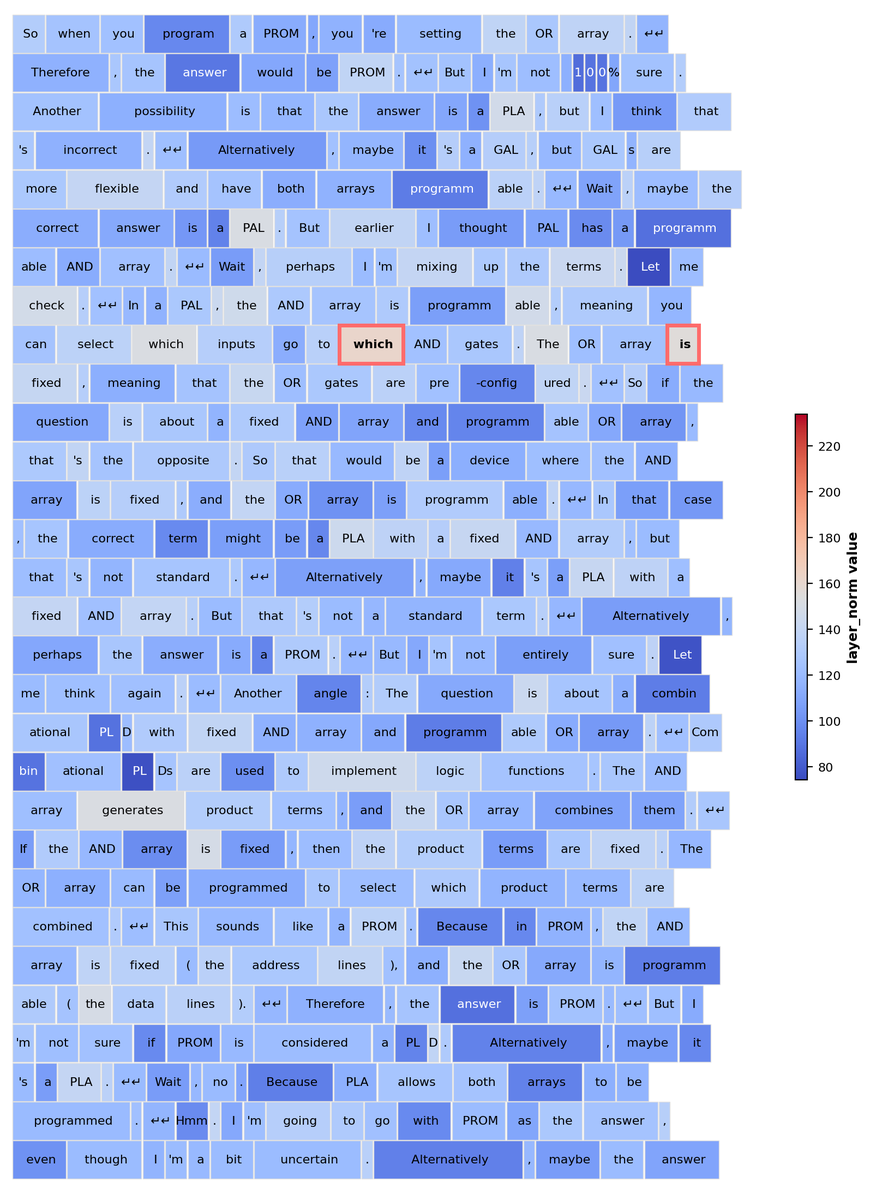}
    \caption{A reasoning example on Qwen3-8B}
    \label{fig:real_reasoning_example3}
\end{figure}
\begin{figure}[ht]
    \centering
    \includegraphics[width=0.9\linewidth]{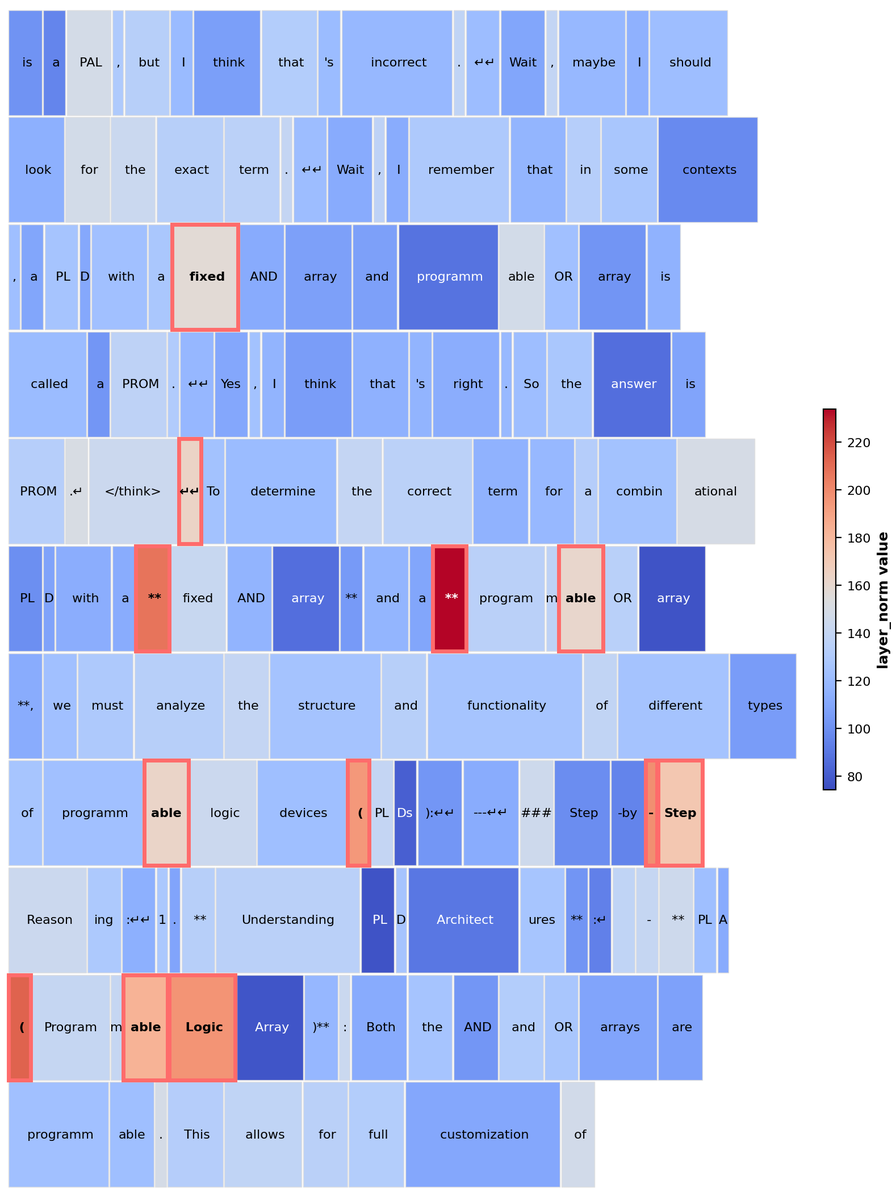}
    \caption{A reasoning example on Qwen3-8B}
    \label{fig:real_reasoning_example4}
\end{figure}

\end{document}